\definecolor{darkblue}{rgb}{0,0,.5}
\newcommand*{\rom}[1]{\expandafter\@slowromancap\romannumeral #1@}
\newcommand{\ind}{\mathds{1}}
\newcommand{\var}{\mathsf{Var}}
\newcommand{\brc}[1]{\left\{{#1}\right\}}
\newcommand{\prn}[1]{\left({#1}\right)} 
\newcommand{\brk}[1]{\left[{#1}\right]} 
\newcommand{\norm}[1]{\left\|{#1}\right\|} 
\newcommand{\abs}[1]{\left|{#1}\right|} 
\newcommand{\what}[1]{\widehat{#1}}
\newcommand{\wtilde}[1]{\widetilde{#1}}
\newcommand{\<}{\langle} 
\renewcommand{\>}{\rangle}
\DeclareMathOperator*{\argmin}{argmin}
\def\gA{{\mathcal{A}}}
\def\gB{{\mathcal{B}}}
\def\gC{{\mathcal{C}}}
\def\gF{{\mathcal{F}}}
\def\gH{{\mathcal{H}}}
\def\gI{{\bm{\mathcal{I}}}}
\def\gL{{\mathcal{L}}}
\def\gM{{\mathcal{M}}}
\def\gP{{\mathcal{P}}}
\def\gS{{\mathcal{S}}}
\def\gT{{\bm{\mathcal{T}}}}
\def\gU{{\bm{\mathcal{U}}}}
\def\gV{{\mathcal{V}}}
\def\sT{{\mathscr{T}}}
\def\sP{{\mathscr{P}}}
\def\sM{{\mathscr{M}}}
\def\sX{{\mathscr{X}}}
\def\sL{{\mathscr{L}}}
\def\tmix{t_{\operatorname{mix}}}
\def\tframe{t_{\operatorname{frame}}}
\def\tepoch{t_{\operatorname{epoch}}}
\def\muframe{\mu_{\operatorname{frame}}}
\def\bA{{\bm{A}}}
\def\bW{{\bm{W}}}
\def\bI{{\bm{I}}}
\def\bU{{\bm{U}}}
\def\bP{{\bm{P}}}
\def\bLamb{{\bm{\Lambda}}}
\def\bDelta{{\bm{\Delta}}}
\newcommand{\inner}[2]{\left\langle #1, #2 \right\rangle}
\def\RB{{\mathbb R}}
\def\EB{{\mathbb E}}
\def\PB{{\mathbb P}}
\def\NB{{\mathbb N}}
\def\QB{{\mathbb Q}}
\def\st{\mathsf{s.t.}}
\def\ie{{\em i.e.\/}}
\newcommand{\TV}{\textup{TV}}
\long\def\@makecaption#1#2{
  \vskip 0.8ex
  \setbox\@tempboxa\hbox{\small {\bf #1:} #2}
  \parindent 1.5em  
  \dimen0=\hsize
  \advance\dimen0 by -3em
  \ifdim \wd\@tempboxa >\dimen0
  \hbox to \hsize{
    \parindent 0em
    \hfil 
    \parbox{\dimen0}{\def\baselinestretch{0.96}\small
      {\bf #1.} #2
    } 
    \hfil}
  \else \hbox to \hsize{\hfil \box\@tempboxa \hfil}
  \fi
}
\newcommand{\TD}{{\texttt{TD}}}
\newcommand{\NTD}{{\texttt{NTD}}}
\newcommand{\CTD}{{\texttt{CTD}}}
\newcommand{\QTD}{{\texttt{QTD}}}
\newcommand{\VRNTD}{{\texttt{VR-NTD}}}
\newcommand{\VRCTD}{{\texttt{VR-CTD}}}
\newtheorem{claim}{Claim}[section]
\newtheorem{lemma}[claim]{Lemma}
\newtheorem{theorem}{Theorem}[section]
\newtheorem{proposition}{Proposition}[section]
\newtheorem{corollary}{Corollary}[section]
\newcounter{remark}
\newcounter{rtaskno}
\title{Statistical Efficiency of Distributional Temporal Difference  Learning and  Freedman's Inequality in Hilbert Spaces}
\author{
Yang Peng,~\thanks{Yang Peng is with  the School of Mathematical Sciences, Peking University; email: \texttt{pengyang@pku.edu.cn}.} \and
Liangyu Zhang,~\thanks{Liangyu Zhang is with the School of Statistics and Data Science, Shanghai University of Finance and Economics; email: \texttt{zhangliangyu@sufe.edu.cn}.} \and
Zhihua Zhang\thanks{Zhihua Zhang is with the School of Mathematical Sciences, Peking University; email: \texttt{zhzhang@math.pku.edu.cn}.}
}
\begin{document}
\maketitle

\begin{abstract}%
Distributional reinforcement learning (DRL) has achieved empirical success in various domains.
One core task in DRL is distributional policy evaluation, which involves estimating the return distribution $\eta^\pi$ for a given policy $\pi$.
Distributional temporal difference learning has been accordingly proposed, which extends the classic temporal difference learning (TD) in RL.
In this paper, we focus on the non-asymptotic statistical rates of distributional TD.
To facilitate theoretical analysis, we propose non-parametric distributional temporal difference learning (\NTD).
For a $\gamma$-discounted infinite-horizon tabular Markov decision process,
we show that for {\NTD} with a generative model, we need $\wtilde O\prn{\varepsilon^{-2}\mu_{\min}^{-1}(1-\gamma)^{-3}}$ interactions with the environment to achieve an $\varepsilon$-optimal estimator with high probability, when the estimation error is measured by the $1$-Wasserstein distance.
Here $\mu_{\min}$ is a covering coefficient.
This sample complexity bound is minimax optimal up to logarithmic factors.
In addition, we revisit categorical distributional temporal difference learning (\CTD), showing that the same non-asymptotic convergence bounds hold for {\CTD} in the case of the $1$-Wasserstein distance.

We also extend our analysis to the more general setting where the data generating process is Markovian.
In the Markovian setting, we propose variance-reduced variants of {\NTD} and {\CTD}, and show that both can achieve a $\widetilde{O}\left({\varepsilon^{-2} \mu_{\pi,\min}^{-1}(1-\gamma)^{-3}}+{\tmix}{\mu_{\pi,\min}^{-1}(1-\gamma)^{-1}}\right)$ sample complexity bounds in the case of the $1$-Wasserstein distance, which matches the state-of-the-art statistical results for classic policy evaluation.
Here $\mu_{\pi,\min}$ is some covering coefficient and $\tmix$ is the mixing time of the underlying Markov chain.

To achieve the sharp statistical rates, we establish a novel Freedman's inequality in Hilbert spaces.
This new Freedman's inequality would be of independent interest for statistical analysis of various infinite-dimensional online learning problems.
\end{abstract}

\tableofcontents

\section{Introduction}
In high-stake applications of reinforcement learning (RL), such as healthcare \citep{lavori2004dynamic,bock2022superhuman} and finance \citep{ghysels2005there}, only considering the mean of returns is insufficient. 
It is necessary to take risks and uncertainties into consideration. 
Distributional reinforcement learning (DRL) \citep{morimura2010nonparametric,bellemare2017distributional,bdr2022} addresses such issues by modeling the complete distribution of returns instead of their expectations.
One of the most fundamental tasks in DRL is to estimate the return distribution $\eta^\pi$ for a given policy $\pi$, which is referred to as distributional policy evaluation.

In analogy to classic temporal-difference learning \citep{sutton1988learning}, distributional temporal-difference (TD) learning is one of the most widely-used methodologies for solving the distributional policy evaluation problem. 
Despite its empirical success, theoretical understanding is relatively lacking.
Existing theoretical works have primarily focused on asymptotic behaviors of distributional TD. 
In particular, \citet{rowland2018analysis} and \citet{rowland2023analysis} showed the asymptotic convergence of two variants of distributional TD, namely categorical temporal difference learning ({\CTD}) \citep{bellemare2017distributional} and quantile temporal difference learning ({\QTD}) \citep{dabney2018distributional}, respectively. 
A natural question arises: 
\emph{can we precisely depict the statistical efficiency of distributional TD with non-asymptotic rates similar to the classic TD algorithm \citep{li2024q}}?

We would answer this question by exploring near-minimax-optimal non-asymptotic sample complexity bounds for distributional TD.
A large body of existing non-asymptotic analyses on classic TD learning \citep{shah2018q,chen2024lyapunov,li2024q} are based on the stochastic approximation (SA) ~\citep{robbins1951stochastic,sdr2021} approach.
In the current paper, we will also employ the SA approach by viewing distributional TD learning as an instance of infinite-dimensional SA algorithms.
As a key tool for deriving sharp statistical rates for SA algorithms, Freedman's inequality \citep{freedman1975tail} can be viewed as a Bernstein-type inequality for martingales.
It has various generalizations, such as matrix Freedman's inequality \citep{tropp2011freedman}. 
However, to the best of our knowledge, an applicable version of infinite-dimensional Freedman's inequality remains open.
Thus, we are encouraged to develop a Freedman's inequality in Hilbert spaces, with which we obtain near-optimal non-asymptotic statistical rates for distributional TD.
We would believe that the new Freedman inequality has independent merits and can be useful for the statistical analysis of various online learning problems \citep{kushner1985stochastic, dieuleveut2015non, martin2019stochastic}.

\subsection{Contributions}

Our contribution includes three folds.   
First, we affirmatively answer the above question about the statistical efficiency of distributional TD in the generative model setting \citep{kakade2003sample, kearns2002sparse} (also called the synchronous setting).
Specifically, we present the non-parametric distributional TD algorithm ({\NTD}) in Section~\ref{Section_dtd}, which is not practical but aids theoretical understanding.
We show that, in the generative model setting, for {\NTD}, $\wtilde O\prn{\varepsilon^{-2}\mu_{\min}^{-1}(1-\gamma)^{-3}}$
\footnote{In this paper, the notation $f(\cdot)=\tilde{O}\prn{g(\cdot)}$ ($f(\cdot)=\tilde{\Omega}\prn{g(\cdot)}$) means that $f(\cdot)$ is order-wise no larger (smaller) than $g(\cdot)$, ignoring logarithmic factors $\textup{poly}(\log\abs{\gS},\log\abs{\gA},-\log\prn{1-\gamma},\log(\tmix),\log\prn{1/{\mu_{\min}}},\log(1/{\varepsilon}),\log(1/\delta)$, as $\abs{\gS},\abs{\gA},\prn{1-\gamma}^{-1},\tmix,1/{\mu_{\min}},1/\varepsilon,1/{\delta}\to\infty$.} interactions with the environments are sufficient to yield an estimator $\hat\eta^\pi$ such that the $1$-Wasserstein metric between $\hat\eta^\pi$ and $\eta^\pi$ is less than $\varepsilon$ with high probability (Theorem~\ref{thm:samplecomplex_syn_w1}).
Here $\mu_{\min}$ is a parameter that describes how well the generative model can cover the whole state space.
This bound is minimax optimal (Theorem~\ref{thm:minimax_lower_bound_w1}) if we neglect
all logarithmic terms.
The same minimax optimal sample complexity bounds also hold for the more practical {\CTD} algorithm.

Second, we answer the above question about the statistical efficiency of distributional TD affirmatively in the Markovian setting \citep{li2021sample,mou2022optimal2,li2023online, chen2024lyapunov} (also called the asynchronous setting).
To achieve minimax optimal sample complexity bounds, we first devise variance-reduced variations of {\NTD} and {\CTD}, which we call {\VRNTD} and {\VRCTD}.
We show a $\widetilde{O}\left({\varepsilon^{-2} \mu_{\pi,\min}^{-1}(1-\gamma)^{-3}}+{\tmix}{\mu_{\pi,\min}^{-1}(1-\gamma)^{-1}}\right)$ sample complexity bound of both {\VRNTD} and {\VRCTD}, which matches the state-of-the-art sample complexity bounds in TD for classic policy evaluation \citep{li2021sample}.
Here $\tmix$ represents the mixing time of Markov chain induced by the underlying MDP and the policy to be evaluated and ${\mu_{\pi,\min}}$ represents how well the stationary distribution of the induced Markov chain can cover the state space.

Third, we propose a novel Freedman's inequality in Hilbert spaces (Theorem~\ref{thm:freedman_ineq}) and utilize it for statistical analysis of distributional TD.
We believe our new Freedman's inequality is of broad interest beyond this work and can be applied to statistical analysis of many infinite-dimensional online learning problems.

A shortened version of this manuscript was  presented orally at a 
conference \citep{anonymous}.
The conference version includes statistical analysis of distributional TD in the synchronous setting and a preliminary version of Freedman's inequality in Hilbert spaces.
In the current version of this paper, we extend our theoretical analysis in the synchronous setting to the generative model setting and provide a in-depth discussion of the results.
We also analyze statistical efficiency of distributional TD in the more challenging Markovian setting.
We propose variance-reduced variants of existing algorithms and manage to show sharp statistical rates that match the state-of-the-art theoretical results in classic policy evaluations. 
We also present a more well-rounded discussion about the motivation and implications of our novel Freedman's inequality.
\subsection{Related Work}\label{subsection_related_work}
\paragraph{Theory of Distributional Policy Evaluation}
Recently, there has been an emergence of work focusing on finite-sample/iteration results of the distributional policy evaluations.
\citet{zhang2023estimation} proposed to solve distributional policy evaluation by the model-based approach in the synchronous setting, and derived corresponding sample complexity bounds, namely $\wtilde O\prn{{\varepsilon^{-2p}(1-\gamma)^{-2p-2}}}$ in the case of the $p$-Wasserstein metric, and $\wtilde O\prn{{\varepsilon^{-2}(1-\gamma)^{-4}}}$ in the case of the KS metric and total variation metric under different mild conditions.
\citet{rowland2024near} proposed direct categorical fixed-point computation (\texttt{DCFP}), a model-based version of {\CTD}, in which they constructed the estimator by solving a linear system directly instead of performing an iterative algorithm.
They showed that, in the synchronous setting, the sample complexity of \texttt{DCFP} is $\wtilde O\prn{{\varepsilon^{-2}(1-\gamma)^{-3}}}$ in the case of the $1$-Wasserstein metric.
This upper bound matches existing lower bounds (up to logarithmic factors) and thus having solved an open problem raised in \citep{zhang2023estimation}.
Roughly speaking, their results imply that learning the full return distribution can be as sample-efficient as learning just its expectation \citep{gheshlaghi2013minimax}.
\citet{pmlr-v202-wu23s} studied the offline distributional policy evaluation problem. They solved the problem via fitted likelihood estimation (\texttt{FLE}) inspired by the classic offline policy evaluation algorithm of fitted Q evaluation (\texttt{FQE}), and provided a generalization bound in the case of the $p$-Wasserstein metric.

Among these works, the work of \cite{speedy} is the closest one to this paper as both perform non-asymptotic analysis of distributional TD in the generative model setting.
However, compared to \citep{speedy}, our analysis is much tighter.
We show that even if we do not introduce any acceleration techniques to the original {\CTD} algorithm, it is still possible to attain the near-minimax optimal sample complexity bounds. 
In a word, we manage to show \emph{sharper} bounds based on a \emph{simpler} algorithm in the generative model setting.
Moreover, we also show non-asymptotic bounds in the more challenging setting with Markovian data.

Table~\ref{table:sample_complexity} gives more detailed comparisons of our results with previous works in the generative model setting.
We set $\mu_{\min}=1$ to make our results comparable with other works in which agents have access to all states at any time $t$.
And Table~\ref{table:sample_complexity_markovian} presents comparisons in the Markovian setting.
Note that solving the distributional policy evaluation problem $\varepsilon$-accurately in the supreme $1$-Wasserstein metric is strictly harder than solving classic policy evaluation problem $\varepsilon$-accurately in the $\ell_\infty$ metric, as the absolute difference of value functions is always bounded by the $1$-Wasserstein metric between return distributions (see the proof of Theorem~\ref{thm:minimax_lower_bound_w1} in Appendix~\ref{Appendix_minimax}). 
We have also listed the sample complexity of the policy evaluation task in the tables for comparison.
\begin{table}
\centering
\begin{tabular}{|c|c|c|c|} 
\hline 
 & Sample Complexity & Algorithms & Task \\
\hline
\citep{li2023breaking} & $\widetilde{O}\left(\frac{1}{\varepsilon^2 (1-\gamma)^3}\right)$ & Model-based & PE \\
\hline
\citep{li2024q} & $\widetilde{O}\left(\frac{1}{\varepsilon^2 (1-\gamma)^3}\right)$ & {\TD} (Model-free) & PE \\
\hline
\citep{rowland2018analysis} & Asymptotic & {\CTD} (Model-free) & DPE \\
\hline
\citep{rowland2023analysis} & Asymptotic & {\QTD} (Model-free) &  DPE\\
\hline
\citep{rowland2024near} & $\widetilde{O}\left(\frac{1}{\varepsilon^2 (1-\gamma)^3}\right)$ & \texttt{DCFP} (Model-based) & DPE \\
\hline
\citep{speedy} & $\widetilde{O}\left(\frac{1}{\varepsilon^2 (1-\gamma)^4}\right)$ & \texttt{SCPE} (Model-free) &  DPE \\
\hline
Our Work & $\widetilde{O}\left(\frac{1}{\varepsilon^2 (1-\gamma)^3}\right)$ & {\CTD} (Model-free) & DPE \\
\bottomrule 
\end{tabular}
\caption{Sample complexity bounds of algorithms in the generative model setting for solving policy evaluation (PE) when the error is measured by $\ell_\infty$ norm, and distributional policy evaluation (DPE) when the error is measured by supreme $1$-Wasserstein metric.}
\label{table:sample_complexity}
\end{table}

\begin{table}
\centering
\begin{tabular}{|c|c|c|c|} 
\hline 
 & Sample Complexity & Algorithms & Task \\
\hline
\citep{li2024q} & $\widetilde{O}\left(\frac{1}{\varepsilon^2\mu_{\pi,\min} (1-\gamma)^4}+\frac{\tmix}{\mu_{\pi,\min}(1-\gamma)}\right)$ & {\TD} & PE \\
\hline
\citep{li2021sample} & $\widetilde{O}\left(\frac{1}{\varepsilon^2\mu_{\pi,\min} (1-\gamma)^3}+\frac{\tmix}{\mu_{\pi,\min}(1-\gamma)}\right)$ & \texttt{VR-TD} & PE \\
\hline
Our Work & $\widetilde{O}\left(\frac{\tmix}{\varepsilon^2 \mu_{\pi,\min}(1-\gamma)^3}\right)$ & {\CTD} & DPE \\
\hline
Our Work & $\widetilde{O}\left(\frac{1}{\varepsilon^2\mu_{\pi,\min} (1-\gamma)^3}+\frac{\tmix}{\mu_{\pi,\min}(1-\gamma)}\right)$ & {\VRCTD} & DPE \\
\bottomrule 
\end{tabular}
\caption{Sample complexity bounds of algorithms in the Markovian setting for solving policy evaluation (PE) when the error is measured by the $\ell_\infty$ norm, and distributional policy evaluation (DPE) when the error is measured by the supreme $1$-Wasserstein metric.}
\label{table:sample_complexity_markovian}
\end{table}

\paragraph{Freedman's Inequality}
Freedman's inequality was originally proposed in \citep{freedman1975tail} as a generalization of Bernstein's inequality to martingales.
\citet{tropp2011freedman} extended Freedman’s inequality to matrix martingales.
And our Freedman's inequality is an extension motivated by statistical analysis for infinite-dimensional online algorithms.
To the best of our knowledge, we are the first to present this version (Theorem~\ref{thm:freedman_ineq}) of Freedman's inequality in Hilbert spaces.
The most related works to ours are \citep{tarres2014online, cutkosky2021high, martinez2024empirical}, where they independently proved a special case of our Theorem~\ref{thm:freedman_ineq} depending on an almost surely upper bound of the quadratic variation that could be restrictive. 
In contrast, our Theorem~\ref{thm:freedman_ineq} provides finer depiction of concentration and can produce sharper upper bounds for the case where the quadratic variation is ``not too large'' with high probability.
Actually, in certain infinite-dimensional problems such as the distributional TD learning we aim to investigate, achieving sharp statistical rates is only possible by the general version Theorem~\ref{thm:freedman_ineq}.

The remainder of this paper is organized as follows. 
In Section~\ref{Section_prelim} we introduce some backgrounds of DRL.
In Section~\ref{Section_dtd}, we revisit the distributional TD algorithms, propose {\NTD} to ease theoretical analysis, and devise the variance-reduced variants of existing algorithms that can achieve sharper rates in the Markovian setting.
In Section~\ref{Section_analysis} we analyze the statistical efficiency of distributional TD algorithms in both the generative model setting and the Markovian setting.
In Section~\ref{Section_Freedman} we present Freedman's inequality in Hilbert spaces.
Section~\ref{Section_proof_outlines} presents the proof outlines of our theoretical results. We conclude our work in Section~\ref{Section_discussion}. All the proof details are given in the appendices.

\section{Backgrounds}\label{Section_prelim}
\subsection{Hilbert Space}\label{subsection:Hilbert}
A Hilbert space $\prn{\sX,\inner{\cdot}{\cdot}}$ is a complete vector space equipped with an inner product\footnote{In this paper, we assume that all the Hilbert and Banach spaces we encounter are separable, which can avoid measurability issues, ensure that the Bochner integral can be defined, and guarantee
tightness of any distribution. See \cite{pisier_2016} for more details about probability in Banach space}.
For a thorough review one may check \citep{yosida2012functional}.
Throughout this paper, we denote by $\sL(\sX)$ the space of all bounded linear operators in $\sX$, 
and $\bI\in\sL\prn{\sX}$ the identity operator.
For any $\bm{A}\in\sL(\sX)$, we denote the operator norm of $\bA$ as $\norm{\bm A}:=\sup_{f\in\sX,\norm{f}=1}\norm{\bm A f}$.
For any operators or matrices $\brc{\bm{A}_k}_{k=1}^t$, $\prod_{k=1}^t \bm{A}_k$ is defined as $\bm{A}_t\bm{A}_{t-1}\cdots \bm{A}_1$.

\subsection{Markov Decision Processes}\label{subsection:MDP}
An infinite-horizon tabular Markov decision process (MDP) is defined by a 5-tuple $M=\<\gS,\gA,\gP_R,P,\gamma\>$, where $\gS$ represents a finite state space, $\gA$ a finite action space, ${\gP_R}$ the distributions of rewards, ${P}$  the transition dynamics, (\ie, $\gP_R(\cdot|s,a)\in\Delta\prn{[0,1]}$),
$P(\cdot|s,a)\in\Delta\prn{\gS}$ for any state action pair $(s,a)\in\gS\times\gA$), and $\gamma\in(0,1)$ a discounted factor.
Here we use $\Delta(\cdot)$ to represent the set of all probability distributions over some set.
Given a policy $\pi\colon\gS\to\Delta\prn{\gA}$ and an initial state $s_0= s$, a random trajectory $\brc{\prn{s_t,a_t,r_t,s_{t+1}}}_{t=0}^\infty$ can be sampled from $M$: 
$a_t\sim\pi(\cdot| s_t)$, $r_t\sim \gP_R({\cdot}| s_t,a_t)$, ${s_{t+1}}\sim P({\cdot}| {s_t,a_t})$ for any $t\in\NB$. 
Given a trajectory, we define the return by $G^\pi(s):=\sum_{t=0}^\infty \gamma^t r_t\in\brk{0,(1-\gamma)^{-1}}.$
We denote by $\eta^\pi(s)$  the probability distribution of $G^\pi(s)$, and ${\bm{\eta}}^\pi:=\prn{\eta^\pi(s)}_{s\in\gS}$.
If the initial state $s_0$ is sampled from $\rho\in\Delta(\gS)$, then the return distribution is given by $\eta^\pi(\rho)=\EB_{s_0\sim\rho}\eta^\pi(s_0)$.
The expected return $V^\pi(s)=\EB G^\pi(s)$ is called value function in conventional RL. 

\subsection{Distributional Bellman Equation and Operator}

Recall the classic policy evaluation task, where one need to evaluate the value functions ${\bm{V}}^\pi=\prn{V^\pi(s)}_{s\in\gS}$ for some given $\pi$.
It is known that the value functions satisfy the Bellman equation. That is, for any $s\in\gS$,
\begin{equation}\label{Equation_Bellman_equation}
            V^\pi(s)=\brk{{\bm{T}}^\pi({\bm{V}}^\pi)}(s):=\EB\brk{r_0+\gamma V^\pi(s_1)\mid s_0=s}.
\end{equation}
The operator ${\bm{T}}^\pi\colon \RB^\gS\to \RB^\gS$ is called the Bellman operator and ${\bm{V}}^\pi$ is a fixed point of ${\bm{T}}^\pi$.

The task of distribution policy evaluation is to find the whole return distributions ${\bm{\eta}}^\pi$ given some fixed policy $\pi$. 
Here
${\bm{\eta}}^\pi$ satisfies a distributional version of the Bellman equation~\eqref{Equation_Bellman_equation}, that is, for any $s\in\gS$
\begin{equation*}\label{Equation_distributional_Bellman_equation}
\begin{aligned}
        \eta^\pi(s)&=\brk{\gT^\pi{\bm{\eta}}^\pi}(s)&=\EB\brk{\prn{b_{r_0,\gamma}}_\#\eta^\pi(s_1)\mid s_0=s}.
\end{aligned}
\end{equation*}
Here $b_{r,\gamma}\colon \RB\to\RB$ is an affine function defined by $b_{r,\gamma}(x)=r+\gamma x$, and $f_\#\nu$ is the push forward measure of $\nu$ through any function $f\colon \RB\to\RB$, so that $f_\#\nu(A)=\nu(f^{-1}(A))$ for any Borel set $A$, where $f^{-1}(A):=\brc{x\colon f(x)\in A}$.
The operator $\gT^\pi\colon \Delta\prn{\brk{0,(1-\gamma)^{-1})}}^\gS\to \Delta\prn{\brk{0,(1-\gamma)^{-1}}}^\gS$ is known as the distributional Bellman operator, and ${\bm{\eta}}^\pi$ is a fixed point of $\gT^\pi$. 
For ease of notation, from now on we  denote $\Delta\prn{\brk{0,(1-\gamma)^{-1}}}$ by $\sP$.
\subsection{\texorpdfstring{$\gT^\pi$}{Tpi} as Contraction in \texorpdfstring{$\sP^\gS$}{PS}}
A key property of the distributional Bellman operator ${\gT}^\pi$ is that it is a contraction w.r.t.\ common probability metrics.
Letting $d$ be a metric on $\sP$, we denote by $\bar{d}$  the corresponding supreme metric on $\sP^\gS$, \ie, $\bar{d}\prn{{\bm{\eta}},{\bm{\eta}}^\prime}:=\max_{s\in\gS}d\prn{\eta(s),\eta^\prime(s)}$ for any ${\bm{\eta}},{\bm{\eta}}^\prime\in\sP^\gS$. 

Suppose $\nu_1$ and $\nu_2$ are two probability distributions on $\RB$ with finite $p$-moments for $p\in[1,\infty]$.
The $p$-Wasserstein metric between $\nu_1$ and $\nu_2$ is defined as 
\[
W_p(\nu_1, \nu_2)=\left(\inf _{\kappa \in \Gamma(\nu_1, \nu_2)} \int_{\RB^2}\abs{x-y}^p \kappa(dx, dy)\right)^{1 / p}.
\]
Each element $\kappa \in \Gamma(\nu_1, \nu_2)$ is a coupling of $\nu_1$ and $\nu_2$, \ie, a joint distribution on $\RB^2$ with prescribed marginals $\nu_1$ and $\nu_2$ on each ``axis.''
When $p=1$ we have $W_1(\nu_1, \nu_2)=\int_\RB |F_{\nu_1}(x)-F_{\nu_2}(x)| dx$,
where $F_{\nu_1}$ and $F_{\nu_2}$ are the cumulative distribution functions of $\nu_1$ and $\nu_2$, respectively.

It can be shown that $\gT^\pi$ is a $\gamma$-contraction w.r.t.\ the supreme $p$-Wasserstein metric $\bar{W}_p$.
\begin{proposition} \cite[Propositions~4.15]{bdr2022} \label{Proposition_contraction_Wp}
    The distributional Bellman operator is a $\gamma$-contraction on $\sP^\gS$ w.r.t.\ the supreme $p$-Wasserstein metric for $p\in[1,\infty]$. That is, for any ${\bm{\eta}},{\bm{\eta}}^\prime\in\sP^\gS$, we have ${\bar{W}_p\prn{\gT^\pi{\bm{\eta}},\gT^\pi{\bm{\eta}}^\prime}\leq \gamma\bar{W}_p({\bm{\eta}},{\bm{\eta}}^\prime)}.$
\end{proposition}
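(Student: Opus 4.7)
The plan is to prove the contraction property by exhibiting, for each state $s$, an explicit coupling between $(\gT^\pi{\bm{\eta}})(s)$ and $(\gT^\pi{\bm{\eta}}')(s)$ built out of optimal couplings at successor states, and then bounding the associated $p$-cost.

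First, I would fix $s\in\gS$ and unpack the definition: $(\gT^\pi{\bm{\eta}})(s)$ is the law of $r_0 + \gamma G$, where $(s_0,a_0,r_0,s_1)$ is sampled according to $\pi$, $\gP_R$, $P$ starting at $s_0=s$, and conditional on $s_1$ we draw $G\sim \eta(s_1)$; similarly for $(\gT^\pi{\bm{\eta}}')(s)$ using $G'\sim\eta'(s_1)$. For each $s'\in\gS$, let $\kappa_{s'}\in\Gamma(\eta(s'),\eta'(s'))$ be an optimal coupling achieving $W_p(\eta(s'),\eta'(s'))$ (for $p<\infty$; the $p=\infty$ case is handled analogously using an essentially bounded coupling). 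I then define a coupling of $(\gT^\pi{\bm{\eta}})(s)$ and $(\gT^\pi{\bm{\eta}}')(s)$ as follows: sample the common transition data $(a_0,r_0,s_1)$ from the MDP dynamics given $s_0=s$, then draw $(G,G')\sim \kappa_{s_1}$, and return $(r_0+\gamma G,\,r_0+\gamma G')$. Checking marginals is immediate.

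Next, for $p\in[1,\infty)$, I would bound
\begin{equation*}
W_p^p\bigl((\gT^\pi{\bm{\eta}})(s),(\gT^\pi{\bm{\eta}}')(s)\bigr)
\le \EB\bigl[\abs{(r_0+\gamma G)-(r_0+\gamma G')}^p\,\big|\,s_0=s\bigr]
= \gamma^p\,\EB\bigl[\EB\brk{\abs{G-G'}^p\mid s_1}\,\big|\,s_0=s\bigr].
\end{equation*}
By the choice of $\kappa_{s_1}$, the inner conditional expectation equals $W_p^p(\eta(s_1),\eta'(s_1))\le \bar W_p^p({\bm{\eta}},{\bm{\eta}}')$. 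Taking the $p$-th root and then the supremum over $s$ yields $\bar W_p(\gT^\pi{\bm{\eta}},\gT^\pi{\bm{\eta}}')\le \gamma\bar W_p({\bm{\eta}},{\bm{\eta}}')$. For $p=\infty$, I would pick couplings $\kappa_{s'}$ whose support lies in $\{|x-y|\le W_\infty(\eta(s'),\eta'(s'))\}$ and repeat the argument, bounding the essential supremum of $|r_0+\gamma G-r_0-\gamma G'|$ by $\gamma\bar W_\infty({\bm{\eta}},{\bm{\eta}}')$ almost surely.

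The only mildly delicate step is the measurability of $s'\mapsto \kappa_{s'}$, needed so that the composite law is well defined. Since $\gS$ is finite this is automatic: there are only finitely many optimal couplings to pick, so no measurable-selection theorem is required. All other ingredients—the translation-invariance and positive-homogeneity of $W_p$ under the affine map $b_{r,\gamma}$, and the convexity of $W_p^p$ under mixtures—are absorbed cleanly into the single coupling construction above, making this the most transparent route.
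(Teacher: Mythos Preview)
The paper does not give its own proof of this proposition; it merely cites \cite[Proposition~4.15]{bdr2022} as the source. Your coupling argument is correct and is essentially the standard proof of this fact (and indeed is the approach taken in the cited reference): share the one-step transition randomness, couple the successor-state return distributions optimally, and use that the affine map $b_{r,\gamma}$ scales distances by $\gamma$. Since $\gS$ is finite there are no measurable-selection subtleties, and for $p=\infty$ the comonotone (quantile) coupling on $\RB$ attains $W_\infty$, so that case is also fine.
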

The $\ell_p$ metric for $p\in[1,\infty)$ between $\nu_1$ and $\nu_2$ is defined as $\ell_p(\nu_1, \nu_2)=\prn{\int_\RB \abs{F_{\nu_1}(x)-F_{\nu_2}(x)}^p dx}^{1/p}$,
and $\gT^\pi$ is $\gamma^{1/p}$-contraction w.r.t.\ the supreme $\ell_p$ metric $\bar{\ell}_p$.
\begin{proposition} \cite[Propositions~4.20]{bdr2022}\label{Proposition_contraction_lp}
    The distributional Bellman operator is a $\gamma^{1/p}$-contraction on $\sP^\gS$ w.r.t.\ the supreme $\ell_p$ metric for $p\in[1,\infty)$. That is, for any ${\bm{\eta}},{\bm{\eta}}^\prime\in\sP^\gS$, we have ${\bar{\ell}_p\prn{\gT^\pi{\bm{\eta}},\gT^\pi{\bm{\eta}}^\prime}\leq \gamma^{1/p}\bar{\ell}_p({\bm{\eta}},{\bm{\eta}}        ^\prime)}.$
\end{proposition}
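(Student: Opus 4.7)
The plan is to decompose the action of $\gT^\pi$ on each coordinate $s\in\gS$ into two steps and control each using the $L^p$ representation $\ell_p(\nu_1,\nu_2) = \norm{F_{\nu_1}-F_{\nu_2}}_{L^p(\RB)}$: (i) the affine pushforward $(b_{r,\gamma})_\#$ applied to a next-state return distribution, and (ii) the averaging over the reward $r_0$ and the next state $s_1$ under the transition dynamics.

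First, I would compute how the pushforward by $b_{r,\gamma}$ acts on the $\ell_p$ metric. Since $F_{(b_{r,\gamma})_\#\nu}(x) = F_\nu\bigl((x-r)/\gamma\bigr)$, the change of variable $y=(x-r)/\gamma$ yields, \emph{exactly},
\[
\ell_p\bigl((b_{r,\gamma})_\#\nu_1,(b_{r,\gamma})_\#\nu_2\bigr)^p
= \int_\RB \abs{F_{\nu_1}\bigl((x-r)/\gamma\bigr) - F_{\nu_2}\bigl((x-r)/\gamma\bigr)}^p dx = \gamma\,\ell_p(\nu_1,\nu_2)^p,
\]
independently of the shift $r$. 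This is where the exponent $1/p$ in $\gamma^{1/p}$ is born, in contrast with the Wasserstein case of Proposition~\ref{Proposition_contraction_Wp}. For step (ii), since the CDF of a mixture is the mixture of CDFs, $F_{\EB\mu_Z}(x) = \EB F_{\mu_Z}(x)$, applying Jensen's inequality to the convex function $t\mapsto\abs{t}^p$ and Fubini then gives
\[
\ell_p\bigl(\EB\mu_Z,\EB\mu'_Z\bigr)^p \le \EB\brk{\ell_p(\mu_Z,\mu'_Z)^p}.
\]

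Combining the two displays with $\mu_Z = (b_{r_0,\gamma})_\#\eta(s_1)$ and $\mu'_Z = (b_{r_0,\gamma})_\#\eta'(s_1)$ under the conditional law of $(r_0,s_1)$ given $s_0=s$ yields
\[
\ell_p\bigl([\gT^\pi{\bm\eta}](s),[\gT^\pi{\bm\eta}'](s)\bigr)^p \le \gamma\,\EB\brk{\ell_p\bigl(\eta(s_1),\eta'(s_1)\bigr)^p \mid s_0 = s} \le \gamma\,\bar{\ell}_p({\bm\eta},{\bm\eta}')^p;
\]
taking $p$-th roots and maximizing over $s\in\gS$ delivers the claim. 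The only mild subtlety is interpreting the Bellman expectation as a Bochner integral of probability measures and reducing it to pointwise Jensen via Fubini; once that is handled there is no further obstacle, and the exponent $\gamma^{1/p}$ is seen to be tight by taking $\abs{\gS}=1$ with a deterministic reward, for which the first display holds with equality and the second is trivial.
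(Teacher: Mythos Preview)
Your proof is correct. Note, however, that the paper does not supply its own proof of this proposition: it is stated as a citation to \cite[Proposition~4.20]{bdr2022} and used as a known background result. Your argument---change of variables under the affine pushforward $(b_{r,\gamma})_\#$ to extract the factor $\gamma$ in the $p$-th power, followed by Jensen on the mixture over $(r_0,s_1)$---is precisely the standard route taken in that reference, so there is nothing to contrast.
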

Note that the $\ell_1$ metric coincides with the $1$-Wasserstein metric.
For $p=2$, the $\ell_2$ metric is also called the Cram\'er metric.
It plays an important role in subsequent analysis because the zero-mass signed measure space equipped with this metric, $\prn{\gM, \norm{\cdot}_{\ell_2}}$ defined in Section~\ref{subsection:signed_measure}, is a Hilbert space\footnote{In fact, the space $\prn{\gM, \norm{\cdot}_{\ell_2}}$ is not complete. However, the lack of completeness does not affect the non-asymptotic analysis, see Section~\ref{subsection:signed_measure} for more details.}. 

\section{Distributional Temporal Difference Learning}\label{Section_dtd}

If the MDP $M=\<\gS,\gA,\gP_R,P,\gamma\>$ is known, the value functions ${\bm{V}}^\pi$, as the fixed point of the contraction ${\bm{T}}^\pi$,  can be evaluated via the celebrated dynamic programming algorithm (DP).
To be concrete, for any initialization ${\bm{V}}^{(0)}\in\RB^\gS$, if we define the iteration sequence ${\bm{V}}^{(k+1)}={\bm{T}}^\pi({\bm{V}}^{(k)})$ for $k\in \NB$, we have $\lim_{k\to\infty}\norm{{\bm{V}}^{(k)}-{\bm{V}}^\pi}_\infty=0$ by the contraction mapping theorem \citep[Proposition~4.7 in ][]{bdr2022}. 

Similarly, the distributional dynamic programming algorithm (DP) defines the iteration sequence as ${\bm{\eta}}^{(k+1)}=\gT^\pi{\bm{\eta}}^{(k)}$ for any initialization ${\bm{\eta}}^{(0)}\in\sP^\gS$.
And in the same way, we have $\lim_{k\to\infty}\bar{W}_p({\bm{\eta}}^{(k)},{\bm{\eta}}^\pi)=0$ for $p\in [1,\infty]$ and $\lim_{k\to\infty}\bar{\ell}_p({\bm{\eta}}^{(k)},{\bm{\eta}}^\pi)=0$  for $p\in [1,\infty)$.

However, in most application scenarios, the transition dynamic $P$ and reward distribution $\gP_R$ are unknown, instead we can only access samples of $P$ and $\gP_R$ by online interactions with the environments.
Similar to TD \citep{sutton1988learning} in classic RL, distributional TD employs the stochastic approximation \citep{robbins1951stochastic} technique to address the aforementioned problem and can be viewed as an approximate version of the distributional DP. 
In this paper, we first analyze distributional TD when a generative model \citep{kakade2003sample, kearns2002sparse} is accessible, which can generate independent samples in each iteration.
Then, we will extend the theoretical results to the more challenging and practical setting where the data is generated in a Markovian manner.

\subsection{Distributional TD on a Generative Model}\label{subsection:dtd_with_generative_model}

In this part, we assume that a generative model with a fixed distribution $\mu\in\Delta\prn{\gS}$ with $\mu_{\min}:=\min_{s\in\gS}\mu(s)>0$ is available. That is, in the $t$-th iteration, we collect samples $s_t\sim\mu(\cdot), a_t\sim\pi(\cdot|s_t), r_t\sim \gP_R(\cdot|s_t,a_t), s_t^\prime\sim P(\cdot|s_t,a_t)$.
Let $\bLamb_t=\operatorname{diag}\brc{\prn{\delta_{s,s_{t}}}_{s\in\gS}}$ be a diagonal matrix, where $\delta_{s,s_{t}}=1$ if $s_t=s$, and $0$ otherwise.
And we define the empirical Bellman operator ${\gT}_t^\pi\colon\sP^\gS\to\sP^\gS$ at the $t$-th iteration, which satisfies
\begin{equation}\label{eq:empirical_op}
    \brk{\gT_t^\pi{\bm{\eta}}}(s_{t})=(b_{r_{t},\gamma})_\#(\eta(s_{t}^\prime)),\quad\forall\bm{\eta}\in\sP^\gS.
\end{equation}
It is an unbiased estimator of $\gT^\pi$, namely $\EB\brk{\brk{\gT_t^\pi{\bm{\eta}}}(s_t)\mid s_t}=\brk{\gT^\pi{\bm{\eta}}}(s_t)$. 
We denote by $\gI$ the identity operator in $\sP^\gS$.
\subsubsection{Non-parametric Distributional TD}
We first introduce non-parametric distributional TD learning ({\NTD}), which is helpful in the theoretical understanding of distributional TD.
In the {\NTD} case, we assume the infinite-dimensional return distributions can be precisely updated without any finite-dimensional parametrization. 
For any initialization ${\bm{\eta}}^\pi_0\in\sP^\gS$, the updating scheme is
\[
    {\bm{\eta}}^\pi_t=\prn{\gI-\alpha_t\bLamb_t}{\bm{\eta}}^\pi_{t-1}+\alpha_t\bLamb_t\gT^\pi_t{\bm{\eta}}^\pi_{t-1}
\]
for any $t\geq 1$, where $\alpha_t$ is the step size.
We can find that {\NTD} is a stochastic approximation modification of distributional DP.
Furthermore, if we take expectation of the random measures on both the sides, we obtain the classic TD updating scheme. Hence, {\NTD} is a lift of classic TD to distributional RL.
\subsubsection{Categorical Distributional TD}
Now, we revisit the more practical {\CTD}.
In this case, the updates are computationally tractable, due to the following categorical parametrization of probability distributions.
\[
    \sP_K := \left\{ \sum_{k=0}^K p_k \delta_{x_k} : p_0, \ldots, p_K \geq 0 \, , \sum_{k=0}^K p_k = 1 \right\}, 
\]
where $K\in \NB$, $0\leq x_0<\cdots<x_K\leq \prn{1-\gamma}^{-1}$ are fixed points of the support, and $\delta_{x}$ is the point mass (Dirac measure) at $x\in\RB$.
For simplicity, we assume $\brc{x_k}_{k=0}^K$ are equally-spaced, \ie, $x_k=k \iota_K$, where $\iota_K=\brk{K(1-\gamma)}^{-1}$ is the gap between two points.
When updating the return distributions, we need to evaluate the $\ell_2$-projection of $\sP_K$, $\bm{\Pi}_K\colon\sP\to\sP_K$, $\bm{\Pi}_K(\mu):=\argmin_{\hat{\mu}\in\sP_K}\ell_2(\mu,\hat{\mu})$.
It can be shown \citep[see Proposition~5.14 in][]{bdr2022} that the projection is uniquely given by
\begin{equation*}
    \bm{\Pi}_K(\mu)=\sum_{k=0}^K p_k(\mu)\delta_{x_k},
\end{equation*}
where
\begin{equation*}
     p_k(\mu)=\EB_{X\sim \mu}\brk{\prn{1-\abs{\frac{X-x_k}{\iota_K}}}_+},
\end{equation*}
and $(x)_+:=\max\brc{x,0}$ for any $x\in\RB$.
It is known that $\bm{\Pi}_K$ is non-expansive w.r.t.\ the Cram\'er metric \citep[see Lemma~5.23 in][]{bdr2022}, namely $\ell_2(\bm{\Pi}_K(\mu),\bm{\Pi}_K(\nu))\leq\ell_2(\mu,\nu)$ for any $\mu,\nu\in\sP$.
For any ${\bm{\eta}}\in\sP^\gS$, $s\in\gS$, we slightly abuse the notation and define $\brk{\bm{\Pi}_K{\bm{\eta}}}(s) := \bm{\Pi}_K\prn{\eta(s)}$.
$\bm{\Pi}_K$ is also non-expansive w.r.t.\ $\bar{\ell}_2$. Hence $\gT^{\pi,K}:=\bm{\Pi}_K\gT^\pi$ is a $\sqrt{\gamma}$-contraction w.r.t.\ $\bar{\ell}_2$. We denote its unique fixed point by ${\bm{\eta}}^{\pi,K}\in\sP_K^\gS$.
The approximation error incurred by categorical parametrization is given by (see \citep[Proposition~3][]{rowland2018analysis})
\begin{equation}\label{eq:CTD_approx_err}
    \bar{\ell}_2({\bm{\eta}}^{\pi},{\bm{\eta}}^{\pi,K})\leq \frac{1}{\sqrt{K}(1-\gamma)}.
\end{equation}
Now, we are ready to give the updating scheme of {\CTD}, given any initialization ${\bm{\eta}}^\pi_0\in\sP^\gS_K$, at the $t$-th iteration,
\begin{equation*}
    {\bm{\eta}}^\pi_t=\prn{\gI-\alpha_t\bLamb_t}{\bm{\eta}}^\pi_{t-1}+\alpha_t\bLamb_t\bm{\Pi}_K\gT^\pi_t{\bm{\eta}}^\pi_{t-1}.
\end{equation*}
The only difference between {\CTD} and {\NTD} lies in the additional projection operator $\bm{\Pi}_K$ at each iteration in {\CTD}.
In this paper, we consider the last iterate estimator ${\bm{\eta}}^\pi_T$, where $T\in\NB$ is the total number of updates.

\subsection{Distributional TD with Markovian Data}
In this part, we consider the more practical Markovian setting, where the generative model is no longer available and we only has access to a random trajectory $\brc{\prn{s_t,a_t,r_t,s_{t+1}}}_{t=0}^\infty$ generated by the policy $\pi$.
Specifically, $s_0$ is sampled from the initial distribution $\rho\in\Delta\prn{\gS}$, and $a_t\sim\pi(\cdot| s_t)$, $r_t\sim \gP_R({\cdot}| s_t,a_t)$, ${s_{t+1}}\sim P({\cdot}| {s_t,a_t})$ for any $t\in\NB$.
The trajectory $\brc{s_t}_{t=0}^\infty$ is a Markov chain in $\gS$ with Markov transition kernel $P^\pi$, where $P^\pi(s^\prime|s):=\quad\sum_{a\in\gA}\pi(a| s)P(s^\prime| s,a)$.
Without loss of generality, we assume that the Markov chain is irreducible and aperiodic.
Hence, there is a unique stationary distribution $\mu_\pi$, and we denote it by $\mu_{\pi,\min}:=\min_{s\in\gS}\mu_{\pi}(s)$, which satisfies $\mu_{\pi,\min}>0$.
According to \citep[Remark 1.2][]{paulin2015concentration}, an irreducible and aperiodic Markov chain with finite states is uniformly ergodic. That is, there exists $\tmix\in\NB$ such that for all $t\in\NB$,
\begin{equation*}
    \max_{s\in\gS}\TV(P^t(\cdot\mid s),\mu_{\pi}(\cdot))\leq\prn{\frac{1}{4}}^{\lfloor\frac{t}{\tmix} \rfloor},
\end{equation*}
where $P^t(\cdot|s)$ is the distribution of $s_t$ given $s_0=s$, and $\TV$ is total variation distance defined as
\begin{equation*}
    \TV(\mu_1,\mu_2)=\frac{1}{2}\sum_{s\in\gS}\abs{\mu_1(s)-\mu_2(s)},
\end{equation*}
for any $\mu_1,\mu_2\in\Delta\prn{\gS}$.
We can define $\bLamb_t$ and $\gT^\pi_t$ in the same way,
once we replace $s_t^\prime$ with $s_{t+1}$ in Eqn.~\eqref{eq:empirical_op}.
\subsubsection{Distributional TD with Data-drop and Burn-in}
To simplify the analysis, we first consider the distributional TD with data-drop and burn-in, which is also used in the analysis of classic TD with Markovian data \citep{samsonov2024improved}.
Namely, given a burn-in time $T_0\in\NB$ and a updating interval $q\in\NB$,  which are  scaled by the mixing time $\tmix$, we only update ${\bm{\eta}}^\pi_t$ using the samples at time $t$ when $t=T_0+mq$ for some $m\in\NB$. 
Hence, the updating schemes of {\NTD} and {\CTD} are respectively given by
\begin{equation*}
    {\bm{\eta}}^\pi_{m}=\prn{\gI-\alpha_{m}\bLamb_{T_0+mq}}{\bm{\eta}}^\pi_{m-1}+\alpha_{m}\bLamb_{T_0+mq}\gT^\pi_{T_0+mq}{\bm{\eta}}^\pi_{m-1},
\end{equation*}
\begin{equation*}
    {\bm{\eta}}^\pi_{m}=\prn{\gI-\alpha_{m}\bLamb_{T_0+mq}}{\bm{\eta}}^\pi_{m-1}+\alpha_{m}\bLamb_{T_0+mq}\bm{\Pi}_K\gT^\pi_{T_0+mq}{\bm{\eta}}^\pi_{m-1},
\end{equation*}
for any $m\geq1$.
In the Markovian setting, we also consider the last iterate estimator ${\bm{\eta}}^\pi_{T_0+T^\star q}$, where $T^\star\in\NB$ is the total number of updates.
Here, we use different notation $T^\star$ to distinguish it from $T$ in the setting without data-drop. 
We want to emphasize that, in this setting, the actual number of samples required is $T_0+T^\star q$, rather than $T^\star$.

\textbf{Remark \theremark:}\stepcounter{remark}
The data-drop technique is not practical, which not only wastes the collected samples at $t\neq T_0+mq$, but requires utilizing the unknown $\tmix$ as well (although $\tmix$ could be estimated with non-asymptotic theoretical guarantees, see \citep{wolfer2019estimating}).
We employ the data-drop merely to simplify the theoretical analysis. 
For distributional TD without data-drop, one can combine the more sophisticated techniques from \citep{li2021sample} to derive a different sample complexity bound (see Section~\ref{Section_non_asymp_asyn} for more discussions).

\subsubsection{Distributional TD with Variance Reduction}

To achieve sharp sample complexity bound matching that of the classic policy evaluation task in the Markovian setting, we resort to variance reduction techniques \citep{johnson2013accelerating}, giving rise of variance-reduced nonparametric distributional TD learning ({\VRNTD}) and variance-reduced  categorical distributional TD learning ({\VRCTD}). 
{\VRNTD} and {\VRCTD} can be viewed as extensions of variance-reduced TD learning \citep{wainwright2019variance, li2021sample} to the distributional policy evaluation task.
We give some intuitions for variance reduction in distributional TD learning.
We first consider a reference distributional Bellman operator $\wtilde\gT^\pi$ and a reference point $\bar{\bm{\eta}}^\pi$ that are supposed to be good estimates of $\gT^\pi$ and $\bm{\eta}^\pi$, respectively.
As long as $\wtilde\gT^\pi$ is close to $\gT^\pi$ and $\bar{\bm{\eta}}^\pi$ is close to $\bm{\eta}^\pi$, we simply replace $\gT^\pi_t{\bm{\eta}}^\pi_{t-1}$ with $\brc{\gT^\pi_t{\bm{\eta}}^\pi_{t-1}-\gT^\pi_t\bar{\bm{\eta}}^\pi}+\wtilde\gT^\pi\bar{\bm{\eta}}^\pi$, which should have a smaller variance.
Now, we give the complete procedure of {\VRNTD} and {\VRCTD} in Algorithm~\ref{alg:var_reduced_dtd}.
{\VRNTD} and {\VRCTD} are multi-epoch algorithms. 
In the $e$-th epoch for $e\in[E]$, given $\bar{\bm{\eta}}^\pi_{e-1}$, our objective is to obtain a new reference point $\bar{\bm{\eta}}^\pi_e$, which serves as a better estimate of $\bm{\eta}^\pi$.
Within each epoch, we first use $N$ new samples to construct the reference operator $\wtilde\gT^\pi_e$, initialize $\bm{\eta}_{0,e}^\pi$ with $\bar{\bm{\eta}}^\pi_{e-1}$, and update it for the $\tepoch$ steps using $\tepoch$ new samples.
The updating schemes of {\VRNTD} and {\VRCTD} are respectively given by
\begin{equation}\label{eq:vrntd}
    {\bm{\eta}}^\pi_{t,e}=\prn{\gI-\alpha_t\bLamb_{t,e}}{\bm{\eta}}^\pi_{t-1,e}+\alpha_t\bLamb_{t,e}\prn{\gT^\pi_{t,e}{\bm{\eta}}^\pi_{t-1,e}-\gT^\pi_{t,e}\bar{\bm{\eta}}^\pi_{e-1} +\wtilde\gT^\pi_e\bar{\bm{\eta}}^\pi_{e-1}  },
\end{equation}
\begin{equation}\label{eq:vrctd}
    {\bm{\eta}}^\pi_{t,e}=\prn{\gI-\alpha_t\bLamb_{t,e}}{\bm{\eta}}^\pi_{t-1,e}+\alpha_t\bLamb_{t,e}\bm{\Pi}_K\prn{\gT^\pi_{t,e}{\bm{\eta}}^\pi_{t-1,e}-\gT^\pi_{t,e}\bar{\bm{\eta}}^\pi_{e-1} +\wtilde\gT^\pi_e\bar{\bm{\eta}}^\pi_{e-1}  }.
\end{equation}
Then $\bm{\eta}_{\tepoch,e}^\pi$ is treated as the new reference point $\bar{\bm{\eta}}^\pi_{e}$.
After $E$ epochs, we take the last reference point $\bar{\bm{\eta}}^\pi_{E}$ as the final estimator of $\bm{\eta}^\pi$.
Note that the actual number of samples required is $E\prn{N+\tepoch}$ in this setting.

We now consider 
the construction of the reference operator $\wtilde\gT^\pi$ (we omit $e$ for simplicity).
$\wtilde\gT^\pi$ is an unbiased estimator of $\gT^\pi$ based on $N$ consecutive samples $\brc{\prn{\tilde{s}_i,\tilde a_i,\tilde r_i,\tilde s_{i+1}}}_{i=0}^{N-1}$. Concretely, 
\begin{equation}\label{eq:reference_operator}
    \wtilde\gT^\pi{\bm{\eta}} =\prn{\sum_{i=0}^{N-1}\wtilde{\bLamb}_i}^{-1}\sum_{i=0}^{N-1}\wtilde{\bLamb}_i\wtilde\gT^\pi_i {\bm{\eta}}
\end{equation}
for any $\bm{\eta}\in\sP^\gS$.
Here  $\wtilde{\bLamb}_i$ and $\wtilde\gT^\pi_i$ take the same forms as ${\bLamb}_i$ and $\gT^\pi_i$ but use the sample $\prn{\tilde{s}_i,\tilde a_i,\tilde r_i,\tilde s_{i+1}}$ instead.
We remark that $\sum_{i=0}^{N-1}\wtilde{\bLamb}_i$ is invertible with high probability when $N$ is sufficiently large according to the concentration inequality of empirical distributions of Markov chains \citep[Lemma~8][]{li2021sample}.
\begin{algorithm}[H]
\caption{Variance-Reduced Distributional TD Learning with Markovian Data}
\label{alg:var_reduced_dtd}
\begin{algorithmic}[1]
\State \textbf{Input parameters:} initialization $\bar{\bm{\eta}}^\pi_{0}\in\sP^\gS$ ($\bar{\bm{\eta}}^\pi_{0}\in{\sP}_K^\gS$ in {\VRCTD}), initial distribution $\rho\in\Delta\prn{\gS}$, number of epochs $E\in\NB$, epoch length $\tepoch\in\NB$, number of collected samples $N\in\NB$, step sizes $\brc{\alpha_t}_{t=1}^{\tepoch}\subset \prn{0,1}$.
\State \textbf{Initialization:} draw $s_{\text{current}}\sim \rho$.
\For{each epoch $e = 1, \ldots, E$}
    \State Draw $N$ new consecutive samples started from $s_{\text{current}}$, compute $\wtilde{\gT}_e\prn{\bar{\bm{\eta}}^\pi_{e-1}}$ according to Eqn.~\eqref{eq:reference_operator}, and obtain the new $s_{\text{current}}\in\gS$.
    \State Set $\bm{\eta}_{0,e}^{\pi} \gets \bar{\bm{\eta}}^\pi_{e-1}$, and $s_{0,e} \gets s_{\text{current}}$.
    \For{$t = 1, 2, \ldots, t_{\text{epoch}}$}
    
        \State Draw action $a_{t-1,e} \sim \pi(\cdot\mid s_{t-1,e})$, and observe reward $r_{t-1,e}\sim\gP_R\prn{\cdot\mid s_{t-1,e},a_{t-1,e}}$ and next state $s_{t,e} \sim P(\cdot\mid s_{t-1,e}, a_{t-1,e})$.
        \State Update ${\bm{\eta}}^\pi_{t,e}$ according to Eqn.~\eqref{eq:vrntd} in {\VRNTD} or Eqn.~\eqref{eq:vrctd} in {\VRCTD}.
    \EndFor
    \State Set $\bar{\bm{\eta}}^\pi_{e} \gets \bm{\eta}_{\tepoch,e}^{\pi}$, and $s_{\text{current}} \gets s_{\tepoch,e}$.
\EndFor
\State \Return $\bar{\bm{\eta}}^\pi_{E}$.
\end{algorithmic}
\end{algorithm}

\section{Statistical Efficiency} \label{Section_analysis}
In this section we state our main theoretical results.
For both the generative model and Markovian data settings, we give the non-asymptotic convergence rates of the $1$-Wasserstein error for both {\NTD} and {\CTD}.
Before that, we first state the minimax lower bound of the DPE task.

\subsection{Minimax Lower Bound of Distributional Policy Evaluation}

Here we consider the synchronous setting.
In the t-th iteration, for each state $s \in \mathcal{S}$, we independently sample an action $a_t(s)$ from the policy $\pi(\cdot | s)$, a reward $r_t(s)$ from the reward distribution $\mathcal{P}_R(\cdot | s, a_t(s))$, and a next state $s^{\prime}_t(s)$ from the transition dynamic $P(\cdot | s, a_t(s))$ using a generative model for later updates.
The main difference between this setting and the generative model setting is that here we can simultaneously visit all states at one iteration.
For any positive integer $D$, we define $\sM\prn{D}$ as the set of all MDPs with state space size $\abs{\gS}=D$.
For any MDP $M$ and policy $\pi$, we denote by ${\bm{V}}^\pi_M$  the corresponding value function, and by ${\bm{\eta}}^\pi_M$ the corresponding return distribution.

Now, we can state the minimax lower bound of the distributional policy evaluation task in the $1$-Wasserstein metric, whose proof can be found in Appendix~\ref{Appendix_minimax}.
\begin{theorem}[Minimax lower bound of distributional policy evaluation w.r.t.\ the $1$-Wasserstein metric]\label{thm:minimax_lower_bound_w1}
For any positive integer $D\geq 3$, and sample size $T\geq {C}\prn{1-\gamma}^{-1}\log\prn{D/2}$, the following result holds
\begin{equation*}
    \inf_{\hat{{\bm{\eta}}}}\sup_{M\in\sM\prn{D}}\sup_{\pi}\EB\brk{\bar{W}_1\prn{\hat{{\bm{\eta}}},{\bm{\eta}}^\pi_M}}\geq \frac{c}{(1-\gamma)^{3/2}}\sqrt{\frac{\log\frac{D}{2}}{T}}.
\end{equation*}
Here, $c, C>0$ are universal constants, and the infimum $\hat{{\bm{\eta}}}\in\sP^{D}$ ranges over all measurable functions of $T$ samples from the generative model.
\end{theorem}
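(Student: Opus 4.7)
The plan is to reduce the distributional lower bound to the well-known classic policy-evaluation lower bound in the $\ell_\infty$ norm, using the fact that the mean of a distribution is a $W_1$-continuous functional. Concretely, for any two distributions $\mu,\nu\in\sP$ with finite mean,
\[
\bigl|\EB_{X\sim\mu}[X]-\EB_{Y\sim\nu}[Y]\bigr|\;=\;\biggl|\int(F_\mu-F_\nu)\biggr|\;\le\;\int|F_\mu-F_\nu|\;=\;W_1(\mu,\nu).
\]
Given any candidate $\hat{\bm{\eta}}\in\sP^{\gS}$, define $\hat V(s):=\EB_{X\sim\hat\eta(s)}[X]$; the display above yields $\|\hat V-{\bm V}^\pi_M\|_\infty\le \bar W_1(\hat{\bm{\eta}},{\bm{\eta}}^\pi_M)$. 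Hence any $W_1$ estimator induces a PE estimator with no larger error, and it suffices to prove the same lower bound for the (classic) policy-evaluation task in the supremum norm.

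Next I would construct a hard instance family along the lines of the hard Markov-reward-process constructions used in the literature on synchronous TD/Q-learning. Take $|\gS|=D$ and split the states into $\lfloor D/2\rfloor$ disjoint pairs, each forming an independent gadget. Each gadget consists of one ``live'' state $s$ with a single action that, with probability $p_s$, self-loops (receiving deterministic reward $1$) and, with probability $1-p_s$, transitions to an absorbing state with reward $0$. The value function satisfies $V^\pi(s)=1/(1-\gamma p_s)$, whose derivative in $p_s$ is of order $(1-\gamma)^{-2}$ near $p_s\approx 1$. Index a packing by $\theta\in\{0,1\}^{\lfloor D/2\rfloor}$ via $p_s=p_0+\theta_s\,\Delta$ where $p_0$ is chosen close to $1$ so that $1-\gamma p_0\asymp 1-\gamma$, and $\Delta$ is a free parameter. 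Because gadgets are independent and each sample at state $s$ is a single Bernoulli draw with parameter $p_s$, the KL divergence between two product laws over $T$ samples per state differs in at most $\|\theta-\theta'\|_1$ coordinates, each contributing $\asymp \Delta^2/(1-p_0)\asymp \Delta^2/(1-\gamma)$, so the total KL over the $T$ samples is of order $T\|\theta-\theta'\|_1\Delta^2/(1-\gamma)$.

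Then I apply a standard packing-plus-Fano argument (or Assouad's lemma on the Hamming cube), which requires an information budget of order $\log D$ to separate $\theta$'s differing in coordinates whose induced value gap is of order $\Delta/(1-\gamma)^2$. Solving the two constraints, the best achievable value error is of order $\Delta/(1-\gamma)^2$ subject to $T\Delta^2/(1-\gamma)\lesssim \log D$, which gives the claimed rate
\[
\inf_{\hat V}\sup_{M,\pi}\EB\|\hat V-{\bm V}^\pi_M\|_\infty\;\gtrsim\;\frac{1}{(1-\gamma)^{3/2}}\sqrt{\frac{\log(D/2)}{T}}.
\]
Combining with the reduction in the first paragraph yields the theorem, and the lower bound $T\gtrsim(1-\gamma)^{-1}\log(D/2)$ in the statement is exactly what is needed to guarantee $\Delta\le 1-p_0$ so that the parameters $p_s$ lie in a valid regime. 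The main technical step is calibrating $p_0$ and $\Delta$ so that the packing stays admissible ($p_s\in(0,1)$) while the KL per sample is sharp; the separation of gadgets into independent Bernoulli experiments keeps the product-measure KL computation routine. Transferring via the Wasserstein inequality is the easy part; the bulk of the work is in the construction and the information-theoretic accounting, which I would carry out in detail in the appendix.
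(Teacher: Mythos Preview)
Your reduction step---mapping any distributional estimator $\hat{\bm{\eta}}$ to a value estimator $\hat V$ via the mean and using $|\EB_\mu X-\EB_\nu Y|\le W_1(\mu,\nu)$ to get $\|\hat V-{\bm V}^\pi_M\|_\infty\le\bar W_1(\hat{\bm{\eta}},{\bm\eta}^\pi_M)$---is exactly what the paper does (the paper phrases it via the Kantorovich--Rubinstein dual, but the content is identical).

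The difference is that the paper then simply \emph{cites} the classic policy-evaluation minimax lower bound \citep[Theorem~2(b)]{pananjady2020instance} and is done, whereas you propose to re-derive that bound from scratch via a two-state gadget construction and Fano/Assouad. Your construction is the standard one and the information-theoretic calculus you sketch is correct in outline; one small slip is that the value function of your gadget is $V(s)=p_s/(1-\gamma p_s)$ rather than $1/(1-\gamma p_s)$, but the derivative is still of order $(1-\gamma)^{-2}$ near $p_s\approx 1$, so your rate calculation goes through. The trade-off is clear: the paper's route is one line plus a citation, while yours is self-contained but requires a page of packing/KL bookkeeping. Either is fine; if you keep your version, just make sure the gadget parameters ($p_0$, $\Delta$) are calibrated so that $p_s\in(0,1)$ and rewards lie in $[0,1]$, which is where the hypothesis $T\gtrsim(1-\gamma)^{-1}\log(D/2)$ enters, as you correctly identify.
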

The theorem says that for any DPE algorithm working in the synchronous setting, there exist a MDP $M$ and a policy $\pi$ such that we need at least $\tilde{\Omega}\prn{{\varepsilon^{-2}(1-\gamma)^{-3}}}$ samples to ensure $\EB\brk{\bar{W}_1\prn{\hat{{\bm{\eta}}},{\bm{\eta}}^\pi_M}}\leq\varepsilon$ for some $\varepsilon>0$.
In our main result (Theorem~\ref{thm:samplecomplex_syn_w1}), we consider the generative model setting where at each iteration only one state $s_t$ sampled from $\mu$ is accessible.
Hence an additional $1/{\mu_{\min}}$ appears in the sample complexity bound accordingly.
According to \citep[Example~G.1][]{zhang2023estimation}, the factor $1/{\mu_{\min}}$ is inevitable, and the minimax lower bound becomes $\tilde{\Omega}\prn{{\varepsilon^{-2}\mu_{\min}^{-1}(1-\gamma)^{-3}}}$.

\subsection{Non-Asymptotic Analysis with a Generative Model}\label{subsection:nonasym_generative_model}
We present a non-asymptotic convergence rate of $\bar{W}_1({\bm{\eta}}^\pi_T,{\bm{\eta}}^{\pi})$ in the generative model setting for both {\NTD} and {\CTD}, which is minimax optimal up to logarithmic factors.
\begin{theorem}[Sample Complexity of {\NTD} and {\CTD} for the Generative Model Setting]\label{thm:samplecomplex_syn_w1}
In the generative model setting, for any  $\delta \in (0,1)$ and $\varepsilon\in (0, 1)$,
suppose that $K> {4}{\varepsilon^{-2}(1-\gamma)^{-3}}$ in {\CTD}, the initialization is ${\bm{\eta}}^\pi_0\in\sP^\gS$ ($\sP^\gS_K$ in {\CTD}),  
the total update steps $T$ satisfies
\begin{equation*}
    T \geq  \frac{C_1\log^3 T}{\varepsilon^2 \mu_{\min}(1-\gamma)^3}\log \frac{\abs{\gS}T}{\delta}
\end{equation*}
for some large universal constant $C_1>1$ (\ie, $T=\widetilde{O}\left({\varepsilon^{-2} \mu_{\min}^{-1}(1-\gamma)^{-3}}\right)$), and the step size $\alpha_t$ is set as
\begin{equation*}
    \frac{1}{1+\frac{c_2\mu_{\min}(1-\sqrt\gamma)t}{\log t}}\leq\alpha_t\leq \frac{1}{1+\frac{c_3\mu_{\min}(1-\sqrt\gamma)t}{\log t}}
\end{equation*}
for some small universal constants $c_2>c_3>0$.
Then, for both {\NTD} and {\CTD}, with probability at least $1-\delta$, the last iterate estimator satisfies $\bar{W}_1\prn{{\bm{\eta}}^\pi_T,{\bm{\eta}}^{\pi}}\leq \varepsilon$.
\end{theorem}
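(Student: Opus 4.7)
The plan is to pass from the $1$-Wasserstein metric to the Cram\'er ($\ell_2$) metric, carry out the core concentration argument in the Hilbert space $(\gM,\norm{\cdot}_{\ell_2})$, and then translate the resulting $\bar\ell_2$-bound back to $\bar W_1$. Since every distribution in play is supported on $[0,(1-\gamma)^{-1}]$, Cauchy--Schwarz gives $W_1(\mu,\nu)\le\sqrt{1/(1-\gamma)}\,\ell_2(\mu,\nu)$, so it suffices to show $\bar\ell_2(\bm\eta^\pi_T,\bm\eta^\pi)\lesssim\varepsilon\sqrt{1-\gamma}$. For {\CTD}, I insert the triangle inequality $\bar\ell_2(\bm\eta^\pi_T,\bm\eta^\pi)\le\bar\ell_2(\bm\eta^\pi_T,\bm\eta^{\pi,K})+\bar\ell_2(\bm\eta^{\pi,K},\bm\eta^\pi)$ and use the projection error bound~\eqref{eq:CTD_approx_err}, which is $\le\varepsilon\sqrt{1-\gamma}/2$ thanks to the assumption $K>4\varepsilon^{-2}(1-\gamma)^{-3}$. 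Thus both algorithms reduce to controlling $\bar\ell_2(\bm\eta^\pi_T,\bm\eta^{*})$, where $\bm\eta^{*}$ is either $\bm\eta^\pi$ or $\bm\eta^{\pi,K}$, and both share the same form of Cram\'er contraction by Proposition~\ref{Proposition_contraction_lp} (with rate $\sqrt\gamma$).

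Let $\bm e_t:=\bm\eta^\pi_t-\bm\eta^{*}$. Using $(\gI-\alpha_t\bLamb_t)\bm\eta^{*}+\alpha_t\bLamb_t\bm\eta^{*}=\bm\eta^{*}$, I would write the recursion
\begin{equation*}
\bm e_t=(\gI-\alpha_t\bLamb_t)\bm e_{t-1}+\alpha_t\bLamb_t\bigl[(\gT^{\pi,(K)}\bm\eta^\pi_{t-1}-\bm\eta^{*})+(\gT^{\pi,(K)}_t-\gT^{\pi,(K)})\bm\eta^\pi_{t-1}\bigr],
\end{equation*}
with the understanding that $\gT^{\pi,(K)}$ denotes $\gT^\pi$ in the {\NTD} case and $\bm\Pi_K\gT^\pi$ in the {\CTD} case (so the projection is non-expansive in $\bar\ell_2$ either way). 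Unrolling, $\bm e_T$ splits into a \emph{bias part}, a product of random contractions applied to $\bm e_0$, plus a \emph{stochastic part}, a sum of weighted martingale differences $\bm\xi_t:=\alpha_t\bLamb_t(\gT^{\pi,(K)}_t-\gT^{\pi,(K)})\bm\eta^\pi_{t-1}$ propagated through the remaining random contractions. The first, bias, term I would handle state by state: the contraction factor at state $s$ only activates on iterations when $s_t=s$, so by a Chernoff bound on the visit counts (valid under the generative model with probability $\ge 1-\delta/\mathrm{poly}(T)$) each state is visited $\Theta(\mu_{\min}T)$ times, yielding a deterministic decay of the bias contribution of order $(1-c\mu_{\min}(1-\sqrt\gamma))^{\Theta(T/\log T)}$ for the prescribed step sizes, which is negligible once $T\gtrsim \log(1/\varepsilon)/(\mu_{\min}(1-\sqrt\gamma))$.

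The decisive step, and the main obstacle, is the noise part. I would view the accumulated martingale as an element of the separable Hilbert space $(\gM,\norm{\cdot}_{\ell_2})^\gS$ and apply the new Freedman's inequality in Hilbert spaces (Theorem~\ref{thm:freedman_ineq}) componentwise. The per-step variance bound is
\begin{equation*}
\EB\bigl[\,\norm{(\gT^{\pi,(K)}_t-\gT^{\pi,(K)})\bm\eta^\pi_{t-1}(s)}_{\ell_2}^2\mid\mathcal F_{t-1},s_t=s\bigr]\lesssim\frac{1}{1-\gamma},
\end{equation*}
after using the uniform-in-$\bm\eta$ variance computation for the Cram\'er norm of the distributional Bellman noise, together with $\gamma^2 \le \gamma$. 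Combined with the step-size schedule $\alpha_t\asymp\log t/(\mu_{\min}(1-\sqrt\gamma)t)$ and the visit frequencies, the predictable quadratic variation at state $s$ up to time $T$ is $\tilde O(\alpha_T/(1-\gamma))$ with high probability, and Theorem~\ref{thm:freedman_ineq} then yields (after a union bound over $s\in\gS$) that the noise part has $\bar\ell_2$ norm at most $\tilde O(\sqrt{\alpha_T/(1-\gamma)})=\tilde O\bigl(\sqrt{\log T/(\mu_{\min}(1-\sqrt\gamma)(1-\gamma)T)}\bigr)$. Setting this $\le\varepsilon\sqrt{1-\gamma}/2$ gives exactly the sample complexity $T=\tilde O(\varepsilon^{-2}\mu_{\min}^{-1}(1-\gamma)^{-3})$ claimed in the theorem. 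The delicate part is avoiding a worst-case $(1-\gamma)^{-1}$ loss from an \emph{almost-sure} bound on the quadratic variation; this is precisely what the Bernstein-type (rather than Azuma-type) nature of the new Freedman's inequality buys us, and requires coupling the high-probability bound on the quadratic variation with the recursion for $\bm\eta^\pi_{t-1}$ itself, so I would close the loop via a bootstrap/induction on a coarse a priori bound for $\bar\ell_2(\bm\eta^\pi_t,\bm\eta^{*})$.
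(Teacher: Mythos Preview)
Your high-level plan (pass to $\bar\ell_2$, apply Freedman in the Hilbert space $\gM$, then convert back to $\bar W_1$) matches the paper, but two genuine gaps prevent your outline from reaching the claimed $(1-\gamma)^{-3}$ rate.

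\textbf{The decomposition does not produce a martingale.} When you unroll with the \emph{random} operators $\gI-\alpha_t\bLamb_t(\gI-\gT)$, the weights in front of $\bm\xi_k$ are $\prod_{i>k}[\gI-\alpha_i\bLamb_i(\gI-\gT)]$, which depend on $\bLamb_{k+1},\dots,\bLamb_T$ and hence on data \emph{after} time $k$; Freedman's inequality does not apply to such a sum. (Your ``state by state'' reading of the bias term has the same defect: the factor $\gT$ mixes states, so the product is not diagonal and a Chernoff count of visits to $s$ alone does not control it.) The paper avoids this by unrolling with the \emph{deterministic} operator $\gI-\alpha_t\bLamb(\gI-\gT)$, where $\bLamb=\operatorname{diag}(\mu)$; the price is an additional martingale term $\alpha_k(\bLamb-\bLamb_k)(\gI-\gT)\bm\eta_{k-1}$, handled separately (Lemmas~\ref{lem_asyn_dtd_l2_term_i} and~\ref{lem_asyn_dtd_l2_term_ii}).

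\textbf{The crude variance bound loses a factor $(1-\gamma)^{-1}$.} With the deterministic unrolling, the predictable quadratic variation at state $s$ is controlled not by $\max_k\bm\sigma(\bm\eta_{k-1})(s)$ but by $[(\bI-\sqrt\gamma\bP)^{-1}\max_k\bm\sigma(\bm\eta_{k-1})](s)$ (Lemma~\ref{lem:zeta1_var_bound} together with Lemma~\ref{lem:sum_lr_matrix_ub}). If you plug in your crude bound $\bm\sigma\le (1-\gamma)^{-1}\bm 1$, the extra $\|(\bI-\sqrt\gamma\bP)^{-1}\|\sim (1-\gamma)^{-1}$ drives the sample complexity to $\tilde O(\varepsilon^{-2}\mu_{\min}^{-1}(1-\gamma)^{-4})$. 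The optimal rate requires the non-trivial ``total variance'' estimate $(\bI-\sqrt\gamma\bP)^{-1}\bm\sigma(\bm\eta^\pi)\le \tfrac{4}{1-\gamma}\bm 1$ (Lemma~\ref{lem:sigma_fine_upper_bound}), proved via a second-order distributional Bellman equation. The bootstrap you allude to is then made precise by Lemma~\ref{lem:diff_sigma_bounded_by_Delta}, which gives $\bm\sigma(\bm\eta_{k-1})-\bm\sigma(\bm\eta^\pi)\le 4\|\bm\Delta_{k-1}\|_{\bar W_1}\bm 1$ and yields a self-referential recursion in $\|\bm\Delta_t\|_{\bar W_1}$ (not $\bar\ell_2$), solved in Theorem~\ref{thm:solve_recurrence}. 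Your proposal mentions a bootstrap but does not identify this mechanism, and the variance bound you write down is exactly the one that falls short.
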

The key idea of our proof is to first expand the error term $\bar{W}_1\prn{{\bm{\eta}}^\pi_T,{\bm{\eta}}^{\pi}}$ ($\bar{W}_1\prn{{\bm{\eta}}^\pi_T,{\bm{\eta}}^{\pi,K}}$ in {\CTD}) over  time steps.
Then it can be decomposed into an initial error term and a martingale term.
The initial error term becomes smaller as the iterations progress due to the contraction properties of $\gT^\pi$.
To control the martingale term, we first use the basic inequality (Lemma~\ref{lem:prob_basic_inequalities}) $W_1\prn{\mu,\nu}\leq\prn{{1-\gamma}}^{-1/2}\ell_2\prn{\mu,\nu}$, which allows us to analyze this error term in the Hilbert space $(\gM,\norm{\cdot}_{\ell_2})$ defined in Section~\ref{subsection:signed_measure}.
Consequently, we can bound it by Freedman's inequality in Hilbert spaces (Theorem~\ref{thm:freedman_ineq_bounded_W}).
A more detailed outline of the proof is given in Section~\ref{Subsection_analysis_nonasymp_ntd_w1}.

We comment that the derived sample complexity of {\CTD} is better than the previous results \texttt{SCPE} \citep{speedy}. 
Moreover, the additional acceleration term introduced in the updating scheme of \texttt{SCPE} is not needed.
In addition, our theoretical result matches the sample complexity bound of the model-based method DCFP \citep{rowland2024near}.
If we take expectation of ${\bm{\eta}}^\pi_T$ in {\NTD}, we can recover the sample complexity result of classic TD in the generative model setting \citep[see Theorem~1 in ][]{li2024q}.

\textbf{Remark \theremark \ (The Polyak-Ruppert averaging):}\stepcounter{remark}
In Theorem~\ref{thm:samplecomplex_syn_w1} we show a last-iterate convergence; we can show the same sample complexity bound when we use the Polyak-Ruppert averaging $\prn{T-T_0}^{-1}\sum_{t=T_0+1}^T{\bm{\eta}}^\pi_t$, where $T_0\in\NB$ is the burn-in period. See Appendix~\ref{subsection:Polyak} for more details.

\textbf{Remark \theremark \  (The mean error bound):}\stepcounter{remark}
Since $\bar{W}_1\prn{{\bm{\eta}}^\pi_T,{\bm{\eta}}^{\pi}}\leq \prn{1-\gamma}^{-1}$ always holds, we can translate the high probability bound to a bound of expectations, namely
\begin{equation*}
    \EB\brk{\bar{W}_1\prn{{\bm{\eta}}^\pi_T,{\bm{\eta}}^{\pi}}}\leq \varepsilon(1-\delta)+\frac{\delta}{1-\gamma}\leq 2\varepsilon,
\end{equation*}
if taking $\delta\leq\varepsilon(1-\gamma)$.

\textbf{Remark \theremark \ (The $W_p$ error bound):}\stepcounter{remark}
Combining Theorem~\ref{thm:samplecomplex_syn_w1} with the basic inequalities $\bar{W}_p({\bm{\eta}},{\bm{\eta}}^\prime)\leq\prn{1-\gamma}^{-1+1/p}\bar{W}_1^{1/p}({\bm{\eta}},{\bm{\eta}}^\prime)$ for any ${\bm{\eta}}, {\bm{\eta}}^\prime\in\sP^\gS$ (Lemma~\ref{lem:prob_basic_inequalities}), we can derive that $T=\widetilde{O}\left({\varepsilon^{-2p}\mu_{\min}^{-p} (1-\gamma)^{-2p-1}}\right)$ iterations are sufficient to ensure $\bar{W}_p({\bm{\eta}}^\pi_T,{\bm{\eta}}^\pi)\leq\varepsilon$.
As pointed out in the \citep[Example after Corollary 3.1 in][]{zhang2023estimation}, when $p>1$, the slow rate in terms of $\varepsilon$ is inevitable without additional regularity conditions.

\textbf{Remark \theremark \ (The $\ell_2$ error bound):}\stepcounter{remark}
Generally speaking, the Cram\'er distance between two probability distributions with bounded support can be bounded with the the $1$-Wasserstein distance.
However, naively translating the $1$-Wasserstein error bounds into the $\ell_2$ error bounds with this fact would yield a loose rate.
Instead, we find that by making slight modifications to the proof, we can show that the non-asymptotic convergence rate of $\bar{\ell}_2({\bm{\eta}}^\pi_T,{\bm{\eta}}^{\pi})$ is $\widetilde{O}\left({\varepsilon^{-2} \mu_{\min}^{-1} (1-\gamma)^{-5/2}}\right)$.
See Appendix~\ref{Subsection_analysis_coro_ntd_l2} for more details.

\textbf{Remark \theremark \ (Reduce to the synchronous setting):}\stepcounter{remark}
If we replace $\bLamb_t$ defined in Section~\ref{subsection:dtd_with_generative_model} with the identity matrix $\bI$, the conclusion and the proofs still remain with $\mu_{\min}=1$.
This is in fact the synchronous setting, in which at the $t$-th iteration we sample ${a}_t(s)\sim\pi(\cdot|s), {r}_t(s)\sim \gP_R(\cdot|s,{a}_t(s)), {s}_t^\prime(s)\sim P(\cdot|s,{a}_t(s))$ for each $s\in\gS$ independently to update.

In subsequent discussion, we will not state the content in these remarks for the sake of brevity.
\subsection{Non-Asymptotic Analysis with Markovian Data}\label{Section_non_asymp_asyn}
We first state a parallel result to Theorem~\ref{thm:samplecomplex_syn_w1} for both {\NTD} and {\CTD} with data-drop and burn-in in the Markovian setting.
\begin{theorem}[Sample Complexity of {\NTD} and {\CTD} with Data-drop and Burn-in in the Markovian Setting]\label{thm:samplecomplex_asyn_w1}
For any  $\delta \in (0,1)$ and $\varepsilon\in (0,1)$,
suppose that $K> {4}{\varepsilon^{-2}(1-\gamma)^{-3}}$ in {\CTD},
the initialization is ${\bm{\eta}}^\pi_0\in\sP^\gS$ ($\sP_K^\gS$ in {\CTD}), burn-in time is set as $T_0=\lceil\tmix \log\prn{12/\delta}\rceil$,
the total update steps $T^\star$ satisfies
\begin{equation*}
    T^\star \geq  \frac{C_1 \log^3 T^\star}{\varepsilon^2 \mu_{\pi,\min}(1-\gamma)^3}\log \frac{\abs{\gS}T^\star}{\delta}
\end{equation*}
for some large universal constant $C_1>1$, 
the updating interval is set as $q=\lceil\tmix\log\prn{{3T^\star}/{\delta}}\rceil$ (\ie, $T_0+qT^{\star}=\widetilde{O}\left({\tmix}{\varepsilon^{-2} \mu_{\pi,\min}^{-1}(1-\gamma)^{-3}}\right)$ samples are required), and the step size $\alpha_m$ is set as
\begin{equation*}
    \frac{1}{1+\frac{c_2\mu_{\pi,\min}(1-\sqrt\gamma)m}{\log m}}\leq\alpha_m\leq \frac{1}{1+\frac{c_3\mu_{\pi,\min}(1-\sqrt\gamma)m}{\log m}}
\end{equation*}
for some small universal constants $c_2>c_3>0$.
Then, for both {\NTD} and {\CTD}, with probability at least $1-\delta$, the last iterate estimator satisfies $\bar{W}_1\prn{{\bm{\eta}}^\pi_{T_0+T^\star q},{\bm{\eta}}^{\pi}}\leq \varepsilon$.
\end{theorem}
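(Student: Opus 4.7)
The plan is to reduce Theorem~\ref{thm:samplecomplex_asyn_w1} to the generative-model result Theorem~\ref{thm:samplecomplex_syn_w1} via a coupling argument that exploits the burn-in and data-drop scheme. The choices $T_0=\lceil\tmix\log(12/\delta)\rceil$ and $q=\lceil\tmix\log(3T^\star/\delta)\rceil$ are designed precisely so that every state that actually drives an update, $\{s_{T_0+mq}\}_{m=0}^{T^\star}$, is distributed very close to the stationary distribution $\mu_\pi$. Concretely, by the uniform ergodicity bound in Section~\ref{subsection:MDP}, $\TV(P^{T_0}(\cdot\mid s_0),\mu_\pi)\leq (1/4)^{\lfloor T_0/\tmix\rfloor}\leq \delta/12$ and $\max_s\TV(P^{q}(\cdot\mid s),\mu_\pi)\leq \delta/(3T^\star)$.

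Next I would invoke a maximal coupling on an enlarged probability space to construct an auxiliary i.i.d. sequence $\{\tilde s_m\}_{m=0}^{T^\star}$ with $\tilde s_m\sim \mu_\pi$ such that $\tilde s_m = s_{T_0+mq}$ for every $m$ on an event $\gE_{\operatorname{coup}}$ of probability at least $1-\delta/3$ (union bound over one burn-in failure of mass $\delta/12$ and $T^\star$ step-failures of mass $\delta/(3T^\star)$ each). By the strong Markov property, conditional on the coupled state sequence the tuples $(a_{T_0+mq},r_{T_0+mq},s_{T_0+mq+1})$ are drawn independently from $\pi(\cdot\mid\tilde s_m)$, $\gP_R(\cdot\mid\tilde s_m,a_{T_0+mq})$, $P(\cdot\mid\tilde s_m,a_{T_0+mq})$. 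Consequently, on $\gE_{\operatorname{coup}}$ the run of {\NTD}/{\CTD} with data-drop is statistically identical in law to one run of {\NTD}/{\CTD} in the generative-model setting of Section~\ref{subsection:dtd_with_generative_model} with sampling distribution $\mu=\mu_\pi$, with the index $m$ playing the role of $t$ and $T^\star$ playing the role of $T$.

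With this reduction in hand, the third step is simply to apply Theorem~\ref{thm:samplecomplex_syn_w1} with $\mu_{\min}$ replaced by $\mu_{\pi,\min}$ and failure probability $2\delta/3$. The hypotheses translate directly: the step-size schedule and the lower bound on $T^\star$ stated in Theorem~\ref{thm:samplecomplex_asyn_w1} coincide with those required by Theorem~\ref{thm:samplecomplex_syn_w1} after the substitutions. This yields an event $\gE_{\operatorname{alg}}$ of conditional probability at least $1-2\delta/3$ on which $\bar W_1({\bm{\eta}}^\pi_{T_0+T^\star q},{\bm{\eta}}^\pi)\leq\varepsilon$, and a final union bound with $\gE_{\operatorname{coup}}$ gives overall probability at least $1-\delta$.

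The main obstacle is the coupling: care is needed to verify that, once the driving states are coupled to the i.i.d. sequence, the remaining randomness $(a,r,s')$ generated by the Markov chain really matches in conditional law the randomness used in the synchronous analysis. This is where the Markov property is essential and where the choice of $q$ scaling with $\tmix\log(T^\star/\delta)$ (rather than $\tmix\log(1/\delta)$) is forced, to absorb the $T^\star$ union bound without degrading the leading factor. Everything else—the expansion of $\bar W_1({\bm{\eta}}^\pi_{T_0+T^\star q},{\bm{\eta}}^\pi)$ into an initial-error contraction plus a martingale term, and the control of the martingale via the Hilbert-space Freedman inequality in Theorem~\ref{thm:freedman_ineq}—is inherited verbatim from the proof of Theorem~\ref{thm:samplecomplex_syn_w1}, and the extra $\tmix$ factor in the total sample count $T_0+T^\star q$ is exactly the cost of the data-drop.
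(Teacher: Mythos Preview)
Your proposal is correct and follows essentially the same route as the paper: a maximal-coupling argument for the burn-in, a Berbee-type coupling (the paper cites Berbee's lemma explicitly) to treat the $q$-spaced samples as i.i.d.\ from $\mu_\pi$, and then a direct invocation of Theorem~\ref{thm:samplecomplex_syn_w1} with $\mu_{\min}$ replaced by $\mu_{\pi,\min}$. The only slip is the probability bookkeeping---your claimed coupling failure is $\delta/12 + T^\star\cdot\delta/(3T^\star) = 5\delta/12$, not $\delta/3$, so the final union bound overshoots $\delta$---but this is trivially fixed by reallocating the budget (the paper splits it as $\delta/3+\delta/3+\delta/3$).
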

The key idea of the proof is to reduce the Markovian setting to the generative model setting based on Berbee’s coupling lemma \citep{berbee1979}, which is also used in the proof of \citep[Theorem 6][]{samsonov2024improved}.
A detailed outline of proof is given in Section~\ref{Subsection_analysis_nonasymp_markov_w1}.

If we consider value functions induced by ${\bm{\eta}}^\pi_T$ in {\NTD}, 
we  have a new $\widetilde{O}\left({\tmix}{\varepsilon^{-2} \mu_{\pi,\min}^{-1}(1-\gamma)^{-3}}\right)$ sample complexity result of classic TD in the Markovian setting.
Compared to the state-of-the-art $\widetilde{O}\left({\varepsilon^{-2} \mu_{\pi,\min}^{-1}(1-\gamma)^{-4}}+{\tmix}{\mu_{\pi,\min}^{-1}(1-\gamma)^{-1}}\right)$  non-asymptotic result of classic TD \citep[Theorem~4][]{li2024q}, our result has a better dependency on the effective horizon $\prn{1-\gamma}^{-1}$, but a worse dependency on the mixing time $\tmix$ (as $\tmix$ is almost independent of $\varepsilon$ in \citep{li2024q}).
However, a variance-reduced version of classic TD can achieve $\widetilde{O}\left({\varepsilon^{-2} \mu_{\pi,\min}^{-1}(1-\gamma)^{-3}}+{\tmix}{\mu_{\pi,\min}^{-1}(1-\gamma)^{-1}}\right)$ sample complexity bound \citep[Theorem~4][]{li2021sample}, which has the best dependency on $\prn{1-\gamma}^{-1}$ and $\tmix$ simultaneously.

Our proposed variance-reduced distributional TD ({\VRNTD} and {\VRCTD}) can also achieve the best sample complexity bound known for the classic policy evaluation task in the Markovian setting as mentioned earlier.  
This result is summarized in the following theorem.
\begin{theorem}[Sample Complexity for {\VRNTD} and {\VRCTD} in the Markovian Setting]\label{thm:samplecomplex_asyn_w1_variance_reduction}
For any  $\delta \in (0,1)$ and $\varepsilon\in (0,1)$,
suppose that $K> {4}{\varepsilon^{-2}(1-\gamma)^{-3}}$ in {\VRCTD},
the initialization is ${\bm{\eta}}^\pi_0\in\sP^\gS$ ($\sP_K^\gS$ in {\VRCTD}),
the number of epochs $E$ satisfies 
\begin{equation*}
    E\geq C_1\log\frac{1}{\varepsilon(1-\gamma)^2},
\end{equation*}
the recentering length $N$ satisfies
\begin{equation*}
    N\geq \frac{C_2}{\mu_{\pi,\min}}\prn{\frac{1}{\varepsilon^2(1-\gamma)^3}+\tmix}\log\frac{\abs{\gS}N}{\delta},
\end{equation*}
the epoch length $\tepoch$ satisfies
\begin{equation*}
    \tepoch\geq\frac{C_3}{\mu_{\pi,\min}}\prn{\frac{1}{(1-\gamma)^3}+\tmix}\prn{\log\frac{1}{\varepsilon(1-\gamma)^2}}\prn{\log\frac{\abs{\gS}\tepoch}{\delta}},
\end{equation*}
for some large universal constant $C_1, C_2, C_3>1$, namely, 
\[E\prn{N+\tepoch}=\widetilde{O}\left({\varepsilon^{-2} \mu_{\pi,\min}^{-1}(1-\gamma)^{-3}}+{\tmix}{\mu_{\pi,\min}^{-1}(1-\gamma)^{-1}}\right)\] samples are required, 
and the step size $\alpha_t$ is set as
\begin{equation*}
    \alpha_t\equiv\alpha=\frac{c_4}{\log\frac{\abs{\gS}\tepoch}{\delta}}\min\brc{\prn{1-\sqrt{\gamma}}^2,\frac{1}{\tmix}},
\end{equation*}
for some small universal constants $c_4>0$.
Then, for both {\VRNTD} and {\VRCTD}, with probability at least $1-\delta$, the last epoch estimator satisfies $\bar{W}_1\prn{\bar{\bm{\eta}}^\pi_{E},{\bm{\eta}}^{\pi}}\leq \varepsilon$.
\end{theorem}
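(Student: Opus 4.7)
The plan is to establish an epoch-wise geometric contraction of the error in the Cram\'er metric and then convert to $\bar W_1$ at the end via the basic inequality $\bar W_1 \leq (1-\gamma)^{-1/2}\,\bar\ell_2$ from Lemma~\ref{lem:prob_basic_inequalities}. Concretely, I aim to prove that with the prescribed choices of $N$, $\tepoch$, and $\alpha$, there are universal constants so that, with probability at least $1-\delta/E$,
\[
\bar\ell_2\brk{\bar{\bm{\eta}}^\pi_e, \bm{\eta}^\pi}
\;\leq\; \tfrac{1}{2}\,\bar\ell_2\brk{\bar{\bm{\eta}}^\pi_{e-1}, \bm{\eta}^\pi} \;+\; c\,\varepsilon\,(1-\gamma)^{1/2}
\]
for every epoch $e\in[E]$. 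Iterating this recursion $E=\Theta\bigl(\log(1/(\varepsilon(1-\gamma)^2))\bigr)$ times contracts the initial $\bar\ell_2$ error (at most $O((1-\gamma)^{-3/2})$) down to $O(\varepsilon(1-\gamma)^{1/2})$, which then translates into $\bar W_1\leq\varepsilon$.

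To establish the per-epoch contraction I would first decompose the VR update error. Adding and subtracting $\gT^\pi\bm{\eta}^\pi_{t-1,e}$ and $\gT^\pi\bar{\bm{\eta}}^\pi_{e-1}$ in Eqn.~\eqref{eq:vrntd}/\eqref{eq:vrctd} gives
\[
\bm{\eta}^\pi_{t,e}-\bm{\eta}^\pi
\;=\;
\prn{\gI-\alpha\bLamb_{t,e}}\prn{\bm{\eta}^\pi_{t-1,e}-\bm{\eta}^\pi}
+\alpha\bLamb_{t,e}\prn{\gT^\pi\bm{\eta}^\pi_{t-1,e}-\gT^\pi\bm{\eta}^\pi}
+\alpha\bLamb_{t,e}\,\xi_{t,e}
+\alpha\bLamb_{t,e}\,\beta_{e},
\]
(with the projection $\bm\Pi_K$ inserted in the {\VRCTD} case, which is $\bar\ell_2$-nonexpansive), where
$\xi_{t,e}=(\gT^\pi_{t,e}-\gT^\pi)(\bm{\eta}^\pi_{t-1,e}-\bar{\bm{\eta}}^\pi_{e-1})$ is a martingale difference whose conditional $\bar\ell_2$-variance is controlled by $\bar\ell_2(\bm{\eta}^\pi_{t-1,e},\bar{\bm{\eta}}^\pi_{e-1})^2$, and $\beta_e=(\wtilde\gT^\pi_e-\gT^\pi)\bar{\bm{\eta}}^\pi_{e-1}$ is a within-epoch constant bias. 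The point of variance reduction is that $\xi_{t,e}$ has conditional variance scaling with how far $\bm{\eta}^\pi_{t-1,e}$ has drifted from the reference $\bar{\bm{\eta}}^\pi_{e-1}$, rather than $O((1-\gamma)^{-1})$.

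From here I would carry out four tasks in order. (i) Apply Berbee's coupling to the $\tepoch$ samples inside each epoch, exactly as in the proof sketch of Theorem~\ref{thm:samplecomplex_asyn_w1}, to replace the Markovian samples by coupled i.i.d.\ samples up to a total-variation cost bounded by $\delta$; the mixing-cost is absorbed into $\tmix$ in $\tepoch$ and into the step-size constraint $\alpha\leq c_4/\tmix$. (ii) Unrolling the recursion and applying Freedman's inequality in Hilbert spaces (Theorem~\ref{thm:freedman_ineq}) in the Hilbert space $(\gM,\norm{\cdot}_{\ell_2})$ to the martingale $\sum_t\prod_{s>t}(\gI-\alpha\bLamb_{s,e})\alpha\bLamb_{t,e}\xi_{t,e}$, using the self-bounding property that its quadratic variation is controlled by $\sum_t\bar\ell_2(\bm{\eta}^\pi_{t-1,e},\bar{\bm{\eta}}^\pi_{e-1})^2$. (iii) Bound the bias contribution $\sum_t\prod_{s>t}(\gI-\alpha\bLamb_{s,e})\alpha\bLamb_{t,e}\beta_e$, whose steady-state magnitude is $O(\norm{\beta_e}_{\bar\ell_2}/(1-\sqrt\gamma))$, using the concentration of $\wtilde\gT^\pi_e$ around $\gT^\pi$; here I would invoke the concentration of empirical distributions of Markov chains (Lemma~8 in \citep{li2021sample}) together with Freedman's inequality applied to $\wtilde\gT^\pi_e\bar{\bm{\eta}}^\pi_{e-1}-\gT^\pi\bar{\bm{\eta}}^\pi_{e-1}$, giving $\norm{\beta_e}_{\bar\ell_2}=\widetilde O\prn{(\mu_{\pi,\min} N)^{-1/2}+\tmix/(\mu_{\pi,\min} N)}$, which is $O(\varepsilon(1-\gamma)^{3/2})$ by the assumed lower bound on $N$. (iv) The initial-error contribution $\prod_{t}(\gI-\alpha\bLamb_{t,e})(\bar{\bm{\eta}}^\pi_{e-1}-\bm{\eta}^\pi)$ is shrunk by a factor $\leq 1/4$ after $\tepoch\gtrsim\log(\cdots)/(\mu_{\pi,\min}\alpha(1-\sqrt\gamma))$ steps via a concentration-based bound on the product $\prod(\gI-\alpha\bLamb_{t,e})$ over the epoch's state visits. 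Combining these four pieces gives the target per-epoch recursion.

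The main obstacle will be step~(ii), the self-bounding use of Freedman's inequality. The noise $\xi_{t,e}$ has quadratic variation controlled by $\bar\ell_2(\bm{\eta}^\pi_{t,e},\bar{\bm{\eta}}^\pi_{e-1})^2$, but this quantity is itself random and is in turn controlled by the martingale we are trying to bound, so an almost-sure bound on the variation would be too lossy (it would reintroduce the $(1-\gamma)^{-1}$ factor and lose the variance reduction). This is precisely the regime where the general form of Theorem~\ref{thm:freedman_ineq}, which handles high-probability bounds on the quadratic variation rather than almost-sure bounds, is essential; the resolution is a bootstrap/induction on $t$ within the epoch that couples the bound on $\max_{t\leq\tepoch}\bar\ell_2(\bm{\eta}^\pi_{t,e},\bar{\bm{\eta}}^\pi_{e-1})$ with the Freedman bound, and this is where the $\log$ factors in $\alpha$ and $\tepoch$ arise. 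Once this is in place, the rest of the argument combines the three concentration inequalities with a union bound over $E$ epochs.
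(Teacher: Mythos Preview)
Your high-level plan (per-epoch contraction in $\bar\ell_2$, then convert to $\bar W_1$) matches the paper, but two substantive differences matter, and one of them is a genuine gap.

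\textbf{A simpler decomposition avoids your ``main obstacle'' entirely.} You write the martingale increment as $\xi_{t,e}=(\gT^\pi_{t,e}-\gT^\pi)(\bm\eta^\pi_{t-1,e}-\bar{\bm\eta}^\pi_{e-1})$, which has iterate-dependent variance and forces the self-bounding Freedman argument you flag as the hard step. The paper instead splits $\gT_t(\bm\eta_{t-1}-\bar{\bm\eta})=\gT_t\bm\Delta_{t-1}+\gT_t(\bm\eta-\bar{\bm\eta})$, putting the iterate dependence into an \emph{empirical} contraction term $\alpha\bLamb_t\gT_t\bm\Delta_{t-1}$ and leaving a martingale $(\gT_t-\gT)(\bm\eta-\bar{\bm\eta})$ whose variance is \emph{fixed} within the epoch (it depends only on $\norm{\bar{\bm\eta}-\bm\eta}$). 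A Hoeffding-type bound in Hilbert space then suffices; no bootstrap is needed. This is not a correctness issue for you, but it removes the part you identified as hardest.

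\textbf{The gap: your bias bound is off by $(1-\gamma)^{-1/2}$, and the fix requires a perturbed fixed point.} You claim $\norm{\beta_e}_{\bar\ell_2}=\widetilde O((\mu_{\pi,\min}N)^{-1/2})$, but each increment $(\wtilde\gT_i-\gT)\bar{\bm\eta}$ has $\bar\ell_2$-norm up to $O((1-\gamma)^{-1/2})$, so concentration only gives $\norm{\beta_e}=\widetilde O\bigl(((1-\gamma)\mu_{\pi,\min}N)^{-1/2}\bigr)$. After the $1/(1-\sqrt\gamma)$ amplification through the recursion, the bias contribution to $\bar\ell_2$ is $\widetilde O\bigl((1-\gamma)^{-3/2}(\mu_{\pi,\min}N)^{-1/2}\bigr)$; forcing this below $\varepsilon\sqrt{1-\gamma}$ requires $N\gtrsim \varepsilon^{-2}\mu_{\pi,\min}^{-1}(1-\gamma)^{-4}$, not the claimed $(1-\gamma)^{-3}$. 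The paper recovers the sharp rate via a two-phase analysis: once $\norm{\bar{\bm\eta}-\bm\eta}\leq 1$, it recenters around the fixed point $\check{\bm\eta}$ of the perturbed operator $\widecheck\gT(\bm\xi)=\gT\bm\xi+(\wtilde\gT-\gT)\bar{\bm\eta}$. Then $\abs{\check{\bm\eta}-\bm\eta}\leq(\bI-\sqrt\gamma\bP)^{-1}\abs{(\wtilde\gT-\gT)\bar{\bm\eta}}$, and a Bernstein bound on $(\wtilde\gT-\gT)\bm\eta$ combined with Lemma~\ref{lem:sigma_fine_upper_bound}'s refined estimate $\norm{(\bI-\sqrt\gamma\bP)^{-1}\bm\sigma(\bm\eta)}\lesssim(1-\gamma)^{-1}$ (rather than the naive $\norm{(\bI-\sqrt\gamma\bP)^{-1}}\norm{\bm\sigma(\bm\eta)}\lesssim(1-\gamma)^{-2}$) saves exactly the missing factor. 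Without this perturbed-fixed-point step your proposal cannot reach the stated sample complexity.

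A minor further difference: the paper handles the Markovian data inside each epoch not by Berbee coupling but by directly following the frame-based analysis of \citep{li2021sample} (via $\tframe$, $\muframe$, and their Lemma~8), which is why $\tmix$ enters $\alpha$ and $\tepoch$ in the specific form stated.
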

The key idea of the proof is to combine the techniques in the proof of Theorem~\ref{thm:samplecomplex_syn_w1} with the proof of the variance-reduced Q-learning \citep[Theorem~4][]{li2021sample}.
A detailed outline of proof is given in Section~\ref{Subsection_analysis_nonasymp_markov_w1_variance_reduction}.
We comment that, using the argument from \citep[Theorem~3][]{li2021sample}, we can replace the constant step size with an adaptive step size that does not depend on the unknown $\tmix$, without affecting the sample complexity. 
For brevity, we omit the details.

We wonder if it is possible that the vanilla (instead of variance-reduced) distributional TD in the Markovian setting could achieve such a tight sample complexity bound.
We leave this question for future work.
\section{Freedman's Inequality in Hilbert Space}\label{Section_Freedman}
To prove our main results, 
we present a Freedman's inequality in Hilbert spaces.
First, we generalize the original Freedman's inequality \citep[Theorem~1.6][]{freedman1975tail} to Hilbert space.
Then we will generalize a more useful version of Freedman's inequality \citep[Theorem~6][]{li2024q} to Hilbert spaces, where we have a finer concentration bounds depending on the quadratic variation process itself instead of a crude bound of the process.
We will give self-contained proofs in Appendix~\ref{Appendix_freedman}.
Our proof techniques are mainly inspired by \citep[Theorem~3.2][]{pinelis1994optimum}.

Let $\sX$ be a Hilbert space, $n\in\NB\cup\brc{\infty}$, $\brc{X_i}_{i=1}^n$ be an $\sX$-valued martingale difference sequence adapted to the filtration $\brc{\gF_i}_{i=1}^n$ for $i\in[n]:=\brc{1, 2, \ldots, n}$ ($[\infty]:=\NB$), $Y_i:=\sum_{j=1}^i X_j$ be the corresponding martingale,
and $W_i:=\sum_{j=1}^i\sigma_j^2$ be the corresponding previsible quadratic variation process.
For a more concrete introduction to martingales one may refer to \citep{durrett2019probability}.
Here $\sigma_j^2:=\EB_{j-1}\norm{X_j}^2$, and $\EB_{i}\brk{\cdot}:=\EB\brk{\cdot|\gF_{i}}$ is the conditional expectation.
\begin{theorem}[Freedman's inequality in Hilbert spaces]\label{thm:freedman_ineq}
Suppose $\sup_{i\in[n]}\norm{X_i}\leq b$ almost surely for some constant $b>0$.
Then, for any $\varepsilon, \sigma>0$, it holds that
\begin{equation*}
\begin{aligned}
    \PB\big(\exists k\in[n],  \: & \norm{Y_k}\geq \varepsilon  \mbox{ and } W_k\leq \sigma^2\big)\leq 2\exp\brc{-\frac{\varepsilon^2/2}{\sigma^2 {+} b\varepsilon/3}}.
\end{aligned}
\end{equation*}
\end{theorem}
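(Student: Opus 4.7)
The plan is to adapt the classical scalar Freedman argument by constructing a nonnegative supermartingale built from $\cosh$ of the norm and an exponential correction for the quadratic variation, and then to apply optional stopping at a stopping time that simultaneously enforces $\norm{Y_k}\geq\varepsilon$ and $W_k\leq\sigma^2$ on the event of interest. Optimizing over the free parameter $\lambda$ will then give the Bernstein exponent.

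The key Hilbert-space ingredient is a comparison lemma due to Pinelis \citep{pinelis1994optimum}: for any $y\in\sX$ and any $\sX$-valued random variable $X$ with $\EB\brk{X\mid\gF}=0$ and $\norm{X}\leq b$ a.s.,
\[
\EB\brk{\cosh(\lambda\norm{y+X})\mid\gF}\leq \cosh(\lambda\norm{y})\cdot\EB\brk{\cosh(\lambda\norm{X})\mid\gF}.
\]
Combined with the elementary bound $\cosh(\lambda\norm{X_k})\leq 1+\norm{X_k}^2(\cosh(\lambda b)-1)/b^2$ (valid since $\norm{X_k}\leq b$) and the Bernstein estimate $\cosh(\lambda b)-1\leq b^2\psi(\lambda)$ with $\psi(\lambda):=\lambda^2/[2(1-\lambda b/3)]$ for $0<\lambda b<3$, this yields
\[
\EB\brk{\cosh(\lambda\norm{Y_k})\mid\gF_{k-1}}\leq\cosh(\lambda\norm{Y_{k-1}})\exp(\psi(\lambda)\sigma_k^2).
\]
Since $W_k$ is $\gF_{k-1}$-measurable (previsible), this says exactly that $M_k:=\cosh(\lambda\norm{Y_k})\exp(-\psi(\lambda)W_k)$ is a nonnegative supermartingale with $M_0=1$.

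For the localization, I introduce $\tau_\varepsilon:=\inf\brc{k\in[n]:\norm{Y_k}\geq\varepsilon}$ and $\tau_\sigma:=\inf\brc{k\in[n]:W_{k+1}>\sigma^2}$, set $\tau:=\tau_\varepsilon\wedge\tau_\sigma\wedge n$, and observe that both are stopping times (using that $W_{k+1}\in\gF_k$), so $\tau$ is bounded. On the event $A:=\brc{\exists k\in[n]:\norm{Y_k}\geq\varepsilon\text{ and }W_k\leq\sigma^2}$, the monotonicity of $W$ forces $\tau_\varepsilon\leq\tau_\sigma$, hence $\tau=\tau_\varepsilon$, so that $\norm{Y_\tau}\geq\varepsilon$ and $W_\tau\leq\sigma^2$ hold simultaneously. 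Using $\cosh x\geq e^x/2$,
\[
M_\tau\ind_A\geq\cosh(\lambda\varepsilon)\exp(-\psi(\lambda)\sigma^2)\ind_A\geq\tfrac{1}{2}\exp\prn{\lambda\varepsilon-\psi(\lambda)\sigma^2}\ind_A.
\]
Optional stopping gives $\EB\brk{M_\tau}\leq M_0=1$, hence $\PB(A)\leq 2\exp(-\lambda\varepsilon+\psi(\lambda)\sigma^2)$. The standard Bernstein choice $\lambda=\varepsilon/(\sigma^2+b\varepsilon/3)$, which automatically satisfies $\lambda b<3$, produces the stated bound $2\exp\brc{-\prn{\varepsilon^2/2}/\prn{\sigma^2+b\varepsilon/3}}$.

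The main obstacle is the Pinelis comparison lemma itself. In the scalar case the analogous inequality follows in one line from the addition formulas for $\cosh$ and $\sinh$ together with $\EB\brk{X}=0$, but in Hilbert space $\norm{y+X}$ does not decompose additively. The argument instead exploits the parallelogram identity $\norm{y+X}^2=\norm{y}^2+2\<y,X\>+\norm{X}^2$ to reduce to a one-variable inequality of the form $\cosh\prn{\lambda\sqrt{a^2+2a\xi+c^2}}\leq\cosh(\lambda a)\cosh(\lambda c)+(\xi/c)\sinh(\lambda a)\sinh(\lambda c)$ for $\xi\in[-c,c]$, which is checked by convexity of $\cosh$ together with a monotonicity argument; the cross term $\<y,X\>$ is then collapsed by $\EB\brk{\<y,X\>\mid\gF}=0$. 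Once this Hilbert-space MGF bound is in hand, the rest of the proof mirrors the scalar Freedman argument verbatim.
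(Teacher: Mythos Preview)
Your overall architecture---build a nonnegative supermartingale from $\cosh(\lambda\norm{Y_k})$ with an exponential correction for $W_k$, apply optional stopping, and optimize $\lambda$---is exactly the paper's strategy. The gap is the ``Pinelis comparison lemma'' you invoke. The multiplicative inequality
\[
\EB\brk{\cosh(\lambda\norm{y+X})\mid\gF}\leq \cosh(\lambda\norm{y})\cdot\EB\brk{\cosh(\lambda\norm{X})\mid\gF}
\]
is \emph{false} in general. A one-dimensional counterexample: $y=1$ and $X\in\{1,-\tfrac{1}{2}\}$ with probabilities $\tfrac{1}{3},\tfrac{2}{3}$ (so $\EB X=0$); at $\lambda=1$ the left side is $\approx 2.006$ while the right is $\approx 1.953$. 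The failure is precisely where you claim the cross term ``is collapsed by $\EB[\langle y,X\rangle\mid\gF]=0$'': after your pointwise convexity bound the cross term is $\tfrac{\langle y,X\rangle}{\norm{y}\,\norm{X}}\sinh(\lambda\norm{X})$, and $\EB\brk{\tfrac{\sinh(\lambda\norm{X})}{\norm{X}}\,X\,\big|\,\gF}$ need not vanish even when $\EB[X\mid\gF]=0$.

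What Pinelis' method actually yields (and what the paper proves) is
\[
\EB_{j-1}\brk{\cosh(\lambda\norm{Y_j})}\leq \cosh(\lambda\norm{Y_{j-1}})\cdot\EB_{j-1}\brk{e^{\lambda\norm{X_j}}-\lambda\norm{X_j}},
\]
with $e^{\lambda\norm{X}}-\lambda\norm{X}$ in place of $\cosh(\lambda\norm{X})$. This is not obtained by a pointwise algebraic bound. One interpolates $\phi(t)=\EB_{j-1}\cosh(\lambda\norm{Y_{j-1}+tX_j})$, uses $\phi'(0)=0$ (this is where $\EB_{j-1}[X_j]=0$ enters, linearly), and bounds $\phi''(t)\leq\lambda^2\cosh(\lambda\norm{Y_{j-1}})\,\EB_{j-1}\brk{\norm{X_j}^2 e^{\lambda t\norm{X_j}}}$ via $x\cosh x\geq\sinh x$ and $\cosh(\lambda\norm{Y_{j-1}+tX_j})\leq\cosh(\lambda\norm{Y_{j-1}})e^{\lambda t\norm{X_j}}$; then $\int_0^1(1-t)\phi''(t)\,dt$ produces exactly $\EB_{j-1}[e^{\lambda\norm{X_j}}-\lambda\norm{X_j}-1]$. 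From there the scalar bound $e^x-x-1\leq \tfrac{x^2}{2(1-x/3)}$ for $0\leq x<3$ gives the same $\psi(\lambda)=\tfrac{\lambda^2}{2(1-\lambda b/3)}$ you use, and the rest of your argument (the stopping-time localization and the choice $\lambda=\varepsilon/(\sigma^2+b\varepsilon/3)$) goes through unchanged.
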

Now, we state the generalization of \citep[Theorem~6][]{li2024q} to Hilbert spaces, which is a key technical tool in our non-asymptotic analysis.
The proof can be found in Appendix~\ref{Subsection_freedman_proof_2}.
\begin{theorem}[Freedman's inequality in Hilbert spaces: $W_k$-dependent upper bound]\label{thm:freedman_ineq_bounded_W}
Suppose $\sup_{i\in[n]}\norm{X_i}\leq b$ and $W_n\leq \sigma^2$ almost surely for some constants $b, \sigma>0$. 
Then, for any $\delta\in(0,1)$, and any positive integer $H\geq 1$, with probability at least $1-\delta$, for all $k\in [n]$, the following inequality holds 
\begin{equation}\label{eq:freedman_2}
    \norm{Y_k}\leq\sqrt{8\max\Big\{ W_{k},\frac{\sigma^{2}}{2^{H}}\Big\}\log\frac{2H}{\delta}}+\frac{4}{3}b\log\frac{2H}{\delta}.
\end{equation}
\end{theorem}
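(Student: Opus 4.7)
The plan is to reduce Theorem~\ref{thm:freedman_ineq_bounded_W} to the basic Freedman inequality (Theorem~\ref{thm:freedman_ineq}) by a dyadic peeling argument on the quadratic variation $W_k$. Since $W_k$ is not known in advance but lies in $[0,\sigma^2]$, I would instantiate Theorem~\ref{thm:freedman_ineq} at $H$ geometrically shrinking variance thresholds and take a union bound. Concretely, for $h\in\{0,1,\ldots,H-1\}$ set $\tau_h:=\sigma^2/2^h$ and choose
\[
\varepsilon_h := 2\sqrt{\tau_h\,\log(2H/\delta)} + \tfrac{4}{3}\,b\,\log(2H/\delta).
\]
The first step is to check that with this $\varepsilon_h$ the exponent in Theorem~\ref{thm:freedman_ineq} is at least $\log(2H/\delta)$. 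Using the elementary bound $\frac{1}{\tau_h+b\varepsilon_h/3}\geq \tfrac12 \min\{1/\tau_h,\,3/(b\varepsilon_h)\}$, the requirement
\[
\frac{\varepsilon_h^2/2}{\tau_h+b\varepsilon_h/3}\;\geq\;\log(2H/\delta)
\]
reduces to the two separate conditions $\varepsilon_h^2\geq 4\tau_h\log(2H/\delta)$ and $\varepsilon_h\geq \tfrac{4}{3}b\log(2H/\delta)$, both of which hold by construction. Consequently,
\[
\PB\Bigl(\exists k\in[n]: \;\|Y_k\|\geq\varepsilon_h\text{ and }W_k\leq\tau_h\Bigr)\;\leq\;\frac{\delta}{H}.
\]

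In the second step, I would union-bound over $h=0,1,\ldots,H-1$ to get an event $\mathcal{E}$ of probability at least $1-\delta$ on which, for every $h$ and every $k$, the implication $W_k\leq \tau_h \Rightarrow \|Y_k\|<\varepsilon_h$ holds simultaneously. On $\mathcal{E}$, for each $k$ I would pick the best available threshold: let $\hat h(k):=\max\{h\in\{0,\ldots,H-1\}:W_k\leq \tau_h\}$, which is well-defined since the almost-sure assumption $W_n\leq \sigma^2=\tau_0$ gives $W_k\leq\tau_0$.

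The third step is the case split that recovers the $\max\{W_k,\sigma^2/2^H\}$ form with the sharp constant $8$. If $\hat h(k)=H-1$, then $W_k\leq \sigma^2/2^{H-1}$ and the bound $\varepsilon_{H-1}=\sqrt{8(\sigma^2/2^H)\log(2H/\delta)}+\tfrac{4}{3}b\log(2H/\delta)$ is majorized by the claimed right-hand side because $\max\{W_k,\sigma^2/2^H\}\geq \sigma^2/2^H$. Otherwise $\hat h(k)\leq H-2$, in which case the maximality of $\hat h(k)$ forces $W_k>\tau_{\hat h+1}=\tau_{\hat h}/2$, so $\tau_{\hat h}<2W_k$ and
\[
\varepsilon_{\hat h}\;<\;2\sqrt{2W_k\log(2H/\delta)}+\tfrac{4}{3}b\log(2H/\delta)\;=\;\sqrt{8W_k\log(2H/\delta)}+\tfrac{4}{3}b\log(2H/\delta),
\]
which matches the claim since $W_k>\sigma^2/2^H$ in this regime. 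Combining the two cases finishes the proof.

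The only nontrivial step is calibrating the constants so that the peeling loss of $\sqrt{2}$ is absorbed into the final coefficient $8$; this is why I use $2\sqrt{\tau_h L}$ rather than $\sqrt{2\tau_h L}$ in the definition of $\varepsilon_h$. Everything else—the choice of dyadic scale, the $\delta/H$ budgeting, and the union bound—is standard, and I do not anticipate any real obstacle once Theorem~\ref{thm:freedman_ineq} is in hand. The main conceptual work therefore lies upstream, in establishing the Hilbert-space version of the ordinary Freedman inequality that we are using as a black box here.
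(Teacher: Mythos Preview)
Your proposal is correct and follows essentially the same approach as the paper's proof: dyadic peeling on the quadratic variation at the $H$ levels $\sigma^2/2^{h}$, instantiating Theorem~\ref{thm:freedman_ineq} with $\varepsilon_h=2\sqrt{\tau_h\log(2H/\delta)}+\tfrac{4}{3}b\log(2H/\delta)$, and union-bounding over $h$. The paper phrases the case split by partitioning the bad event $\{\|Y_k\|\text{ large}\}$ according to which dyadic shell $W_k$ falls in and showing each piece is contained in the corresponding Freedman event $\mathcal{B}_{h,H}^{(k)}$, whereas you argue on the complementary good event by selecting the maximal admissible level $\hat h(k)$; these are dual presentations of the identical argument with the same constants.
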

Compared to the original version of Freedman's inequality (Theorem~\ref{thm:freedman_ineq}), this version is more convenient for applications, and more closely resembles Bernstein's inequality.
To see this, if we assume $\brc{X_i}_{i=1}^n$ are i.i.d. and $n\in\NB$, Bernstein's inequality reads that with probability at least $1-\delta$
\begin{equation*}
    \max_{k\in[n]}\norm{Y_k}\leq\sqrt{4n\EB\brk{\norm{X_1}^2}\log\frac{2}{\delta}}+\frac{4}{3}b\log\frac{2}{\delta}.
\end{equation*}
Here, the second term $\frac{4}{3}b\log\prn{{2H}/{\delta}}$ is not critical because it is independent of $n$, while the first term scales with $\sqrt{n}$. As long as we take $H$ sufficiently large (but still keep the $\log\prn{2H/\delta}$ term small), we have ${\sigma^2}/{2^H}$ sufficiently small. 
Therefore, this upper bound is of order $\sqrt{W_n\log\prn{{2H}/{\delta}}}$.
Our concentration bounds can be much tighter that the $\sqrt{2nb^2\log\prn{2/\delta}}$ bound in the Azuma-Hoeffding inequality \citep[Theorem~3.1][]{luo2022azuma} and the $\sqrt{4\sigma^2\log\prn{2/\delta}}$ bound in the Bernstein's inequality provided in \citep{tarres2014online,cutkosky2021high,martinez2024empirical}, because $W_n$ can be much smaller than $nb^2$ and $\sigma^2$ with high probability.

\section{Proof Outlines}\label{Section_proof_outlines}
In this section, we  outline the proofs of our main theoretical results (Theorem~\ref{thm:samplecomplex_syn_w1} and Theorem~\ref{thm:samplecomplex_asyn_w1}).
Before diving into the details of the proofs, we first define some notation. 
\subsection{Zero-mass Signed Measure Space}\label{subsection:signed_measure}
To analyze the distance between the estimator and the ground-truth ${\bm{\eta}}^\pi$, we will work with the zero-mass signed measure space $\gM$ defined as follows
\begin{equation*}
    \gM:= \left\{\nu\colon\abs{\nu}(\RB)< \infty ,\nu(\RB)=0,\text{supp}(\nu)\subseteq \left[0,\frac{1}{1{-}\gamma} \right] \right\},
\end{equation*} 
where $\abs{\nu}$ is the total variation measure of the signed measure $\nu$, and $\text{supp}(\nu)$ is the support of $\nu$. 
See \citep{bogachev2007measure} for more details about signed measures.

For any $\nu\in\gM$, we define its cumulative function as $F_\nu(x):=\nu[0,x)$. 
It is easily checked that $F_\nu$ is linear in $\nu$, \ie, $F_{\alpha\nu_1+\beta\nu_2}=\alpha F_{\nu_1}+\beta F_{\nu_2}$ for any $\alpha,\beta\in\RB$, $\nu_1,\nu_2\in\gM$.

To analyze the case with the  Cram\'er metric, we define the following Cram\'er inner product on $\gM$,
\begin{equation*}
    \inner{\nu_1}{\nu_2}_{\ell_2}:=\int_{0}^{\frac{1}{1-\gamma}}F_{\nu_1}(x)F_{\nu_2}(x)d x.
\end{equation*} 
It is easy to verify that $\inner{\cdot}{\cdot}_{\ell_2}$ is indeed an inner product on $\gM$. The corresponding norm, called Cram\'er norm, is given by $\norm{\nu}_{\ell_2}=\sqrt{\inner{\nu}{\nu}_{\ell_2}}=\sqrt{\int_{0}^{\prn{1{-}\gamma}^{-1}}\prn{F_{\nu}(x)}^2 d x}$, which is exactly the Cram\'er metric if $\nu$ is the difference of two probability distributions.
Let
\begin{equation*}
    \gM_K := \left\{ \sum_{k=0}^K p_k \delta_{x_k} : p_0, \ldots, p_K \geq 0 \, , \sum_{k=0}^K p_k = 0 \right\},
\end{equation*} 
then $\gM_K$ is a linear subspace of the Hilbert space $\prn{\gM,\inner{\cdot}{\cdot}}$, and the Cram\'er projection operator $\bm{\Pi}_K$ is the orthogonal projection onto $\gM_K$.
Similarly, we define the $W_1$ norm on $\gM$: $\norm{\nu}_{W_1}:=\int_0^{\prn{1{-}\gamma}^{-1}}\abs{F_\nu(x)}dx$. 

We can extend the distributional Bellman operator $\gT^\pi$ and the Cram\'er projection operator $\bm{\Pi}_K$ naturally to $\gM^\gS$ without modifying its definition.
Here, the product space $\gM^\gS$ is a Banach space equipped with the supreme norm: $\norm{{\bm{\eta}}}_{\bar{\ell}_2}:=\max_{s\in\gS}\norm{\eta(s)}_{\ell_2}$, or $\norm{{\bm{\eta}}}_{\bar{W}_1}:=\max_{s\in\gS}\norm{\eta(s)}_{W_1}$ for any ${\bm{\eta}}\in\gM^\gS$.
And we denote $\gI$ as the identity operator in $\gM^\gS$.

\begin{proposition}\label{Proposition_extention_of_Bellman_operator}
    $\gT^\pi$ and $\bm{\Pi}_K$ are linear operators in $\gM^\gS$. Furthermore, $\norm{\gT^\pi}_{\bar{\ell}_2}\leq\sqrt{\gamma}$, $\norm{\gT^\pi}_{\bar{W}_1}\leq\gamma$, 
    $\norm{\bm{\Pi}_K}_{\bar{\ell}_2}=1$, and $\norm{\bm{\Pi}_K}_{\bar{W}_1}\leq 1$.
\end{proposition}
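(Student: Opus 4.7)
The plan is to verify each of the four claims separately, with linearity being straightforward by inspection of the formulas, and the operator norm bounds being direct computations with the cumulative distribution functions.

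For linearity, I would first observe that for any $\bm{\nu}\in\gM^\gS$, the formula
\[
[\gT^\pi\bm{\nu}](s) = \sum_{a\in\gA}\pi(a\mid s)\int\!\!\!\int (b_{r,\gamma})_\#\nu(s')\,\gP_R(dr\mid s,a)\,P(ds'\mid s,a)
\]
makes sense as an element of $\gM^\gS$: push-forward by the affine map $b_{r,\gamma}$ preserves total mass (which is zero) and maps the support $[0,(1-\gamma)^{-1}]$ into itself because $r\in[0,1]$, and the (Bochner) averaging preserves both properties. Linearity follows because $(b_{r,\gamma})_\#(\alpha\nu_1+\beta\nu_2)=\alpha(b_{r,\gamma})_\#\nu_1+\beta(b_{r,\gamma})_\#\nu_2$. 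For $\bm{\Pi}_K$, linearity is immediate from the identity $p_k(\nu)=\int\phi_k\,d\nu$ where $\phi_k(x)=(1-|x-x_k|/\iota_K)_+$ is a fixed hat function.

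For the contraction bounds on $\gT^\pi$, I would pass to cumulative functions. Using $F_{(b_{r,\gamma})_\#\nu(s')}(x)=F_{\nu(s')}((x-r)/\gamma)$ and the substitution $y=(x-r)/\gamma$, the $W_1$ bound is essentially a triangle inequality on the expectation formula:
\[
\int\bigl|F_{[\gT^\pi\bm{\nu}](s)}(x)\bigr|dx \le \sum_{a}\pi(a\mid s)\!\!\int\!\!\!\int\!\!\!\int |F_{\nu(s')}((x-r)/\gamma)|\,dx\,\gP_R(dr\mid s,a)P(ds'\mid s,a) \le \gamma\,\|\bm{\nu}\|_{\bar W_1}.
\]
The $\bar\ell_2$ bound uses Jensen's inequality with respect to the probability measure $\pi(a\mid s)\gP_R(dr\mid s,a)P(ds'\mid s,a)$ before squaring and integrating, gaining the same factor $\gamma$ from the substitution; taking a supremum over $s$ then yields $\|\gT^\pi\|_{\bar\ell_2}\le\sqrt{\gamma}$.

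For $\bm{\Pi}_K$, the $\bar\ell_2$ statement is immediate from the fact, recalled in Section~\ref{subsection:signed_measure}, that $\bm{\Pi}_K$ is the orthogonal projection of $(\gM,\langle\cdot,\cdot\rangle_{\ell_2})$ onto the subspace $\gM_K$: any unit vector in $\gM_K$ (e.g.\ $\delta_{x_0}-\delta_{x_1}$ normalized) is a fixed point, so $\|\bm{\Pi}_K\|_{\bar\ell_2}=1$ exactly. The main obstacle is the $\bar W_1$ non-expansiveness, which requires actually computing $F_{\bm{\Pi}_K\nu}$. The key observation is that $\sum_{k=0}^j\phi_k(y)$ equals $1$ for $y\le x_j$, decreases linearly to $0$ on $[x_j,x_{j+1}]$, and vanishes beyond. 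Since $\nu(\RB)=0$, integration by parts gives, for $x\in(x_j,x_{j+1}]$,
\[
F_{\bm{\Pi}_K\nu}(x) \;=\; \sum_{k=0}^j\!\int\phi_k\,d\nu \;=\; \frac{1}{\iota_K}\int_{x_j}^{x_{j+1}} F_\nu(y)\,dy.
\]
Thus $F_{\bm{\Pi}_K\nu}$ is piecewise constant and equal to the interval average of $F_\nu$ on each $(x_j,x_{j+1}]$, so by the triangle inequality
\[
\|\bm{\Pi}_K\nu\|_{W_1} \;=\; \sum_j\Bigl|\int_{x_j}^{x_{j+1}} F_\nu(y)\,dy\Bigr| \;\le\; \int_0^{(1-\gamma)^{-1}}|F_\nu(y)|\,dy \;=\; \|\nu\|_{W_1},
\]
and taking the supremum over $s$ yields $\|\bm{\Pi}_K\|_{\bar W_1}\le 1$. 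The only real technicality is ensuring the integration by parts is valid for signed measures with zero total mass, which follows from a standard approximation by simple functions (or directly from Fubini applied to $\int\phi_k\,d\nu$ after writing $\phi_k(y)=\int\phi_k'(u)\mathbf{1}\{u\le y\}\,du$ plus a constant).
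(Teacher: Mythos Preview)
Your proof is correct. The linearity checks, the $\bar W_1$ and $\bar\ell_2$ bounds on $\gT^\pi$, and the $\bar\ell_2$ bound on $\bm\Pi_K$ match what the paper treats as routine. The only non-trivial part is $\norm{\bm\Pi_K}_{\bar W_1}\le 1$, and here your route genuinely differs from the paper's.

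The paper (in the proof of Lemma~\ref{lem:diff_sigma_bounded_by_Delta}) argues by optimal transport: it first checks on Dirac masses that $W_1(\bm\Pi_K\delta_x,\bm\Pi_K\delta_y)\le |x-y|$ by a case analysis on the grid cells containing $x$ and $y$, and then observes that any coupling $\kappa\in\Gamma(\nu_1,\nu_2)$ pushes forward to a coupling in $\Gamma(\bm\Pi_K\nu_1,\bm\Pi_K\nu_2)$ with no larger cost, yielding $W_1(\bm\Pi_K\nu_1,\bm\Pi_K\nu_2)\le W_1(\nu_1,\nu_2)$ for probability measures. To get the operator-norm statement on $\gM$ one then writes $\nu=c(\mu_+-\mu_-)$ via the Jordan decomposition. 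Your argument instead computes $F_{\bm\Pi_K\nu}$ directly as the piecewise-constant block-average of $F_\nu$ over the grid intervals and applies the triangle inequality to $\sum_j\bigl|\int_{x_j}^{x_{j+1}}F_\nu\bigr|$. This is more elementary and works uniformly for all $\nu\in\gM$ without passing through probability measures or couplings; the paper's version is perhaps more geometrically transparent but requires the extra Jordan-decomposition step to reach the signed-measure operator norm. Either way the content is the same.
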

The proof of the last inequality can be found in the proof of Lemma~\ref{lem:sigma_fine_upper_bound}, while the remaining results are trivial, we omit the proofs for brevity.

Moreover, we have the following matrix (of operators) representations of $\gT^\pi$ and $\bm{\Pi}_K$.
$\gT^\pi\in \sL(\gM)^{\gS\times\gS}$:  for any ${\bm{\eta}}\in\gM^\gS$
\begin{equation*}
\begin{aligned}
        \brk{\gT^\pi{\bm{\eta}}}(s)
=\sum_{s^\prime\in\gS}\gT^\pi(s,s^\prime)\eta(s^\prime),
\end{aligned}
\end{equation*}
where $\gT^\pi(s,s^\prime)\in\sL(\gM)$, for any $\nu\in\gM$
\begin{equation*}
\begin{aligned}
        \gT^\pi(s,s^\prime)\nu=\sum_{a\in\gA}\pi(a| s)P(s^\prime| s,a)\int_0^1 \prn{b_{r,\gamma}}_\#\nu\gP_R(dr| s,a).
\end{aligned}
\end{equation*}
It can be verified that $\norm{\gT(s,s^\prime)}_{\ell_2}\leq \sqrt{\gamma}P^\pi(s^\prime|s)$. Similarly, $\norm{\gT(s,s^\prime)}_{W_1}\leq\gamma P^\pi(s^\prime|s)$, where $P^\pi(s^\prime|s):=\quad\sum_{a\in\gA}\pi(a| s)P(s^\prime| s,a)$.
These upper bounds will be proved in place  where they are used.
And we denote $\bm{\Pi}_K=\text{diag}{\prn{\bm{\Pi}_K\big|_\gM}_{s\in\gS}}\in \sL(\gM)^{\gS\times\gS}$.
Accordingly, $\bm{\Pi}_K\gT^\pi\in \sL(\gM)^{\gS\times\gS}$ can be interpreted as matrix multiplication, where the scalar multiplication is replaced by the composition of operators.
It can be verified that $\prn{\bm{\Pi}_K\gT^\pi}(s,s^\prime)=\bm{\Pi}_K \gT^\pi(s,s^\prime)$, and $\norm{\prn{\bm{\Pi}_K\gT^\pi}(s,s^\prime)}_{\ell_2}\leq\sqrt{\gamma}P^\pi(s^\prime|s)$.

\textbf{Remark \theremark:}\stepcounter{remark}
In Lemma~\ref{lem:separable}, we show that both $\prn{\gM, \norm{\cdot}_{\ell_2}}$ and $\prn{\gM, \norm{\cdot}_{W_1}}$ are separable.
And in Lemma~\ref{lem:incomplete}, we show that $\prn{\gM, \norm{\cdot}_{W_1}}$ is not complete.
To resolve this problem, we will use their completions to replace them without loss of generality, because the completeness property does not affect the separability.
For simplicity, we still use $\gM$ to denote the completion space.
According to the BLT theorem  \citep[Theorem 5.19][]{hunter2001applied}, any bounded linear operator can be extended to the completion space, and still preserves its operator norm.
\subsection{Analysis of Theorem~\ref{thm:samplecomplex_syn_w1}}\label{Subsection_analysis_nonasymp_ntd_w1}
For simplicity, we abbreviate both $\norm{\cdot}_{\bar{\ell}_2}$ and $\norm{\cdot}_{\ell_2}$ as $\norm{\cdot}$ in this part.
For all $t\in[T]$, we denote $\gT_t:=\gT^\pi_t$, $\gT:=\gT^\pi$, ${\bm{\eta}}:={\bm{\eta}}^{\pi}$ for {\NTD}; $\gT_t:=\bm{\Pi}_K \gT^\pi_t$, $\gT:=\bm{\Pi}_K\gT^\pi$, ${\bm{\eta}}:={\bm{\eta}}^{\pi,K}$ for {\CTD};
and ${\bm{\eta}}_t:={\bm{\eta}}^\pi_t$, ${\bm{\Delta}}_t:={\bm{\eta}}_t-{\bm{\eta}}\in\gM^\gS$ for both {\NTD} and {\CTD}.
According to Lemma~\ref{lem:range_eta}, ${\bm{\eta}}_t\in\sP^\gS$ for {\NTD} and ${\bm{\eta}}_t\in\sP^\gS_K$ for {\CTD}.
Our goal is to bound the $\bar{W}_1$ norm of the error term $\norm{{\bm{\Delta}}_T}_{\bar{W}_1}$.
This can be achieved by bounding $\norm{{\bm{\Delta}}_T}$, as $\norm{{\bm{\Delta}}_T}_{\bar{W}_1} \leq\prn{1-\gamma}^{-1/2}\norm{{\bm{\Delta}}_T}$ by Lemma~\ref{lem:prob_basic_inequalities}.

Letting $   \bLamb:=\EB\brk{\bLamb_t}=\operatorname{diag}\brc{\prn{\mu(s)}_{s\in\gS}}$, we can also regard $\bLamb, \bLamb_t\in\sL(\gM)^{\gS\times\gS}$.
According to the updating rule, we have the error decomposition
\begin{equation*}
\begin{aligned}
    {\bm{\Delta}}_t&={\bm{\eta}}_t-{\bm{\eta}}\\
    &={\bm{\eta}}_{t-1}-\alpha_t\bLamb_t\prn{\gI-\gT_t}{\bm{\eta}}_{t-1}-{\bm{\eta}} \\
    &=\brk{\gI-\alpha_t\bLamb_t\prn{\gI-\gT_t}}{\bm{\eta}}_{t-1}-{\bm{\eta}}\\
    &=\brk{\gI-\alpha_t\bLamb_t\prn{\gI-\gT}}{\bm{\eta}}_{t-1}+\alpha_t\bLamb_t\prn{\gT_t-\gT}{\bm{\eta}}_{t-1}-{\bm{\eta}}\\
    &=\brk{\gI-\alpha_t\bLamb\prn{\gI-\gT}}{\bm{\eta}}_{t-1}+\alpha_t\prn{\bLamb-\bLamb_t}\prn{\gI-\gT}{\bm{\eta}}_{t-1}+\alpha_t\bLamb_t\prn{\gT_t-\gT}{\bm{\eta}}_{t-1}-{\bm{\eta}}\\
    &=\brk{\gI-\alpha_t\bLamb\prn{\gI-\gT}}{\bm{\Delta}}_{t-1}+\alpha_t\prn{\bLamb-\bLamb_t}\prn{\gI-\gT}{\bm{\eta}}_{t-1}+\alpha_t\bLamb_t\prn{\gT_t-\gT}{\bm{\eta}}_{t-1}.
\end{aligned}
\end{equation*}
Applying it recursively, we can further decompose the error into two terms:
\begin{align*}
    {\bm{\Delta}}_t=&\brc{\prod_{k=1}^t \brk{\gI-\alpha_k\bLamb\prn{\gI-\gT}}{\bm{\Delta}}_0}\\
    &+\Bigg\{\sum_{k=1}^t \alpha_k \prn{\prod_{i=k+1}^t \brk{\bI-\alpha_i\bLamb\prn{\gI-\gT}}}\brk{\prn{\bLamb-\bLamb_k}\prn{\gI-\gT}{\bm{\eta}}_{k-1}+\bLamb_k\prn{\gT_k-\gT}{\bm{\eta}}_{k-1}}\Bigg\}\\
    =&\text{(I)}_t+\text{(II)}_t.
\end{align*}
Term (I) is an initial error term that becomes negligible when $t$ is large because $\gT$ is a contraction.
Term (II) is a martingale and can be bounded via Freedman's inequality in Hilbert space (Theorem~\ref{thm:freedman_ineq_bounded_W}).
Combining the two upper bounds, we can establish a recurrence relation.
Solving this relation will lead to the conclusion.

We first establish the conclusion for step sizes that depend on $T$. 
Specifically, we consider
\[
        T \geq  \frac{C_4\log^3 T}{\varepsilon^2\mu_{\min} (1-\gamma)^3}\log \frac{\abs{\gS}T}{\delta},
\]
\[
    \frac{1}{1+\frac{c_5\mu_{\min}(1-\sqrt\gamma)T}{\log^{2} T}}\leq\alpha_t\leq \frac{1}{1+\frac{c_6\mu_{\min}(1-\sqrt\gamma)t}{\log^{2} T}},
\]
where $c_5>c_6>0$ are small constants satisfying $c_5c_6\leq1/8$, and $C_4>1$ is a large constant depending only on $c_5$ and $c_6$. 
As shown in Appendix~\ref{subsection_remove_T}, once we have established the conclusion in this setting, we can recover the original conclusion stated in the theorem.

Now, we introduce the following useful quantities involving step sizes and $\gamma$
\begin{equation*}
\beta_{k}^{(t)}:=\begin{cases}
\prod_{i=1}^{t}\prn{1-\alpha_{i}\mu_{\min}(1-\sqrt\gamma)}, & \text{if }k=0,\\
\alpha_{k}\prod_{i=k+1}^{t} \prn{ 1 {-} \alpha_{i}\mu_{\min}(1 {-} \sqrt\gamma) }, & \text{if }0<k<t,\\
\alpha_{T}, & \text{if }k=t.
\end{cases}
\end{equation*}
The following lemma provides useful bounds for $\beta_{k}^{(t)}$.
\begin{lemma}\label{lem:step_size_range}
    If $c_5c_6\leq 1/8$, 
    then for all $t\geq T/\prn{c_6\log T}$, we have
    \begin{equation*}
            \beta_{k}^{(t)}\leq \begin{cases} \frac{1}{T^2}, & \text{ for}\ 0\leq k\leq\frac{t}{2};\\
            \frac{2\log^3 T}{\mu_{\min}(1-\sqrt{\gamma})T}, & \text{ for}\ \frac{t}{2}< k\leq t.
            \end{cases}
    \end{equation*}
\end{lemma}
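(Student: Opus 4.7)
\textbf{Proof plan for Lemma~\ref{lem:step_size_range}.}

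The plan is to treat the two ranges of $k$ separately, using two complementary bounds on $\beta_k^{(t)}$: a direct bound via the step-size upper bound when $k$ is close to $t$, and an exponential-decay bound coming from the product when $k$ is far from $t$. Write $\mu:=\mu_{\min}$ and $\rho:=1-\sqrt{\gamma}$ throughout, and recall that for $t\geq T/(c_6\log T)$ and all indices $i$ in the relevant range, the hypotheses give
\[
\frac{1}{1+\frac{c_5\mu\rho T}{\log^{2}T}}\leq \alpha_i\leq \frac{1}{1+\frac{c_6\mu\rho i}{\log^{2}T}}.
\]
In particular, for $T$ large enough that $c_5\mu\rho T/\log^{2}T\geq 1$ (which is implied by the global lower bound on $T$ in the theorem), one has the clean inequality $\alpha_i\geq \frac{\log^{2}T}{2c_5\mu\rho T}$, and for any $i\geq 1$ also $\alpha_i\leq \frac{\log^{2}T}{c_6\mu\rho i}$.

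\textbf{Case $t/2<k\leq t$.} First I would simply drop the product factor, using $\prod_{i=k+1}^{t}(1-\alpha_i\mu\rho)\leq 1$, to get $\beta_k^{(t)}\leq \alpha_k$ (for the $k=t$ case this is immediate). Then applying the step-size upper bound and the assumption $k>t/2\geq T/(2c_6\log T)$ yields
\[
\beta_k^{(t)}\leq \alpha_k\leq \frac{\log^{2}T}{c_6\mu\rho k}\leq \frac{\log^{2}T}{c_6\mu\rho}\cdot\frac{2c_6\log T}{T}=\frac{2\log^{3}T}{\mu\rho T},
\]
which is the stated bound.

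\textbf{Case $0\leq k\leq t/2$.} Here I would bound $\alpha_k\leq 1$ and pass to the exponential form $1-x\leq e^{-x}$:
\[
\beta_k^{(t)}\leq \prod_{i=k+1}^{t}(1-\alpha_i\mu\rho)\leq \exp\!\left(-\mu\rho\sum_{i=k+1}^{t}\alpha_i\right).
\]
Plugging in the uniform lower bound $\alpha_i\geq \log^{2}T/(2c_5\mu\rho T)$ and using $t-k\geq t/2\geq T/(2c_6\log T)$ gives
\[
\mu\rho\sum_{i=k+1}^{t}\alpha_i\;\geq\; \mu\rho\cdot\frac{T}{2c_6\log T}\cdot\frac{\log^{2}T}{2c_5\mu\rho T}\;=\;\frac{\log T}{4c_5c_6}\;\geq\;2\log T
\]
under the hypothesis $c_5c_6\leq 1/8$. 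Exponentiating gives $\beta_k^{(t)}\leq T^{-2}$, as desired.

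\textbf{Anticipated obstacles.} The computation is essentially bookkeeping; the only delicate points are (i) verifying the clean lower bound $\alpha_i\geq \log^{2}T/(2c_5\mu\rho T)$ by checking $c_5\mu\rho T/\log^{2}T\geq 1$ (absorbed into the definition of $C_4$ in the theorem), and (ii) chaining the inequality $t\geq T/(c_6\log T)$ through the sum in Case 1 without losing a factor: it is essential that this lower bound on $t$ appears after the sum has already been reduced to length $t/2$, so that the polynomial-in-$T$ factors cancel and the exponent becomes a multiple of $\log T$. Once those constants are tracked, nothing more is needed.
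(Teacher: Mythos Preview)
Your proposal is correct and essentially identical to the paper's own proof: both cases are handled the same way, with the $t/2<k\leq t$ range bounded by $\alpha_k$ and the step-size upper bound, and the $0\leq k\leq t/2$ range by the product of at least $t/2$ factors combined with the uniform lower bound $\alpha_i\geq \log^2 T/(2c_5\mu\rho T)$ and the constraint $c_5c_6\leq 1/8$. The only cosmetic difference is that the paper writes the small-$k$ bound as $(1-\alpha_{t/2}\mu\rho)^{t/2}$ and regroups via $(1-x)^{1/x}\leq e^{-1}$, whereas you pass directly through $1-x\leq e^{-x}$ and sum; the arithmetic is the same.
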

The proof can be found in Appendix~\ref{appendix:proof_range_step_size}.
From now on, we only consider $t\geq {T}/\prn{c_6\log T}$.

Using the fact $\norm{{\bm{\Delta}}_0}\leq \sqrt{\int_0^{\prn{1-\gamma}^{-1}} dx}=\prn{1-\gamma}^{-1/2}$, the upper bound of Term (I) is given by:
\begin{align*}
\norm{\text{(I)}_t}&=\norm{\prod_{k=1}^t \brk{\gI-\alpha_k\bLamb\prn{\gI-\gT}}{\bm{\Delta}}_0}
\leq \norm{\prod_{k=1}^t \brk{\gI-\alpha_k\bLamb\prn{\gI-\gT}}}\norm{{\bm{\Delta}}_0}\\
&\leq \prn{\prod_{k=1}^t\brk{1-\alpha_k\mu_{\min}\prn{1-\sqrt\gamma}}}\norm{{\bm{\Delta}}_0}
\leq \frac{\beta_{0}^{(t)}}{\sqrt{1-\gamma}}\\
&\leq \frac{1}{\sqrt{1-\gamma}T^2},
\end{align*}
where in the second inequality, we used the following lemma.
\begin{lemma}\label{lem:effect_lr_bound} It holds that
    \[
        \norm{\gI-\alpha_k\bLamb\prn{\gI-\gT}}\leq 1-\alpha_k\mu_{\min}\prn{1-\sqrt{\gamma}}.
    \]
\end{lemma}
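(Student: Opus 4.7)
The plan is a direct state-wise estimate using the Hilbert-space structure on each coordinate. Fix $\bm{\eta}\in\gM^{\gS}$ and $s\in\gS$. Since $\bLamb=\operatorname{diag}(\mu(s))_{s\in\gS}$ acts diagonally, unfolding the definition gives
\[
    \brk{\prn{\gI-\alpha_{k}\bLamb(\gI-\gT)}\bm{\eta}}(s)=\prn{1-\alpha_{k}\mu(s)}\eta(s)+\alpha_{k}\mu(s)\,(\gT\bm{\eta})(s).
\]
Because $\alpha_{k}\in(0,1)$ and $\mu(s)\in(0,1]$, the scalar $1-\alpha_{k}\mu(s)$ is nonnegative, so I can apply the triangle inequality in the Hilbert space $(\gM,\norm{\cdot}_{\ell_{2}})$ without any sign issues, obtaining
\[
    \norm{\brk{\prn{\gI-\alpha_{k}\bLamb(\gI-\gT)}\bm{\eta}}(s)}_{\ell_{2}}\leq \prn{1-\alpha_{k}\mu(s)}\norm{\eta(s)}_{\ell_{2}}+\alpha_{k}\mu(s)\norm{(\gT\bm{\eta})(s)}_{\ell_{2}}.
\]

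Next I would use Proposition~\ref{Proposition_extention_of_Bellman_operator} to control $\gT$. In the \NTD\ case $\gT=\gT^{\pi}$, and in the \CTD\ case $\gT=\bm{\Pi}_{K}\gT^{\pi}$; in both cases $\norm{\gT}_{\bar{\ell}_{2}}\leq\sqrt{\gamma}$ since $\bm{\Pi}_{K}$ is a non-expansive orthogonal projection. Therefore $\norm{(\gT\bm{\eta})(s)}_{\ell_{2}}\leq\norm{\gT\bm{\eta}}\leq\sqrt{\gamma}\,\norm{\bm{\eta}}$, while trivially $\norm{\eta(s)}_{\ell_{2}}\leq\norm{\bm{\eta}}$. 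Substituting gives
\[
    \norm{\brk{\prn{\gI-\alpha_{k}\bLamb(\gI-\gT)}\bm{\eta}}(s)}_{\ell_{2}}\leq \brk{1-\alpha_{k}\mu(s)(1-\sqrt{\gamma})}\norm{\bm{\eta}}.
\]

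Finally, using $\mu(s)\geq\mu_{\min}$ and taking the maximum over $s\in\gS$ on the left-hand side yields
\[
    \norm{\prn{\gI-\alpha_{k}\bLamb(\gI-\gT)}\bm{\eta}}\leq\brk{1-\alpha_{k}\mu_{\min}(1-\sqrt{\gamma})}\norm{\bm{\eta}},
\]
and dividing by $\norm{\bm{\eta}}$ and taking the supremum over unit vectors produces the claimed operator-norm bound. There is no serious obstacle in this argument; the only small subtlety is verifying that $1-\alpha_{k}\mu(s)\geq 0$ so that the triangle inequality is tight in the convex-combination sense, which is immediate from the ranges of $\alpha_{k}$ and $\mu(s)$.
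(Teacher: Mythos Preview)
Your proof is correct and follows essentially the same approach as the paper: a state-wise triangle inequality followed by the $\sqrt{\gamma}$-contraction of $\gT$ in the $\bar{\ell}_2$ norm. The only cosmetic difference is that the paper expands $(\gT\bm{\eta})(s)=\sum_{s'}\gT(s,s')\eta(s')$ and uses the entrywise bound $\norm{\gT(s,s')}_{\ell_2}\leq\sqrt{\gamma}\,P^\pi(s'|s)$ before summing over $s'$, whereas you invoke $\norm{\gT}_{\bar{\ell}_2}\leq\sqrt{\gamma}$ directly from Proposition~\ref{Proposition_extention_of_Bellman_operator}; both routes are equivalent here.
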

The proof of the lemma can be found in Appendix~\ref{subsection:proof_effect_lr_bound}.

As for Term (II), we can further decompose it into the sum of the following two terms:
\begin{align*}
    &\text{(i)}_t=\sum_{k=1}^t \alpha_k  \prn{\prod_{i=k+1}^t \brk{\gI {-} \alpha_i\bLamb\prn{\gI {-} \gT}}}\bLamb_k\prn{\gT_k {-}\gT}{\bm{\eta}}_{k-1},\\
    &\text{(ii)}_t=\sum_{k=1}^t \alpha_k \prn{\prod_{i=k+1}^t \brk{\gI {-} \alpha_i\bLamb\prn{\gI {-} \gT}}}\prn{\bLamb {-} \bLamb_k}\prn{\gI {-}\gT}{\bm{\eta}}_{k-1},
\end{align*}
and we will analyze them separately using Freedman's inequality (Theorem~\ref{thm:freedman_ineq_bounded_W}).
\begin{lemma}\label{lem_asyn_dtd_l2_term_i}
    For any $\delta\in(0,1)$, with probability at least $1-\delta$, we have for all $t\geq{T}/\prn{c_6\log T}$,
    in the case of {\NTD},
    \begin{equation*}
    \begin{aligned}
        &\norm{\text{(i)}_t}\leq 34\sqrt{\frac{\prn{\log^{3}T}\prn{\log\frac{|\gS|T}{\delta}}}{\mu_{\min}(1-\gamma)^{2}T}\prn{1+\max_{k:\,t/2< k\leq t}\norm{{\bm{\Delta}}_{k-1}}_{\bar{W}_1}}},
    \end{aligned}
\end{equation*}
the conclusion still holds for {\CTD} if we take $K>4\prn{1-\gamma}^{-1}$.
\end{lemma}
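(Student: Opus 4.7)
The plan is to apply Theorem~\ref{thm:freedman_ineq_bounded_W} componentwise in the Hilbert space $\prn{\gM,\inner{\cdot}{\cdot}_{\ell_2}}$ and then union-bound over states. Fix $s\in\gS$ and a time $t\geq T/(c_6\log T)$, and abbreviate
\[
\bm{A}^{(k+1,t)}:=\prod_{i=k+1}^t\brk{\gI-\alpha_i\bLamb(\gI-\gT)}.
\]
Since $\bLamb_k$ concentrates on coordinate $s_k$, the $s$-entry of $\text{(i)}_t$ collapses to $[\text{(i)}_t](s)=\sum_{k=1}^t Y_k^{(t,s)}$ with
$Y_k^{(t,s)}:=\alpha_k\,\bm{A}^{(k+1,t)}(s,s_k)\brk{(\gT_k-\gT){\bm{\eta}}_{k-1}}(s_k)$, which is a $\gM$-valued martingale difference since $\EB_{k-1}\brk{\bLamb_k(\gT_k-\gT){\bm{\eta}}_{k-1}}=0$.

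For the almost-sure increment bound $b$: the operator norm of a single matrix entry is no larger than that of the whole matrix, so iterating Lemma~\ref{lem:effect_lr_bound} gives $\norm{\bm{A}^{(k+1,t)}(s,s_k)}\leq \beta_k^{(t)}/\alpha_k$. Combined with the universal estimate $\norm{(\gT_k-\gT){\bm{\eta}}_{k-1}(s_k)}_{\ell_2}\leq \sqrt{2/(1-\gamma)}$ (both Bellman images are probability measures on $[0,(1-\gamma)^{-1}]$, so the difference has $|F|\leq 1$), Lemma~\ref{lem:step_size_range} yields $\norm{Y_k^{(t,s)}}_{\ell_2}\leq C\log^3 T\big/\brk{\mu_{\min}(1-\sqrt\gamma)T\sqrt{1-\gamma}}$ for $k>t/2$ and a negligible $O(1/T^2)$ for $k\leq t/2$.

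The heart of the proof is controlling $W_t^{(s)}:=\sum_{k=1}^t\EB_{k-1}\norm{Y_k^{(t,s)}}_{\ell_2}^2$ so that the factor $1+\max_k\norm{{\bm{\Delta}}_{k-1}}_{\bar W_1}$ emerges. The key inequality is $\norm{\nu}_{\ell_2}^2\leq \norm{F_\nu}_\infty\cdot\norm{\nu}_{W_1}$ for $\nu\in\gM$, applied to the decomposition
\[
(\gT_k-\gT){\bm{\eta}}_{k-1}=(\gT_k-\gT){\bm{\eta}}+(\gT_k-\gT){\bm{\Delta}}_{k-1}.
\]
The first summand has $|F|\leq 1$ (difference of two probability CDFs) and $W_1$-norm $\leq (1-\gamma)^{-1}$; the second has $|F|\leq 2$ (each of $\gT_k{\bm{\Delta}}_{k-1}(s_k)$ and $\gT{\bm{\Delta}}_{k-1}(s_k)$ has $|F|\leq 1$) and, by $\norm{\gT^\pi}_{\bar W_1}\leq\gamma$ from Proposition~\ref{Proposition_extention_of_Bellman_operator}, $W_1$-norm at most $2\gamma\norm{{\bm{\Delta}}_{k-1}}_{\bar W_1}$. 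This bounds the per-step conditional variance by $\frac{C}{1-\gamma}\prn{1+\norm{{\bm{\Delta}}_{k-1}}_{\bar W_1}}$; summing with the entry-norm estimate on $\bm{A}^{(k+1,t)}(s,s_k)$ and invoking Lemma~\ref{lem:step_size_range} should produce
\[
W_t^{(s)}\leq \frac{C\log^3 T}{\mu_{\min}(1-\gamma)^2 T}\prn{1+\max_{k:\,t/2<k\leq t}\norm{{\bm{\Delta}}_{k-1}}_{\bar W_1}}.
\]

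Feeding these $b$ and $\sigma^2$ into Theorem~\ref{thm:freedman_ineq_bounded_W} with $H=\lceil\log_2 T\rceil$, and union-bounding over the $|\gS|$ states (absorbing $H$ into the $\log(|\gS|T/\delta)$ factor), delivers the {\NTD} claim. The {\CTD} case follows verbatim after replacing $\gT_k\mapsto \bm{\Pi}_K\gT_k^\pi$ and $\gT\mapsto \bm{\Pi}_K\gT^\pi$: the Cram\'er projection is non-expansive in $\ell_2$, so the increment and variance bounds are unchanged, and the assumption $K>4/(1-\gamma)$ is what keeps the $W_1$-contraction $\norm{\bm{\Pi}_K\gT^\pi}_{\bar W_1}\leq \gamma'<1$ in force in the variance decomposition. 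The main technical obstacle I foresee is the precise bookkeeping in the variance step --- specifically, bounding the $\mu$-weighted row sum $\sum_{s_k}\mu(s_k)\norm{\bm{A}^{(k+1,t)}(s,s_k)}_{\ell_2}^2$ sharply enough to recover the $\mu_{\min}^{-1}(1-\gamma)^{-2}$ scaling, since the naive operator-norm bound $\norm{\bm{A}^{(k+1,t)}(s,s_k)}\leq \beta_k^{(t)}/\alpha_k$ applied uniformly seems to lose a factor of $\mu_{\min}^{-1}$ and would instead yield $\mu_{\min}^{-2}(1-\gamma)^{-3}$.
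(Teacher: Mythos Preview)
Your overall architecture (Freedman in $\gM$, then union bound over $s\in\gS$ and $t$) matches the paper, and your increment bound $b$ is essentially right. The genuine gap is exactly the one you flag at the end: you have not resolved it, and your proposed variance argument cannot close it as written.

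Two missing ingredients are needed. First, the naive estimate $\norm{\bm A^{(k+1,t)}(s,s_k)}\leq \beta_k^{(t)}/\alpha_k$ must be replaced by an \emph{entrywise} operator bound (Lemma~\ref{lem:Uv_norm_bound_new} in the paper): one shows by induction that
\[
\norm{\gU_n(s,s')\nu}^2\leq u_n\,U_n(s,s')\,\norm{\nu}^2,
\]
where $u_n=\prod_i[1-\alpha_i\mu_{\min}(1-\sqrt\gamma)]$ is a scalar and $U_n=\prod_i[\bI-\alpha_i\bLamb(\bI-\sqrt\gamma\bP)]$ is a \emph{matrix}. This splits the square into one $\beta_k^{(t)}$ factor and one matrix factor, so after taking the conditional expectation over $s_k\sim\mu$ the variance becomes $\alpha_k\beta_k^{(t)}\bU_n\bLamb\,\bm\sigma(\bm\eta_{k-1})$; a telescoping identity (Lemma~\ref{lem:sum_lr_matrix_ub}) then collapses $\sum_k\alpha_k\bU_n\bLamb$ to $(\bI-\sqrt\gamma\bP)^{-1}$, which is what recovers the correct $\mu_{\min}^{-1}$ dependence.

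Second, your treatment of the piece $(\gT_k-\gT)\bm\eta$ is too crude. Bounding its Cram\'er square by $(1-\gamma)^{-1}$ and then hitting it with $(\bI-\sqrt\gamma\bP)^{-1}$ gives $(1-\gamma)^{-2}$, one factor too many. The paper instead keeps this term as the state-dependent vector $\bm\sigma(\bm\eta)$ and invokes a \emph{second-order distributional Bellman equation} (Proposition~\ref{prop:second_order_dist_Bellman_eq} / Corollary~\ref{corollary:tight_sigma_upper_bound}) to get $(\bI-\sqrt\gamma\bP)^{-1}\bm\sigma(\bm\eta)\leq 4(1-\gamma)^{-1}\bm 1$. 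This is the distributional analogue of the variance Bellman trick from Azar et~al., and it is precisely what saves the extra $(1-\gamma)^{-1}$. Your decomposition $(\gT_k-\gT)\bm\eta_{k-1}=(\gT_k-\gT)\bm\eta+(\gT_k-\gT)\bm\Delta_{k-1}$ is morally the same as Lemma~\ref{lem:diff_sigma_bounded_by_Delta}, but only the $\bm\Delta$-piece admits the crude $\norm{\cdot}_{\ell_2}^2\leq\norm{F}_\infty\norm{\cdot}_{W_1}$ bound; the $\bm\eta$-piece must stay structured.

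Finally, your explanation of the role of $K>4(1-\gamma)^{-1}$ in {\CTD} is off: it is not used to ensure a $\bar W_1$-contraction of $\bm\Pi_K\gT^\pi$, but to guarantee the analogue of the resolvent bound $(\bI-\sqrt\gamma\bP)^{-1}\bm\sigma(\bm\eta^{\pi,K})\leq C(1-\gamma)^{-1}\bm 1$ (Lemma~\ref{lem:sigma_fine_upper_bound}, via \cite{rowland2024near}).
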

The proof can be found in Section~\ref{subsection:concentration_i}.
\begin{lemma}\label{lem_asyn_dtd_l2_term_ii}
    For any $\delta\in(0,1)$, with probability at least $1-\delta$, we have for all $t\geq{T}/\prn{c_6\log T}$,
    in both  the {\NTD} and {\CTD} cases,
    \begin{equation*}
    \begin{aligned}
        &\norm{\text{(ii)}_t}\leq 46\sqrt{\frac{\prn{\log^{3}T}\prn{\log\frac{|\gS|T}{\delta}}}{\mu_{\min}(1-\gamma)^{2}T}\prn{1+\max_{k:\,t/2< k\leq t}\norm{{\bm{\Delta}}_{k-1}}_{\bar{W}_1}}}.
    \end{aligned}
\end{equation*}
\end{lemma}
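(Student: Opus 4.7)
I plan to mirror the proof of Lemma~\ref{lem_asyn_dtd_l2_term_i}. Decompose
\[
\text{(ii)}_t = \sum_{k=1}^{t} X_k, \qquad X_k := \alpha_k M_k (\bLamb - \bLamb_k)(\gI - \gT){\bm{\eta}}_{k-1},
\]
with $M_k := \prod_{i=k+1}^{t}\brk{\gI - \alpha_i \bLamb(\gI - \gT)}$. Since in the generative-model setting $s_k \sim \mu$ is independent of $\gF_{k-1}$, the identity $\EB[\bLamb_k\mid\gF_{k-1}] = \bLamb$ makes $\brc{X_k}$ a martingale difference sequence in $\gM^\gS$ adapted to $\brc{\gF_k}$. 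The fixed-point identity $\gT{\bm{\eta}} = {\bm{\eta}}$ then gives the key reduction $(\gI - \gT){\bm{\eta}}_{k-1} = (\gI - \gT){\bm{\Delta}}_{k-1}$, which is what allows $\norm{{\bm{\Delta}}_{k-1}}_{\bar{W}_1}$ to emerge inside the square root.

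Because $\bar{\ell}_2$ is a supremum norm rather than a Hilbert norm, I would apply Theorem~\ref{thm:freedman_ineq_bounded_W} \emph{state-by-state}: for each $s\in\gS$ the sequence $\brc{X_k(s)}_k$ lives in the Hilbert space $(\gM,\inner{\cdot}{\cdot}_{\ell_2})$, and a union bound over the $|\gS|$ states yields the $\log(|\gS|T/\delta)$ factor. The almost-sure bound follows from $\norm{M_k}_{\bar{\ell}_2}\leq\beta_k^{(t)}/\alpha_k$ (Lemma~\ref{lem:effect_lr_bound}), $\norm{\bLamb-\bLamb_k}_{\bar{\ell}_2}\leq1$, and $\norm{(\gI-\gT){\bm{\Delta}}_{k-1}}_{\bar{\ell}_2}\leq 2/\sqrt{1-\gamma}$, giving $\norm{X_k(s)}_{\ell_2}\leq 2\beta_k^{(t)}/\sqrt{1-\gamma}$ and hence $b=\widetilde{O}\prn{(\sqrt{1-\gamma}\,\mu_{\min}(1-\sqrt{\gamma})T)^{-1}}$ via Lemma~\ref{lem:step_size_range}. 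For the variance, the rank-one structure $(\bLamb_k u)(s')=\ind\brc{s_k=s'}u(s_k)$ yields
\[
\EB_{k-1}\norm{X_k(s)}_{\ell_2}^2\leq \alpha_k^2\sum_{s'\in\gS}\mu(s')\norm{[M_k]_{s,s'}\,u_{k-1}(s')}_{\ell_2}^2,
\]
where $u_{k-1}:=(\gI-\gT){\bm{\Delta}}_{k-1}$. Since $u_{k-1}(s')$ is the difference of two probability measures (using $\gT{\bm{\eta}}={\bm{\eta}}$ and, in {\CTD}, the fact that $\bm{\Pi}_K$ sends $\sP$ into itself), we have $\norm{F_{u_{k-1}(s')}}_\infty\leq1$, so combined with $\norm{\gT}_{\bar{W}_1}\leq\gamma$ (Proposition~\ref{Proposition_extention_of_Bellman_operator}) one obtains $\norm{u_{k-1}(s')}_{\ell_2}^2\leq\norm{u_{k-1}(s')}_{W_1}\leq 2\norm{{\bm{\Delta}}_{k-1}}_{\bar{W}_1}$. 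Summing over $k$, splitting at $k=t/2$ (the $k\leq t/2$ portion is $O(T^{-3}(1-\gamma)^{-1})$ by the $\beta_k^{(t)}\leq T^{-2}$ half of Lemma~\ref{lem:step_size_range}), and estimating $\sum_k(\beta_k^{(t)})^2$ via $\max_k\beta_k^{(t)}\cdot\sum_k\beta_k^{(t)}$ with $\sum_k\beta_k^{(t)}\leq(\mu_{\min}(1-\sqrt{\gamma}))^{-1}$ delivers a quadratic-variation bound of the form $W_t(s)=\widetilde{O}\prn{(\mu_{\min}(1-\gamma)^2 T)^{-1}}\cdot\max_{k:t/2<k\leq t}\norm{{\bm{\Delta}}_{k-1}}_{\bar{W}_1}$.

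The main obstacle is refining the sum of variances so that it scales with $\mu_{\min}^{-1}$ rather than $\mu_{\min}^{-2}$; this requires controlling $\sum_{s'}\mu(s')\norm{[M_k]_{s,s'}}^2$ through the $\mu$-weighted structure of $M_k$ inherited from the $\bLamb$ factor in each $\gI-\alpha_i\bLamb(\gI-\gT)$, instead of naively replacing each $\norm{[M_k]_{s,s'}}$ by $\norm{M_k}_{\bar{\ell}_2}$. Once the refined variance estimate is in place, Theorem~\ref{thm:freedman_ineq_bounded_W} with $H=\Theta(\log T)$, combined with $(1-\sqrt{\gamma})^2\geq(1-\gamma)^2/4$ to convert between the two contraction factors and a union bound over $|\gS|$ states, yields the stated inequality, with the constant $46$ absorbing the $\sqrt{8}$ from Freedman's and the numerical factors from the $\ell_2$-to-$W_1$ and $(1-\sqrt{\gamma})$-to-$(1-\gamma)$ conversions.
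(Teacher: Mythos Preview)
Your overall strategy matches the paper's proof exactly: write $\text{(ii)}_t$ as a martingale, apply Theorem~\ref{thm:freedman_ineq_bounded_W} state-by-state in $(\gM,\inner{\cdot}{\cdot}_{\ell_2})$ with a union bound over $s\in\gS$, use $(\gI-\gT){\bm{\eta}}_{k-1}=(\gI-\gT){\bm{\Delta}}_{k-1}$, and exploit $\norm{\cdot}_{\ell_2}^2\leq\norm{\cdot}_{W_1}$ to produce $\norm{{\bm{\Delta}}_{k-1}}_{\bar{W}_1}$ inside the square root.

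The obstacle you flag---getting $\mu_{\min}^{-1}$ rather than $\mu_{\min}^{-2}$ in the variance sum---is indeed the crux, and your sketch (``control $\sum_{s'}\mu(s')\norm{[M_k]_{s,s'}}^2$ through the $\mu$-weighted structure of $M_k$'') does not yet name the concrete mechanism. The paper resolves it via two dedicated lemmas that you should invoke explicitly. First, Lemma~\ref{lem:Uv_norm_bound_new} gives the \emph{entrywise} factorization
\[
\norm{[M_k]_{s,s'}\,\nu}_{\ell_2}^2 \;\leq\; u\,\cdot\, U(s,s')\,\norm{\nu}_{\ell_2}^2,
\]
where $u=\prod_{i=k+1}^t[1-\alpha_i\mu_{\min}(1-\sqrt{\gamma})]$ is a scalar and $U=\prod_{i=k+1}^t[\bI-\alpha_i\bLamb(\bI-\sqrt{\gamma}\bP)]$ is an entrywise-nonnegative matrix. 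This splits $\alpha_k^2$ into one factor $\alpha_k u=\beta_k^{(t)}$ and a second factor $\alpha_k$ that stays attached to $U\bLamb$; the resulting vector bound is Lemma~\ref{lem:zeta2_var_bound}. Second, Lemma~\ref{lem:sum_lr_matrix_ub} is the matrix telescoping estimate
\[
\sum_{k}\alpha_k\,\Big(\prod_{i=k+1}^t[\bI-\alpha_i\bLamb(\bI-\sqrt{\gamma}\bP)]\Big)\bLamb \;\leq\; (\bI-\sqrt{\gamma}\bP)^{-1},
\]
so the $\bLamb$ (hence the $\mu(s')$ weights) is absorbed without producing a second $\mu_{\min}^{-1}$. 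Your proposed route via $\sum_k(\beta_k^{(t)})^2\leq\max_k\beta_k^{(t)}\cdot\sum_k\beta_k^{(t)}$ would indeed lose a $\mu_{\min}$, so this factorization is not optional. One minor point: the paper takes $H=\lceil 2\log_2\frac{1}{1-\gamma}\rceil$ rather than $\Theta(\log T)$, so that $\sigma^2_{(2)}/2^H$ cancels the extra $(1-\gamma)^{-2}$ exactly; your choice also works but is slightly looser in the logarithmic constants.
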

The proof can be found in Section~\ref{subsection:concentration_ii}.

Combining the results, we find the following recurrence relation in terms of the $\bar{W}_1$ norm holds given the choice of $T$, with probability at least $1-\delta$, for all $t\geq{T}/\prn{c_6\log T}$
\begin{align*}
        &\norm{{\bm{\Delta}}_t}_{\bar{W}_1}\leq\frac{1}{\sqrt{1-\gamma}}\norm{{\bm{\Delta}}_t}\leq81\sqrt{\frac{\prn{\log^{3}T}\prn{\log\frac{|\gS|T}{\delta}}}{\mu_{\min}(1-\gamma)^{3}T}\prn{1+\max_{k:\,t/2< k\leq t}\norm{{\bm{\Delta}}_{k-1}}_{\bar{W}_1}}}.
\end{align*}

In Theorem~\ref{thm:solve_recurrence}, we solve the relation and obtain the error bound of the last iterate estimator:
\[ 
    \norm{{\bm{\Delta}}_T}_{\bar{W}_1}\leq C_7\prn{\sqrt{\frac{\prn{\log^{3}T}\prn{\log\frac{|\gS|T}{\delta}}}{
    \mu_{\min}(1-\gamma)^3 T}}+\frac{\prn{\log^{3}T}\prn{\log\frac{|\gS|T}{\delta}}}{\mu_{\min}(1-\gamma)^3 T}}.
\]
Here $C_7>1$ is a large universal constant depending on $c_6$.
Now, we can obtain the conclusion if we take $C_4\geq 2C_7^2$ and $T \geq  {C_4}{\varepsilon^{-2} \mu_{\min}^{-1}(1-\gamma)^{-3}}\log^3 T\log \prn{{\abs{\gS}T}/{\delta}}$.

Until now, the proof has been done for {\NTD}.
For {\CTD}, we have only shown that $\bar{W}_1({\bm{\eta}}^\pi_T,{\bm{\eta}}^{\pi,K})\leq{\varepsilon}/{2}$ with high probability.
Furthermore, according to the upper bound (Eqn.~\eqref{eq:CTD_approx_err}) of the approximation error of ${\bm{\eta}}^{\pi,K}$ and Lemma~\ref{lem:prob_basic_inequalities}, if  taking $K> {4}{\varepsilon^{-2}(1 {-} \gamma)^{-3}}$, we have $\bar{W}_1\prn{{\bm{\eta}}^\pi_T,{\bm{\eta}}^{\pi}}\leq \bar{W}_1({\bm{\eta}}^\pi_T,{\bm{\eta}}^{\pi,K})+\bar{W}_1\prn{{\bm{\eta}}^{\pi,K},{\bm{\eta}}^{\pi}}\leq\varepsilon$.

\subsection{Analysis of Theorem~\ref{thm:samplecomplex_asyn_w1}}\label{Subsection_analysis_nonasymp_markov_w1}
The following proof is based on the proof of \citep[Theorem~6][]{samsonov2024improved}. 
For brevity, we will not rigorously write out the proof based on the theory of Markov chains, but rather aim to clearly articulate the idea of how to reduce the Markovian setting to the generative model setting with the stationary distribution $\mu_{\pi}$. 
For a rigorous treatment, please refer to \citep[Appendix~D][]{samsonov2024improved}.

First, we need to remove the dependency on the initial distribution $\rho\in\Delta(\gS)$.
This can be done by a standard maximal exact coupling argument; see \citep[Section 19.3][]{douc:moulines:priouret:soulier:2018} for a rigorous definition.
Specifically, we define a new Markov chain $\brc{\tilde{s}_t}_{t=0}^\infty$ in the same probability space as the Markov chain $\brc{s_t}_{t=0}^\infty$, with the initial state $\tilde{s}_0\sim\mu_\pi$ and the same Markov kernel $P^\pi$, and we define a coupling (joint distribution of the two Markov chains) between these two Markov chains by forcing them identical once they first meet.
The time of the first meet is a stopping time, denoted  $\tau$.
And we can find a ``maximal exact coupling'', so that $\tau$ can be bounded by the mixing time $\tmix$ properly.
Then, with probability at least $1-{\delta}/{3}$, $\tau\leq\tmix \log\prn{{12}/{\delta}}$ \citep[see Theorem 19.3.9 in][]{douc:moulines:priouret:soulier:2018}.
Consequently, with high probability, for all $t\geq \tmix \log\prn{{12}/{\delta}}$, the distribution of $s_t$ is exactly the stationary distribution $\mu_\pi$ and the Markov kernel remains unchanged.

Afterwards, we can apply a version of Berbee's coupling lemma \citep[Lemma~4.1][]{dedecker2002maximal} to show that for any $t\geq \tmix \log\prn{{12}/{\delta}}$, if we take the updating interval $q$ scales with $\tmix$, it holds that with high probability $\brc{s_{t+kq}}_{k=0}^\infty$ can be regarded as i.i.d.\ samples from $\mu_\pi$.
To be concrete, by \citep[Lemma~9][]{douc:moulines:priouret:soulier:2018} and the union bound, we  consider $\brc{s_{t+kq}}_{k=0}^{m-1}$ as i.i.d.\ samples from $\mu_\pi$ with probability at least $1-m\prn{{1}/{4}}^{\lfloor{q}/{\tmix} \rfloor}$ conditioned on the event $\{\tau\leq\tmix \log\prn{{12}/{\delta}} \}$.
Now we manage to reduce the Markovian setting to the generative model setting with distribution $\mu_\pi$.
By Theorem~\ref{thm:samplecomplex_syn_w1}, if we use $T^\star$ i.i.d.\ samples from $\mu_\pi$ with $T^\star\geq{C_1}{\varepsilon^{-2} \mu_{\pi,\min}^{-1}(1-\gamma)^{-3}}\log^3 T^\star\log \prn{3\abs{\gS}T^\star/\delta}$, then the error is less than $\varepsilon$ with probability at least $1-{\delta}/{3}$.
Finally, to make the probability $T^\star \prn{{1}/{4}}^{\lfloor{q}/{\tmix} \rfloor}\leq{\delta}/{3}$, we  take $q=\lceil\tmix\log\prn{3T^\star/\delta}\rceil$.
To summarize, in the Markovian setting, we need $qT^\star=\widetilde{O}\left({\tmix}{\varepsilon^{-2} \mu_{\pi,\min}^{-1}(1-\gamma)^{-3}}\right)$ samples in total to achieve the desired error bound.

\subsection{Analysis of Theorem~\ref{thm:samplecomplex_asyn_w1_variance_reduction}}\label{Subsection_analysis_nonasymp_markov_w1_variance_reduction}
As in proof of \citep[Theorem~4][]{li2021sample}, we first conduct a per-epoch analysis in which we aim to analyze how much the reference point as an estimate of $\bm{\eta}$ improves after one epoch of updates. 
The per-epoch analysis is divided into two phases based on the quality of the current reference point estimate.
Subsequently, we analyze how many epochs are required to control the final error terms.

Let $\bar{\bDelta}_e:=\bar{\bm{\eta}}_e-\bm{\eta}$ be the error of the reference point after epoch $e$.
The following lemma gives a summary of the per-epoch analysis, whose proof is given in Appendix~\ref{subsection:proof_per_epoch_analysis}.
\begin{lemma}\label{thm_per_epoch_analysis}
    For any $\delta\in(0,1)$, suppose that we take $K>4\prn{1-\gamma}^{-1}$ in {\VRCTD}. Then with probability at least $1-\delta$, we have for all $e\in [E]$, in Phase 1 (\ie, when $\norm{\bar{\bDelta}_{e-1}}> 1$) it holds that
    \begin{equation*}
        \norm{\bar{\bDelta}_{e}}\leq\frac{1}{2}\max\brc{1,\norm{\bar{\bDelta}_{e-1}}},
    \end{equation*}
    otherwise in Phase 2 (\ie, when $\norm{\bar{\bDelta}_{e-1}}\leq 1$), it holds that
        \begin{equation*}
                \norm{\bar{\bDelta}_{e}}\leq\frac{1}{2}\max\brc{\sqrt{1-\gamma}\varepsilon,\norm{\bar{\bDelta}_{e-1}}},
    \end{equation*}
\end{lemma}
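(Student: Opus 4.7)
The plan is to mimic the per-epoch structure of the analysis of variance-reduced TD in \citep{li2021sample}, but with the Hilbert-space Freedman's inequality (Theorem~\ref{thm:freedman_ineq_bounded_W}) replacing the scalar concentration used there. Fix an epoch $e$ and set $\bDelta_{t,e}:=\bm{\eta}^\pi_{t,e}-\bm{\eta}$, where $\bm{\eta}=\bm{\eta}^\pi$ for \VRNTD\ and $\bm{\eta}=\bm{\eta}^{\pi,K}$ for \VRCTD. Using that $\bm{\eta}$ is the fixed point of $\gT^\pi$ (resp.\ $\bm{\Pi}_K\gT^\pi$), I would first derive the key identity
\[
\gT^\pi_{t,e}\bm{\eta}^\pi_{t-1,e}-\gT^\pi_{t,e}\bar{\bm{\eta}}^\pi_{e-1}+\wtilde{\gT}^\pi_e\bar{\bm{\eta}}^\pi_{e-1}-\gT^\pi\bm{\eta} =\gT^\pi_{t,e}\bDelta_{t-1,e}-(\gT^\pi_{t,e}-\gT^\pi)\bar{\bDelta}_{e-1}+(\wtilde{\gT}^\pi_e-\gT^\pi)\bar{\bm{\eta}}^\pi_{e-1}.
\]
The crucial gain of the variance-reduced scheme is visible here: the noisy increment $(\gT^\pi_{t,e}-\gT^\pi)$ now acts on $\bar{\bDelta}_{e-1}$ rather than on $\bm{\eta}$, so its variance scales with $\norm{\bar{\bDelta}_{e-1}}^2$ instead of a constant of order $(1-\gamma)^{-1}$.

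Next, unrolling the updates over $t=1,\ldots,\tepoch$ as in Section~\ref{Subsection_analysis_nonasymp_ntd_w1} decomposes $\bDelta_{\tepoch,e}$ into (I) a contraction term acting on the initial error $\bDelta_{0,e}=\bar{\bDelta}_{e-1}$, (II) a martingale built from $(\bLamb-\bLamb_{t,e})(\gI-\gT^\pi)\bm{\eta}^\pi_{t-1,e}$ and $\bLamb_{t,e}(\gT^\pi_{t,e}-\gT^\pi)\bar{\bDelta}_{e-1}$, and (III) a deterministic-within-epoch bias $\sum_t\alpha\bLamb_{t,e}(\wtilde{\gT}^\pi_e-\gT^\pi)\bar{\bm{\eta}}^\pi_{e-1}$. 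To handle Markovian dependence I would apply the maximal exact coupling plus Berbee's coupling argument used in Section~\ref{Subsection_analysis_nonasymp_markov_w1} on the sample paths within the epoch (using $\tepoch\gtrsim\tmix$ and $N\gtrsim\tmix$), reducing each epoch to an analysis with essentially i.i.d.\ samples from $\mu_\pi$ on a high-probability event whose failure probability is absorbed into $\delta/E$.

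I would then apply Theorem~\ref{thm:freedman_ineq_bounded_W} to term (II). Since the quadratic variation is dominated by $\alpha\mu_{\pi,\min}\tepoch\norm{\bar{\bDelta}_{e-1}}^2$ plus a small contribution $\alpha\mu_{\pi,\min}\tepoch\max_t\norm{\bDelta_{t-1,e}-\bar{\bDelta}_{e-1}}^2$ (which can be absorbed through a bootstrapping-over-the-max argument as in Lemmas~\ref{lem_asyn_dtd_l2_term_i}--\ref{lem_asyn_dtd_l2_term_ii}), Freedman yields a bound of order $\norm{\bar{\bDelta}_{e-1}}\sqrt{\alpha\log/(1-\gamma)}$, which by the stated choice of $\alpha$ and $\tepoch$ is at most $\tfrac{1}{4}\norm{\bar{\bDelta}_{e-1}}$. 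For the bias (III) I would combine the Markov-chain empirical concentration \citep[Lemma~8]{li2021sample} with a direct Hilbert-space Freedman bound on $(\wtilde{\gT}^\pi_e-\gT^\pi)\bar{\bm{\eta}}^\pi_{e-1}$ to get
\[
\norm{(\wtilde{\gT}^\pi_e-\gT^\pi)\bar{\bm{\eta}}^\pi_{e-1}}\lesssim \sqrt{\frac{1}{(1-\gamma)\mu_{\pi,\min}N}\log\tfrac{|\gS|N}{\delta}},
\]
which is at most $\tfrac{1}{4}$ in Phase~1 and at most $\tfrac{1}{4}\sqrt{1-\gamma}\,\varepsilon$ in Phase~2 under the stated lower bounds on $N$. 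Passing from $\ell_2$ to $\bar W_1$ via Lemma~\ref{lem:prob_basic_inequalities} (which introduces the $(1-\gamma)^{-1/2}$ compensated by the $(1-\gamma)^{-3}$ in $N$ and $\tepoch$), absorbing the contraction term (I) $\leq \gamma^{\tepoch/2}\norm{\bar{\bDelta}_{e-1}}$ into the $\tfrac14$ budget, and union-bounding over $e\in[E]$ then yields the two claimed phase-wise contractions. For \VRCTD\ the projection-error bound \eqref{eq:CTD_approx_err} handles the passage from $\bm{\eta}^{\pi,K}$ back to $\bm{\eta}^\pi$, which is why the condition $K>4\varepsilon^{-2}(1-\gamma)^{-3}$ enters.

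The main obstacle is the measurability/coupling bookkeeping: the reference point $\bar{\bm{\eta}}^\pi_{e-1}$ and operator $\wtilde{\gT}^\pi_e$ are measurable with respect to the history of the sample path used later in the epoch, so the conditional-variance identity $\sigma_t^2\lesssim\mu_{\pi,\min}\norm{\bar{\bDelta}_{e-1}}^2$ is only valid once Berbee-style independence between the block used to form $\wtilde{\gT}^\pi_e$ and each subsequent block used for the updates has been rigorously established, and the ``good event'' in Theorem~\ref{thm:freedman_ineq_bounded_W} has been chained uniformly over all $e\in[E]$ and all iterates within an epoch while keeping the logarithmic factors under control.
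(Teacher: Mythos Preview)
Your high-level picture is reasonable, but the proposal diverges from the paper in two essential respects and would not, as written, recover the stated bounds under the hypotheses of Theorem~\ref{thm:samplecomplex_asyn_w1_variance_reduction}.

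\textbf{Markovian handling.} You propose to reduce each epoch to i.i.d.\ samples via Berbee's coupling as in Section~\ref{Subsection_analysis_nonasymp_markov_w1}. That device was tailored to the data-drop variant: it discards all but every $q\approx\tmix$-th sample, which is exactly what buys the multiplicative $\tmix$ in Theorem~\ref{thm:samplecomplex_asyn_w1}. The variance-reduced algorithms use \emph{every} sample, and the whole point of Theorem~\ref{thm:samplecomplex_asyn_w1_variance_reduction} is to avoid that multiplicative $\tmix$. The paper instead follows \citep{li2021sample} directly: it unrolls with the random products $\prod_i(\gI-\alpha\bLamb_i)$ (not $\prod_i[\gI-\alpha\bLamb(\gI-\gT)]$ as in the generative-model proof you are mimicking), invokes the Markov-chain empirical concentration \citep[Lemma~8]{li2021sample} to control $\prod_k(\bI-\alpha\bLamb_k)$ via the $t_{\mathrm{frame}}/\mu_{\mathrm{frame}}/\beta$ machinery, and adapts \citep[Lemmas~1 and~7]{li2021sample} to Hilbert-space-valued summands for the noise terms. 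Your decomposition into a ``pure'' martingale plus a $\gamma^{\tepoch/2}$ contraction term does not arise here; what the paper gets is a recursion $|\bDelta_t|\le\sqrt\gamma\sum_k\|\bDelta_{k-1}\|\alpha(\prod_i(\bI-\alpha\bLamb_i))\bLamb_k\bm{1}+\text{(noise)}$, which is then solved using \citep[Lemmas~3--4]{li2021sample}.

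\textbf{Phase~2.} You treat Phase~2 as ``Phase~1 with a sharper bound on the bias term,'' but the paper's Phase~2 argument introduces a genuinely new object: the perturbed operator $\widecheck{\gT}(\bm\xi)=\gT\bm\xi+(\wtilde\gT-\gT)\bar{\bm\eta}$ and its fixed point $\check{\bm\eta}$. One then bounds $\|\bDelta_{\tepoch}\|\le\|\bm\eta_{\tepoch}-\check{\bm\eta}\|+\|\check{\bm\eta}-\bm\eta\|$. The second piece is where the sharp rate comes from: writing $(\wtilde\gT-\gT)\bar{\bm\eta}=(\wtilde\gT-\gT)(\bar{\bm\eta}-\bm\eta)+(\wtilde\gT-\gT)\bm\eta$, the first summand is handled by Hoeffding (small because $\|\bar{\bm\eta}-\bm\eta\|\le1$), while the second needs a \emph{Bernstein}-type bound in Hilbert space together with Lemma~\ref{lem:sigma_fine_upper_bound}, $(\bI-\sqrt\gamma\bP)^{-1}\bm\sigma(\bm\eta)\le 4(1-\gamma)^{-1}\bm 1$, to get the $(1-\gamma)^{-1}$ rather than $(1-\gamma)^{-2}$ dependence. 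Without isolating $\check{\bm\eta}$ you cannot cleanly separate this variance-aware piece from the recursive dynamics, and a plain Hoeffding/Freedman bound on $(\wtilde\gT-\gT)\bar{\bm\eta}$ would not yield $\tfrac14\sqrt{1-\gamma}\,\varepsilon$ under the stated $N$. Two smaller points: the lemma is stated entirely in the $\bar\ell_2$ norm (no passage to $\bar W_1$ occurs here), and the hypothesis is $K>4(1-\gamma)^{-1}$, not $K>4\varepsilon^{-2}(1-\gamma)^{-3}$.
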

It follows from the lemma  that $-(1/2)\log_2\prn{1-\gamma}$ epochs are enough to make sure $\norm{\bar{\bDelta}_{e}}\leq 1$ (i.e., entering Phase 2 from Phase 1) because $\norm{\bar{\bDelta}_0}\leq\prn{1-\gamma}^{-1/2}$.
And for Phase 2, it can also be checked that $-(1/2)\log_2\prn{\varepsilon\sqrt{1-\gamma}}$ epochs are enough to make $\norm{\bar{\bDelta}_{e+k}}\leq \sqrt{1-\gamma}\varepsilon$ when $\norm{\bar{\bDelta}_{e}}\leq 1$.
Thus, we need $E\geq-\log_2\prn{\varepsilon\sqrt{1-\gamma}}$ epochs in total such that $\norm{\bar{\bm{\Delta}}_E}\leq\sqrt{1-\gamma}\varepsilon$ holds with probability at least $1-\delta$.
The proof is ready done for {\VRNTD} by combining the result with the  inequality $\norm{\bar{\bm{\Delta}}_E}_{\bar{W}_1} \leq\prn{1 {-}\gamma}^{-1/2}\norm{\bar{\bm{\Delta}}_E}$.
But for {\VRCTD}, we need an additional step to derive the conclusion,
namely using the upper bound (Eqn.~\eqref{eq:CTD_approx_err}) of the approximation error of ${\bm{\eta}}^{\pi,K}$.

\section{Conclusions}\label{Section_discussion}
In this paper, we have studied the statistical performance of the distributional temporal difference learning (TD) from a non-asymptotic perspective.
Specifically, we have considered two instances of distributional TD, namely the non-parametric distributional TD ({\NTD}) and the categorical distributional TD ({\CTD}).
For both {\NTD} and {\CTD}, we have shown that $\wtilde O\prn{{\varepsilon^{-2}\mu_{\min}^{-1}(1-\gamma)^{-3}}}$ online interactions are sufficient to achieve a $1$-Wasserstein $\varepsilon$-optimal estimator in the generative model setting, which is minimax optimal (up to logarithmic factors).
In addition, we have generalized the result to the more challenging Markovian setting. In particular, 
we have proposed variance-reduced variations of {\NTD} and {\CTD} that  called {\VRNTD} and {\VRCTD}.
We have derived an $\widetilde{O}\left({\varepsilon^{-2} \mu_{\pi,\min}^{-1}(1-\gamma)^{-3}}+{\tmix}{\mu_{\pi,\min}^{-1}(1-\gamma)^{-1}}\right)$ sample complexity bound of both {\VRNTD} and {\VRCTD}, matching the state-of-the-art sample complexity bounds for the classic policy evaluation in the Markovian setting.
To prove these theoretical results,
we have established a novel Freedman's inequality in Hilbert spaces, which has independent theoretical value beyond the current work. 

\newpage
\appendix
\section{Omitted Results and Proofs in Section~\ref{Section_Freedman}}\label{Appendix_freedman}
\subsection{Proof of Theorem~\ref{thm:freedman_ineq}}\label{Subsection_freedman_proof_1}
\begin{proof}
For any $\lambda>0$, $t\in[0,1]$ and $j\in[n]$, let $\phi(t)=\phi_{j,\lambda}(t):=\EB_{j-1}\cosh\prn{\lambda\norm{Y_{j-1}+tX_j}}=\EB_{j-1}\cosh\prn{\lambda u(t)}$, where
$u(t):=\norm{Y_{j-1}+tX_j}$.
We aim to use the Newton-Leibniz formula to establish the relationship between $\phi(1)=\EB_{j-1}\cosh\prn{\lambda\norm{Y_j}}$ and $\phi(0)=\cosh\prn{\lambda\norm{Y_{j-1}}}$. 
This will allow us to construct a positive supermartingale $\prn{B_i}_{i=0}^n$.

Firstly, we calculate the derivative of $\phi$. In particular, 
\[
    u^\prime(t)=\frac{\inner{Y_{j-1}+tX_j}{X_j}}{u(t)},
\]
\begin{equation*}
\begin{aligned}
    \phi^{\prime}(t)&=\lambda\EB_{j-1}\brk{\sinh\prn{\lambda u(t)}u^\prime(t)}=\lambda \EB_{j-1}\brk{\sinh\prn{\lambda u(t)}\frac{\inner{Y_{j-1}+tX_j}{X_j}}{u(t)}},
\end{aligned}
\end{equation*}
\begin{equation*}
\begin{aligned}
    \phi^{\prime}(0)&=\lambda \EB_{j-1}\brk{\sinh\prn{\lambda u(0)}\frac{\inner{Y_{j-1}}{X_j}}{u(0)}}=\lambda \sinh\prn{\lambda \norm{Y_{j-1}}}\frac{\inner{Y_{j-1}}{\EB_{j-1}\brk{X_j}}}{\norm{Y_{j-1}}}=0.
\end{aligned}
\end{equation*}
Utilizing the Newton-Leibniz formula, we have
\begin{equation*}
\begin{aligned}
        \phi(1)&=\phi(0)+\int_0^1 \phi^\prime(s) ds=\phi(0)+\int_0^1 \int_0^s\phi^{\prime\prime}(t)d t ds=\phi(0)+\int_0^1 (1-t)\phi^{\prime\prime}(t) d t.
\end{aligned}
\end{equation*}
We now calculate the second order derivative of $\phi$.
\begin{equation*}
\begin{aligned}
    \phi^{\prime\prime}(t)&=\lambda\EB_{j-1}\brc{\frac{d}{dt}\brk{\sinh\prn{\lambda u(t)}u^\prime(t)}}\\
    &=\lambda \EB_{j-1}\brk{\lambda\prn{u^\prime(t)}^2\cosh\prn{\lambda u(t)}+u^{\prime\prime}(t)\sinh\prn{\lambda u(t)}}\\
    &\leq \lambda^2 \EB_{j-1}\brk{\prn{\prn{u^\prime(t)}^2+u^{\prime\prime}(t)u(t)}\cosh\prn{\lambda u(t)}}\\
    &=\frac{\lambda^2}{2} \EB_{j-1}\brk{\prn{u^2}^{\prime\prime}(t)\cosh\prn{\lambda u(t)}}\\
    &=\lambda^2 \EB_{j-1}\brk{\norm{X_j}^2\cosh\prn{\lambda\norm{Y_{j-1}+tX_j}} }\\
    &\leq \lambda^2 \cosh\prn{\lambda\norm{Y_{j-1}}}\EB_{j-1}\brk{\norm{X_j}^2\exp\prn{\lambda t \norm{X_j}} },
\end{aligned}
\end{equation*}
where in the first inequality, we used $h(x)=x\cosh(x)-\sinh(x)\geq 0$ $h^\prime(x)=x\sinh(x) \geq 0$ for any $x\geq 0$, and $h(0) = 0$.
In the third equality, we used $\prn{u^2}^{\prime\prime}(t)=2\prn{\prn{u^\prime(t)}^2+u^{\prime\prime}(t)u(t)}$.
In the fourth equality, we used 
\begin{align*}
   \prn{u^2}^{\prime\prime}(t)&=\frac{d^2}{dt^2}\norm{Y_{j-1}+tX_{j}}^2=\frac{d}{dt}\prn{2\inner{Y_{j-1}+tX_j}{X_j}}=2\norm{X_j}^2 .
\end{align*}
In the last inequality, we used
\begin{equation*}
    \cosh\prn{\lambda\norm{Y_{j-1}+tX_j}}\leq \cosh\prn{\lambda\norm{Y_{j-1}}}\exp\prn{\lambda t \norm{X_j}},
\end{equation*}
which follows from that
\begin{equation*}
    \begin{aligned}
        \exp\prn{\lambda\norm{Y_{j-1}+tX_j}}&\leq\exp\brc{\lambda\prn{\norm{Y_{j-1}}+ t\norm{X_j}}}=\exp\prn{\lambda\norm{Y_{j-1}}} \exp\prn{\lambda t \norm{X_j}},
        \end{aligned}
\end{equation*}
\begin{equation*}
    \begin{aligned}
        \exp\prn{-\lambda\norm{Y_{j-1}+tX_j}}&\leq\exp\brc{-\lambda\prn{\norm{Y_{j-1}}- t\norm{-X_j}}}=\exp\prn{-\lambda\norm{Y_{j-1}}} \exp\prn{\lambda t \norm{X_j}}.
    \end{aligned}
\end{equation*}
Hence, we can derive the following inequality for all $j\in[n]$
\begin{small}
\begin{equation}\label{eq:supermartingale}
    \begin{aligned}
        &\EB_{j-1}\brk{\cosh\prn{\lambda\norm{Y_{j}}}}=\phi(1)\\
        &\qquad= \phi(0)+\int_0^1(1-t)\phi^{\prime\prime}(t)dt\\
        &\qquad\leq \cosh\prn{\lambda\norm{Y_{j-1}}}+\lambda^2 \cosh\prn{\lambda\norm{Y_{j-1}}}\EB_{j-1}\brk{\norm{X_j}^2\int_0^1 (1-t)\exp\prn{\lambda t \norm{X_j}} dt}\\
        &\qquad= \cosh\prn{\lambda\norm{Y_{j-1}}}+\lambda^2 \cosh\prn{\lambda\norm{Y_{j-1}}}\EB_{j-1}\brk{\norm{X_j}^2\frac{\exp\prn{\lambda\norm{X_j}}-\lambda\norm{X_j}-1}{\lambda^2\norm{X_j}^2}}\\
        &\qquad= \EB_{j-1}\brk{\exp\prn{\lambda\norm{X_j}}-\lambda\norm{X_j}}\cosh\prn{\lambda\norm{Y_{j-1}}}\\
        &\qquad= \EB_{j-1}\brk{1+\sum_{k=0}^\infty\frac{1}{(k+2)!}\prn{\lambda\norm{X_j}}^{k+2}}\cosh\prn{\lambda\norm{Y_{j-1}}}\\
        &\qquad \leq  \EB_{j-1}\brk{1+\frac{\lambda^2\norm{X_j}^2}{2}\sum_{k=0}^\infty \prn{\frac{\lambda b}{3}}^{k}}\cosh\prn{\lambda\norm{Y_{j-1}}}\\
        &\qquad= \prn{1+\frac{\lambda^2\sigma_j^2}{2(1-\lambda b/3)}}\cosh\prn{\lambda\norm{Y_{j-1}}}\\
        &\qquad \leq  \exp\brc{\frac{\lambda^2\sigma_j^2}{2(1-\lambda b/3)}}\cosh\prn{\lambda\norm{Y_{j-1}}},
    \end{aligned}
\end{equation}
\end{small}
which holds for any $\lambda\in(0,\frac{3}{b})$, where we used Taylor expansion $e^x=\sum_{k=0}^\infty \frac{x^k}{k!}$ for $x\in\RB$, and $\frac{1}{1-x}=\sum_{k=0}^\infty x^k$ for $x\in (-1,1)$.
And in the second inequality, we used $(k+2)!\geq 2(3^k)$ and $\norm{X_j}\leq b$.

Let $B_0:=1$, $B_i:=\exp\brc{-\frac{\lambda^2 W_i}{2(1-\lambda b/3)}}\cosh\prn{\lambda \norm{Y_i}}$. Then
\begin{equation*}
    \begin{aligned}
        \EB_{i-1}\brk{B_i}&=\exp\brc{-\frac{\lambda^2 W_{i-1}}{2(1-\lambda b/3)}}\exp\brc{-\frac{\lambda^2\sigma_i^2}{2(1-\lambda b/3)}}\EB_{i-1}\brk{\cosh\prn{\lambda\norm{Y_{i}}}}\\
        &\leq\exp\brc{-\frac{\lambda^2 W_{i-1}}{2(1-\lambda b/3)}}\cosh\prn{\lambda\norm{Y_{i-1}}}\\
        &=B_{i-1},
    \end{aligned}
\end{equation*}
showing that $\prn{B_i}_{i=0}^n$ is positive supermartingale.
By the optional stopping theorem \citep[Theorem 4.8.4][]{durrett2019probability}, for any stopping time $\tau$, we have $\EB\brk{B_\tau}\leq \EB\brk{B_0}=1$.

Let $\tau:=\inf\brc{k\in[n]:\norm{Y_k}\geq \varepsilon}$ be a stopping time, and $\inf \emptyset:=\infty$.
Define an event
\begin{equation*}
    A:=\brc{\exists k\in[n], \text{s.t.}\ \norm{Y_k}\geq \varepsilon \text{ and }W_k\leq \sigma^2}.
\end{equation*}
Then on $A$, we have $\tau<\infty$, $\norm{Y_\tau}\geq \varepsilon$ and $W_\tau\leq\sigma^2$, noting that $W_k$ is non-decreasing with $k$.
Our goal is to provide an upper bound for $\PB(A)$. Note that 
    \begin{align*}
        \PB(A)&=\EB\brk{\sqrt{B_\tau}\frac{1}{\sqrt{B_\tau}}\mathds{1}(A)}\\
        &\leq \sqrt{\EB\brk{B_\tau}\EB\brk{\frac{1}{B_\tau}\mathds{1}(A)} }\\
        &\leq\sqrt{\EB\brk{\frac{\exp\brc{\frac{\lambda^2 W_\tau}{2(1-\lambda b/3)}}}{\cosh\prn{\lambda \norm{Y_\tau}}}\mathds{1}(A)}}\\
        &\leq\sqrt{\EB\brk{\frac{\exp\brc{\frac{\lambda^2\sigma^2}{2(1-\lambda b/3)}}}{\cosh\prn{\lambda \varepsilon}}\mathds{1}(A)}}\\
        &\leq \sqrt{2\exp\brc{-\lambda\varepsilon+\frac{\lambda^2\sigma^2}{2(1-\lambda b/3)}}\PB(A)},
    \end{align*}
where in the first inequality, we used Cauchy-Schwartz inequality.
In the second inequality, we used $\EB\brk{B_\tau}\leq 1$.
In the third inequality, we used $\norm{Y_\tau}\geq \varepsilon$ and $W_\tau\leq\sigma^2$ on $A$, and $\cosh(x)$ is increasing when $x\geq 0$.
In the last inequality, we used $\cosh(x)\geq \frac{1}{2}e^{x}$.

Hence for any $\lambda\in(0,\frac{3}{b})$,
\begin{equation*}
    \PB(A)\leq 2\exp\brc{-\lambda\varepsilon+\frac{\lambda^2\sigma^2}{2\prn{1-\lambda b/3}}}.
\end{equation*}
We  choose $\lambda^\star=\frac{\varepsilon}{\sigma^2+\varepsilon b/3}\in(0,\frac{3}{b})$. Hence, 
\begin{align*}
        \PB(A)&\leq 2\exp\brc{-\lambda^\star\varepsilon+\frac{\prn{\lambda^\star}^2\sigma^2}{2\prn{1-\lambda^\star b/3}}}\\
        &= 2\exp\brc{-\frac{\varepsilon^2}{\sigma^2 {+} \varepsilon b/3} {+} \frac{\sigma^2}{2\prn{1{-}\frac{\varepsilon b/3}{\sigma^2 {+} \varepsilon b/3} }}\frac{\varepsilon^2}{\prn{\sigma^2 {+} \varepsilon b/3}^2}}\\
        &=2\exp\brc{-\frac{\varepsilon^2/2}{\sigma^2+\varepsilon b/3}},
\end{align*}
which is the desired conclusion.
\end{proof}

\subsection{Proof of Theorem~\ref{thm:freedman_ineq_bounded_W}}\label{Subsection_freedman_proof_2}
\begin{proof}
According to Theorem~\ref{thm:freedman_ineq}, for any $\varepsilon, \tilde{\sigma}>0$, we have
\begin{equation*}
\begin{aligned}
    \PB\big(\exists k\in[n],  \: & \norm{Y_k} \geq  \varepsilon  \mbox{ and } W_k \leq  \tilde{\sigma}^2\big) \leq  2\exp\brc{-\frac{\varepsilon^2/2}{\tilde{\sigma}^2 {+} b\varepsilon/3}}.
\end{aligned}
\end{equation*}
We can check that when $\varepsilon=\sqrt{4\tilde{\sigma}^2\log\frac{2}{\delta}}+\frac{4}{3}b\log \frac{2}{\delta}$,  the upper bound on RHS is less than $\delta$. Hence, 
\begin{small}
\begin{equation}\label{eq:freedman_ineq_for_proof}
    \begin{aligned}
        \PB \prn{\exists k\in[n],  \: \norm{Y_{k}} \geq \sqrt{4\tilde{\sigma}^2\log\frac{2}{\delta}} + \frac{4}{3}b\log \frac{2}{\delta}\text{ and }W_{k}\leq\tilde{\sigma}^{2}}\leq\delta.
    \end{aligned}
\end{equation}
\end{small}
For each $k\in[n]$, define the events
\begin{small}
\begin{equation*}
\begin{aligned}
    &\gH_H^{(k)}:=\brc{\norm{Y_{k}}\geq\sqrt{8\max\Big\{ W_{k},\frac{\sigma^{2}}{2^{H}}\Big\}\log\frac{2H}{\delta}}+\frac{4}{3}b\log\frac{2H}{\delta}},\\
    &\gB_{H,H}^{(k)}:= \brc{\norm{Y_{k}} \geq \sqrt{4\frac{\sigma^{2}}{2^{H - 1}} \log \frac{2H}{\delta}} + \frac{4}{3}b\log \frac{2H}{\delta}\ \text{and}\ W_{k} \leq \frac{\sigma^{2}}{2^{H-1}} },\\
    &\gB_{h,H}^{(k)}:=\Bigg\{\norm{ Y_{k}}\geq\sqrt{4\frac{\sigma^{2}}{2^{h-1}}\log\frac{2H}{\delta}}+\frac{4}{3}b\log\frac{2H}{\delta}\text{ and }\frac{\sigma^{2}}{2^{h}}\le W_{k}\leq\frac{\sigma^{2}}{2^{h-1}} \Bigg\} ,\qquad1\leq h\leq H-1.
\end{aligned}
\end{equation*}
\end{small}
By the definition, we only need to show $\PB\prn{\bigcup_{k\in[n]}\gH^{(k)}_H}\leq \delta$.
Since $W_k\leq W_n\leq \sigma^2$ almost surely, we can find that $\gH_{H}^{(k)}\subseteq\bigcup_{h\in[H]}\gB_{h,H}^{(k)}$ (we will justify this later).
Then $\bigcup_{k\in[n]}\gH_{H}^{(k)}\subseteq\bigcup_{h\in[H]}\bigcup_{k\in[n]}\gB_{h,H}^{(k)}$.
By the inequality \eqref{eq:freedman_ineq_for_proof} with $\tilde{\sigma}^{2}=\frac{\sigma^{2}}{2^{h-1}}$ and setting $\delta$ as $\frac{\delta}{H}$, 
we have $\PB\prn{\bigcup_{k\in[n]}\gB_{h,H}^{(k)}} \leq\frac{\delta}{H}$ for all $h\in[H]$. 
By the union bound, we can arrive at the conclusion:
\begin{equation*}
    \PB\prn{\bigcup_{k\in[n]}\gH^{(k)}_H}\leq\sum_{h=1}^{H}\PB\prn{\bigcup_{k\in[n]}\gB_{h,H}^{(k)}}\leq\delta.
\end{equation*}
To justify $\gH_{H}^{(k)}\subseteq\bigcup_{h\in[H]}\gB_{h,H}^{(k)}$, we can consider the decomposition $\gH_{H}^{(k)}=\bigcup_{h\in[H]}\prn{\gH_{H}^{(k)}\cap\gC_{h,H}^{(k)}}$,
where
\begin{small}
\begin{equation*}
\begin{aligned}
    &\gC_{H,H}^{(k)}:=\brc{W_{k}\leq\frac{\sigma^{2}}{2^{H-1}}},\qquad\gC_{h,H}^{(k)}:=\brc{\frac{\sigma^{2}}{2^{h}}\le W_{k}\leq\frac{\sigma^{2}}{2^{h-1}}},\quad 1\leq h\leq H-1.
\end{aligned}
\end{equation*}
\end{small}
The decomposition holds because $W_k\leq W_n\leq \sigma^2$ almost surely.
We only need to show that $\gH_{H}^{(k)}\cap\gC_{h,H}^{(k)}\subseteq \gB_{h,H}^{(k)}$ for each $h\in[H]$.
On the event $\gH_{H}^{(k)}\cap\gC_{h,H}^{(k)}$, we have
\begin{align*}
    \norm{Y_{k}}&\geq\sqrt{8\max\Big\{ W_{k},\frac{\sigma^{2}}{2^{H}}\Big\}\log\frac{2H}{\delta}}+\frac{4}{3}b\log\frac{2H}{\delta}\geq\sqrt{4\frac{\sigma^{2}}{2^{h-1}}\log\frac{2H}{\delta}}+\frac{4}{3}b\log\frac{2H}{\delta},
\end{align*}
hence $\gH_{H}^{(k)}\cap\gC_{h,H}^{(k)}\subseteq \gB_{h,H}^{(k)}$.
\end{proof}

\section{Omitted Results and Proofs in Section~\ref{Section_proof_outlines}}\label{Appendix_proof}
\subsection{Proof of Theorem~\ref{thm:minimax_lower_bound_w1}}\label{Appendix_minimax}
\begin{proof}
For any ${\bm{\eta}}\in\sP^{D}$, we define $\gV({\bm{\eta}})\in\RB^{D}$ as the entry-wise expectation of ${\bm{\eta}}$.
It is easy to check that $\gV({\bm{\eta}}^\pi_M)={\bm{V}}^\pi_M$.
Recall the dual representation of the $1$-Wasserstein metric \citep[Corollary 5.16][]{villani2009optimal}, for any $\nu_1,\nu_2\in\sP,$
\begin{equation*}
    W_1(\nu_1,\nu_2)=   \sup_{f\colon f\text{ is 1-Lipschitz}}\abs{\EB_{X\sim\nu_1}\brk{f(X)} -\EB_{Y\sim\nu_2}\brk{f(Y)}}, 
\end{equation*}
we have
$\bar{W}_1\prn{\hat{{\bm{\eta}}},{\bm{\eta}}^\pi_M}\geq \norm{\gV(\hat{{\bm{\eta}}})-V^\pi_M}_{\infty}$.
Hence
\begin{equation*}
    \begin{aligned}
    \inf_{\hat{{\bm{\eta}}}}\sup_{M\in\sM(D)}\sup_{\pi}\EB\brk{\bar{W}_1\prn{\hat{{\bm{\eta}}},{\bm{\eta}}^\pi_M}}&\geq\inf_{\hat{{\bm{\eta}}}}\sup_{M\in\sM(D)}\sup_{\pi}\EB\brk{\norm{\gV(\hat{{\bm{\eta}}})-{\bm{V}}^\pi_M}_{\infty}}\\
    &\geq \inf_{\hat{{\bm{V}}}}\sup_{M\in\sM(D)}\sup_{\pi}\EB\brk{\norm{\hat{{\bm{V}}}-{\bm{V}}^\pi_M}_{\infty}}\\
    &\geq \frac{c}{(1-\gamma)^{3/2}}\sqrt{\frac{\log\frac{D}{2}}{T}},
    \end{aligned}
\end{equation*}
where the second inequality holds because $\gV\prn{\hat{{\bm{\eta}}}}\in\RB^D$ is also a measurable function of $T$ samples from the generative model, and the infimum $\hat{{\bm{V}}}\in\RB^{D}$ ranges over all measurable functions of $T$ samples from the generative model.
The last inequality is due to \citep[Theorem 2(b)][]{pananjady2020instance}.
\end{proof}
In our main result (Theorem~\ref{thm:samplecomplex_syn_w1}), we assume only one sample from $\mu$ is accessible, and an additional $\frac{1}{\mu_{\min}}$ appears in the sample complexity bound accordingly.
According to \citep[Example~G.1][]{zhang2023estimation}, one can find that the factor $\frac{1}{\mu_{\min}}$ is inevitable, and the minimax lower bound becomes $\tilde{\Omega}\prn{\frac{1}{\varepsilon^{2}\mu_{\min}(1-\gamma)^{3}}}$.

\subsection{Range of Iterations}
\begin{lemma}[Range of ${\bm{\eta}}^\pi_t$]
\label{lem:range_eta}
Suppose that $\alpha_t\in[0, 1]$ for all $t\geq 0$, and
 ${\bm{\eta}}^\pi_0\in\sP^\gS$. Then we have, for all $t\geq 0$, ${\bm{\eta}}^\pi_t\in\sP^\gS$ in the case of {\NTD}.
Similarly, if ${\bm{\eta}}^\pi_0\in\sP^\gS_K$, then we have, for all $t\geq 0$, ${\bm{\eta}}^\pi_t\in\sP^\gS_K$ in the case of {\CTD}.
\end{lemma}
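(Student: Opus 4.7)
The plan is a straightforward induction on $t$, exploiting the fact that the update rule is, at each state, either a no-op or a convex combination of two elements of $\sP$ (or of $\sP_K$ in the {\CTD} case).

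First I would fix $t \geq 1$, assume ${\bm{\eta}}^\pi_{t-1} \in \sP^\gS$, and examine the update state-wise. For any $s \neq s_t$ the diagonal entry $[\bLamb_t]_{s,s}$ vanishes, so $\eta^\pi_t(s) = \eta^\pi_{t-1}(s) \in \sP$ trivially. For $s = s_t$ the definition~\eqref{eq:empirical_op} of $\gT_t^\pi$ gives
\begin{equation*}
\eta^\pi_t(s_t) = (1-\alpha_t)\,\eta^\pi_{t-1}(s_t) + \alpha_t\,(b_{r_t,\gamma})_\#\eta^\pi_{t-1}(s_t').
\end{equation*}
The pushforward $(b_{r_t,\gamma})_\#\eta^\pi_{t-1}(s_t')$ is a probability measure whose support lies in $[r_t,\;r_t+\gamma(1-\gamma)^{-1}] \subseteq [0,(1-\gamma)^{-1}]$, using $r_t\in[0,1]$ and $1+\gamma/(1-\gamma)=(1-\gamma)^{-1}$. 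Hence it belongs to $\sP$, and since $\alpha_t\in[0,1]$, the right-hand side is a convex combination of two elements of $\sP$, which is closed under convex combinations. This closes the induction and yields ${\bm{\eta}}^\pi_t \in \sP^\gS$.

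For {\CTD}, the argument is identical except that the update at $s = s_t$ replaces the pushforward by $\bm{\Pi}_K$ applied to it. Assuming ${\bm{\eta}}^\pi_{t-1}\in\sP_K^\gS$, the pushforward still lies in $\sP$ (by the same support argument), and $\bm{\Pi}_K$ maps $\sP$ into $\sP_K$ by its very definition. Since $\eta^\pi_{t-1}(s_t)\in\sP_K$ by the induction hypothesis and $\sP_K$ is convex (it is a simplex on the fixed atoms $\{x_k\}$), the convex combination again lies in $\sP_K$, so ${\bm{\eta}}^\pi_t\in\sP_K^\gS$.

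I do not anticipate any substantive obstacle; the only minor point requiring care is verifying that the support of the pushforward $(b_{r_t,\gamma})_\#\eta^\pi_{t-1}(s_t')$ stays inside $[0,(1-\gamma)^{-1}]$, which is what forces the reward range assumption $r_t\in[0,1]$ to enter the bookkeeping.
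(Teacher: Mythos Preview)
Your proposal is correct and follows essentially the same induction argument as the paper's proof, with the paper phrasing the inductive step more tersely as ``$\gT^\pi_t$ is a random operator mapping from $\sP^\gS$ to $\sP^\gS$'' rather than unpacking the state-wise convex combination and support check as you do. Your explicit verification that the pushforward's support lies in $[0,(1-\gamma)^{-1}]$ is a welcome bit of extra care that the paper leaves implicit.
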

\begin{proof}
We will only prove the case of {\NTD}, and the proof for {\CTD} is similar by utilizing the property of the projection operator $\bm{\Pi}_K\colon \sP^\gS\to\sP^\gS_K$.
And we only consider the synchronous setting, the same conclusion holding for the asynchronous setting.  

We prove the result by induction. 
It is trivial that ${\bm{\eta}}^\pi_t\in\sP^\gS$ for $t=0$.
Suppose that ${\bm{\eta}}^\pi_{t-1}\in\sP^\gS$, and recall the updating scheme of {\NTD}
\begin{equation*}
    {\bm{\eta}}^\pi_t=\prn{\gI-\alpha_t\bLamb_t}{\bm{\eta}}^\pi_{t-1}+\alpha_t\bLamb_t\gT^\pi_t{\bm{\eta}}^\pi_{t-1}.
\end{equation*}
It is evident that $\sP^\gS$ is a convex set, considering that $\sP^\gS$ is a subset of the linear space $\gM^\gS$ introduced in Section~\ref{subsection:signed_measure}. 
Therefore, we only need to show that $\gT^\pi_t{\bm{\eta}}^\pi_{t-1}\in \sP^\gS$, which trivially holds because $\gT^\pi_t$ is a random operator mapping from $\sP^\gS$ to $\sP^\gS$, and ${\bm{\eta}}^\pi_{t-1}\in\sP^\gS$.
Applying the induction argument, we can arrive at the conclusion.
\end{proof}

\subsection{Remove the Dependence on T for Step Sizes}\label{subsection_remove_T}
We have shown that the conclusion holds for
\begin{equation*}
        T \geq  \frac{C_4\log^3 T}{\varepsilon^2 \mu_{\min}(1-\gamma)^3}\log \frac{\abs{\gS}T}{\delta},
\end{equation*}
\begin{equation*}
    \frac{1}{1+\frac{c_5\mu_{\min}(1-\sqrt\gamma)T}{\log^{2} T}}\leq\alpha_t\leq \frac{1}{1+\frac{c_6\mu_{\min}(1-\sqrt\gamma)t}{\log^{2} T}},
\end{equation*}
where $c_5c_6\leq\frac{1}{8}$, $c_5>c_6>0$ and $C_4>0$. 

Then for some $c_2>c_3>0$ to be determined,  we now assume
\begin{equation*}
    \frac{1}{1+\frac{c_2\mu_{\min}(1-\sqrt\gamma)t}{\log^{2} t}}\leq\alpha_t\leq \frac{1}{1+\frac{c_3\mu_{\min}(1-\sqrt\gamma)t}{\log^{2} t}}.
\end{equation*}
Next, we will show that if we consider the result of the $\frac{T}{2}$-th iteration with this step size scheme as the initialization of a new iteration process, then the step sizes in the subsequent $\frac{T}{2}$ iterations lie in the previously established range. 
If this is done, the conclusion still holds if we choose $T\geq\frac{2C_4\log^3 T}{\varepsilon^2 \mu_{\min}(1-\gamma)^3}\log \frac{\abs{\gS}T}{\delta}$, because the initialization ${\bm{\eta}}^\pi_{T/2}\in\sP^\gS$ (or $\sP^\gS_K$ in the case of {\CTD}) is independent of the samples obtained for $\frac{T}{2}<t\leq T$.

For any $\frac{T}{2}< t\leq T$, we denote $\tau:=t-\frac{T}{2}$. We can see that there exist $c_2>c_3>0$, such that the last inequality in both of the following lines hold simultaneously, which is expected. 
\begin{equation*}
\begin{aligned}
\tilde{\alpha}_\tau&:=\alpha_t\leq \frac{1}{1+\frac{c_3\mu_{\min}(1-\sqrt\gamma)(\tau+T/2)}{\log^{2}(\tau+T/2) }}\leq\frac{1}{1+\frac{c_3\mu_{\min}(1-\sqrt\gamma)\tau}{\log^{2} T}}\leq\frac{1}{1+\frac{c_6\mu_{\min}(1-\sqrt\gamma)\tau}{\log^{2} (T/2)}},
\end{aligned}
\end{equation*}
and
\begin{equation*}
\begin{aligned}
        \tilde{\alpha}_\tau&=\alpha_t\geq\frac{1}{1+\frac{c_2\mu_{\min}(1-\sqrt\gamma)(\tau+T/2)}{\log^{2} (\tau+T/2)}}\geq\frac{1}{1+\frac{2c_2\mu_{\min}(1-\sqrt\gamma)T/2}{\log^{2} (T/2)}}\geq \frac{1}{1+\frac{c_5\mu_{\min}(1-\sqrt\gamma)T/2}{\log^{2} (T/2)}}.
\end{aligned}
\end{equation*}

\subsection{Proof of Lemma~\ref{lem:step_size_range}}\label{appendix:proof_range_step_size}
\begin{proof} Note that
\begin{small}
    \begin{equation*}
(1-\sqrt{\gamma})\alpha_{t}\geq\frac{1-\sqrt{\gamma}}{1+\frac{c_{5}\mu_{\min}(1-\sqrt{\gamma})T}{\log^{2}T}}\geq\frac{1-\sqrt{\gamma}}{\frac{2c_{5}\mu_{\min}(1-\sqrt\gamma)T}{\log^{2}T}}=\frac{\log^{2}T}{2c_{5}\mu_{\min}T}.
\end{equation*}
\end{small}
For any $0\leq k\leq\frac{t}{2}$, then
\begin{equation*}
\begin{aligned}
\beta_{k}^{(t)} & \leq\brk{1-\alpha_{t/2}\mu_{\min}(1-\sqrt{\gamma})}^{t/2}\\
&\leq\prn{1-\frac{\log^{2}T}{2c_{5}T}}^{t/2}\\
&\leq\prn{1-\frac{\log^{2}T}{2c_{5}T}}^{\frac{T}{2c_{6}\log T}}\\
 & =\brc{ \prn{1-\frac{\log^{2}T}{2c_{5}T}}^{\frac{2c_{5}T}{\log^{2}T}}} ^{\frac{\log T}{4c_{5}c_{6}}}\\
 &\leq\frac{1}{T^{2}},
\end{aligned}
\end{equation*}
where in the last inequality we used $c_{5}c_{6}\leq \frac{1}{8}$. 

And for any $\frac{t}{2}<k\leq t$, we have 
\begin{equation*}
\beta_{k}^{(t)}\leq\alpha_{k}\leq\frac{1}{\frac{c_{6}\mu_{\min}(1-\sqrt\gamma)k}{\log^{2}T}}\leq\frac{2\log^{3}T}{\mu_{\min}(1-\sqrt\gamma)T}.
\end{equation*}
\end{proof}

\subsection{Proof of Lemma~\ref{lem:effect_lr_bound}}\label{subsection:proof_effect_lr_bound}
\begin{proof}
We denote $\bLamb=\operatorname{diag}\brc{\prn{\lambda_s}_{s\in\gS}}$, then $\mu_{\min}=\min_{s\in\gS}\lambda_s$. Accordingly, 
\begin{equation*}
\begin{aligned}
\norm{\gI-\alpha_k\bLamb\prn{\gI-\gT}}=&\sup_{{\bm{\xi}}\colon\norm{{\bm{\xi}}}=1}\norm{\brk{\gI-\alpha_k\bLamb\prn{\gI-\gT}}{\bm{\xi}}}\\
=&\sup_{{\bm{\xi}}\colon\norm{{\bm{\xi}}}=1}\max_{s\in\gS}\norm{\brc{\brk{\gI-\alpha_k\bLamb\prn{\gI-\gT}}{\bm{\xi}}}(s)}\\
=&  \sup_{{\bm{\xi}}\colon\norm{{\bm{\xi}}}=1}  \max_{s\in\gS}\norm{(1-\alpha_k\lambda_s)\xi(s)+\alpha_k\lambda_s\sum_{s^\prime\in\gS}\gT(s,s^\prime)\xi(s^\prime)}\\
\leq&   \sup_{{\bm{\xi}}\colon\norm{{\bm{\xi}}}=1}  \max_{s\in\gS} \brk{ (1 - \alpha_k\lambda_s) \norm{\xi(s)} + \alpha_k\lambda_s\sum_{s^\prime\in\gS} \norm{\gT(s,s^\prime)\xi(s^\prime)} }\\
\leq&   \sup_{{\bm{\xi}}\colon\norm{{\bm{\xi}}}=1} \max_{s\in\gS}  \brk{ (1 - \alpha_k\lambda_s) + \alpha_k\lambda_s\sqrt{\gamma}\sum_{s^\prime\in\gS}P(s,s^\prime)\norm{\xi(s^\prime)} }\\
\leq &\max_{s\in\gS}\brk{1-\alpha_k\lambda_s\prn{1-\sqrt{\gamma}}}\\
\leq &1-\alpha_k\mu_{\min}\prn{1-\sqrt{\gamma}}.
\end{aligned}
\end{equation*}
\end{proof}

\subsection{Concentration of the Martingale Term (i)}\label{subsection:concentration_i}
\begin{proof}[Proof of Lemma~\ref{lem_asyn_dtd_l2_term_i}]
We first show that the inequality holds for each $t\geq\frac{T}{c_6\log T}$ and then apply the union bound.
We denote 
\begin{equation*}
    {\bm{\zeta}}_k^{(1)}=\alpha_k  \prn{\prod_{i=k+1}^t \brk{\gI-\alpha_i\bLamb\prn{\gI {-} \gT}}}\bLamb_k\prn{\gT_k {-} \gT}{\bm{\eta}}_{k-1},
\end{equation*}
where we omit the dependency on $t$ in the notation for brevity, then LHS in the lemma equals $\norm{\sum_{k=1}^t{\bm{\zeta}}_k^{(1)}}$.
Let $\gF_k$ denote the $\sigma$-field that contains all information up to time step $k$. Then $\brc{\zeta_k^{(1)}(s)}_{k=1}^t$ is a $\brc{\gF_k}_{k=1}^t$-martingale difference sequence in the Hilbert space $\gM$:
\begin{small}
\begin{equation*}
\begin{aligned}
        \EB_{k-1} \brk{{\bm{\zeta}}_{k}^{(1)}} &= \alpha_k \prod_{i=k+1}^t  \brk{\gI - \alpha_i\bLamb\prn{\gI - \gT}}\EB_{k-1} \brk{\bLamb_k\prn{\gT_k - \gT}{\bm{\eta}}_{k-1}}=0,
\end{aligned}
\end{equation*}
\end{small}
the first equality holds because a Bochner integral can be exchanged with a bounded linear operator , and the second equality holds due to the definition of $\bLamb_k$ and the empirical distributional Bellman operator $\gT_k$.

We hope to use Freedman's inequality (Theorem~\ref{thm:freedman_ineq_bounded_W}) to bound this martingale.

The norm of the martingale difference $\norm{{\bm{\zeta}}_k^{(1)}}$ can be bounded as follow
\begin{small}
\begin{equation*}
\begin{aligned}
    \norm{{\bm{\zeta}}_k^{(1)}}&\leq   \alpha_k\norm{\prod_{i=k+1}^t \brk{\gI-\alpha_i\bLamb\prn{\gI-\gT}}}\norm{\bLamb_k} \norm{\prn{\gT_k-\gT}{\bm{\eta}}_{k-1}}\\
    &\leq \alpha_k\prn{\prod_{i=k+1}^t \brk{1-\alpha_i\mu_{\min}\prn{1-\sqrt\gamma}}} \frac{1}{\sqrt{1-\gamma}}\\
    &=\frac{\beta_{k}^{(t)}}{\sqrt{1-\gamma}}.
\end{aligned}
\end{equation*}
\end{small}
In the second inequality, we used Lemma~\ref{lem:effect_lr_bound} and $\norm{\bLamb_k}\leq 1$.
Hence, $\max_{k\in [t]}\norm{{\bm{\zeta}}_k^{(1)}}\leq \frac{\max_{k\in[t]}\beta_k^{(t)}}{\sqrt{1-\gamma}}\leq \frac{1}{\sqrt{1-\gamma}}\max\brc{\frac{1}{T^2},\frac{2\log^3 T}{\mu_{\min}(1-\sqrt\gamma)T}}\leq\frac{4\log^3 T}{\mu_{\min}(1-\gamma)^{3/2} T}=:b^{(1)}$, according to Lemma~\ref{lem:step_size_range}.

Now, let's calculate the quadratic variation.

We first introduce some notations.
For any $k\in\NB$, we denote 
$\var(\bm{\xi}):=\prn{\EB\brk{\norm{\xi(s)}^2}}_{s\in\gS}\in\RB^\gS$, $\var_k(\bm{\xi}):=\prn{\EB_k\brk{\norm{\xi(s)}^2}}_{s\in\gS}\in\RB^\gS$ for any random element $\bm{\xi}$ in $\gM^\gS$.
We will see that, our core task is to bound $\var_{k-1}\prn{{\bm{\zeta}}_k^{(1)}}$.

To facilitate theoretical analysis, we assume in $t$-th iteration, we have the following virtual samples which are not used in the algorithms: for each $s\in\gS$, $\tilde{a}_t(s)\sim\pi(\cdot|s), \tilde{r}_t(s)\sim \gP_R(\cdot|s,\tilde{a}_t(s)), \tilde{s}_t^\prime(s)\sim P(\cdot|s,\tilde{a}_t(s))$.
Then, we can define $\brk{\gT_t{\bm{\xi}}}(s)$ using the tuple $(s, \tilde{a}_t(s), \tilde{r}_t(s), \tilde{s}_t^\prime(s))$ for any ${\bm{\xi}}\in\gM^\gS$ and $s\neq s_t$. 

For any ${\bm{\xi}}\in\gM^\gS$, we define its one-step update Cram\'er variation as $\bm{\sigma}({\bm{\xi}}):=\var\prn{(\what{\gT}-\gT){\bm{\xi}}}\in\RB^\gS$, where $\what{\gT}$ is a random operator and has the same distribution as $\gT_1$.

For any $\bm{x},\bm{y}\in\RB^\gS$, we say $\bm{x}\leq \bm{y}$ if $x(s)\leq y(s)$ for all $s\in\gS$.
In this part, $\norm{\bm{x}}:=\norm{\bm{x}}_\infty=\max_{s\in\gS}\abs{x(s)}$, $\sqrt{\bm{x}}:=\prn{\sqrt{x(s)}}_{s\in\gS}$.
And for any $\bU\in\RB^{\gS\times\gS}$, $\norm{\bU}:=\norm{\bU}_\infty=\sup_{\bm{x}\in\RB^\gS, \norm{\bm{x}}=1}\norm{\bU\bm{x}}=\max_{s\in\gS}\sum_{s^\prime\in\gS}\abs{U(s,s^\prime)}$.

For any $\brc{\bm{x}_k}_{k=1}^n\subset \RB^\gS$, we abuse the notations to denote $\max_{k\in[n]}\bm{x}_k$ as the element-wise maximum $\prn{\max_{k\in[n]}x_k(s)}_{s\in\gS}$.

We denote $\bI\in\RB^{\gS\times\gS}$ as the identity matrix, $\bm{1}\in\RB^\gS$ as the all-ones vector, and $\bP:=P^\pi\in\RB^{\gS\times\gS}$, \ie, $P(s,s^\prime):=P^\pi(s^\prime|s)=\sum_{a\in\gA}\pi(a|s)P(s^\prime|s,a)$.

Also, we let $\bP_t$ be the empirical version of $\bP$ corresponding to $\gT_t$, for any $k\in\NB$. Namely, $P_t\prn{s_t,s^\prime}=\delta_{s^\prime,s_t^\prime}$ and $P_t(s,s^\prime)=\delta_{s^\prime,\tilde{s}_t^\prime(s)}$ for any $s\neq s_t$ and $s^\prime\in\gS$.

With these notations, the quadratic variation is $\bW_t^{(1)}:=\sum_{k=1}^t\var_{k-1}\prn{{\bm{\zeta}}_k^{(1)}}$.
To bound the quadratic variation $\bW_t^{(1)}$, we need to bound $\var_{k-1}\prn{{\bm{\zeta}}_k^{(1)}}$ using the following lemma, whose proof can be found in Appendix~\ref{subsection:proof_zeta1_var_bound}.
\begin{lemma}\label{lem:zeta1_var_bound}
\begin{small}
\begin{equation*}
            \var_{k-1}\prn{{\bm{\zeta}}_k^{(1)}}\leq  \alpha_k \beta_k^{(t)}\prn{\prod_{i=k+1}^t \brk{\bI-\alpha_i\bLamb\prn{\bI-\sqrt\gamma\bP}}} \bLamb\bm{\sigma}\prn{{\bm{\eta}}_{k-1}}.
\end{equation*}
\end{small}
\end{lemma}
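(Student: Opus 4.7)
}
The plan is to write ${\bm{\zeta}}_k^{(1)} = \alpha_k \bm{M}\bm{\eps}_k$, where $\bm{M} := \prod_{i=k+1}^t[\gI - \alpha_i\bLamb(\gI - \gT)] \in \sL(\gM)^{\gS\times\gS}$ and $\bm{\eps}_k := \bLamb_k(\gT_k - \gT){\bm{\eta}}_{k-1}$, and then to analyze $\EB_{k-1}\norm{\zeta_k^{(1)}(s)}^2$ for each $s \in \gS$ by combining a Cauchy--Schwarz inequality for the outer ``matrix of operators'' $\bm{M}$ with an exact variance identity for $\bm{\eps}_k$.

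\textbf{Step 1 (Cauchy--Schwarz on $\bm{M}$).} Writing $[\bm{M}\bm{\eps}_k](s) = \sum_{s'\in\gS} M(s,s') \eps_k(s')$ in the Hilbert space $\gM$ and applying the elementary bound $\norm{\sum_{s'} v_{s'}}^2 \leq \prn{\sum_{s'} a_{s'}}\prn{\sum_{s'} a_{s'}^{-1}\norm{v_{s'}}^2}$ with $a_{s'} = \norm{M(s,s')}$ (via Cauchy--Schwarz on $\norm{M(s,s')\eps_k(s')} \leq \norm{M(s,s')}\norm{\eps_k(s')}$), I get
\begin{equation*}
    \norm{[\bm{M}\bm{\eps}_k](s)}^2 \leq \Bigl(\sum_{s'}\norm{M(s,s')}\Bigr)\Bigl(\sum_{s'}\norm{M(s,s')}\,\norm{\eps_k(s')}^2\Bigr).
\end{equation*}

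\textbf{Step 2 (entry-wise norm domination by the scalar matrix).} Using $\norm{\gT(s,s')} \leq \sqrt{\gamma}P(s,s')$ from Section~\ref{subsection:signed_measure}, each factor satisfies $\norm{[\gI - \alpha_i \bLamb(\gI-\gT)](s,s')} \leq (1-\alpha_i\lambda_s)\delta_{s,s'} + \alpha_i\lambda_s\sqrt\gamma P(s,s') = B_i(s,s')$, where $\bm{B}_i := \bI - \alpha_i\bLamb(\bI-\sqrt\gamma\bP)$ is a nonnegative scalar matrix. The sub-multiplicativity $\norm{(M_1M_2)(s,s')} \leq \sum_{s''}\norm{M_1(s,s'')}\norm{M_2(s'',s')}$ then gives $\norm{M(s,s')} \leq \prn{\prod_{i=k+1}^t \bm{B}_i}(s,s')$. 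Taking row sums and using $\bm{B}_i\bm{1} \leq [1 - \alpha_i\mu_{\min}(1-\sqrt\gamma)]\bm{1}$ (entry-wise), I obtain $\sum_{s'}\norm{M(s,s')} \leq \prod_{i=k+1}^t [1-\alpha_i\mu_{\min}(1-\sqrt\gamma)]$.

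\textbf{Step 3 (variance of $\bm{\eps}_k$).} Since $\eps_k(s') = \ind(s_k=s')\cdot[(\gT_k - \gT){\bm{\eta}}_{k-1}](s')$ and the virtual-sample construction makes the random measure at $s'$ conditionally independent of the indicator $\ind(s_k=s')$ (all having the correct marginals), I compute
\begin{equation*}
    \EB_{k-1}\norm{\eps_k(s')}^2 = \mu(s')\,\EB_{k-1}\norm{[(\what\gT - \gT){\bm{\eta}}_{k-1}](s')}^2 = (\bLamb\bm{\sigma}({\bm{\eta}}_{k-1}))(s').
\end{equation*}

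\textbf{Step 4 (assembly).} Substituting Steps~2 and~3 into the bound of Step~1 and taking $\EB_{k-1}[\cdot]$ yields
\begin{equation*}
    \EB_{k-1}\norm{\zeta_k^{(1)}(s)}^2 \leq \alpha_k^2 \prod_{i=k+1}^t[1-\alpha_i\mu_{\min}(1-\sqrt\gamma)]\cdot \Bigl(\prod_{i=k+1}^t \bm{B}_i\,\bLamb\bm{\sigma}({\bm{\eta}}_{k-1})\Bigr)(s),
\end{equation*}
and recognizing the scalar prefactor as $\alpha_k\beta_k^{(t)}$ gives the claim. The main subtlety I expect is the bookkeeping in Step~2: ensuring that the triangle/sub-multiplicativity inequalities for operator-valued matrices are applied consistently so that a clean entry-wise domination by the scalar matrix $\prod_i \bm{B}_i$ emerges; once that is done, the rest is essentially one Cauchy--Schwarz plus a direct variance computation.
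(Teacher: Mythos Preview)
Your proposal is correct and reaches exactly the same bound, but the mechanics differ from the paper's proof. The paper first expands $\norm{\sum_{s'}\gU_n(s,s')\hat\lambda_{s'}[(\hat\gT-\gT)\bm\xi](s')}^2$ and kills the cross terms \emph{exactly} (using $\hat\lambda_{s'}\hat\lambda_{s''}=0$ in the one-state-per-step setting, or row independence in the synchronous setting), then invokes a dedicated inductive lemma (Lemma~\ref{lem:Uv_norm_bound_new}) stating $\norm{\gU_n(s,s')\nu}^2\le u_nU_n(s,s')\norm{\nu}^2$. You instead bound the square by a weighted Cauchy--Schwarz, then control $\norm{M(s,s')}$ entrywise via triangle inequality plus submultiplicativity of operator norms, sending one copy of $\norm{M(s,s')}$ into the row sum (yielding $u_n$) and the other into the entrywise majorant $(\prod_i\bm B_i)(s,s')$. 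This is a legitimate alternative: it avoids the induction of Lemma~\ref{lem:Uv_norm_bound_new} entirely and does not rely on the orthogonality of the cross terms, at the price of using a cruder pointwise inequality in Step~1 (which happens to lose nothing here because your two factors recombine to the same $u_n\cdot(\prod_i\bm B_i)\bLamb\bm\sigma$). The paper's route has the advantage that the squared-norm factorization of Lemma~\ref{lem:Uv_norm_bound_new} is reused verbatim in the companion bound for ${\bm\zeta}_k^{(2)}$ (Lemma~\ref{lem:zeta2_var_bound}); your entrywise-norm domination argument would serve equally well there. One small point to make explicit in Step~3: the independence you need follows from coupling the actual sample at $s_k$ with the virtual sample at $s_k$ (they have the same conditional law), so that $\ind(s_k=s')$ and $[(\gT_k-\gT)\bm\eta_{k-1}](s')$ become genuinely independent for every $s'$---this is implicit in the paper's ``$\hat\gT$ and $\hat\bLamb$ are independent'' convention.
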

Hence, the quadratic variation $\bW_t^{(1)}:=\sum_{k=1}^t\var_{k-1}\prn{{\bm{\zeta}}_k^{(1)}}$ can be bounded as follow:
\begin{small}
\begin{equation*}\label{eq:asyn_td_upper_bound_of_Wt1}
    \begin{aligned}
        \bW_t^{(1)}=&\sum_{k=1}^{t}\var_{t-1}\prn{{\bm{\zeta}}_k^{(1)}}\\
        \leq&\sum_{k=1}^{t} \alpha_k \beta_k^{(t)}\prn{\prod_{i=k+1}^t \brk{\bI-\alpha_i\bLamb\prn{\bI-\sqrt\gamma\bP}}}\bLamb \bm{\sigma}\prn{{\bm{\eta}}_{k-1}}\\
\leq& \sum_{k=1}^{t/2}\alpha_k\beta_{k}^{(t)}\norm{\prod_{i=k+1}^t \brk{\bI-\alpha_i\bLamb\prn{\bI-\sqrt\gamma\bP}}}\norm{\bLamb}\norm{\bm{\sigma}\prn{{\bm{\eta}}_{k-1}}}  \bm{1}
	+\sum_{k=t/2+1}^{t}\alpha_k \beta_k^{(t)}\prn{\prod_{i=k+1}^t \brk{\bI-\alpha_i\bLamb\prn{\bI-\sqrt\gamma\bP}}}\bLamb \bm{\sigma}\prn{{\bm{\eta}}_{k-1}}\\
 \leq&\sum_{k=1}^{t/2}\prn{\beta_{k}^{(t)}}^{2}\frac{1}{1-\gamma}\bm{1} 
 + \prn{\max_{k:\,t/2<k\leq t} \beta_{k}^{(t)}}\sum_{k=t/2+1}^{t}\alpha_k \prn{\prod_{i=k+1}^t \brk{\bI-\alpha_i\bLamb\prn{\bI-\sqrt\gamma\bP}}} \bLamb\bm{\sigma}\prn{{\bm{\eta}}_{k-1}}\\
  \leq&\frac{1}{2(1-\gamma)T^{3}} \bm{1} +\frac{2 \log^{3}T}{\mu_{\min}(1-\sqrt\gamma)T}\brc{\sum_{k=t/2+1}^{t}\alpha_{k}\prod_{i=k+1}^t \brk{\bI-\alpha_i\bLamb\prn{\bI-\sqrt\gamma\bP}}}\bLamb \max_{k:\,t/2< k\leq t}\bm{\sigma}\prn{{\bm{\eta}}_{k-1}}\\
 \leq&\frac{1}{2(1-\gamma)T^{3}}\bm{1} 
	+\frac{4\log^{3}T}{\mu_{\min}(1-\gamma)T}(\bm{I} - \sqrt\gamma\bP)^{-1}  \max_{k:\,t/2< k\leq t}\bm{\sigma}\prn{{\bm{\eta}}_{k-1}},
    \end{aligned}
\end{equation*}
\end{small}
In the third inequality, we used Lemma~\ref{lem:effect_lr_bound} and
\begin{equation*}
\begin{aligned}
        \norm{\bm{\sigma}({\bm{\eta}}_{k-1})}\leq\int_0^\frac{1}{1-\gamma}dx=\frac{1}{1-\gamma}.
\end{aligned}
\end{equation*}
In the fourth inequality, we used Lemma~\ref{lem:step_size_range} to bound $\beta_k^{(t)}$.
And in the last inequality, note that $\max_{k:\,t/2\leq k<t}\bm{\sigma}\prn{{\bm{\eta}}_{k-1}}\geq \bm{0}$, we can use the following lemma.
\begin{lemma}\label{lem:sum_lr_matrix_ub}
    For any $t\in\NB$, the following inequality holds entry-wise:
    \begin{equation*}
    \begin{aligned}
        \sum_{k=t/2+1}^{t}\alpha_{k}\prod_{i=k+1}^t \brk{\bI-\alpha_i\bLamb\prn{\bI-\sqrt\gamma\bP}} \leq (\bI-\sqrt{\gamma}\bP)^{-1}\bLamb^{-1}.
    \end{aligned}
\end{equation*}
\end{lemma}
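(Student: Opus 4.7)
The plan is to exploit a telescoping identity that becomes available once one notices that all the matrices appearing in the product are polynomials in a single matrix and hence commute with each other. Set $\bM:=\bI-\sqrt{\gamma}\bP$ and $\bU_i:=\bI-\alpha_i\bLamb\bM$. Since each $\bU_i$ is a polynomial (in fact affine) in the fixed matrix $\bLamb\bM$, the family $\{\bU_i\}_i$ is mutually commuting, so products can be reindexed freely. The scalar identity $\alpha_k\bI=\bM^{-1}\bLamb^{-1}(\bI-\bU_k)$ (which is just $\alpha_k\bLamb\bM=\bI-\bU_k$ premultiplied by $\bM^{-1}\bLamb^{-1}$) will let us rewrite each summand as a difference of consecutive products.

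Concretely, I would compute
\begin{equation*}
\alpha_k\prod_{i=k+1}^t\bU_i=\bM^{-1}\bLamb^{-1}(\bI-\bU_k)\prod_{i=k+1}^t\bU_i=\bM^{-1}\bLamb^{-1}\left[\prod_{i=k+1}^t\bU_i-\prod_{i=k}^t\bU_i\right],
\end{equation*}
where the second equality uses commutativity so that $\bU_k$ can be absorbed into the product on the right. Summing over $k=t/2+1,\ldots,t$ telescopes to
\begin{equation*}
\sum_{k=t/2+1}^t\alpha_k\prod_{i=k+1}^t\bU_i=\bM^{-1}\bLamb^{-1}\left[\bI-\prod_{i=t/2+1}^t\bU_i\right].
\end{equation*}

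It remains to argue entry-wise that the subtracted term $\bM^{-1}\bLamb^{-1}\prod_{i=t/2+1}^t\bU_i$ is nonnegative, so that the displayed sum is bounded above by $\bM^{-1}\bLamb^{-1}$. For this I would check three nonnegativity facts: (i) $\bLamb^{-1}$ is a diagonal matrix with positive entries; (ii) $\bM^{-1}=\sum_{j\geq 0}\gamma^{j/2}\bP^j$ has nonnegative entries by the Neumann series since $\bP$ is stochastic and $\sqrt{\gamma}<1$; and (iii) each $\bU_i$ has nonnegative entries, because its off-diagonal entries $\alpha_i\sqrt{\gamma}\lambda_sP(s,s')$ are manifestly $\geq 0$, and its diagonal entries $1-\alpha_i\lambda_s(1-\sqrt{\gamma}P(s,s))$ are $\geq 1-\alpha_i\lambda_s\geq 0$ using $\alpha_i\in(0,1)$ and $\lambda_s\in(0,1]$. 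A product of entry-wise nonnegative matrices is entry-wise nonnegative, so the claim follows.

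I expect no real obstacle here — the only subtle point is the commutativity observation (without it, the telescoping fails because $\bU_k$ could not be slid past the existing product). The verification that $\bU_i\geq 0$ entry-wise uses only the standing assumptions $\alpha_i\in(0,1)$ and $\mu\in\Delta(\gS)$, so it should be completely routine.
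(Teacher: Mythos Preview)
Your proposal is correct and follows essentially the same telescoping-plus-nonnegativity argument as the paper. The only cosmetic difference is that you factor $(\bLamb\bM)^{-1}$ on the left and invoke commutativity of the $\bU_i$ (as polynomials in $\bLamb\bM$) to absorb $\bU_k$ into the product, whereas the paper inserts $\alpha_k\bI=\alpha_k\bLamb\bM\cdot\bM^{-1}\bLamb^{-1}$ on the \emph{right}, so that $\prod_{i=k+1}^t\bU_i\cdot\bU_k=\prod_{i=k}^t\bU_i$ holds directly from the product-ordering convention without any commutativity argument.
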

The proof of the lemma can be found in Appendix~\ref{subsection:proof_lem_sum_lr_matrix_ub}.

Now, we have a almost sure upper-bound for $\norm{\bW_t^{(1)}}$:
\begin{equation*}
\begin{aligned}
\norm{\bW_t^{(1)}} & \leq\frac{1}{2(1-\gamma)T^{3}}
	+\frac{4\log^{3}T}{\mu_{\min}(1-\gamma)T}\norm{(\bm{I}-\sqrt\gamma\bP)^{-1}}\max_{k:\,t/2< k\leq t}\norm{\bm{\sigma}\prn{{\bm{\eta}}_{k-1}}}\\
 & \leq\frac{1}{2(1-\gamma)T^{3}}
	+\frac{8\log^{3}T}{\mu_{\min}(1-\gamma)^3T}\\
 &\leq\frac{9\log^{3}T}{\mu_{\min}(1-\gamma)^3T}\\
 &=:\sigma^{2}_{(1)}.
\end{aligned}
\end{equation*}
Let $H=\big\lceil 2\log_2\frac{1}{1-\gamma}\big\rceil $, we have
\begin{equation*}
\frac{\sigma^{2}_{(1)}}{2^{H}}\leq\frac{9\log^{3}T}{\mu_{\min}(1-\gamma)T}.
\end{equation*}
By applying Freedman's inequality (Theorem~\ref{thm:freedman_ineq_bounded_W}) and utilizing the union bound over $s\in\gS$, we obtain with probability at least $1-\delta$, for all $t\geq\frac{T}{c_6\log T}$ and $s\in\gS$
\begin{small}
\begin{equation*}
    \begin{aligned}
        \abs{\sum_{k=1}^t{\bm{\zeta}}_k^{(1)}}\leq&\sqrt{8\prn{\bW_{t}^{(1)}+\frac{\sigma^{2}_{(1)}}{2^{H}}\bm{1}}\log\frac{8|\gS|T\log\frac{1}{1-\gamma}}{\delta}}+ \frac{4}{3}b^{(1)}\log\frac{8|\gS|T\log\frac{1}{1-\gamma}}{\delta} \bm{1}\\
  \leq&\sqrt{16\prn{\bW_{t}^{(1)}+\frac{9\log^{3}T}{\mu_{\min}(1-\gamma)T}\bm{1}}\log\frac{|\gS|T}{\delta}}+3b^{(1)}\log\frac{|\gS|T}{\delta} \bm{1}\\
	 \leq&8\sqrt{\frac{\prn{\log^{3}T}\prn{\log\frac{|\gS|T}{\delta}}}{\mu_{\min}(1-\gamma)T} \brk{(\bm{I} - \sqrt\gamma\bP)^{-1} \max_{k:\,t/2< k\leq t} \bm{\sigma} \prn{{\bm{\eta}}_{k-1}} + 3 \cdot \bm{1}}}+\frac{12\prn{\log^3 T}\prn{\log\frac{|\gS|T}{\delta}}}{\mu_{\min}(1-\gamma)^{3/2} T}\bm{1}.
    \end{aligned}
\end{equation*}
\end{small}
where we used $\log\frac{8|\gS|T\log\frac{1}{1-\gamma}}{\delta}\leq2\log\frac{\abs{\gS}T}{\delta}$ in the second inequality, which holds due to the choice of $T$.
The following lemmas are required for deriving the upper bound, which hold for both cases of {\NTD} and {\CTD}, whose proofs can be found in Appendix~\ref{subsection:proof_diff_sigma_bounded_by_Delta} and Appendix~\ref{subsection:proof_sigma_fine_upper_bound} respectively.
\begin{lemma}\label{lem:diff_sigma_bounded_by_Delta}
For any $t\in[T]$, then
\begin{equation*}
    \bm{\sigma}({\bm{\eta}}_{t})-\bm{\sigma}({\bm{\eta}})\leq 4\norm{{\bm{\Delta}}_t}_{\bar{W}_1}\bm{1}.
\end{equation*}
\end{lemma}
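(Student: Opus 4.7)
The plan is to exploit the identity $F_X^2-F_Y^2=(F_X-F_Y)(F_X+F_Y)$ at the level of cumulative distribution functions so that the dependence on ${\bm{\Delta}}_t$ enters linearly (rather than through a square root that would cost us a factor of $(1-\gamma)^{-1/2}$).

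Fix $s\in\gS$, let $\what\gT$ be an independent copy distributed as $\gT_1$, and set
\[
X:=[(\what\gT-\gT){\bm{\eta}}_t](s),\qquad Y:=[(\what\gT-\gT){\bm{\eta}})](s),
\]
both of which lie in $\gM$ and are mean zero. Since $\EB[\what\gT]=\gT$, the definition of $\bm{\sigma}$ gives
\[
\sigma({\bm{\eta}}_t)(s)-\sigma({\bm{\eta}})(s)=\EB\bigl[\norm{X}_{\ell_2}^2-\norm{Y}_{\ell_2}^2\bigr].
\]
First I would write this integral pointwise in $x$: using $F_X^2(x)-F_Y^2(x)=F_{X-Y}(x)\cdot(F_X(x)+F_Y(x))$, together with the observation that $X$ and $Y$ are each differences of probability distributions on $[0,(1-\gamma)^{-1}]$ so that $|F_X(x)|,|F_Y(x)|\le 1$, I obtain
\[
\bigl|\norm{X}_{\ell_2}^2-\norm{Y}_{\ell_2}^2\bigr|\le 2\int_0^{(1-\gamma)^{-1}}|F_{X-Y}(x)|\,dx=2\norm{X-Y}_{W_1}.
\]
This is the key step and, I expect, the only delicate one: the naive bound $\norm{X}^2-\norm{Y}^2\le\norm{X-Y}(\norm{X}+\norm{Y})$ followed by Cauchy--Schwarz would produce a factor of $(1-\gamma)^{-1/2}$ and a square root of $\norm{{\bm{\Delta}}_t}_{\bar W_1}$, both of which are too weak for the subsequent variance-sum argument.

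The remaining step is routine. Note that $X-Y=[(\what\gT-\gT){\bm{\Delta}}_t](s)$, so by the triangle inequality
\[
\EB\norm{X-Y}_{W_1}\le \EB\norm{[\what\gT{\bm{\Delta}}_t](s)}_{W_1}+\norm{[\gT{\bm{\Delta}}_t](s)}_{W_1}.
\]
Both terms are controlled by the $W_1$--contractivity established in Proposition~\ref{Proposition_extention_of_Bellman_operator}: for $\gT$ this is immediate from $\norm{\gT(s,s')}_{W_1}\le\gamma P^\pi(s'|s)$ and the fact that $\sum_{s'}P^\pi(s'|s)=1$, and for the empirical operator $\what\gT$ the bound
$\norm{[\what\gT{\bm{\xi}}](s)}_{W_1}\le\gamma\norm{{\bm{\xi}}}_{\bar W_1}$
follows because $\what\gT$ sends $\xi(s')$ to its push-forward under the affine map $b_{r,\gamma}$, which scales $W_1$ by $\gamma$. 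Putting these together,
\[
\sigma({\bm{\eta}}_t)(s)-\sigma({\bm{\eta}})(s)\le 2\cdot(\gamma+\gamma)\norm{{\bm{\Delta}}_t}_{\bar W_1}=4\gamma\norm{{\bm{\Delta}}_t}_{\bar W_1}\le 4\norm{{\bm{\Delta}}_t}_{\bar W_1}.
\]
Since the bound is uniform in $s$, the vector inequality $\bm{\sigma}({\bm{\eta}}_t)-\bm{\sigma}({\bm{\eta}})\le 4\norm{{\bm{\Delta}}_t}_{\bar W_1}\bm 1$ follows. The argument works identically in the {\CTD} case after replacing $\gT$ by $\bm{\Pi}_K\gT$, since $\bm{\Pi}_K$ is $W_1$-non-expansive by Proposition~\ref{Proposition_extention_of_Bellman_operator}.
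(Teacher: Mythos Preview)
Your proof is correct and follows essentially the same approach as the paper: factor $F_X^2-F_Y^2=(F_X-F_Y)(F_X+F_Y)$ pointwise, bound the sum by $2$ using that all relevant CDFs lie in $[-1,1]$, and then invoke the $\gamma$-contractivity of $\gT$ and $\what\gT$ in the $W_1$ norm (together with $W_1$-nonexpansiveness of $\bm{\Pi}_K$ in the {\CTD} case). The only cosmetic difference is that the paper first rewrites $\sigma(\bm{\eta}_t)(s)-\sigma(\bm{\eta})(s)$ via the variance identity $\EB[Z^2]-\mu^2$ and then groups the four squared-CDF terms as $\bigl(F_{\what\gT\bm{\eta}_t}^2-F_{\what\gT\bm{\eta}}^2\bigr)+\bigl(F_{\gT\bm{\eta}}^2-F_{\gT\bm{\eta}_t}^2\bigr)$ before factoring, whereas you keep $X=(\what\gT-\gT)\bm{\eta}_t$ and $Y=(\what\gT-\gT)\bm{\eta}$ as single objects and factor $F_X^2-F_Y^2$ directly; both routes land on the same bound $2\bigl(\EB\norm{[\what\gT\bm{\Delta}_t](s)}_{W_1}+\norm{[\gT\bm{\Delta}_t](s)}_{W_1}\bigr)\le 4\gamma\norm{\bm{\Delta}_t}_{\bar W_1}$.
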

\begin{lemma}\label{lem:sigma_fine_upper_bound} It holds that
\begin{equation*}
    (\bm{I}-\sqrt{\gamma}\bP)^{-1}\bm{\sigma}({\bm{\eta}})\leq \frac{4}{1-\gamma}\bm{1}.
\end{equation*}
\end{lemma}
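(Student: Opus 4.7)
The plan is to introduce a potential function $\Phi(\nu) := \int_0^{(1-\gamma)^{-1}} F_\nu(x)^2 \, dx$ defined on probability distributions supported on $[0,(1-\gamma)^{-1}]$, and to exploit the Hilbert-space variance identity
\begin{equation*}
\EB\norm{X - \EB X}_{\ell_2}^2 = \EB[\Phi(X)] - \Phi(\EB X),
\end{equation*}
which holds for any random probability measure $X$ (by Fubini, once one notes $F_{\EB X} = \EB F_X$). Applying this to $X = [\gT_1 \bm{\eta}](s)$ with Bochner expectation $[\gT \bm{\eta}](s) = \eta(s)$ (fixed-point property) gives $\bm{\sigma}(\bm{\eta})(s) = \EB\Phi([\gT_1 \bm{\eta}](s)) - \bm{\phi}(s)$, where $\bm{\phi}(s) := \Phi(\eta(s))$.

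The key recursion comes from exploiting the affine structure of the random Bellman update. Writing $[\gT^\pi_1 \bm{\eta}](s) = (b_{\hat r,\gamma})_\# \eta(\hat s')$ and changing variables $y = (x-\hat r)/\gamma$, while using $\hat r \in [0,1]$ to simplify the integration limits, I will obtain the pointwise identity $\Phi([\gT^\pi_1 \bm{\eta}](s)) = \gamma \Phi(\eta(\hat s')) + (1 - \hat r)$; taking conditional expectation yields $\EB \Phi([\gT^\pi_1 \bm{\eta}](s)) = \gamma (\bP \bm{\phi})(s) + 1 - \bar r(s)$, where $\bar r(s) := \EB[\hat r \mid s] \in [0,1]$. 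This already settles {\NTD}, where $[\gT_1 \bm{\eta}](s) = [\gT^\pi_1 \bm{\eta}](s)$. For {\CTD}, where $[\gT_1 \bm{\eta}](s) = \bm{\Pi}_K [\gT^\pi_1 \bm{\eta}](s)$, the plan is to additionally prove the monotonicity $\Phi(\bm{\Pi}_K \nu) \leq \Phi(\nu)$ for any $\nu \in \sP$ supported on $[0, (1-\gamma)^{-1}]$. This follows from the decomposition $\Phi(\nu) = \norm{\nu - \delta_0}_{\ell_2}^2 + (1-\gamma)^{-1} - 2 m(\nu)$ (with $m(\nu)$ the mean of $\nu$) combined with the two geometric facts that $\bm{\Pi}_K$ preserves means on $[0,(1-\gamma)^{-1}]$ and fixes $\delta_0 = \delta_{x_0}$; together these give $\Phi(\nu) - \Phi(\bm{\Pi}_K \nu) = \norm{(\nu - \delta_0) - \bm{\Pi}_K(\nu - \delta_0)}_{\ell_2}^2 \geq 0$ by Pythagoras in $\gM$. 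In both algorithms I thereby obtain $\bm{\sigma}(\bm{\eta}) \leq \gamma \bP \bm{\phi} + \bm{1} - \bar{\bm{r}} - \bm{\phi} \leq \bm{1} - (\bI - \gamma \bP) \bm{\phi}$.

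The final step applies $(\bI - \sqrt{\gamma}\bP)^{-1}$ (entrywise nonnegative) to both sides, using the algebraic identity
\begin{equation*}
(\bI - \sqrt{\gamma}\bP)^{-1}(\bI - \gamma \bP) = \bI + (\sqrt{\gamma} - \gamma)(\bI - \sqrt{\gamma}\bP)^{-1}\bP.
\end{equation*}
Since $\sqrt{\gamma} \geq \gamma$, $\bm{\phi} \geq 0$, and $(\bI - \sqrt{\gamma}\bP)^{-1}\bP \bm{\phi}$ is entrywise nonnegative, this forces $(\bI - \sqrt{\gamma}\bP)^{-1}(\bI - \gamma \bP)\bm{\phi} \geq \bm{\phi} \geq 0$, so the corresponding contribution can be dropped. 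Combined with $(1-\sqrt{\gamma})^{-1} = (1+\sqrt{\gamma})/(1-\gamma) \leq 2/(1-\gamma)$, this yields $(\bI - \sqrt{\gamma}\bP)^{-1}\bm{\sigma}(\bm{\eta}) \leq 2(1-\gamma)^{-1}\bm{1} \leq 4(1-\gamma)^{-1}\bm{1}$, as claimed. The main technical obstacle is the {\CTD} case: because $\bm{\eta}^{\pi, K}$ is not a $\gT^\pi$-fixed point, the {\NTD} derivation cannot be applied verbatim, and it is exactly the monotonicity $\Phi(\bm{\Pi}_K \nu) \leq \Phi(\nu)$ — which hinges on the specific geometry of the categorical projection (mean preservation and fixing of $\delta_0$) — that allows a uniform treatment of both algorithms.
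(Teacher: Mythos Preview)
Your argument is correct and takes a genuinely different route from the paper's.

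The paper proceeds in two steps. First it shows the operator-norm reduction $\norm{(\bI-\sqrt{\gamma}\bP)^{-1}\bm{v}}\leq 2\norm{(\bI-\gamma\bP)^{-1}\bm{v}}$ for nonnegative $\bm{v}$. Then it bounds $(\bI-\gamma\bP)^{-1}\bm{\sigma}(\bm{\eta})$ separately for the two algorithms: for {\NTD} it introduces a \emph{stochastic distributional Bellman operator} $\sT$ on $\Delta(\sP^\gS)$, proves it is a $\sqrt{\gamma}$-contraction, and derives a second-order distributional Bellman equation $\bm{\Sigma}=\bm{\sigma}+\gamma\bP\bm{\Sigma}$ (with $\bm{\Sigma}$ the Cram\'er variance at the fixed point of $\sT$), from which $(\bI-\gamma\bP)^{-1}\bm{\sigma}(\bm{\eta})=\bm{\Sigma}\leq(1-\gamma)^{-1}\bm{1}$; for {\CTD} it instead invokes an external result (\cite{rowland2024near}, Corollary~5.12), which gives the bound $2(1-\gamma)^{-1}$ under the extra hypothesis $K>4(1-\gamma)^{-1}$.

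Your potential-function argument is more elementary and self-contained. The exact identity $\Phi\bigl((b_{\hat r,\gamma})_\#\nu\bigr)=\gamma\Phi(\nu)+(1-\hat r)$ together with the monotonicity $\Phi\circ\bm{\Pi}_K\leq\Phi$ (which you correctly derive from mean preservation and $\bm{\Pi}_K\delta_0=\delta_0$) gives $\bm{\sigma}(\bm{\eta})\leq\bm{1}-(\bI-\gamma\bP)\bm{\phi}$ uniformly for both algorithms, with no appeal to the $\sT$-machinery, no external citation, and no lower bound on $K$. You then handle the $\sqrt{\gamma}$-versus-$\gamma$ mismatch by the same algebraic trick the paper uses, but applied pointwise rather than through a norm inequality, and you end up with the sharper constant $2$ rather than $4$. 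What the paper's approach buys in return is the structural identity $(\bI-\gamma\bP)^{-1}\bm{\sigma}(\bm{\eta})=\bm{\Sigma}$ as a standalone result of independent interest.
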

Combining the upper bound with the two lemmas, we get the desired conclusion.
\begin{small}
\begin{equation*}
    \begin{aligned}
    \abs{\sum_{k=1}^t{\bm{\zeta}}_k^{(1)}}\leq&8\Bigg\{\frac{\prn{\log^{3}T}\prn{\log\frac{|\gS|T}{\delta}}}{\mu_{\min}(1-\gamma)T}\Big[4\max_{k:\,t/2< k\leq t}\norm{{\bm{\Delta}}_{k-1}}_{\bar{W}_1}(\bm{I}-\sqrt{\gamma}\bP)^{-1}\bm{1}+\frac{8}{1-\gamma}\bm{1}\Big]\Bigg\}^{\frac{1}{2}}+\frac{12\prn{\log^{3}T}\prn{\log\frac{|\gS|T}{\delta}}}{\mu_{\min}(1-\gamma)^{3/2}T}\bm{1}\\
\leq& 22\sqrt{\frac{\prn{\log^{3}T}\prn{\log\frac{|\gS|T}{\delta}}}{\mu_{\min}(1-\gamma)^{2}T}\prn{1+\max_{k:\,t/2< k\leq t}\norm{{\bm{\Delta}}_{k-1}}_{\bar{W}_1}}}\bm{1}+\frac{12\prn{\log^{3}T}\prn{\log\frac{|\gS|T}{\delta}}}{\mu_{\min}(1-\gamma)^{3/2}T}\bm{1}\\
\leq&34\sqrt{\frac{\prn{\log^{3}T}\prn{\log\frac{|\gS|T}{\delta}}}{\mu_{\min}(1-\gamma)^{2}T}\prn{1+\max_{k:\,t/2< k\leq t}\norm{{\bm{\Delta}}_{k-1}}_{\bar{W}_1}}}\bm{1},
    \end{aligned}
\end{equation*}
\end{small}
where in the last line, we used that, excluding the constant term, the first term is larger than the second term, given the choice of $T\geq \frac{C_4\log^3 T}{\varepsilon^2 \mu_{\min}(1-\gamma)^3}\log\frac{\abs{\gS}T}{\delta}$.
\end{proof}

\subsection{Concentration of the Martingale Term (ii)}\label{subsection:concentration_ii}
\begin{proof}[Proof of Lemma~\ref{lem_asyn_dtd_l2_term_ii}]
The proof strategy is same as that of Lemma~\ref{lem_asyn_dtd_l2_term_i}, and we will omit some details which can be checked easily.
We denote 
\begin{small}
\begin{equation*}
    {\bm{\zeta}}_k^{(2)}=\alpha_k \prn{\prod_{i=k+1}^t \brk{\gI-\alpha_i\bLamb\prn{\gI-\gT}}}\prn{\bLamb-\bLamb_k}\prn{\gI-\gT}{\bm{\eta}}_{k-1},
\end{equation*}
\end{small}
then $\text{(ii)}=\sum_{k=1}^t{\bm{\zeta}}_k^{(2)}$, and $\brc{{\bm{\zeta}}_k^{(2)}}_{k=1}^t$ is a $\brc{\gF_k}_{k=1}^t$-martingale difference sequence.

The norm of the martingale difference $\norm{{\bm{\zeta}}_k^{(2)}}$ can be bounded as follow
\begin{small}
\begin{equation*}
\begin{aligned}
    \norm{{\bm{\zeta}}_k^{(2)}}&\leq  \alpha_k\norm{\prod_{i=k+1}^t \brk{\gI-\alpha_i\bLamb\prn{\gI-\gT}}}\norm{\bLamb-\bLamb_k} \norm{\prn{\gI-\gT}{\bm{\eta}}_{k-1}}\\
    &\leq \alpha_k\prn{\prod_{i=k+1}^t \brk{1-\alpha_i\mu_{\min}\prn{1-\sqrt\gamma}}} \frac{1}{\sqrt{1-\gamma}}\\
    &\leq\frac{\beta_{k}^{(t)}}{\sqrt{1-\gamma}}.
\end{aligned}
\end{equation*}
\end{small}
Hence, $\max_{k\in [t]}\norm{{\bm{\zeta}}_k^{(2)}}\leq \frac{\max_{k\in[t]}\beta_k^{(t)}}{\sqrt{1-\gamma}}\leq \frac{1}{\sqrt{1-\gamma}}\max\brc{\frac{1}{T^2},\frac{4\log^3 T}{\mu_{\min}(1-\sqrt\gamma)T}}\leq\frac{4\log^3 T}{\mu_{\min}(1-\gamma)^{3/2} T}=:b^{(2)}$.

To bound the quadratic variation, we need to bound $\var_{k-1}\prn{{\bm{\zeta}}_k^{(2)}}$ using the following lemma, whose proof can be found in Appendix~\ref{subsection:proof_zeta2_var_bound}.
\begin{lemma}\label{lem:zeta2_var_bound} We have that
\begin{small}
\begin{equation*}
            \var_{k - 1}  \prn{ {\bm{\zeta}}_k^{(2)} }  \leq  2\alpha_k \beta_k^{(t)}  \prn{\prod_{i=k+1}^t    \brk{\bI - \alpha_i\bLamb \prn{\bI - \sqrt{\gamma}\bP}} } \bLamb \abs{\prn{\gI - \gT}{\bm{\eta}}_{k - 1}}^2,
\end{equation*}
\end{small}
where $\abs{\prn{\gI-\gT}{\bm{\eta}}_{k-1}}^2$ is the entry-wise square of $\abs{\prn{\gI-\gT}{\bm{\eta}}_{k-1}}$.
\end{lemma}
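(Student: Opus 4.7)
The plan is to mimic the proof strategy of Lemma~\ref{lem:zeta1_var_bound}, except that the randomness at step $k$ now enters through the diagonal operator $\bLamb-\bLamb_k$ rather than through the empirical Bellman operator $\gT_k-\gT$. I will first decompose the martingale difference entry-wise, then bound its conditional second moment by a Hilbert-space variance identity, and finally convert the remaining operator quantities to scalar ones using the same matrix-of-operators comparison that underlies Lemma~\ref{lem:effect_lr_bound}.

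For the first step, set $\bm{u}:=(\gI-\gT){\bm{\eta}}_{k-1}$ (which is $\gF_{k-1}$-measurable) and $\bA_k^{(t)}:=\prod_{i=k+1}^t[\gI-\alpha_i\bLamb(\gI-\gT)]$. Viewing $\bA_k^{(t)}$ as a matrix of operators in $\sL(\gM)^{\gS\times\gS}$ and observing that $(\bLamb-\bLamb_k)$ is the diagonal operator with entries $\mu(s')-\delta_{s',s_k}$, the $s$-th block satisfies
\begin{equation*}
\zeta_k^{(2)}(s) \;=\; \alpha_k\sum_{s'\in\gS} A_k^{(t)}(s,s')\,\bigl(\mu(s')-\delta_{s',s_k}\bigr)\,u(s') \;=\; \alpha_k\bigl(\bar{v}_s - v_{s,s_k}\bigr),
\end{equation*}
where $v_{s,s'}:=A_k^{(t)}(s,s')u(s')\in\gM$ and $\bar{v}_s:=\sum_{s'}\mu(s')v_{s,s'}$. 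Since $s_k\sim\mu$ is independent of $\gF_{k-1}$, one has $\EB_{k-1}[v_{s,s_k}]=\bar{v}_s$, so the Hilbert-space identity $\EB\norm{X-\EB X}^2=\EB\norm{X}^2-\norm{\EB X}^2$ (with a factor-of-two slack absorbed through Jensen's inequality $\norm{\bar v_s}^2\leq\EB_{k-1}\norm{v_{s,s_k}}^2$, giving the constant in the lemma statement) yields
\begin{equation*}
\EB_{k-1}\norm{\zeta_k^{(2)}(s)}^2 \;\leq\; 2\alpha_k^2\sum_{s'\in\gS}\mu(s')\,\norm{A_k^{(t)}(s,s')\,u(s')}^2.
\end{equation*}

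The final step passes from operator blocks to scalar entries by the same mechanism used in Lemma~\ref{lem:zeta1_var_bound}. The entry-wise operator-norm bound $\norm{\gT^\pi(s,s')}\leq\sqrt{\gamma}\,P^\pi(s'\mid s)$ from Proposition~\ref{Proposition_extention_of_Bellman_operator} propagates through the non-commutative product, one factor at a time via the triangle inequality, giving $\norm{A_k^{(t)}(s,s')}\leq\bar{A}_k^{(t)}(s,s')$ with $\bar{\bA}_k^{(t)}:=\prod_{i=k+1}^t[\bI-\alpha_i\bLamb(\bI-\sqrt{\gamma}\bP)]$. Because $\bP\bm{1}=\bm{1}$, the row sum is controlled by
\begin{equation*}
\sum_{s'\in\gS}\bar{A}_k^{(t)}(s,s') \;\leq\; \prod_{i=k+1}^t\bigl[1-\alpha_i\mu_{\min}(1-\sqrt{\gamma})\bigr] \;=\; \beta_k^{(t)}/\alpha_k,
\end{equation*}
whence each $\bar{A}_k^{(t)}(s,s')^2\leq(\beta_k^{(t)}/\alpha_k)\,\bar{A}_k^{(t)}(s,s')$. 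Substituting this bound into the previous display, together with $\norm{A_k^{(t)}(s,s')u(s')}^2\leq\bar{A}_k^{(t)}(s,s')^2\norm{u(s')}^2$, produces exactly the claimed vector inequality $\var_{k-1}(\zeta_k^{(2)})\leq 2\alpha_k\beta_k^{(t)}\,\bar{\bA}_k^{(t)}\bLamb\,\abs{(\gI-\gT){\bm{\eta}}_{k-1}}^2$.

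The only delicate step is the entry-wise operator-norm comparison, where one must verify that $\norm{A_k^{(t)}(s,s')}\leq\bar{A}_k^{(t)}(s,s')$ survives the non-commutative composition of operators; however, this is the same inductive bound already invoked in Lemma~\ref{lem:effect_lr_bound} and in the proof of Lemma~\ref{lem:zeta1_var_bound}, so the genuinely new work is restricted to identifying the correct source of randomness at step $k$ (namely $\bLamb-\bLamb_k$ applied to $\bm{u}$, instead of $\bLamb_k(\gT_k-\gT){\bm{\eta}}_{k-1}$) and to confirming that the Cauchy--Schwarz/row-sum trick transplants without modification.
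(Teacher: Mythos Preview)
Your argument is correct and yields the stated bound (in fact with constant $1$ rather than $2$, as does the paper's own proof). The route differs from the paper's in how the block comparison $\norm{A_k^{(t)}(s,s')u(s')}^2\leq(\beta_k^{(t)}/\alpha_k)\,\bar A_k^{(t)}(s,s')\norm{u(s')}^2$ is obtained. The paper invokes Lemma~\ref{lem:Uv_norm_bound_new}, which proves $\norm{\gU_n(s,s')\nu}^2\leq u_n\,U_n(s,s')\norm{\nu}^2$ directly by an induction on squared norms that requires a Cauchy--Schwarz step at each level. You instead split this into two simpler pieces: an entry-wise operator-norm bound $\norm{A_k^{(t)}(s,s')}\leq\bar A_k^{(t)}(s,s')$ via triangle inequality, then $\bar A_k^{(t)}(s,s')^2\leq(\beta_k^{(t)}/\alpha_k)\,\bar A_k^{(t)}(s,s')$ via the row-sum $\norm{\bar{\bm A}_k^{(t)}}_\infty\leq\beta_k^{(t)}/\alpha_k$. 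Your approach is more elementary; the paper's packaging into a single lemma is reusable across both Lemma~\ref{lem:zeta1_var_bound} and Lemma~\ref{lem:zeta2_var_bound}.

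Two small points of attribution. First, the entry-wise bound $\norm{A_k^{(t)}(s,s')}\leq\bar A_k^{(t)}(s,s')$ is \emph{not} what Lemma~\ref{lem:effect_lr_bound} provides (that lemma bounds only the global $\bar\ell_2$ operator norm), nor is it what the proof of Lemma~\ref{lem:zeta1_var_bound} uses (that proof invokes Lemma~\ref{lem:Uv_norm_bound_new}). Your inductive bound is correct and easy, but it is its own statement: it needs the observation that every entry of $\bI-\alpha_i\bLamb(\bI-\sqrt\gamma\bP)$ is nonnegative, so that the scalar recursion dominates the operator-norm recursion entrywise. Second, the factor of $2$ is unnecessary: the variance identity $\EB\norm{X-\EB X}^2=\EB\norm{X}^2-\norm{\EB X}^2\leq\EB\norm{X}^2$ already gives the bound without any slack, and the Jensen step you mention is exactly the term being dropped, not an additional loss.
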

Hence the quadratic variation $\bW_t^{(2)}:= \sum_{k=1}^t \var_{k-1}\prn{{\bm{\zeta}}_k^{(2)}}$ can be bounded as follows
\begin{small}
\begin{equation*}\label{eq:asyn_td_upper_bound_of_Wt2}
    \begin{aligned}
        \bW_t^{(2)}=&\sum_{k=1}^{t}\var_{t-1}\prn{{\bm{\zeta}}_k^{(2)}}\\
        \leq&2\sum_{k=1}^{t} \alpha_k \beta_k^{(t)} \prn{\prod_{i=k+1}^t \brk{\bI - \alpha_i\bLamb\prn{\bI - \sqrt\gamma\bP}}} \bLamb\abs{\prn{\gI - \gT}{\bm{\eta}}_{k-1}}^2\\
\leq& 2 \sum_{k=1}^{t/2}\alpha_k\beta_{k}^{(t)} \norm{\prod_{i=k+1}^t    \brk{\bI - \alpha_i\bLamb\prn{\bI - \sqrt\gamma\bP}}} \norm{\bLamb} \norm{\prn{\gI - \gT}{\bm{\eta}}_{k-1}}^2  \bm{1}\\
&+2\sum_{k=t/2+1}^{t}  \alpha_k \beta_k^{(t)} \prn{\prod_{i=k+1}^t  \brk{\bI - \alpha_i\bLamb\prn{\bI - \sqrt\gamma\bP}}}\bLamb\abs{\prn{\gI - \gT}{\bm{\eta}}_{k-1}}^2\\
\leq& 2\sum_{k=1}^{t/2}\prn{\beta_{k}^{(t)}}^{2}\frac{1}{1-\gamma}\bm{1} 
 + 2\prn{\max_{k:\,t/2<k\leq t} \beta_{k}^{(t)}}\sum_{k=t/2+1}^{t}\alpha_k \prn{\prod_{i=k+1}^t \brk{\bI-\alpha_i\bLamb\prn{\bI-\sqrt\gamma\bP}}}\bLamb \abs{\prn{\gI-\gT}{\bm{\eta}}_{k-1}}^2\\
\leq&\frac{1}{(1-\gamma)T^{3}} \bm{1} +\frac{4 \log^{3}T}{\mu_{\min}(1-\sqrt\gamma)T}\brc{ \sum_{k=t/2+1}^{t}   \alpha_{k} \prod_{i=k+1}^t    \brk{\bI - \alpha_i\bLamb\prn{\bI - \sqrt\gamma\bP}} }\bLamb \max_{k:\,t/2< k\leq t} \abs{\prn{\gI - \gT}{\bm{\eta}}_{k-1}}^2\\
\leq&\frac{1}{(1-\gamma)T^{3}}\bm{1} 
	+\frac{4\log^{3}T}{\mu_{\min}(1-\sqrt\gamma)T}(\bm{I}-\sqrt\gamma\bP)^{-1}\max_{k:\,t/2< k\leq t}\abs{\prn{\gI-\gT}{\bm{\eta}}_{k-1}}^2\\
\leq&\frac{1}{(1-\gamma)T^{3}}\bm{1} 
	+\frac{16\log^{3}T}{\mu_{\min}(1-\gamma)^2T}\max_{k:\,t/2< k\leq t}\abs{\prn{\gI-\gT}{\bm{\eta}}_{k-1}}^2,
    \end{aligned}
\end{equation*}
\end{small}
where in the fifth inequality, we used $\max_{k:\,t/2\leq k<t}\abs{\prn{\gI-\gT}{\bm{\eta}}_{k-1}}^2\geq \bm{0}$ entry-wise, and Lemma~\ref{lem:sum_lr_matrix_ub}.

Now, we have a almost sure upper-bound for $\norm{\bW_t^{(2)}}$:
\begin{small}
\begin{equation*}
\begin{aligned}
\norm{\bW_t^{(2)}} & \leq\frac{1}{(1-\gamma)T^{3}}
	+\frac{16\log^{3}T}{\mu_{\min}(1-\gamma)^2T}\max_{k:\,t/2< k\leq t}\norm{\prn{\gI-\gT}{\bm{\eta}}_{k-1}}^2\\
 & \leq\frac{1}{(1-\gamma)T^{3}}
	+\frac{16\log^{3}T}{\mu_{\min}(1-\gamma)^3T}\\
 &\leq\frac{17\log^{3}T}{\mu_{\min}(1-\gamma)^3T}\\
 &=:\sigma^{2}_{(2)}.
\end{aligned}
\end{equation*}
\end{small}
Again letting $H=\big\lceil 2\log_2\frac{1}{1-\gamma}\big\rceil $, we have
\begin{equation*}
\frac{\sigma^{2}_{(2)}}{2^{H}}\leq\frac{17\log^{3}T}{\mu_{\min}(1-\gamma)T}.
\end{equation*}
Applying Freedman's inequality (Theorem~\ref{thm:freedman_ineq_bounded_W}) and utilizing the union bound over $s\in\gS$, we obtain with probability at least $1-\delta$, for all $t\geq\frac{T}{c_6\log T}$ and $s\in\gS$
\begin{small}
\begin{equation*}
    \begin{aligned}
        \abs{\sum_{k=1}^t{\bm{\zeta}}_k^{(2)}}\leq&\sqrt{8\prn{\bW_{t}^{(2)}+\frac{\sigma^{2}_{(2)}}{2^{H}}\bm{1}}\log\frac{8|\gS|T\log\frac{1}{1-\gamma}}{\delta}}+\frac{4}{3}b^{(2)}\log\frac{8|\gS|T\log\frac{1}{1-\gamma}}{\delta} \bm{1}\\
  \leq&\sqrt{16\prn{\bW_{t}^{(2)}+\frac{17\log^{3}T}{\mu_{\min}(1-\gamma)T}\bm{1}}\log\frac{|\gS|T}{\delta}}+3b^{(2)}\log\frac{|\gS|T}{\delta} \bm{1}\\
	 \leq&17\sqrt{\frac{\prn{\log^{3}T}\prn{\log\frac{|\gS|T}{\delta}}}{\mu_{\min}(1-\gamma)^2T}\brk{\max_{k:\,t/2< k\leq t}\abs{\prn{\gI-\gT}{\bm{\eta}}_{k-1}}^2+\bm{1}}}+\quad\frac{12\prn{\log^3 T}\prn{\log\frac{|\gS|T}{\delta}}}{\mu_{\min}(1-\gamma)^{3/2} T}\bm{1}, 
    \end{aligned}
\end{equation*}
\end{small}
where we used $\log\frac{8|\gS|T\log\frac{1}{1-\gamma}}{\delta}\leq2\log\frac{\abs{\gS}T}{\delta}$ in the second inequality, which holds due to the choice of $T$.

While
\begin{equation}\label{eq:bound_term_ii_middle}
    \begin{aligned}
        \abs{\prn{\gI-\gT}{\bm{\eta}}_{k-1}}^2& =\abs{\prn{\gI-\gT}\prn{{\bm{\eta}}_{k-1}-{\bm{\eta}}}}^2\\
        &\leq \abs{\prn{\gI-\gT}{\bm{\Delta}}_{k-1}}^2\\
        &\leq 4\norm{{\bm{\Delta}}_{k-1}}^2\bm{1}\\
        &\leq 4\norm{{\bm{\Delta}}_{k-1}}_{\bar{W}_1}\bm{1},
    \end{aligned}
\end{equation}
where in the first line, we used $(\gI-\gT){\bm{\eta}}=0$, and in the last line, we used for any $s\in\gS$,
\begin{equation*}
    \begin{aligned}
        \norm{{\Delta}_{k-1}(s)}^2=&\int_0^{\frac{1}{1-\gamma}}\abs{F_{{\Delta}_{k-1}(s)}(x)}^2 dx\\
        \leq& \prn{\max_{x\in\brk{0,\frac{1}{1-\gamma}}}\abs{F_{{\Delta}_{k-1}(s)}(x)}}\int_0^{\frac{1}{1-\gamma}}\abs{F_{{\Delta}_{k-1}(s)}(x)} dx\\
        \leq& \norm{{\Delta}_{k-1}(s)}_{W_1}.
    \end{aligned}
\end{equation*}
Hence
\begin{small}
\begin{equation*}
    \begin{aligned}
        \abs{\sum_{k=1}^t{\bm{\zeta}}_k^{(2)}}\leq&17\sqrt{\frac{\prn{\log^{3}T}\prn{\log\frac{|\gS|T}{\delta}}}{\mu_{\min}(1-\gamma)^2T}\brk{\max_{k:\,t/2< k\leq t}\abs{\prn{\gI-\gT}{\bm{\eta}}_{k-1}}^2+\bm{1}}}+\frac{12\prn{\log^3 T}\prn{\log\frac{|\gS|T}{\delta}}}{\mu_{\min}(1-\gamma)^{3/2} T}\bm{1}\\
        \leq&34\sqrt{\frac{\prn{\log^{3}T}\prn{\log\frac{|\gS|T}{\delta}}}{\mu_{\min}(1-\gamma)^2T}\prn{1+\max_{k:\,t/2< k\leq t}\norm{{\bm{\Delta}}_{k-1}}_{\bar{W}_1}}}\bm{1}+\frac{12\prn{\log^3 T}\prn{\log\frac{|\gS|T}{\delta}}}{\mu_{\min}(1-\gamma)^{3/2} T}\bm{1}\\
        \leq&46\sqrt{\frac{\prn{\log^{3}T}\prn{\log\frac{|\gS|T}{\delta}}}{\mu_{\min}(1-\gamma)^2T}\prn{1+\max_{k:\,t/2< k\leq t}\norm{{\bm{\Delta}}_{k-1}}_{\bar{W}_1}}}\bm{1}.
    \end{aligned}
\end{equation*}
\end{small}
where in the last line, we used that, excluding the constant term, the first term is larger than the second term, given the choice of $T\geq \frac{C_4\log^3 T}{\varepsilon^2 \mu_{\min}(1-\gamma)^3}\log\frac{\abs{\gS}T}{\delta}$.
\end{proof}
\subsection{Solve the Recurrence Relation}
\begin{theorem}\label{thm:solve_recurrence}
    Suppose for all $t\geq\frac{T}{c_6\log T}$,
    \begin{small}
    \begin{equation*}
    \norm{{\bm{\Delta}}_t}_{\bar{W}_1}\leq 81\sqrt{\frac{\prn{\log^{3}T}\prn{\log\frac{|\gS|T}{\delta}}}{\mu_{\min}(1-\gamma)^{3}T}\prn{1+\max_{k:\,t/2< k\leq t}\norm{{\bm{\Delta}}_{k-1}}_{\bar{W}_1}}}.
\end{equation*}
\end{small}
    Then there exists some large universal constant $C_7>0$, such that
    \begin{small}
    \begin{equation*}
    \norm{{\bm{\Delta}}_T}_{\bar{W}_1}\leq C_7\prn{\sqrt{\frac{\prn{\log^{3}T}\prn{\log\frac{|\gS|T}{\delta}}}{\mu_{\min}(1-\gamma)^{3} T}}+\frac{\prn{\log^{3}T}\prn{\log\frac{|\gS|T}{\delta}}}{\mu_{\min}(1-\gamma)^{3} T}}.
\end{equation*}
\end{small}
\end{theorem}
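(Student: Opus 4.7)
Introduce the shorthand $B := 81\sqrt{L/T}$ with $L := \log^{3}T\log(|\gS|T/\delta)/(\mu_{\min}(1-\gamma)^{3})$, so that the hypothesis reads
\[
\norm{\bm{\Delta}_t}_{\bar{W}_1}\leq B\sqrt{1+\max_{k:\,t/2<k\leq t}\norm{\bm{\Delta}_{k-1}}_{\bar{W}_1}}\qquad \forall\, t\geq T_0:=T/(c_6\log T).
\]
The idea is a bootstrap: starting from the trivial pointwise bound, one re-applies the hypothesis on progressively shrinking windows $[\tau_j,T]$, where each application roughly halves the ``log-excess'' of $M_j$ over the fixed point of the scalar map $x\mapsto B\sqrt{1+x}$. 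Concretely, set $\tau_j:=2^{j}T_0$ and $M_j:=\max_{\tau_j\leq k\leq T}\norm{\bm{\Delta}_k}_{\bar{W}_1}$, for $j=0,1,\ldots,J$ with $J:=\lfloor\log_2(c_6\log T)\rfloor$, so that $\tau_J\leq T$.

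Since $\bm{\eta}_k,\bm{\eta}\in\sP^{\gS}$ are supported on $[0,(1-\gamma)^{-1}]$, the base case is the trivial estimate $M_0\leq(1-\gamma)^{-1}$. For the inductive step, whenever $t\geq\tau_{j+1}$ one has $t/2\geq\tau_j$, so the maximum in the hypothesis is taken over indices $k-1\geq\tau_j$ and is therefore bounded by $M_j$. This yields the scalar recursion
\[
M_{j+1}\leq B\sqrt{1+M_j}.
\]
A short calculation shows that $\tilde X^{\star}:=B+B^{2}$ is forward-invariant for this map (i.e.\ $B\sqrt{1+\tilde X^{\star}}\leq\tilde X^{\star}$ provided $B\leq 1/3$, which holds for $T$ large enough). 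While $x_j>1$ the recursion satisfies $x_{j+1}\leq\sqrt{2}\,B\sqrt{x_j}$, whence $\log x_j\leq 2\log(\sqrt{2}B)+2^{-j}\log(x_0/(2B^{2}))$. Starting from $x_0=(1-\gamma)^{-1}$, after $J_0=O(\log\log((1-\gamma)^{-1}B^{-2}))$ steps the iterate enters the invariant set $\{x\leq\tilde X^{\star}\}$ and stays there.

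Finally, one verifies that under the standing assumption on $T$ the available iteration budget $J\asymp\log\log T$ suffices, since $\log((1-\gamma)^{-1}B^{-2})=O(\log T)$ forces $J_0=O(\log\log T)\leq J$. Instantiating the iterated bound at $t=T$ then gives $\norm{\bm{\Delta}_T}_{\bar{W}_1}\leq M_J\leq \tilde X^{\star}=B+B^{2}$, which is the claimed bound upon absorbing the constant $81^{2}$ into $C_7$.

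The main technical obstacle is the scalar recurrence analysis: making the $O(\log\log T)$ convergence quantitative and matching it against the available budget $J=\lfloor\log_2(c_6\log T)\rfloor$, while being slightly careful with the strict inequality $t/2<k$ and with integer vs.\ real indexing (handled by a small padding of $\tau_j$). Everything else is essentially bookkeeping around the bootstrap.
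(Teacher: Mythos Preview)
Your proposal is correct and follows essentially the same approach as the paper's proof: define $u_k=\max\{\norm{\bm{\Delta}_t}_{\bar W_1}:2^kT_0\le t\le T\}$ (your $M_j$), derive the scalar recursion $u_{k+1}\le B\sqrt{1+u_k}$ from the hypothesis via the nesting $[\tau_{k+1},T]\subset[2\tau_k,T]$, log-linearize to show geometric contraction toward the fixed point $\asymp B+B^2$, and verify that the available budget $J=\lfloor\log_2(c_6\log T)\rfloor$ suffices because $(1-\gamma)^{-1}=o(T)$ under the standing assumption on $T$. The paper packages the last step slightly differently---writing $u_{k+1}\le a+a^{2(1-2^{-k})}(1-\gamma)^{-2^{-k}}$ uniformly in $k$ rather than tracking entry into an invariant set---but the content is the same, and your acknowledged care with the index shift $k\mapsto k-1$ and the constant comparison $J_0\le J$ matches what the paper handles by choosing $c_8$ small.
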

\begin{proof}
    For any $k\geq 0$, we denote
    \begin{equation*}
 u_{k}:=\max\left\{ \norm{{\bm{\Delta}}_{t}}_{\bar{W}_1}\ \Big|\ 2^{k}\frac{T}{c_{6}\log T}\leq t\leq T\right\},
 \end{equation*}
 for $0\leq k\leq \log_2\prn{c_6 \log T}$.
 We can see that $\norm{{\bm{\Delta}}_T}_{\bar{W}_1}\leq u_k$ for any valid $k$.
 Hence, it suffices to show the upper bound holds for $u_k$ for any valid $k$.
 It can be verified that $u_0\leq\frac{1}{1-\gamma}$, and for $k\geq 0$
 \begin{equation*}
     u_{k+1}\leq 81\sqrt{\frac{\prn{\log^{3}T}\prn{\log\frac{|\gS|T}{\delta}}}{\mu_{\min}(1-\gamma)^3 T}\prn{1+u_{k}}}.
 \end{equation*}
We first show that once $u_k\leq 1$, the subsequent values of $u_{k+l}$ will also remain upper bounded by $1$.
Namely, if $u_k\leq 1$ for some $k\geq 1$, then
\begin{equation*}
         u_{k+1}\leq 81\sqrt{\frac{2\prn{\log^{3}T}\prn{\log\frac{|\gS|T}{\delta}}}{\mu_{\min}(1-\gamma)^3 T}}\leq 1,
\end{equation*}
if $T\geq \frac{10^4\log^{3}T\log\frac{|\gS|T}{\delta}}{\mu_{\min}(1-\gamma)^{3}}$.

Let $\tau:=\inf\brc{k:u_k\leq 1}$, then for any $k>\tau$, we have
\begin{equation*}
    u_k\leq 81\sqrt{\frac{2\prn{\log^{3}T}\prn{\log\frac{|\gS|T}{\delta}}}{\mu_{\min}(1-\gamma)^3 T}}=:a.
\end{equation*}
For $k\leq\tau$, we have $u_k\geq 1$ and thereby
\begin{equation*}
    u_{k+1}\leq 35\sqrt{\frac{2\prn{\log^{3}T}\prn{\log\frac{|\gS|T}{\delta}}}{\mu_{\min}(1-\gamma)^3 T}u_{k}}=a\sqrt{u_{k}},
\end{equation*}
\ie,
\begin{equation*}
    \log u_{k+1} - 2\log a\leq \frac{1}{2}\prn{\log u_{k}-2\log a}.
\end{equation*}
Apply it recursively, we have
\begin{equation*}
    \log u_{k+1}\leq 2\log a+\prn{\frac{1}{2}}^{k+1}\prn{\log u_{0}-2\log a},
\end{equation*}
\ie,
\begin{small}
\begin{equation*}
    u_{k+1}\leq a^2 \prn{\frac{u_0}{a^2}}^{1/2^{k}}=a^{2\prn{1-1/2^k}}u_0^{1/2^k}\leq \frac{a^{2\prn{1-1/2^k}}}{(1-\gamma)^{1/2^{k}}}.
\end{equation*}
\end{small}
To sum up, for any $k\geq 0$, $u_{k+1}$ is always less than the sum of the upper bounds in cases of $k>\tau$ and $k\leq\tau$, 
\begin{equation*}
    u_{k+1}\leq a +   a^{2\prn{1-1/2^k}}\frac{1}{(1-\gamma)^{1/2^{k}}}
\end{equation*}
Note that, $a^{2\prn{1-1/2^k}}\leq\max\brc{a,\sqrt{a}}$, and if we take $k\geq c_8\log\log\frac{1}{1-\gamma}$ for any constant $c_8$, we have $\frac{1}{(1-\gamma)^{1/2^{k}}}=O(1)$.
We can take the constant $c_8$ small enough such that $c_8\log\log\frac{1}{1-\gamma}<\log_2\prn{c_6\log T}$ (this can be done and $c_8$ is universal since $\frac{1}{1-\gamma}=o(T)$), and thereby we can find a valid $k^\star\geq c_8\log\log\frac{1}{1-\gamma}+1$.
Then
\begin{equation*}
\begin{aligned}
        &\norm{{\bm{\Delta}}_T}_{\bar{W}_1}\leq u_{k^\star}\leq C_7\prn{\sqrt{\frac{\prn{\log^{3}T}\prn{\log\frac{|\gS|T}{\delta}}}{\mu_{\min}(1-\gamma)^3 T}}+\frac{\prn{\log^{3}T}\prn{\log\frac{|\gS|T}{\delta}}}{\mu_{\min}(1-\gamma)^3 T}},
\end{aligned}
\end{equation*}
which is the desired conclusion, and $C_7$ is some large universal constant related to $c_8$.
\end{proof}

\subsection{The Cram\'er Error Bound}\label{Subsection_analysis_coro_ntd_l2}
\begin{corollary}[Sample Complexity for Cram\'er Error with a Generative Model]\label{coro:samplecomplex_syn_l2}
In the generative model setting, consider any given $\delta \in (0,1)$, $\varepsilon\in (0,1)$,
suppose $K> \frac{4}{\varepsilon^2(1 {-} \gamma)^2}$ in {\CTD}, the initialization is ${\bm{\eta}}^\pi_0\in\sP^\gS$ ($\sP^\gS_K$ in {\CTD}),  
the total update steps $T$ satisfies
\begin{equation*}
    T \geq  \frac{C_1\log^3 T}{\varepsilon^2 \mu_{\min} (1-\gamma)^{5/2}}\log \frac{\abs{\gS}T}{\delta},
\end{equation*}
for some large universal constant $C_1>1$, \ie, $T=\widetilde{O}\left(\frac{1}{\varepsilon^2 \mu_{\min} (1-\gamma)^{5/2}}\right)$, the step size $\alpha_t$ is set as
\begin{equation*}
    \frac{1}{1+\frac{c_2\mu_{\min}(1-\sqrt\gamma)t}{\log t}}\leq\alpha_t\leq \frac{1}{1+\frac{c_3\mu_{\min}(1-\sqrt\gamma)t}{\log t}}
\end{equation*}
for some small universal constants $c_2>c_3>0$.
Then, for both {\NTD} and {\CTD}, with probability at least $1-\delta$, the last iterate estimator satisfies $\bar{\ell}_2\prn{{\bm{\eta}}^\pi_T,{\bm{\eta}}^{\pi}}\leq \varepsilon$.
\end{corollary}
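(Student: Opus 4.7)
The plan is to retrace the proof of Theorem~\ref{thm:samplecomplex_syn_w1} in Section~\ref{Subsection_analysis_nonasymp_ntd_w1} almost verbatim, but working throughout in the Cram\'er/$\bar{\ell}_2$ Hilbert space and avoiding the conversion $\bar{W}_1 \leq (1-\gamma)^{-1/2}\bar{\ell}_2$ on the left-hand side. The error decomposition $\bm{\Delta}_t = (\text{I})_t + (\text{II})_t$, the contraction-based bound $\norm{(\text{I})_t} \leq \beta_0^{(t)}/\sqrt{1-\gamma}$ coming from Lemma~\ref{lem:effect_lr_bound}, and the further split $(\text{II})_t = (\text{i})_t + (\text{ii})_t$ remain identical, as does the application of Freedman's inequality in Hilbert spaces (Theorem~\ref{thm:freedman_ineq_bounded_W}) with the same step-size schedule. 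The variance upper bounds in Lemmas~\ref{lem:zeta1_var_bound}, \ref{lem:zeta2_var_bound}, \ref{lem:sigma_fine_upper_bound}, \ref{lem:diff_sigma_bounded_by_Delta}, and \ref{lem:sum_lr_matrix_ub} all carry over unchanged, since they were stated natively in the Cram\'er norm.

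The only substantive modification is in the last step of combining the concentration inequalities for $(\text{i})_t$ and $(\text{ii})_t$. In the proof of Theorem~\ref{thm:samplecomplex_syn_w1}, both $\norm{\bm{\Delta}_{k-1}}^2 \leq \norm{\bm{\Delta}_{k-1}}_{\bar{W}_1}$ in Eqn.~\eqref{eq:bound_term_ii_middle} and the final multiplication by $(1-\gamma)^{-1/2}$ are used to push everything into $\bar{W}_1$. Here I instead keep the left-hand side in $\bar{\ell}_2$ and convert only where strictly necessary: in bounding $(\text{i})_t$, Lemma~\ref{lem:diff_sigma_bounded_by_Delta} introduces $\norm{\bm{\Delta}_{k-1}}_{\bar{W}_1}$, which I upgrade via $\norm{\cdot}_{\bar{W}_1} \leq (1-\gamma)^{-1/2}\norm{\cdot}_{\bar{\ell}_2}$ (Lemma~\ref{lem:prob_basic_inequalities}); while in bounding $(\text{ii})_t$, I retain the sharper inequality $\abs{(\gI-\gT)\bm{\eta}_{k-1}}^2 \leq 4\norm{\bm{\Delta}_{k-1}}_{\bar{\ell}_2}^2 \bm{1}$ without further relaxing to $\bar{W}_1$. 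Writing $u := \max_{k:\,t/2 < k \leq t}\norm{\bm{\Delta}_{k-1}}_{\bar{\ell}_2}$, this produces a recurrence of the form
\[
    \norm{\bm{\Delta}_T}_{\bar{\ell}_2} \;\leq\; C\sqrt{\frac{(\log^3 T)\log(\abs{\gS}T/\delta)}{\mu_{\min}(1-\gamma)^2 T}\,\prn{1 + (1-\gamma)^{-1/2} u + u^2}}.
\]

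The final step is to solve this recurrence along the same lines as Theorem~\ref{thm:solve_recurrence}. Starting from $\norm{\bm{\Delta}_0}_{\bar{\ell}_2} \leq (1-\gamma)^{-1/2}$, the analogous $\log\log$-iteration argument drives $u$ below $1$ within $O(\log\log(1-\gamma)^{-1})$ doubling steps, since whenever $u \geq 1$ the recurrence becomes (up to constants) $u_{k+1} \leq C a' u_k$ with $a' = \sqrt{\text{polylog}/(\mu_{\min}(1-\gamma)^2 T)}$. Once $u \leq 1$, the dominant term inside the square root is $(1-\gamma)^{-1/2} u$ rather than $u^2$ (because $u^2 \leq (1-\gamma)^{-1/2} u$ for $u \leq (1-\gamma)^{-1/2}$), so the recurrence stabilizes at
\[
    \norm{\bm{\Delta}_T}_{\bar{\ell}_2} \;\leq\; C'\sqrt{\frac{(\log^3 T)\log(\abs{\gS}T/\delta)}{\mu_{\min}(1-\gamma)^{5/2} T}}.
\]
Under the stated choice of $T$, this is $\leq \varepsilon/2$, which yields the conclusion for {\NTD}. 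For {\CTD}, combining the above with the categorical approximation bound Eqn.~\eqref{eq:CTD_approx_err}, namely $\bar{\ell}_2(\bm{\eta}^{\pi,K},\bm{\eta}^\pi) \leq 1/(\sqrt{K}(1-\gamma)) \leq \varepsilon/2$ whenever $K > 4\varepsilon^{-2}(1-\gamma)^{-2}$, finishes the proof by the triangle inequality.

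The main technical obstacle is isolating the one unavoidable $\bar{W}_1$-to-$\bar{\ell}_2$ conversion (used only via Lemma~\ref{lem:diff_sigma_bounded_by_Delta} in $(\text{i})_t$) so that the cross term $(1-\gamma)^{-1/2}u$ appears rather than $u^2/(1-\gamma)^{1/2}$ or $u/(1-\gamma)$. This is what yields the exponent $5/2$ instead of $2$ or $3$; verifying that the cross term dominates $u^2$ throughout the run of the recurrence—precisely because all iterates stay in the regime $u \leq (1-\gamma)^{-1/2}$—is the one place where care is required, and it otherwise mirrors the solution of the recurrence in Theorem~\ref{thm:solve_recurrence}.
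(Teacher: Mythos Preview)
Your approach is correct and essentially matches the paper's. The one cosmetic difference is that the paper, rather than carrying the $u^2$ term through the recurrence and arguing dominance afterward, immediately bounds $\norm{\bm{\Delta}_{k-1}}^2 \leq (1-\gamma)^{-1/2}\norm{\bm{\Delta}_{k-1}}$ in the modification of Eqn.~\eqref{eq:bound_term_ii_middle} (valid since $\norm{\bm{\Delta}_{k-1}} \leq (1-\gamma)^{-1/2}$ always), arriving directly at the clean recurrence $\norm{\bm{\Delta}_t} \leq 81\sqrt{\frac{(\log^3 T)\log(\abs{\gS}T/\delta)}{\mu_{\min}(1-\gamma)^{5/2}T}(1+\max_k\norm{\bm{\Delta}_{k-1}})}$ so that Theorem~\ref{thm:solve_recurrence} applies verbatim with $(1-\gamma)^{5/2}$ in place of $(1-\gamma)^3$; you reach the same destination by a slightly longer route. (A minor slip: when $u \geq 1$ the recurrence is of the form $u_{k+1} \lesssim a\sqrt{u_k}$, not $u_{k+1} \lesssim a' u_k$, but this does not affect the argument.)
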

The similar conclusion holds in the Markovian setting, we omit it for brevity.

The difference in the proof compared to Section~\ref{Subsection_analysis_nonasymp_ntd_w1} arises in Lemma~\ref{lem_asyn_dtd_l2_term_i} and Lemma~\ref{lem_asyn_dtd_l2_term_ii} when we control term (II).
Now we further bound the result in Lemma~\ref{lem:diff_sigma_bounded_by_Delta} by the Cram\'er norm of the error term, which is used in Lemma~\ref{lem_asyn_dtd_l2_term_i},
\begin{equation*}
    \bm{\sigma}({\bm{\eta}}_{t})-\bm{\sigma}({\bm{\eta}})\leq 4\norm{{\bm{\Delta}}_t}_{\bar{W}_1}\bm{1}\leq\frac{4}{\sqrt{1-\gamma}}\norm{{\bm{\Delta}}_t}\bm{1}.
\end{equation*}
As for Lemma~\ref{lem_asyn_dtd_l2_term_ii}, we need to modify Eqn.~\eqref{eq:bound_term_ii_middle} to 
\begin{equation*}
    \begin{aligned}
        \abs{\prn{\gI-\gT}{\bm{\eta}}_{k-1}}^2& =\abs{\prn{\gI-\gT}\prn{{\bm{\eta}}_{k-1}-{\bm{\eta}}}}^2\\
        &\leq \abs{\prn{\gI-\gT}{\bm{\Delta}}_{k-1}}^2\\
        &\leq 4\norm{{\bm{\Delta}}_{k-1}}^2\bm{1}\\
        &\leq \frac{4}{\sqrt{1-\gamma}}\norm{{\bm{\Delta}}_{k-1}}\bm{1}.
    \end{aligned}
\end{equation*}
In the same way, we can derive the following recurrence relation: with probability at least $1-\delta$, for all $t\geq\frac{T}{c_6\log T}$
\begin{equation*}
    \norm{{\bm{\Delta}}_t}\leq 81\sqrt{\frac{\prn{\log^{3}T}\prn{\log\frac{|\gS|T}{\delta}}}{\mu_{\min}(1-\gamma)^{5/2}T}\prn{1+\max_{k:\,t/2< k\leq t}\norm{{\bm{\Delta}}_{k-1}}}}.
\end{equation*}
By repeating the reasoning of Theorem~\ref{thm:solve_recurrence}, we can obtain the desired conclusion, 
\begin{small}
\begin{equation*}
    \norm{{\bm{\Delta}}_T}\leq C_7\prn{\sqrt{\frac{\prn{\log^{3}T}\prn{\log\frac{|\gS|T}{\delta}}}{\mu_{\min}(1-\gamma)^{5/2} T}}+\frac{\prn{\log^{3}T}\prn{\log\frac{|\gS|T}{\delta}}}{\mu_{\min}(1-\gamma)^{5/2} T}},
\end{equation*}
\end{small}
which is less than $\varepsilon$ if we take $C_4\geq 2C_7^2$ and $T \geq  \frac{C_4\log^3 T}{\varepsilon^2 \mu_{\min}(1-\gamma)^{5/2}}\log \frac{\abs{\gS}T}{\delta}$.
Here, $C_7>1$ is a large universal constant depending on $c_6$.

\subsection{The Polyak-Ruppert Averaging}\label{subsection:Polyak}
By the derivation in Appendix~\ref{subsection_remove_T} and the upper bound of $\norm{{\bm{\Delta}}_t}_{\bar{W}_1}$ derived by solving the recursive relation, if we take $T_0\geq \frac{T}{2}$, we have with probability at least $1-\delta$, for all $T_0< t\leq T$
\begin{equation*}
\norm{{\bm{\Delta}}_t}_{\bar{W}_1}\leq C\sqrt{\frac{\prn{\log^{3}T}\prn{\log\frac{|\gS|T}{\delta}}}{
    \mu_{\min}(1-\gamma)^3 t}},    
\end{equation*}
for some universal constant $C>0$.
Hence
\begin{small}
\begin{equation*}
\begin{aligned}
    \norm{\frac{1}{T-T_0}\sum_{t=T_0+1}^T{\bm{\eta}}^\pi_t-{\bm{\eta}}^\pi}_{\bar{W}_1}\leq& \frac{1}{T-T_0}\sum_{t=T_0+1}^T\norm{{\bm{\Delta}}_t}_{\bar{W}_1}\\
    \leq& C\sqrt{\frac{\prn{\log^{3}T}\prn{\log\frac{|\gS|T}{\delta}}}{
    \mu_{\min}(1-\gamma)^3}}\frac{1}{T-T_0}\sum_{t=T_0+1}^T\frac{1}{\sqrt{t}}\\
    \leq &2C\sqrt{\frac{\prn{\log^{3}T}\prn{\log\frac{|\gS|T}{\delta}}}{
    \mu_{\min}(1-\gamma)^3}}\frac{\sqrt{T}-\sqrt{T_0}}{T-T_0}\\
    \leq &2C\sqrt{\frac{\prn{\log^{3}T}\prn{\log\frac{|\gS|T}{\delta}}}{
    \mu_{\min}(1-\gamma)^3(T-T_0)}},
\end{aligned}
\end{equation*}
\end{small}
which is desired.

\subsection{Proof of Lemma~\ref{lem:zeta1_var_bound}}\label{subsection:proof_zeta1_var_bound}
\begin{proof}
We first introduce some notations.
For any matrix of operators $\gU\in\gL\prn{\gM}^{\gS\times\gS}$, we denote $\gU(s)=\prn{\gU(s,s^\prime)}_{s^\prime\in\gS}\in\gL\prn{\gM}^\gS$ as the $s$-row of $\gU$. 
And for any ${\bm{\xi}}\in\gM^\gS$, we define the vector inner product operation $\gU(s){\bm{\xi}}:=\sum_{s^\prime\in\gS}\gU(s,s^\prime)\xi(s^\prime)\in\gM$.

We need the following lemma, which holds for both cases of {\NTD} and {\CTD}. 
\begin{lemma}\label{lem:Uv_norm_bound_new}
    For any $n\in\NB$, 
    let 
    \begin{equation*}
        \begin{aligned}
            \gU_n&=\prod_{i=1}^n \brk{\gI-\alpha_i\bLamb\prn{\gI-\gT}},\\
            \bU_{n}&=\prod_{i=1}^n \brk{\bI-\alpha_i \bLamb\prn{\bI-\sqrt{\gamma}\bP}},\\
            u_n&=\prod_{i=1}^n \brk{1-\alpha_i\mu_{\min}\prn{1-\sqrt{\gamma}}}.
        \end{aligned}
    \end{equation*}
    Then for any $\nu\in\gM$, and $s, s^\prime\in\gS$, we have
\begin{equation*}
    \begin{aligned}
        \norm{\gU_n(s,s^\prime)\nu}^2\leq u_nU_n(s,s^\prime)\norm{\nu}^2.
    \end{aligned}
\end{equation*}
\end{lemma}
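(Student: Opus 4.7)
The plan is to proceed by induction on $n$, exploiting the matrix--of--operators representation of $\gT$ together with the fact that $\|\gT(s,s'')\|\le\sqrt{\gamma}\,P(s'',s)$ (valid for both $\gT=\gT^\pi$ and $\gT=\bm{\Pi}_K\gT^\pi$, since $\bm{\Pi}_K$ is an orthogonal projection and hence non-expansive on each fiber). The base case $n=0$ is immediate because $\gU_0=\gI$, $\bU_0=\bI$, $u_0=1$, so both sides equal $\delta_{s,s'}\|\nu\|^2$.

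For the inductive step, use the recursion $\gU_n=[\gI-\alpha_n\bLamb(\gI-\gT)]\gU_{n-1}$ at the level of $(s,s')$-entries. With $\bLamb=\operatorname{diag}(\lambda_s)_{s\in\gS}$, the row $s$ of $\gI-\alpha_n\bLamb(\gI-\gT)$ reads
\[
[\gI-\alpha_n\bLamb(\gI-\gT)](s,s'')=(1-\alpha_n\lambda_s)\,\delta_{s,s''}\bI_{\gM}+\alpha_n\lambda_s\,\gT(s,s''),
\]
so applying the triangle inequality in $\gM$ together with $\|\gT(s,s'')\|\le\sqrt{\gamma}P(s,s'')$ gives
\[
\|\gU_n(s,s')\nu\|\le(1-\alpha_n\lambda_s)\|\gU_{n-1}(s,s')\nu\|+\alpha_n\lambda_s\sqrt{\gamma}\sum_{s''}P(s,s'')\,\|\gU_{n-1}(s'',s')\nu\|.
\]
The nonnegative coefficients $(1-\alpha_n\lambda_s)$ and $\{\alpha_n\lambda_s\sqrt{\gamma}P(s,s'')\}_{s''}$ sum to $C:=1-\alpha_n\lambda_s(1-\sqrt{\gamma})\le 1-\alpha_n\mu_{\min}(1-\sqrt{\gamma})$, so by Cauchy--Schwarz (equivalently, Jensen's inequality for $x\mapsto x^2$) one obtains
\[
\|\gU_n(s,s')\nu\|^2\le C\!\left[(1-\alpha_n\lambda_s)\|\gU_{n-1}(s,s')\nu\|^2+\alpha_n\lambda_s\sqrt{\gamma}\sum_{s''}P(s,s'')\|\gU_{n-1}(s'',s')\nu\|^2\right].
\]

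Now invoke the inductive hypothesis $\|\gU_{n-1}(s'',s')\nu\|^2\le u_{n-1}U_{n-1}(s'',s')\|\nu\|^2$ and recognise that the bracket above is exactly $u_{n-1}\,\|\nu\|^2$ times the $(s,s')$-entry of the nonnegative matrix product $[\bI-\alpha_n\bLamb(\bI-\sqrt{\gamma}\bP)]\bU_{n-1}=\bU_n$; combined with $C\cdot u_{n-1}\le u_n$, this closes the induction. The verification that $\bU_n$ has nonnegative entries (needed for the induction to be monotone) is a separate, easy induction using $\alpha_n\lambda_s\le 1$.

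The main subtlety — and the only non-routine point — is the passage from the linear bound on $\|\gU_n(s,s')\nu\|$ to a quadratic bound on $\|\gU_n(s,s')\nu\|^2$: a naive squaring produces cross terms that would inflate the final coefficient to $C^2$ rather than $C$, and one would not recover a clean $u_n U_n(s,s')$ bound. The fix is to view the inequality as an averaged version of a convex combination and apply Jensen (Cauchy--Schwarz) with the total mass $C$ as the normalising constant, which is precisely why the factors $\mu_{\min}(1-\sqrt{\gamma})$ accumulate linearly into $u_n$ and the spatial factor lifts to $U_n(s,s')$.
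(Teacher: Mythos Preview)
Your proof is correct and follows the same inductive strategy as the paper: both rely on the entrywise bound $\|\gT(s,s'')\nu\|\le\sqrt{\gamma}P(s,s'')\|\nu\|$ (which the paper derives explicitly via the CDF representation and Jensen), and both accumulate one factor of $1-\alpha_n\mu_{\min}(1-\sqrt{\gamma})$ into $u_n$ and one matrix factor $\bI-\alpha_n\bLamb(\bI-\sqrt{\gamma}\bP)$ into $\bU_n$ per step. The only presentational difference is in how the squaring is handled in the inductive step. The paper works at the level of a single $(s,s')$ entry of the one-step factor, expands $\|(1-\alpha_n\lambda_s)\delta_{s,s'}\nu+\alpha_n\lambda_s\gT(s,s')\nu\|^2$ via the inner product, applies Cauchy--Schwarz to the cross term, and then bounds $[(1-\alpha_n\lambda_s)\delta_{s,s'}+\alpha_n\lambda_s\sqrt{\gamma}P(s,s')]^2$ by the product of two linear factors; it then asserts this entrywise bound ``is sufficient'' for the induction. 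Your version instead keeps the full sum over the intermediate state $s''$, applies the triangle inequality, and uses Jensen with the total mass $C=1-\alpha_n\lambda_s(1-\sqrt{\gamma})$ to pass from the linear bound to the quadratic one---which is exactly the composition step the paper leaves implicit. Your treatment is arguably more transparent about why the aggregation over $s''$ does not pick up an extra $|\gS|$ factor; the underlying inequality is the same.
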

The proof can be found in Appendix~\ref{subsection:proof_Uv_norm_bound}.

Recall that $\what{\gT}$ is a random operator and has the same distribution as $\gT_1$.
Similarly, we define $\what{\bLamb}=\operatorname{diag}\brc{\prn{\hat{\lambda}_s}_{s\in\gS}}$ is a random matrix independent of any other random variables, and has the same distribution as $\bLamb_1$.
Then $\lambda_s=\EB\brk{\hat{\lambda}_s}$.
By definition of $\bLamb_1$, there exist a unique $\hat{s}\in\gS$ such that $\hat{\lambda}_{\hat{s}}=1$, and $\hat{\lambda}_s=0$ for $s\neq \hat{s}$.
If we assume the generative model samples all states at time $t$, then $\what{\bLamb}=\bI$, and the following derivation also holds.

Utilizing Lemma~\ref{lem:Uv_norm_bound_new}, we get the following result. 
For any non-random ${\bm{\xi}}\in\gM^\gS$,  
\begin{small}
\begin{equation*}
    \begin{aligned}
        \EB\brk{\norm{\gU_n(s)\what\bLamb(\what{\gT}-\gT){\bm{\xi}}}^2}=&\EB\brk{\norm{\sum_{s^\prime\in\gS}\gU_n(s,s^\prime)\hat{\lambda}_{s^\prime}\brk{(\what{\gT}-\gT){\bm{\xi}}}(s^\prime)}^2}\\
        =&\sum_{s^\prime\in\gS}\EB\brk{\hat{\lambda}_{s^\prime}^2\norm{\gU_n(s,s^\prime)\brk{(\what{\gT}-\gT){\bm{\xi}}}(s^\prime)}^2}\\
        &+\sum_{s^\prime\neq s^{\prime\prime}}\EB\Bigg\{\hat{\lambda}_{s^\prime}\hat{\lambda}_{s^{\prime\prime}}\inner{\gU_n(s,s^\prime)\brk{(\what{\gT}-\gT){\bm{\xi}}}(s^\prime)}{\gU_n(s,s^{\prime\prime})\brk{(\what{\gT}-\gT){\bm{\xi}}}(s^{\prime\prime})}\Bigg\}\\
        =&\sum_{s^\prime\in\gS}\EB\brk{\hat{\lambda}_{s^\prime}}\EB\brk{\norm{\gU_n(s,s^\prime)\brk{(\what{\gT}-\gT){\bm{\xi}}}(s^\prime)}^2}\\
        \leq&u_n\sum_{s^\prime\in\gS}\lambda_{s^\prime}U_n(s,s^\prime)\EB\brk{\norm{\brk{(\what{\gT}-\gT){\bm{\xi}}}(s^\prime)}^2}\\
        =&u_n\sum_{s^\prime\in\gS}U_n(s,s^\prime)\lambda_{s^\prime}\bm{\sigma}({\bm{\xi}})(s^\prime)\\
        =&u_n \bU_n(s)\bLamb\bm{\sigma}({\bm{\xi}}).
    \end{aligned}
\end{equation*}
\end{small}
In the third equality, we used the fact that $\hat{\lambda}_{s^\prime}\hat{\lambda}_{s^{\prime\prime}}=0$ for $s^\prime\neq s^{\prime\prime}$ when the generative model samples only one state from $\mu$ at each iteration; and we used the fact that different rows of $\what{\gT}$ are independent, $\what{\gT}(s^\prime){\bm{\xi}}$ is an unbiased estimator of $\gT(s^\prime){\bm{\xi}}\in\gM$ when the generative model can sample all states at each iteration.
In the fourth equality, we used the fact that $\what{\gT}$ and $\what{\bLamb}$ are independent, and $\hat{\lambda}_{s^\prime}^2=\hat{\lambda}_{s^\prime}$.
In the first inequality, we used Lemma~\ref{lem:Uv_norm_bound_new}.
Hence, $\var\prn{\gU_n\what{\bLamb}(\what{\gT}-\gT){\bm{\xi}}}\leq u_n \bU_n\bLamb\bm{\sigma}({\bm{\xi}})$.

Now, we are ready to bound $\var_{k-1}\prn{{\bm{\zeta}}_k^{(1)}}$
\begin{small}
\begin{equation*}
\begin{aligned}
    \var_{k-1}\prn{{\bm{\zeta}}_k^{(1)}}=&\alpha_k^2\var_{k-1}\prn{\prod_{i=k+1}^t \brk{\gI-\alpha_i\bLamb\prn{\gI-\gT}}\bLamb_k\prn{\gT_k-\gT}{\bm{\eta}}_{k-1}}\\
    \leq&\alpha_k^2  \prod_{i=k+1}^t    \brk{1 - \alpha_i\mu_{\min}\prn{1 - \sqrt{\gamma}}}   \prod_{i=k+1}^t    \brk{\bI - \alpha_i\bLamb\prn{\bI - \sqrt{\gamma}\bP}}\bLamb\bm{\sigma}({\bm{\eta}}_{k-1})\\
    =&\alpha_k\beta_k^{(t)}\prod_{i=k+1}^t \brk{\bI-\alpha_i\bLamb\prn{\bI-\sqrt{\gamma}\bP}} \bLamb\bm{\sigma}({\bm{\eta}}_{k-1}).
\end{aligned}
\end{equation*}
\end{small}
\end{proof}

\subsection{Proof of Lemma~\ref{lem:sum_lr_matrix_ub}}\label{subsection:proof_lem_sum_lr_matrix_ub}
\begin{proof}
\begin{small}
\begin{equation*}
    \begin{aligned}
        &\sum_{k=t/2+1}^{t}\alpha_{k}\prod_{i=k+1}^t \brk{\bI-\alpha_i\bLamb\prn{\bI-\sqrt\gamma\bP}}=\sum_{k=t/2+1}^{t}\prod_{i=k+1}^{t}   \brk{\bI - \alpha_i\bLamb\prn{\bI - \sqrt\gamma\bP}}\alpha_{k}\bLamb(\bI - \sqrt\gamma\bP)(\bI - \sqrt\gamma\bP)^{-1} \bLamb^{-1}\\
&\qquad=\sum_{k=t/2+1}^{t}\brc{\prod_{i=k+1}^{t}\brk{\bI - \alpha_i\bLamb\prn{\bI - \sqrt\gamma\bP}}  -  \prod_{i=k}^{t}\brk{\bI - \alpha_i\bLamb\prn{\bI - \sqrt\gamma\bP}}}(\bI-\sqrt\gamma\bP)^{-1}\bLamb^{-1} \\
&\qquad= \brc{\bI-\prod_{i=t/2+1}^{t}\brk{\bI-\alpha_i\bLamb\prn{\bI-\sqrt\gamma\bP}}}(\bI-\sqrt\gamma\bP)^{-1}\bLamb^{-1}\\
&\qquad\leq (\bI-\sqrt\gamma\bP)^{-1}\bLamb^{-1},
    \end{aligned}
\end{equation*}
\end{small}
where the last inequality holds entry-wise since we can verify that all entries of $(\bI-\sqrt\gamma\bP)^{-1}=\sum_{k=0}^\infty\prn{\sqrt\gamma\bP}^k$ and $\bI-\alpha_i\bLamb\prn{\bI-\sqrt\gamma\bP}$ are non-negative. 
\end{proof}

\subsection{Proof of Lemma~\ref{lem:diff_sigma_bounded_by_Delta}}\label{subsection:proof_diff_sigma_bounded_by_Delta}
\begin{proof}
For any $s\in\gS$,
\begin{small}
    \begin{equation*}
        \begin{aligned}
            \bm{\sigma}({\bm{\eta}}_{t})(s)-\bm{\sigma}({\bm{\eta}})(s)=&\int_{0}^{\frac{1}{1-\gamma}}\Bigg\{
            \brk{\EB\brk{F_{\prn{\what{\gT}{\bm{\eta}}_t}(s)}^2(x)}-F_{\prn{\gT{\bm{\eta}}_t}(s)}^2(x)}-\brk{\EB\brk{F_{\prn{\what{\gT}{\bm{\eta}}}(s)}^2(x)}- F_{\prn{\gT{\bm{\eta}}}(s)}^2(x)}\Bigg\}dx\\
            =&\int_{0}^{\frac{1}{1-\gamma}}\Bigg\{\EB\brk{F_{\prn{\what{\gT}{\bm{\eta}}_t}(s)}^2(x)-F_{\prn{\what{\gT}{\bm{\eta}}}(s)}^2(x)}+\brk{F_{\prn{\gT{\bm{\eta}}}(s)}^2(x)- F_{\prn{\gT{\bm{\eta}}_t}(s)}^2(x)}\Bigg\}dx\\
            =&\int_{0}^{\frac{1}{1-\gamma}}\Bigg\{\EB\Big[\prn{F_{\prn{\what{\gT}{\bm{\eta}}_t}(s)}(x)-F_{\prn{\what{\gT}{\bm{\eta}}}(s)}(x)}\prn{ F_{\prn{\what{\gT}{\bm{\eta}}_t}(s)}(x) + F_{\prn{\what{\gT}{\bm{\eta}}}(s)}(x) } \Big] \\
            &+ \prn{F_{\prn{\gT{\bm{\eta}}}(s)}(x) - F_{\prn{\gT{\bm{\eta}}_t}(s)}(x)}\prn{F_{\prn{\gT{\bm{\eta}}}(s)}(x)+F_{\prn{\gT{\bm{\eta}}_t}(s)}(x)}\Bigg\}dx\\
            \leq&2\int_{0}^{\frac{1}{1-\gamma}}\Bigg\{\EB\brk{\abs{F_{\prn{\what{\gT}{\bm{\eta}}_t}(s)}(x)-F_{\prn{\what{\gT}{\bm{\eta}}}(s)}(x)}}+\abs{F_{\prn{\gT{\bm{\eta}}}(s)}(x)-F_{\prn{\gT{\bm{\eta}}_t}(s)}(x)}\Bigg\}dx\\
            =&2\prn{\EB\brk{\norm{\brk{\what{\gT}\prn{{\bm{\eta}}_t-{\bm{\eta}}}}(s)}_{W_1}}+\norm{\brk{\gT\prn{{\bm{\eta}}_t-{\bm{\eta}}}}(s)}_{W_1}}.
        \end{aligned}
    \end{equation*}
    \end{small}
    In the case of {\NTD}, $\gT$ and $\what{\gT}$ are $\gamma$-contraction w.r.t. the supreme $1$-Wasserstein metric, hence
    \begin{equation}
        \begin{aligned}
            \bm{\sigma}({\bm{\eta}}_{t})(s)-\bm{\sigma}({\bm{\eta}})(s)\leq& 2\prn{\EB\brk{\norm{\brk{\what{\gT}\prn{{\bm{\eta}}_t-{\bm{\eta}}}}(s)}_{W_1}}+\norm{\brk{\gT\prn{{\bm{\eta}}_t-{\bm{\eta}}}}(s)}_{W_1}}\\
            \leq& 4\gamma\norm{{\bm{\eta}}_t-{\bm{\eta}}}_{\bar{W}_1}\\
            \leq& 4\norm{{\bm{\Delta}}_t}_{\bar{W}_1}.
        \end{aligned}
    \end{equation}
    In the case of {\CTD}, if we can show $\bm{\Pi}_K$ is non-expansive w.r.t. $1$-Wasserstein metric, the conclusion still holds.
    For any $x, y\in\brk{0,\frac{1}{1-\gamma}}$ such that $x<y$, we denote $x\in[x_k,x_{k+1})$ and $y\in[x_l,x_{l+1})$, then $k\leq l$, by the definition of $\bm{\Pi}_K$, we have
        \begin{equation}
    \bm{\Pi}_K(\delta_x)=\frac{x_{k+1} - x}{\iota_K} \delta_{x_k} + \frac{x - x_k}{\iota_K} \delta_{x_{k+1}},
    \end{equation}
    \begin{equation}
    \bm{\Pi}_K(\delta_y)=\frac{x_{l+1} - y}{\iota_K} \delta_{x_l} + \frac{y - x_{l}}{\iota_K} \delta_{x_{l+1}}.
    \end{equation}
    If $k=l$, we can check that $W_1\prn{\bm{\Pi}_K\delta_x,\bm{\Pi}_K\delta_y}=\iota_K\frac{y-x}{\iota_K}=y-x$.
    If $k<l$, we have $W_1\prn{\bm{\Pi}_K\delta_x,\bm{\Pi}_K\delta_y}\leq W_1\prn{\bm{\Pi}_K\delta_x,\delta_{x_{k+1}}}+W_1\prn{\delta_{x_{k+1}}, \delta_{x_l}}+W_1\prn{\delta_{x_l},\bm{\Pi}_K\delta_y}=(x_{k+1}-x)+(x_l-x_{k+1})+(y-x_{x_{l}})=y-x$.
    Hence, for any $\nu_1,\nu_2\in\sP$ and for any transport plan $\kappa\in\Gamma(\nu_1,\nu_2)$, the previous results tell us the cost of the transport plan $\bm{\Pi}_K\kappa\in\Gamma\prn{\bm{\Pi}_K\nu_1,\bm{\Pi}_K\nu_2}$ induced by $\bm{\Pi}_K$ is no greater than the cost of $\kappa$.
    Consequently, $W_1\prn{\bm{\Pi}_K\nu_1,\bm{\Pi}_K\nu_2}\leq W_1(\nu_1,\nu_2)$, \ie, $\bm{\Pi}_K$ is non-expansive w.r.t. $1$-Wasserstein metric, which is desired.
\end{proof}

\subsection{Proof of Lemma~\ref{lem:sigma_fine_upper_bound}}\label{subsection:proof_sigma_fine_upper_bound}
\begin{proof}
Firstly, we show that for any $\bm{v}\geq \bm{0}$, we have $\norm{(\bm{I}-\sqrt{\gamma}\bP)^{-1}\bm{v}}\leq 2\norm{(\bm{I}-\gamma\bP)^{-1}\bm{v}}$.
\begin{small}
    \begin{equation*}
        \begin{aligned}
            \norm{(\bm{I}-\sqrt{\gamma}\bP)^{-1}\bm{v}}=&\norm{(\bm{I}-\sqrt{\gamma}\bP)^{-1}(\bm{I}-\gamma\bP)(\bm{I}-\gamma\bP)^{-1}\bm{v}}\\
            =&\norm{(\bm{I}-\sqrt{\gamma}\bP)^{-1}\brk{(1-\sqrt{\gamma})\bm{I}+\sqrt{\gamma}(\bI-\sqrt{\gamma}\bP)}(\bm{I}-\gamma\bP)^{-1}\bm{v}}\\
            =&\norm{\brk{(1-\sqrt{\gamma})(\bm{I}-\sqrt{\gamma}\bP)^{-1}+\sqrt{\gamma}\bI}(\bm{I}-\gamma\bP)^{-1}\bm{v}}\\
            \leq& (1-\sqrt{\gamma})\norm{(\bm{I}-\sqrt{\gamma}\bP)^{-1}(\bm{I}-\gamma\bP)^{-1}\bm{v}}+\sqrt{\gamma}\norm{(\bm{I}-\gamma\bP)^{-1}\bm{v}}\\
            \leq&\prn{\frac{1-\sqrt{\gamma}}{1-\sqrt{\gamma}}+\sqrt{\gamma}}\norm{(\bm{I}-\gamma\bP)^{-1}\bm{v}}\\
            \leq& 2\norm{(\bm{I}-\gamma\bP)^{-1}\bm{v}}.
        \end{aligned}
    \end{equation*}
    \end{small}
In the case of {\NTD}, by Corollary~\ref{corollary:tight_sigma_upper_bound}, we have
\begin{equation*}
    \norm{(\bm{I}-\gamma\bP)^{-1}\bm{\sigma}\prn{{\bm{\eta}}}}\leq\frac{1}{1-\gamma},
\end{equation*}
In the case of {\CTD}, by \citep[Corollary 5.12][]{rowland2024near}, we have
\begin{equation*}
    \norm{(\bm{I}-\gamma\bP)^{-1}\bm{\sigma}\prn{{\bm{\eta}}}}\leq\frac{2}{1-\gamma},
\end{equation*}
given $K> \frac{4}{1-\gamma}$.
\end{proof}
\subsection{Proof of Lemma~\ref{lem:zeta2_var_bound}}\label{subsection:proof_zeta2_var_bound}
\begin{proof}
The proof is similar to that of Lemma~\ref{lem:zeta1_var_bound} in Appendix~\ref{subsection:proof_zeta1_var_bound}, and we will use the same notations.

Utilizing Lemma~\ref{lem:Uv_norm_bound_new}, we get the following result. 
For any non-random ${\bm{\xi}}\in\gM^\gS$,  
\begin{small}
\begin{equation*}
    \begin{aligned}
        \EB\brk{\norm{\gU_n(s)(\bLamb-\what\bLamb)(\gI-\gT){\bm{\xi}}}^2}\leq&\EB\brk{\norm{\gU_n(s)\what\bLamb(\gI-\gT){\bm{\xi}}}^2}\\
        =&\EB\brk{\norm{\sum_{s^\prime\in\gS}\gU_n(s,s^\prime)\hat{\lambda}_{s^\prime}\brk{(\gI-\gT){\bm{\xi}}}(s^\prime)}^2}\\
        =&\sum_{s^\prime\in\gS}\EB\brk{\hat{\lambda}_{s^\prime}^2}\norm{\gU_n(s,s^\prime)\brk{(\gI-\gT){\bm{\xi}}}(s^\prime)}^2\\
        &+\sum_{s^\prime\neq s^{\prime\prime}}\EB\brk{\hat{\lambda}_{s^\prime}\hat{\lambda}_{s^{\prime\prime}}}\inner{\gU_n(s,s^\prime)\brk{(\gI-\gT){\bm{\xi}}}(s^\prime)}{\gU_n(s,s^{\prime\prime})\brk{(\gI-\gT){\bm{\xi}}}(s^{\prime\prime})}\\
        =&\sum_{s^\prime\in\gS}\EB\brk{\hat{\lambda}_{s^\prime}}\norm{\gU_n(s,s^\prime)\brk{(\gI-\gT){\bm{\xi}}}(s^\prime)}^2\\
        \leq&u_n\sum_{s^\prime\in\gS}\lambda_{s^\prime}U_n(s,s^\prime)\norm{\brk{(\gI-\gT){\bm{\xi}}}(s^\prime)}^2\\
        =&u_n\sum_{s^\prime\in\gS}U_n(s,s^\prime)\lambda_{s^\prime}\abs{\prn{\gI-\gT}{\bm{\xi}}}^2(s^\prime)\\
        =&u_n \bU_n(s)\bLamb\abs{\prn{\gI-\gT}{\bm{\xi}}}^2.
    \end{aligned}
\end{equation*}
\end{small}
In the first inequality, we used the fact that, for any random element $\bm{\nu}$ in Hilbert space $\gM$, $\EB\norm{\bm{\nu}-\EB[\bm{\nu}]}^2=\EB\norm{\bm{\nu}}^2-\norm{\EB[\bm{\nu}]}^2\leq \EB\norm{\bm{\nu}}^2$. 
Hence, $\var\prn{\gU_n(\bLamb-\what{\bLamb})(\gI-\gT){\bm{\xi}}}\leq u_n \bU_n\bLamb\abs{\prn{\gI-\gT}{\bm{\xi}}}^2$.

Now, we are ready to bound $\var_{k-1}\prn{{\bm{\zeta}}_k^{(2)}}$
\begin{small}
\begin{equation*}
\begin{aligned}
    \var_{k-1}\prn{{\bm{\zeta}}_k^{(2)}}=&\alpha_k^2\var_{k-1}\prn{\prod_{i=k+1}^t \brk{\gI-\alpha_i\bLamb\prn{\gI-\gT}}(\bLamb-\bLamb_k)\prn{\gI-\gT}{\bm{\eta}}_{k-1}}\\
    \leq&\alpha_k^2\prod_{i=k+1}^t \brk{1-\alpha_i\mu_{\min}\prn{1-\sqrt{\gamma}}}\prod_{i=k+1}^t \brk{\bI-\alpha_i\bLamb\prn{\bI-\sqrt{\gamma}\bP}}\bLamb\abs{\prn{\gI-\gT}{\bm{\eta}}_{k-1}}^2\\
    =&\alpha_k\beta_k^{(t)}\prod_{i=k+1}^t \brk{\bI-\alpha_i\bLamb\prn{\bI-\sqrt{\gamma}\bP}}\bLamb\abs{\prn{\gI-\gT}{\bm{\eta}}_{k-1}}^2.
\end{aligned}
\end{equation*}
\end{small}
\end{proof}
\subsection{Proof of Lemma~\ref{lem:Uv_norm_bound_new}}\label{subsection:proof_Uv_norm_bound}
\begin{proof}
We proof this result by induction.
For $n=0$, we have $\gU_0=\gI$, $\bU_0=\bI$, $u_0=1$, thereby the inequality holds trivially.
Suppose the inequality holds true for $n-1$. To prove that the inequality holds for $n$, it is sufficient to show that, for any $\nu\in\gM$, 
\begin{equation*}
\begin{aligned}
        &\norm{\brk{(1-\alpha_n\lambda_{s})\delta_{s,s^\prime}+\alpha_n\lambda_{s}\gT(s,s^\prime)}\nu}^2\leq     \brk{(1-\alpha_n\lambda_{\min})+\alpha_n\lambda_{\min} \sqrt{\gamma}}\brk{(1-\alpha_n\lambda_{s})\delta_{s,s^\prime}+\alpha_n \lambda_{s}\sqrt{\gamma}P(s,s^\prime)}\norm{\nu}^2,
\end{aligned}
\end{equation*}
where $\delta_{s,s^\prime}=1$ if $s=s^\prime$, and $0$ otherwise.

LHS can be bounded as follow
\begin{small}
\begin{equation*}
    \begin{aligned}
       \norm{\brk{(1-\alpha_n\lambda_{s})\delta_{s,s^\prime}+\alpha_n\lambda_{s}\gT(s,s^\prime)}\nu}^2=& (1-\alpha_n\lambda_{s})^2\delta_{s,s^\prime}\norm{\nu}^2+ 2(1-\alpha_n\lambda_{s})\alpha_n\lambda_{s}\delta_{s,s^\prime}\inner{\nu}{\gT(s,s^\prime)\nu}+\alpha_n^2\lambda_s^2\norm{\gT(s,s^\prime)\nu}^2  \\
       \leq&(1-\alpha_n\lambda_{s})^2\delta_{s,s^\prime}\norm{\nu}^2+ 2(1-\alpha_n\lambda_{s})\alpha_n\lambda_{s}\delta_{s,s^\prime}\norm{\nu}\norm{\gT(s,s^\prime)\nu}+\alpha_n^2\lambda_s^2\norm{\gT(s,s^\prime)\nu}^2,
    \end{aligned}
\end{equation*}
\end{small}
where we used Cauchy-Schwarz inequality.
We need to give an upper bound for $\norm{\gT(s,s^\prime)\nu}^2$.

Note that $\prn{\bm{\Pi}_K\gT^\pi}(s,s^\prime)=\bm{\Pi}_K\prn{\gT^\pi(s,s^\prime)}$ and $\norm{\bm{\Pi}_K}=1$, we only need to consider the case of {\NTD}, by the definition of $\gT(s,s^\prime)$, we have
\begin{small}
\begin{equation*}
    \begin{aligned}
       \norm{\gT(s,s^\prime)\nu}^2=&\int_0^{\frac{1}{1-\gamma}}\brk{\sum_{a\in\gA}\pi(a|s)P(s^\prime|s,a)\int_{0}^1F_{\nu}\prn{\frac{x-r}{\gamma}}\gP_R(dr|s,a)}^2dx\\
       =&P(s,s^\prime)^2\int_0^{\frac{1}{1-\gamma}}\brk{\sum_{a\in\gA}\frac{\pi(a|s)P(s^\prime|s,a)}{P(s,s^\prime)}\int_{0}^1F_{\nu}\prn{\frac{x-r}{\gamma}}\gP_R(dr|s,a)}^2dx\\
       =&P(s,s^\prime)^2\int_0^{\frac{1}{1-\gamma}}\brc{\EB_{a\sim\pi(\cdot\mid s), r\sim\gP_R(\cdot\mid s,a)}\brk{F_\nu\prn{\frac{x-r}{\gamma}}\Big|s^\prime}}^2dx\\
       \leq& P(s,s^\prime)^2\EB_{a\sim\pi(\cdot\mid s), r\sim\gP_R(\cdot\mid s,a)}\brc{\int_0^{\frac{1}{1-\gamma}}\brk{F_\nu\prn{\frac{x-r}{\gamma}}}^2 dx \Big| s^\prime}\\
       =&\gamma P(s,s^\prime)^2 \norm{\nu}^2,
    \end{aligned}
\end{equation*}
\end{small}
where we used Jensen's inequality and Fubini's theorem.
Substitute it back to the upper bound,
\begin{equation*}
    \begin{aligned}
       &\norm{\brk{(1-\alpha_n\lambda_{s})\delta_{s,s^\prime}+\alpha_n\lambda_{s}\gT(s,s^\prime)}\nu}^2\\
       &\qquad\leq(1-\alpha_n\lambda_{s})^2\delta_{s,s^\prime}\norm{\nu}^2+2(1-\alpha_n\lambda_{s})\alpha_n\lambda_{s}\delta_{s,s^\prime}\norm{\nu}\norm{\gT(s,s^\prime)\nu}+\alpha_n^2\lambda_s^2\norm{\gT(s,s^\prime)\nu}^2\\
       &\qquad\leq\big[(1-\alpha_n\lambda_{s})^2\delta_{s,s^\prime}+ 2(1-\alpha_n\lambda_{s})\alpha_n\lambda_{s}\delta_{s,s^\prime}\sqrt{\gamma}P(s,s^\prime)+\alpha_n^2\lambda_s^2\gamma P(s,s^\prime)^2\big]\norm{\nu}^2\\
       &\qquad=\brk{(1-\alpha_n\lambda_{s})^2\delta_{s,s^\prime}+\alpha_n  \lambda_{s}\sqrt{\gamma}P(s,s^\prime)}^2\norm{\nu}^2\\
       &\qquad\leq\brk{(1-\alpha_n\lambda_{s})+\alpha_n \lambda_{s}\sqrt{\gamma}}\brk{(1-\alpha_n\lambda_{s})\delta_{s,s^\prime}+\alpha_n \lambda_{s}\sqrt{\gamma}P(s,s^\prime)}\norm{\nu}^2\\
       &\qquad\leq\brk{(1-\alpha_n\lambda_{\min})+\alpha_n \lambda_{\min}\sqrt{\gamma}}\brk{(1-\alpha_n\lambda_{s})\delta_{s,s^\prime}+\alpha_n \lambda_{s}\sqrt{\gamma}P(s,s^\prime)}\norm{\nu}^2,  
    \end{aligned}
\end{equation*}
which is desired.
\end{proof}
\subsection{Proof of Lemma~\ref{thm_per_epoch_analysis}}\label{subsection:proof_per_epoch_analysis}
\begin{proof}
For simplicity, we use the same abbreviations as in Section~\ref{Subsection_analysis_nonasymp_ntd_w1}, and we abuse the notation to abbreviate $\mu_{\pi,\min}$ as $\mu_{\min}$.
Since we are conducting a per-epoch analysis, we will also omit $e$ in the notations.
As in \citep[Theorem~4][]{li2021sample}, we define
\begin{align*}
    \tframe:=&\frac{443\tmix}{\mu_{\min}}\log\frac{24\abs{\gS}\tepoch}{\delta},\\
    \muframe:=&\frac{1}{2}\mu_{\min}\tframe,\\
    \beta:=&\prn{1-\sqrt{\gamma}}\prn{1-\prn{1-\alpha}^{\muframe}}.
\end{align*}
\paragraph{Phase 1: when \texorpdfstring{$\norm{\bar{\bm{\eta}}-\bm{\eta}}>1$}{||bar eta-eta||>1}}
\begin{align*}
    \bm{\Delta}_t&={\bm{\eta}}_t-\bm{\eta}\\
    &=\prn{\gI-\alpha\bLamb_t}{\bm{\eta}}_{t-1}+\alpha\bLamb_t\prn{\gT_t{\bm{\eta}}_{t-1}-\gT_t\bar{\bm{\eta}}+\wtilde\gT\bar{\bm{\eta}}   }-\bm{\eta}\\
    &=\prn{\gI-\alpha\bLamb_t}\bm{\Delta}_{t-1}+\alpha\bLamb_t\prn{\gT_t\prn{{\bm{\eta}}_{t-1}-\bar{\bm{\eta}}}+\wtilde\gT\bar{\bm{\eta}}-\bm{\eta}}\\
    &=\prn{\gI-\alpha\bLamb_t}\bm{\Delta}_{t-1}+\alpha\bLamb_t\prn{\wtilde{\gT}-\gT}\bar{\bm{\eta}}+\alpha\bLamb_t\prn{\gT_t-\gT}\prn{\bm{\eta}-\bar{\bm{\eta}}}+\alpha\bLamb_t\gT_t\bm{\Delta}_{t-1}.
\end{align*}
Applying it recursively, we can further decompose the error into four terms:
\begin{align*}
    \bm{\Delta}_t=&\brc{\sum_{k=1}^t\alpha\prn{\prod_{i=k+1}^t\prn{\gI-\alpha\bLamb_i}}\bLamb_k\prn{\wtilde{\gT}-\gT}\bar{\bm{\eta}}}+\brc{\sum_{k=1}^t\alpha\prn{\prod_{i=k+1}^t\prn{\gI-\alpha\bLamb_i}}\bLamb_k\prn{\gT_k-\gT}\prn{\bm{\eta}-\bar{\bm{\eta}}}}\\
    &+\brc{\sum_{k=1}^t\alpha\prn{\prod_{i=k+1}^t\prn{\gI-\alpha\bLamb_i}}\bLamb_k\gT_k\bm{\Delta}_{k-1}}+\brc{\prod_{k=1}^t\prn{\gI-\alpha\bLamb_k}\bm{\Delta}_0}\\
    =&\text{(I)}_t+\text{(II)}_t+\text{(III)}_t+\text{(IV)}_t.
\end{align*}
For Term (I), we have the following upper bound whose proof can be found in Appendix~\ref{subsection:proof_variance_reduction_large_error_regime_first_term}.
\begin{lemma}\label{lem:variance_reduction_large_error_regime_first_term}
    For any $\delta\in(0,1)$, and $N\geq \tframe$, with probability at least $1-\frac{\delta}{3}$, we have for all $t\in[\tepoch]$, in both cases of {\VRNTD} and {\VRCTD},
\begin{equation*}
    \begin{aligned}
        &\norm{\text{(I)}_t}\leq 2\sqrt{\frac{\log\frac{12\abs{\gS}N}{\delta}}{N\mu_{\min}}}\prn{2\norm{\bar{\bm{\eta}}-\bm{\eta}} +\frac{1}{\sqrt{1-\gamma}}}.
    \end{aligned}
\end{equation*}
\end{lemma}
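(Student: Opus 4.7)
\textbf{Proof proposal for Lemma~\ref{lem:variance_reduction_large_error_regime_first_term}.}
The plan is to exploit the fact that the quantity $\bm{u} := (\wtilde\gT^\pi - \gT^\pi)\bar\bm{\eta}$ is, conditional on the $N$ reference samples, a fixed element of $\gM^\gS$ independent of the inner iteration index $k$. First I would use telescoping to factor $\bm{u}$ out of $\text{(I)}_t$ entirely. Because $\bLamb_i$ is a diagonal projection ($\bLamb_i^2=\bLamb_i$), the $s$-coordinate of $\text{(I)}_t$ reads
\[
  [\text{(I)}_t](s) \;=\; \Bigg[\sum_{k=1}^t \alpha\!\prod_{i=k+1}^t\!\prn{1-\alpha\ind\{s_{i,e}=s\}}\ind\{s_{k,e}=s\}\Bigg]\,[\bm{u}](s),
\]
and the bracketed scalar telescopes to $1-\prod_{k=1}^t(1-\alpha\ind\{s_{k,e}=s\})\in[0,1]$. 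Hence $\norm{\text{(I)}_t}\le \norm{\bm{u}} = \norm{(\wtilde\gT^\pi-\gT^\pi)\bar\bm{\eta}}$, and the problem reduces to controlling the sampling error of the reference operator applied to $\bar\bm{\eta}$.

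Next I would control $N_s:=\sum_{i=0}^{N-1}\ind\{\tilde s_i=s\}$ for each $s\in\gS$. Since the reference chain is uniformly ergodic, an empirical-distribution concentration for Markov chains (e.g.\ Lemma~8 of \citep{li2021sample}, which is already referenced in the paper) combined with $N\ge\tframe$ shows that on an event $\gE_1$ with $\PB(\gE_1)\ge 1-\delta/6$, we have $N_s\ge N\mu_{\pi,\min}/2$ for every $s$. Conditional on $\gE_1$, write
\[
  [(\wtilde\gT^\pi-\gT^\pi)\bar\bm{\eta}](s)\;=\;\frac{1}{N_s}\sum_{i:\tilde s_i=s}\brk{\wtilde\gT_i^\pi\bar\bm{\eta}-\gT^\pi\bar\bm{\eta}}(s),
\]
and decompose $\bar\bm{\eta}=\bm{\eta}+(\bar\bm{\eta}-\bm{\eta})$ so that
$(\wtilde\gT^\pi-\gT^\pi)\bar\bm{\eta} = (\wtilde\gT^\pi-\gT^\pi)\bm{\eta} + (\wtilde\gT^\pi-\gT^\pi)(\bar\bm{\eta}-\bm{\eta})$.
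For the first piece, the summands $[(\wtilde\gT_i^\pi-\gT^\pi)\bm{\eta}](s)$ are a.s.\ bounded by $2/\sqrt{1-\gamma}$ in the Cram\'er norm, with conditional variance $\le\bm{\sigma}(\bm{\eta})(s)\le 1/(1-\gamma)$. For the second piece, $\norm{[\wtilde\gT_i^\pi(\bar\bm{\eta}-\bm{\eta})](s)}\le\sqrt{\gamma}\norm{\bar\bm{\eta}-\bm{\eta}}$ by Proposition~\ref{Proposition_extention_of_Bellman_operator}, yielding an a.s.\ bound of $2\norm{\bar\bm{\eta}-\bm{\eta}}$ on each summand.

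To handle the Markovian dependence among the subsampled indices, I would invoke Berbee's coupling exactly as in Section~\ref{Subsection_analysis_nonasymp_markov_w1}: on an additional event $\gE_2$ of probability $\ge 1-\delta/6$ the segments $\{(\tilde s_i,\tilde a_i,\tilde r_i,\tilde s_{i+1}):\tilde s_i=s\}$ can be coupled to i.i.d.\ draws from the transition at state $s$, so that our Hilbert-space Freedman inequality (Theorem~\ref{thm:freedman_ineq_bounded_W}) applies in each coordinate. Applying it separately to the two pieces, and then a union bound over $s\in\gS$, produces
\[
  \norm{[(\wtilde\gT^\pi-\gT^\pi)\bar\bm{\eta}](s)}\;\lesssim\;\sqrt{\frac{\log(\abs{\gS}N/\delta)}{N_s(1-\gamma)}}+\sqrt{\frac{\norm{\bar\bm{\eta}-\bm{\eta}}^2\log(\abs{\gS}N/\delta)}{N_s}},
\]
and substituting $N_s\ge N\mu_{\pi,\min}/2$ gives the claimed bound. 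The same argument is identical for {\VRCTD} because $\bm{\Pi}_K$ is non-expansive in both $\bar\ell_2$ and $\bar W_1$, and all operator norms used above are preserved.

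The principal obstacle, and the step I expect to be most delicate, is justifying the reduction of the Markovian reference samples to i.i.d.\ samples in a way that respects the Hilbert-space concentration. The trajectory-level Berbee coupling from Section~\ref{Subsection_analysis_nonasymp_markov_w1} subsamples at spacing $q\asymp\tmix$, but here we need an analog that delivers i.i.d.\ summands indexed by the random set $\{i:\tilde s_i=s\}$; verifying that the coupling probability, combined with the $N\ge\tframe$ lower bound on $N_s$, fits inside a $\delta/3$ failure budget is the bookkeeping step most prone to error and is where the specific scaling $\tframe=443\tmix/\mu_{\pi,\min}\cdot\log(24\abs{\gS}\tepoch/\delta)$ in the paper is chosen to make everything close cleanly.
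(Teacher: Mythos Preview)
Your reduction $\norm{\text{(I)}_t}\le\norm{(\wtilde\gT-\gT)\bar{\bm{\eta}}}$ via the coordinatewise telescoping, the decomposition through $\bm{\eta}$, and the use of \citep[Lemma~8]{li2021sample} to get $N_s\ge N\mu_{\pi,\min}/2$ all match the paper exactly. The divergence is in how you handle the Markovian dependence in the concentration step, and here you are making life harder than necessary.

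The paper does not use Berbee's coupling at all for this lemma. Instead it invokes the argument of \citep[Lemma~7]{li2021sample}: for a fixed state $s$, enumerate the visit times $t_1(s)<t_2(s)<\cdots$ and observe that, conditional on the filtration up to arriving at $s$ for the $k$-th time, the triple $(\tilde a_{t_k(s)},\tilde r_{t_k(s)},\tilde s_{t_k(s)+1})$ is a \emph{fresh} draw from $\pi(\cdot\mid s)$, $\gP_R(\cdot\mid s,a)$, $P(\cdot\mid s,a)$ by the Markov property. Hence the summands $\brk{(\wtilde\gT_{t_k(s)}-\gT)\bar{\bm{\eta}}}(s)$ form a bona fide martingale difference sequence in $\gM$, with a.s.\ bound $2\norm{\bar{\bm{\eta}}-\bm{\eta}}+\tfrac{1}{\sqrt{1-\gamma}}$. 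One then applies a Hilbert-space Azuma--Hoeffding inequality (not Freedman; no variance information is used or needed), and a union bound over $s$ and over the possible values of $N_s$ finishes the job. No reduction to i.i.d.\ samples is required, and the ``delicate'' obstacle you identified simply does not arise.

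Your Berbee-coupling route could in principle be forced through, but it would waste a $\tmix$ factor in $N$ and require the extra bookkeeping you flagged, for no gain. Also, Freedman is overkill here: the stated bound carries the full a.s.\ envelope $2\norm{\bar{\bm{\eta}}-\bm{\eta}}+\tfrac{1}{\sqrt{1-\gamma}}$ rather than any variance refinement, so Hoeffding is exactly what is wanted.
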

For Term (II), we have the following upper bound whose proof can be found in Appendix~\ref{subsection:proof_variance_reduction_large_error_regime_second_term}.
\begin{lemma}\label{lem:variance_reduction_large_error_regime_second_term}
    For any $\delta\in(0,1)$, with probability at least $1-\frac{\delta}{3}$, we have for all $t\in[\tepoch]$, in both cases of {\VRNTD} and {\VRCTD},
\begin{equation*}
    \begin{aligned}
            &\norm{\text{(II)}_t}\leq 3\sqrt{\alpha \log\frac{6\abs{\gS}\tepoch}{\delta}}\norm{\bar{\bm{\eta}}-\bm{\eta}}.
    \end{aligned}
\end{equation*}
\end{lemma}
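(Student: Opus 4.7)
The plan is to view $\text{(II)}_t$ as an exponentially-weighted martingale sum in the Hilbert space $\gM$ and then invoke Freedman's inequality in Hilbert spaces (Theorem~\ref{thm:freedman_ineq_bounded_W}). Since the operators $\brc{\gI-\alpha\bLamb_i}_i$ and $\bLamb_k$ are all diagonal and hence mutually commute, each coordinate $s\in\gS$ of $\text{(II)}_t$ decouples:
\[
\text{(II)}_t(s)=\alpha\sum_{k=1}^{t}(1-\alpha)^{N_{k:t}(s)}\ind\brk{s_{k-1,e}=s}\,X_k(s),
\]
where $X_k:=(\gT_k-\gT)(\bm{\eta}-\bar{\bm{\eta}})$ (with a $\bm{\Pi}_K$ prefix in \VRCTD) and $N_{k:t}(s):=\abs{\brc{k<i\leq t:s_{i-1,e}=s}}$. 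Enumerating the visits to $s$ as $\tau_1(s)<\tau_2(s)<\cdots$ and writing $L=L_t(s)$ for their total count within $[1,t]$, the coordinate becomes the geometrically-weighted sum $\alpha\sum_{j=1}^{L}(1-\alpha)^{L-j}X_{\tau_j(s)}(s)$. This geometric shrinkage is the crucial mechanism for the target $\sqrt{\alpha}$ rate, since without it the naive variance of $\text{(II)}_t(s)$ would scale as $\alpha^{2}t$, which is far too large.

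Next I would verify the martingale structure on the sub-filtration $\gG_j:=\gF_{\tau_j(s)}$. By the strong Markov property and the fact that $s_{\tau_j(s)-1,e}=s$, the empirical operator $\gT_{\tau_j(s)}$ is an unbiased estimator of $\gT$ at coordinate $s$, so $\EB\brk{X_{\tau_j(s)}(s)\mid\gG_{j-1}}=0$ in $\gM$. Using that $\gT$ and every realization $\gT_k$ are $\sqrt{\gamma}$-contractions on $\gM^{\gS}$ w.r.t.\ $\bar{\ell}_{2}$ (Proposition~\ref{Proposition_extention_of_Bellman_operator}), one obtains the deterministic bounds
\[
\norm{X_{\tau_j(s)}(s)}\leq 2\sqrt{\gamma}\,\norm{\bm{\eta}-\bar{\bm{\eta}}},\qquad \EB\brk{\norm{X_{\tau_j(s)}(s)}^{2}\mid\gG_{j-1}}\leq 4\gamma\,\norm{\bm{\eta}-\bar{\bm{\eta}}}^{2}.
\]
Summing the geometric weights, the quadratic variation of the reindexed martingale obeys $\alpha^{2}\sum_{j\leq L}(1-\alpha)^{2(L-j)}\EB\brk{\norm{X_{\tau_j(s)}(s)}^{2}\mid\gG_{j-1}}\leq 4\gamma\alpha\norm{\bm{\eta}-\bar{\bm{\eta}}}^{2}$ uniformly in $L$, while each individual term has Cram\'er norm at most $2\sqrt{\gamma}\alpha\norm{\bm{\eta}-\bar{\bm{\eta}}}$.

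Feeding these into Theorem~\ref{thm:freedman_ineq_bounded_W} with an appropriate choice of $H$ then produces a bound of the form $\norm{\text{(II)}_t(s)}\leq C\sqrt{\alpha\log(\abs{\gS}\tepoch/\delta)}\norm{\bm{\eta}-\bar{\bm{\eta}}}$, the additive $b$-term being lower-order for $\alpha\lesssim 1$. Taking a union bound over $s\in\gS$ and over the terminal visit count $L\in\brc{0,\ldots,\tepoch}$ (which covers every $t\in[\tepoch]$) and tuning constants yields the claimed inequality with prefactor $3$ and failure probability $\delta/3$. The \VRCTD\ case requires no new work because $\bm{\Pi}_K$ is a $1$-contraction on $\gM$, so inserting $\bm{\Pi}_K$ in front of $(\gT_k-\gT)$ preserves the per-step and variance bounds above.

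The main obstacle will be making the visit-reindexing rigorous under Markovian data: the $\tau_j(s)$ are stopping times for $\brc{\gF_k}$, so reorganizing $\text{(II)}_t(s)$ into a martingale on the filtration $\brc{\gG_j}$ needs the strong Markov property together with a careful measurability check. A secondary delicate point is that the weight $(1-\alpha)^{L-j}$ depends on the \emph{terminal} count $L$, which is not $\gG_{j-1}$-measurable; this is handled by bounding the quadratic variation uniformly in $L$, applying Freedman's inequality once per possible value of $L\in\brc{0,\ldots,\tepoch}$, and absorbing the resulting $\log\tepoch$ into the stated $\log(6\abs{\gS}\tepoch/\delta)$ factor.
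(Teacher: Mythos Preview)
Your proposal is correct and follows essentially the same route as the paper: coordinate-wise decomposition, reindexing by the visit times to each state $s$, recognition of the geometrically-weighted martingale structure, and a union bound over $s\in\gS$ and over the terminal visit count $L\in\{0,\ldots,\tepoch\}$. The only minor difference is that the paper invokes a Hilbert-space Azuma--Hoeffding inequality (following \citep[Lemma~1]{li2021sample}) rather than Freedman's inequality; since the martingale differences are uniformly bounded by $2\alpha\norm{\bm{\eta}-\bar{\bm{\eta}}}$, either inequality delivers the same $\sqrt{\alpha}$ rate and the distinction is immaterial here.
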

For Term (III), we can check that
\begin{equation*}
    \begin{aligned}
            &\abs{\text{(III)}_t}\leq\sqrt{\gamma}\sum_{k=1}^t\norm{\bm{\Delta}_{k-1}}\alpha\prn{\prod_{i=k+1}^t\prn{\bI-\alpha\bLamb_i}}\bLamb_k\bm{1}.
    \end{aligned}
\end{equation*}
For Term (IV), when $t<\tframe$, we use a trivial upper bound $\norm{\text{(IV)}_t}\leq\norm{\bm{\Delta}_0}$.
And using a concentration inequality of empirical distributions of Markov chains \citep[Lemma~8][]{li2021sample}, one can show that, for any $\delta\in(0,1)$, we have with probability at least $1-\frac{\delta}{3}$, for all $\tframe\leq t\leq \tepoch$, $\norm{\text{(IV)}_t}\leq\prn{1-\alpha}^{\frac{1}{2}t\mu_{\min}}\norm{\bm{\Delta}_0}$.

To summary, we arrive at, for any $\delta\in(0,1)$, with probability at least $1-\delta$, the following results hold:
For all $t<\tframe$,
\begin{align*}
    \abs{\bDelta_t}\leq&\sqrt{\gamma}\sum_{k=1}^t\norm{\bm{\Delta}_{k-1}}\alpha\prn{\prod_{i=k+1}^t\prn{\bI-\alpha\bLamb_i}}\bLamb_k\bm{1}+2\sqrt{\frac{\log\frac{12\abs{\gS}N}{\delta}}{N\mu_{\min}}}\prn{2\norm{\bar{\bm{\eta}}-\bm{\eta}} +\frac{1}{\sqrt{1-\gamma}}}\bm{1}\\
    &+3\sqrt{\alpha \log\frac{6\abs{\gS}\tepoch}{\delta}}\norm{\bar{\bm{\eta}}-\bm{\eta}}\bm{1}+\norm{\bm{\Delta}_0}\bm{1},
\end{align*}
and for all $\tframe\leq t\leq \tepoch$, the above inequality still holds if we replace the last term $\norm{\bm{\Delta}_0}\bm{1}$ with $\prn{1-\alpha}^{\frac{1}{2}t\mu_{\min}}\norm{\bm{\Delta}_0}\bm{1}$.

To solve the recursive relation, we first have the crude upper bound whose proof is identical to \citep[Eqn.~(49)][]{li2021sample}
\begin{align*}
    \norm{\bDelta_t}\leq& \frac{1}{1-\sqrt{\gamma}}\brc{2\sqrt{\frac{\log\frac{12\abs{\gS}N}{\delta}}{N\mu_{\min}}}\prn{2\norm{\bar{\bm{\eta}}-\bm{\eta}} +\frac{1}{\sqrt{1-\gamma}}}+3\sqrt{\alpha \log\frac{6\abs{\gS}\tepoch}{\delta}}\norm{\bar{\bm{\eta}}-\bm{\eta}}+\norm{\bm{\Delta}_0}}\\
    =&\frac{1}{1-\sqrt{\gamma}}\brc{2\sqrt{\frac{\log\frac{12\abs{\gS}N}{\delta}}{(1-\gamma)N\mu_{\min}}}+\norm{\bar{\bm{\eta}}-\bm{\eta}}\prn{2\sqrt{\frac{\log\frac{12\abs{\gS}N}{\delta}}{N\mu_{\min}}}+3\sqrt{\alpha \log\frac{6\abs{\gS}\tepoch}{\delta}}}+\norm{\bm{\Delta}_0}}. 
\end{align*}
To refine the upper bound, we use the argument in \citep[Theorem~5][]{li2021sample}.
Let $\xi:=\frac{1}{8}$, $u_0:=\frac{\norm{\bDelta_0}}{1-\sqrt{\gamma}}$, $u_t:=\norm{\bm{v}_t}$, where
\begin{align*}
    \bm{v}_t=\begin{cases} 
\sqrt{\gamma}\sum_{k=1}^tu_{k-1}\alpha\prn{\prod_{i=k+1}^t\prn{\bI-\alpha\bLamb_i}}\bLamb_k\bm{1}+\norm{\Delta_0}\bm{1} & \text{for }1\leq t\leq t_{\operatorname{th},\xi}, \\
\sqrt{\gamma}\sum_{k=1}^tu_{k-1}\alpha\prn{\prod_{i=k+1}^t\prn{\bI-\alpha\bLamb_i}}\bLamb_k\bm{1} & \text{for } t> t_{\operatorname{th},\xi},
\end{cases}
\end{align*}
and $t_{\operatorname{th},\xi}$ is defined as
\begin{equation*}
    t_{\operatorname{th},\xi}:=\max\brc{\frac{2\log\frac{1}{\xi(1-\sqrt{\gamma})\sqrt{1-\gamma}}}{\alpha\mu_{\min}},\tframe}.
\end{equation*}
We can find that $\prn{1-\alpha}^{\frac{1}{2}t\mu_{\min}}\norm{\bm{\Delta}_0}\leq (1-\sqrt{\gamma})\xi$ for any $t>t_{\operatorname{th},\xi}$.
Following the proof of \citep[Lemma~3 and Lemma~4][]{li2021sample}, one can show that with probability at least $1-\delta$, for all $t\in[\tepoch]$
\begin{align*}
    \norm{\bDelta_t}\leq&u_t+L+\xi,\\
    \leq&(1-\beta)^{k}\frac{\norm{\bDelta_0}}{1-\sqrt \gamma}+L+\xi,
\end{align*}
where $k=\prn{\lfloor \frac{t- t_{\operatorname{th},\xi}}{\tframe} \rfloor}_{+}$, and
\begin{align*}
    L=\frac{1}{1-\sqrt{\gamma}}\brc{2\sqrt{\frac{\log\frac{24\abs{\gS}N}{\delta}}{(1-\gamma)N\mu_{\min}}}+\norm{\bar{\bm{\eta}}-\bm{\eta}}\prn{2\sqrt{\frac{\log\frac{24\abs{\gS}N}{\delta}}{N\mu_{\min}}}+3\sqrt{\alpha \log\frac{12\abs{\gS}\tepoch}{\delta}}}}.
\end{align*}
If we take $\alpha=\min\brc{\frac{(1-\sqrt\gamma)^2}{C\log\frac{\abs{\gS}\tepoch}{\delta}},\frac{1}{\muframe}}$, and $N\geq \max\brc{\tframe,\frac{C\log\frac{\abs{\gS}N}{\delta}}{(1-\sqrt\gamma)^2(1-\gamma)\mu_{\min}}}$ for some sufficiently large universal constant $C>0$, we have
\begin{align*}
    L\leq \frac{1}{8}+\frac{1}{8}\norm{\bar{\bm{\eta}}-\bm{\eta}}.
\end{align*}
If we take $\tepoch\geq \tframe+t_{\operatorname{th},\xi}+\frac{8\log\frac{2}{1-\sqrt\gamma}}{(1-\sqrt\gamma)\alpha \mu_{\min}}$, we have
\begin{align*}
    \prn{1-\beta}^k\leq\frac{1}{8}\prn{1-\sqrt \gamma}.
\end{align*}
Consequently,
\begin{align*}
    \norm{\bDelta_{\tepoch}}\leq&\frac{1}{8}\norm{\bDelta_0}+\prn{\frac{1}{8}+\frac{1}{8}\norm{\bar{\bm{\eta}}-\bm{\eta}}}+\frac{1}{8}\\
    \leq&\frac{1}{2}\max\brc{1,\norm{\bar{\bm{\eta}}-\bm{\eta}}},
\end{align*}
where we used $\bDelta_0=\bar{\bm{\eta}}-\bm{\eta}$.

\paragraph{Phase 2: when \texorpdfstring{$\norm{\bar{\bm{\eta}}-\bm{\eta}}\leq1$}{||bar eta-eta||<=1}}
Following the idea of \citep[Section B.2][]{wainwright2019variance}, we introduce a new operator $\widecheck{\gT}$, which is a perturbation of distributional Bellman operator $\gT$ and defined as
\begin{equation*}
    \widecheck{\gT}\prn{\bm{\xi}}:=\gT\bm{\xi}+\prn{\wtilde{\gT}-\gT}\bar{\bm{\eta}},
\end{equation*}
for any $\bm{\xi}\in\gM_1^\gS$.
And $\gM_1$ is defined as
\begin{equation*}
    \gM_1:= \left\{\nu\colon\abs{\nu}(\RB)< \infty ,\nu(\RB)=1,\text{supp}(\nu)\subseteq \left[0,\frac{1}{1{-}\gamma} \right] \right\}.
\end{equation*} 
One can find that the Cram\'er metric $\ell_2$ can be naturally extended to $\gM_1$, making $\prn{\gM_1, \ell_2}$ a separable but not complete metric space, and $\widecheck{\gT}$ is a $\sqrt{\gamma}$-contraction in the product space $\prn{\gM_1^\gS, \bar{\ell}_2}$.
To resolve the completeness problem, we will use the completion of ${\gM}_1$ instead. 
Using some basic analysis, one can show that $\widecheck{\gT}$ can be uniquely extended to be a $\sqrt{\gamma}$-contraction in the completion of ${\gM}_1^\gS$.
For simplicity, we abuse the notations to use $\gM_1$ to denote the completion
space, and $\widecheck{\gT}$ to denote the unique extension.
By the contraction mapping theorem \citep[Proposition~4.7 in ][]{bdr2022}, $\widecheck{\gT}$ admits a unique fixed point $\check{\bm{\eta}}=\prn{\check{\eta}(s)}_{s\in\gS}\in\gM_1^\gS$.
We remark that $\widecheck{\gT}$ and $\check{\bm{\eta}}$ are only used for proof.

By the triangle inequality,
\begin{align*}
    \norm{\bm{\Delta_t}}=&\norm{ {\bm{\eta}}_t-{\bm{\eta}}}\leq\norm{\check{\bm{\Delta}}_t}+\norm{\check{\bm{\eta}}-{\bm{\eta}}},
\end{align*}
where $\check{\bm{\Delta}}_t:={\bm{\eta}}_t-\check{\bm{\eta}}$.
First, we give an upper bound for the norm of $\check{\bm{\eta}}-{\bm{\eta}}\in\gM^\gS$, whose proof can be found in Appendix~\ref{subsection:proof_variance_reduction_small_error_regime_eta_hat_term}.
\begin{lemma}\label{lem:variance_reduction_small_error_regime_eta_hat_term}
    For any $\delta\in(0,1)$, when $N\geq \frac{C\log\frac{\abs{\gS}N}{\delta}}{\varepsilon^2(1-\sqrt\gamma)^2(1-\gamma)\mu_{\min}}$ with probability at least $1-\frac{\delta}{2}$, in both cases of {\VRNTD} and {\VRCTD},
\begin{equation*}
    \begin{aligned}
        &\norm{\check{\bm{\eta}}-{\bm{\eta}}}\leq 11\sqrt{\frac{\log\frac{4\abs{\gS}N}{\delta}}{(1-\sqrt\gamma)^{2}N\mu_{\min}}}\leq \frac{1}{16} \sqrt{1-\gamma}\varepsilon.
    \end{aligned}
\end{equation*}
\end{lemma}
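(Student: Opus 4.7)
The plan is to express $\check{\bm{\eta}}-{\bm{\eta}}$ via the resolvent of $\gI-\gT$ applied to the perturbation, and then use a Freedman-type concentration argument to bound the norm of the perturbation.

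First, I would derive a fixed-point identity. Since $\check{\bm{\eta}}=\widecheck\gT\check{\bm{\eta}}=\gT\check{\bm{\eta}}+(\wtilde\gT-\gT)\bar{\bm{\eta}}$ and ${\bm{\eta}}=\gT{\bm{\eta}}$, subtracting gives
\[
(\gI-\gT)(\check{\bm{\eta}}-{\bm{\eta}})=(\wtilde\gT-\gT)\bar{\bm{\eta}}.
\]
Because $\gT$ is a $\sqrt\gamma$-contraction in $(\gM^\gS,\bar\ell_2)$ (Proposition~\ref{Proposition_extention_of_Bellman_operator}), the operator $\gI-\gT$ is invertible on $\gM^\gS$ with $\norm{(\gI-\gT)^{-1}}\leq(1-\sqrt\gamma)^{-1}$. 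Thus
\[
\norm{\check{\bm{\eta}}-{\bm{\eta}}}\leq\frac{1}{1-\sqrt\gamma}\,\norm{(\wtilde\gT-\gT)\bar{\bm{\eta}}}.
\]
The entire task is therefore to establish that with probability at least $1-\delta/2$,
\[
\norm{(\wtilde\gT-\gT)\bar{\bm{\eta}}}\leq 11\sqrt{\frac{\log(4|\gS|N/\delta)}{N\mu_{\pi,\min}}}.
\]

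Next I would unpack the definition \eqref{eq:reference_operator} of $\wtilde\gT$. Conditionally on the reference point $\bar{\bm{\eta}}$ (which depends only on earlier epochs and so is independent of the $N$ fresh samples used to build $\wtilde\gT$), and also conditionally on the hitting-count matrix $\sum_{i=0}^{N-1}\wtilde{\bLamb}_i$, the sum $\sum_{i=0}^{N-1}\wtilde{\bLamb}_i(\wtilde\gT_i-\gT)\bar{\bm{\eta}}$ at each coordinate $s\in\gS$ is a sum of zero-mean, bounded, independent (conditionally) $\gM$-valued random variables, each of norm at most $(1-\gamma)^{-1/2}$. The main step is then a two-part high-probability bound:
\begin{enumerate}
\item[(a)] A Markov-chain concentration result (e.g.\ \citep[Lemma~8]{li2021sample}, as already used in the proof of Lemma~\ref{thm_per_epoch_analysis}) yields that, provided $N\gtrsim\mu_{\pi,\min}^{-1}(\tmix+\log(|\gS|/\delta))$, every state is visited at least $N\mu_{\pi,\min}/2$ times out of the $N$ consecutive samples, with probability at least $1-\delta/4$.
\item[(b)] On this event, for each fixed $s$, I apply Freedman's inequality in Hilbert spaces (Theorem~\ref{thm:freedman_ineq_bounded_W}) to the Bochner-valued martingale $\{(\wtilde\gT_i-\gT)(s,\cdot)\bar{\bm{\eta}}\}_i$, noting that the per-step Cram\'er-variance is bounded by $(1-\gamma)^{-1}$ and using the $b=(1-\gamma)^{-1/2}$ almost-sure bound. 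Union-bounding over $s\in\gS$ yields $\max_{s}\norm{[(\wtilde\gT-\gT)\bar{\bm{\eta}}](s)}\lesssim\sqrt{\log(|\gS|N/\delta)/(N\mu_{\pi,\min})}$.
\end{enumerate}

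Combining (a), (b), and the resolvent bound with the explicit constants gives the claimed $11\sqrt{\log(4|\gS|N/\delta)/((1-\sqrt\gamma)^2N\mu_{\pi,\min})}$ estimate. The second inequality $\leq (1/16)\sqrt{1-\gamma}\varepsilon$ is then immediate by substituting the assumed lower bound $N\geq C\log(|\gS|N/\delta)/(\varepsilon^2(1-\sqrt\gamma)^2(1-\gamma)\mu_{\pi,\min})$ with $C$ chosen at least $(16\cdot 11)^2$. The main technical obstacle is controlling the randomness in the empirical count matrix $\sum_i\wtilde{\bLamb}_i$ under Markovian sampling and ensuring that the union bound over $s$ and over $H\asymp\log(1/(1-\gamma))$ levels of Freedman's inequality absorbs into the constant---this is essentially the same obstacle already handled in Lemma~\ref{thm_per_epoch_analysis}, so I would reuse its Markov-chain coupling arguments rather than redo them.
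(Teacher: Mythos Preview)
Your resolvent reduction $\norm{\check{\bm{\eta}}-\bm{\eta}}\leq (1-\sqrt\gamma)^{-1}\norm{(\wtilde\gT-\gT)\bar{\bm{\eta}}}$ is fine, but step~(b) does not deliver the bound you claim. You yourself record that the per-step Cram\'er variance is at most $(1-\gamma)^{-1}$; Freedman (or any Bernstein/Hoeffding variant) then gives
\[
\norm{(\wtilde\gT-\gT)\bar{\bm{\eta}}}\ \lesssim\ \sqrt{\frac{\log(|\gS|N/\delta)}{(1-\gamma)\,N\mu_{\pi,\min}}}
\]
rather than $\sqrt{\log(|\gS|N/\delta)/(N\mu_{\pi,\min})}$. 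Combined with the $(1-\sqrt\gamma)^{-1}$ resolvent factor, you obtain $\sqrt{\log/((1-\sqrt\gamma)^2(1-\gamma)N\mu_{\pi,\min})}$, which is weaker than the lemma's intermediate bound by a factor $(1-\gamma)^{-1/2}$. With the stated hypothesis on $N$ this only yields $\lesssim\varepsilon$, not $\sqrt{1-\gamma}\,\varepsilon$, so the second inequality fails and the downstream per-epoch contraction breaks.

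The paper avoids this loss with two ingredients you are missing. First, it works with the \emph{entry-wise} resolvent $(\bI-\sqrt\gamma\bP)^{-1}$ rather than the operator norm of $(\gI-\gT)^{-1}$, obtaining $|\check{\bm{\eta}}-\bm{\eta}|\leq(\bI-\sqrt\gamma\bP)^{-1}|(\wtilde\gT-\gT)\bar{\bm{\eta}}|$. Second, it splits $(\wtilde\gT-\gT)\bar{\bm{\eta}}=(\wtilde\gT-\gT)(\bar{\bm{\eta}}-\bm{\eta})+(\wtilde\gT-\gT)\bm{\eta}$. The first piece uses the Phase~2 hypothesis $\norm{\bar{\bm{\eta}}-\bm{\eta}}\leq 1$ (which you never invoke) to get an $O(1)$ per-step bound and hence $\sqrt{\log/(N\mu_{\pi,\min})}$. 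The second piece is controlled by a Bernstein bound with the \emph{instance-dependent} variance $\bm\sigma(\bm{\eta})$; after applying $(\bI-\sqrt\gamma\bP)^{-1}$ and Jensen, Lemma~\ref{lem:sigma_fine_upper_bound} gives $(\bI-\sqrt\gamma\bP)^{-1}\bm\sigma(\bm{\eta})\leq 4(1-\gamma)^{-1}\bm{1}$, which is exactly one factor of $(1-\sqrt\gamma)^{-1}$ better than the crude product $\norm{(\bI-\sqrt\gamma\bP)^{-1}}\cdot\norm{\bm\sigma(\bm{\eta})}$. That coupling of the matrix resolvent with the one-step variance is what recovers the missing $(1-\gamma)^{1/2}$ and yields the constant $11$.
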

Next, we have the following upper bound for $\norm{\check{\bm{\Delta}}_t}$, whose proof can be 
 found in Appendix~\ref{subsection:proof_variance_reduction_small_error_regime_check_delta_term}.
\begin{lemma}\label{lem:variance_reduction_small_error_regime_check_delta_term}
    For any $\delta\in(0,1)$, with probability at least $1-\frac{\delta}{2}$, in both cases of {\VRNTD} and {\VRCTD},
\begin{equation*}
    \begin{aligned}
        &\norm{\check{\bm{\Delta}}_{\tepoch}}\leq \frac{1}{4}\norm{\check{\bDelta}_0}+\frac{1}{8}\sqrt{1-\gamma}\varepsilon.
    \end{aligned}
\end{equation*}
\end{lemma}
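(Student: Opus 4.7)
[Proof plan of Lemma~\ref{lem:variance_reduction_small_error_regime_check_delta_term}]
The plan is to treat the variance-reduced update as a standard distributional TD iteration, but with the Bellman operator replaced by the ``centered'' operator $\widecheck{\gT}$ introduced in Phase 2, and then to reuse the martingale decomposition developed for Theorem~\ref{thm:samplecomplex_syn_w1}. Concretely, define the empirical analogue $\widecheck{\gT}_{t}\bm{\xi}:=\gT_{t}\bm{\xi}-\gT_{t}\bar{\bm{\eta}}+\wtilde{\gT}\bar{\bm{\eta}}$. Since $\bar{\bm{\eta}}$ and $\wtilde{\gT}$ are frozen within the epoch, one checks that $\EB_{t-1}[\widecheck{\gT}_{t}\bm{\xi}]=\widecheck{\gT}\bm{\xi}$ for any past-measurable $\bm{\xi}$, and that the updates \eqref{eq:vrntd}--\eqref{eq:vrctd} can be rewritten as
\[
\bm{\eta}^{\pi}_{t,e}=(\gI-\alpha\bLamb_{t,e})\bm{\eta}^{\pi}_{t-1,e}+\alpha\bLamb_{t,e}\widecheck{\gT}_{t}\bm{\eta}^{\pi}_{t-1,e}.
\]
Subtracting the fixed point relation $\check{\bm{\eta}}=\widecheck{\gT}\check{\bm{\eta}}$ and using linearity of $\gT_{t}$ on $\gM^{\gS}$ yields the clean recursion
\[
\check{\bm{\Delta}}_{t}=\brk{\gI-\alpha\bLamb\prn{\gI-\widecheck{\gT}}}\check{\bm{\Delta}}_{t-1}+\alpha(\bLamb-\bLamb_{t})(\gI-\widecheck{\gT})\check{\bm{\Delta}}_{t-1}+\alpha\bLamb_{t}(\gT_{t}-\gT)(\check{\bm{\eta}}-\bar{\bm{\eta}}),
\]
so that the only noise injected at step $t$ has a ``recentered'' form whose variance scales with $\norm{\check{\bm{\eta}}-\bar{\bm{\eta}}}^{2}$, not with the a priori $O\prn{1/(1-\gamma)}$ bound.

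Next I will first remove the Markovian dependence as in Section~\ref{Subsection_analysis_nonasymp_markov_w1}: apply Berbee's coupling and a blocking argument so that, up to a $\delta/6$ exceptional event and a reduction in effective sample size by a factor $\widetilde{O}(\tmix)$, the samples $\{(s_{t,e},a_{t,e},r_{t,e},s_{t+1,e})\}$ used inside the epoch may be treated as i.i.d.\ draws from $\mu_{\pi}$. With this reduction, I unroll the recursion to obtain
\[
\check{\bm{\Delta}}_{\tepoch}=\prod_{k=1}^{\tepoch}\brk{\gI-\alpha\bLamb(\gI-\widecheck{\gT})}\check{\bm{\Delta}}_{0}+\text{(i)}_{\tepoch}+\text{(ii)}_{\tepoch},
\]
where (i) and (ii) are the martingale sums driven by $(\gT_{t}-\gT)(\check{\bm{\eta}}-\bar{\bm{\eta}})$ and $(\bLamb-\bLamb_{t})(\gI-\widecheck{\gT})\check{\bm{\Delta}}_{t-1}$ respectively. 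Since $\widecheck{\gT}$ is a $\sqrt{\gamma}$-contraction, Lemma~\ref{lem:effect_lr_bound} (applied to $\widecheck{\gT}$ in place of $\gT$) gives the deterministic contraction
\[
\norm{\prod_{k=1}^{\tepoch}\brk{\gI-\alpha\bLamb(\gI-\widecheck{\gT})}\check{\bm{\Delta}}_{0}}\leq (1-\alpha\mu_{\pi,\min}(1-\sqrt{\gamma}))^{\tepoch}\norm{\check{\bm{\Delta}}_{0}},
\]
which, by the hypothesized lower bounds on $\tepoch$ and $\alpha=\Theta\prn{\min\brc{(1-\sqrt{\gamma})^{2},\tmix^{-1}}/\log(\abs{\gS}\tepoch/\delta)}$, is at most $\tfrac14\norm{\check{\bm{\Delta}}_{0}}$.

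The main obstacle is bounding the two martingale terms in the Hilbert space $(\gM,\norm{\cdot}_{\ell_{2}})$ by Freedman's inequality (Theorem~\ref{thm:freedman_ineq_bounded_W}), because the variance of (ii) depends on $\norm{\check{\bm{\Delta}}_{t-1}}$ itself, producing a self-referential recurrence. I will handle it exactly as in Lemmas~\ref{lem_asyn_dtd_l2_term_i}--\ref{lem_asyn_dtd_l2_term_ii}: compute $\var_{t-1}(\cdot)$ via Lemmas~\ref{lem:zeta1_var_bound} and~\ref{lem:zeta2_var_bound} applied to the recentered operator, use Lemma~\ref{lem:sum_lr_matrix_ub} to collapse the $\sum_{k}\alpha\prod_{i>k}(\bI-\alpha\bLamb(\bI-\sqrt{\gamma}\bP))$ sum into $(\bI-\sqrt{\gamma}\bP)^{-1}\bLamb^{-1}$, and bound its $\ell_{\infty}$-norm by $2/((1-\sqrt{\gamma})\mu_{\pi,\min})$. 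The crucial gain over the vanilla analysis is that the variance of (i) is bounded by $\norm{\check{\bm{\eta}}-\bar{\bm{\eta}}}^{2}\lesssim\norm{\bar{\bm{\eta}}-\bm{\eta}}^{2}+\norm{\check{\bm{\eta}}-\bm{\eta}}^{2}\leq 1+(1-\gamma)\varepsilon^{2}/256$ by the hypothesis $\norm{\bar{\bm{\Delta}}_{e-1}}\leq 1$ and Lemma~\ref{lem:variance_reduction_small_error_regime_eta_hat_term}; the $\sqrt{1-\gamma}$ factor is absorbed into the Freedman bound, giving a martingale tail of order
\[
\sqrt{\tfrac{\alpha\log(\abs{\gS}\tepoch/\delta)}{\mu_{\pi,\min}(1-\sqrt{\gamma})}}\prn{1+\sqrt{1-\gamma}\,\varepsilon+\max_{t}\norm{\check{\bm{\Delta}}_{t-1}}}+\text{(lower order)}.
\]
Solving the resulting recurrence by the doubling trick of Theorem~\ref{thm:solve_recurrence} (or equivalently by a standard $u_{k+1}\leq a\sqrt{u_{k}}+b$ argument) and plugging in the prescribed $\alpha$ and $\tepoch$ delivers $\max_{t\leq \tepoch}\norm{\check{\bm{\Delta}}_{t}}\leq \tfrac14\norm{\check{\bm{\Delta}}_{0}}+\tfrac18\sqrt{1-\gamma}\,\varepsilon$, which is the desired conclusion. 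The {\VRCTD} case is identical once $\gT$ is replaced by $\bm{\Pi}_{K}\gT$ throughout, using the non-expansiveness of $\bm{\Pi}_{K}$ in $\bar{\ell}_{2}$ and the analogue of Lemma~\ref{lem:sigma_fine_upper_bound} for the categorical setting (via \citep[Corollary 5.12]{rowland2024near}) that already appears in Lemma~\ref{lem:sigma_fine_upper_bound}.
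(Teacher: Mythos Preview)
Your route differs from the paper's and has a genuine gap in the handling of the Markovian data.

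\textbf{What the paper does.} The paper never centers the recursion around $\bLamb=\EB[\bLamb_t]$; it keeps the random $\bLamb_t$ throughout and unrolls
\[
\check{\bm{\Delta}}_t=(\gI-\alpha\bLamb_t)\check{\bm{\Delta}}_{t-1}+\alpha\bLamb_t\gT_t\check{\bm{\Delta}}_{t-1}+\alpha\bLamb_t(\gT_t-\gT)(\check{\bm{\eta}}-\bar{\bm{\eta}})
\]
into three terms $\text{(I)}_t,\text{(II)}_t,\text{(III)}_t$ with $\text{(III)}_t=\prod_{k=1}^t(\gI-\alpha\bLamb_k)\check{\bm{\Delta}}_0$. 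Term~(I) is bounded by a Hilbert-space Hoeffding argument (as in Lemma~\ref{lem:variance_reduction_large_error_regime_second_term}), Term~(II) by the crude pointwise bound $\sqrt{\gamma}\sum_k\|\check{\bm{\Delta}}_{k-1}\|\alpha\prod_{i>k}(\bI-\alpha\bLamb_i)\bLamb_k\bm{1}$, and Term~(III) by the Markov-chain empirical-visit concentration of \citep[Lemma~8]{li2021sample}. The resulting recursive inequality is then solved by the frame-based machinery of \citep[Lemmas~3--4, Theorem~5]{li2021sample}, not by the doubling trick of Theorem~\ref{thm:solve_recurrence}. No reduction to i.i.d.\ sampling and no Freedman inequality appear in this lemma.

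\textbf{The gap in your plan.} You split $\bLamb_t=\bLamb+(\bLamb_t-\bLamb)$ and then propose to treat $\alpha(\bLamb-\bLamb_t)(\gI-\gT)\check{\bm{\Delta}}_{t-1}$ as a martingale increment. In the Markovian setting $s_t$ is $\gF_{t-1}$-measurable (it equals $s_{t-1}'$), so $\EB_{t-1}[\bLamb_t]\neq\bLamb$ and this term is \emph{not} a martingale difference. Your fix, Berbee's coupling ``as in Section~\ref{Subsection_analysis_nonasymp_markov_w1}'', does not apply here: that argument works for Theorem~\ref{thm:samplecomplex_asyn_w1} only because the algorithm there \emph{itself} discards samples and updates only at times $T_0+mq$. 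The variance-reduced algorithm updates at every step with a fixed $\alpha$; blocking the trajectory into chunks of size $\tilde O(\tmix)$ and declaring the endpoints i.i.d.\ would analyze a different algorithm, since the intermediate updates (which do change $\bm{\eta}_t$) are neither skipped nor accounted for. This is exactly why the paper abandons the $\bLamb$-centered decomposition in the Markovian variance-reduced analysis and instead imports the $\bLamb_t$-based frame analysis of \citep{li2021sample}.

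\textbf{A smaller error.} Your displayed recursion is also incorrect: carrying out the algebra gives the noise term $\alpha\bLamb_t(\gT_t-\gT)(\bm{\eta}_{t-1}-\bar{\bm{\eta}})=\alpha\bLamb_t(\gT_t-\gT)\bigl(\check{\bm{\Delta}}_{t-1}+(\check{\bm{\eta}}-\bar{\bm{\eta}})\bigr)$, not $\alpha\bLamb_t(\gT_t-\gT)(\check{\bm{\eta}}-\bar{\bm{\eta}})$; hence the claim that the variance of (i) scales only with $\|\check{\bm{\eta}}-\bar{\bm{\eta}}\|^2$ is not justified by your decomposition.
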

To summary, we have
\begin{align*}
    \norm{\bm{\Delta}_{\tepoch}}\leq&\norm{\check{\bm{\Delta}}_{\tepoch}}+\norm{\check{\bm{\eta}}-{\bm{\eta}}}\\
    \leq&\frac{1}{4}\norm{\check{\bDelta}_0}+\frac{3}{16}\sqrt{1-\gamma}\varepsilon\\
    \leq&\frac{1}{4}\norm{{\bDelta}_0}+\frac{1}{4}\norm{\check{\bm{\eta}}-{\bm{\eta}}}+\frac{3}{16}\sqrt{1-\gamma}\varepsilon\\
    \leq&\frac{1}{2}\max\brc{\norm{\bar{\bm{\eta}}-\bm{\eta}},\sqrt{1-\gamma}\varepsilon},
\end{align*}
which is the desired conclusion.
\end{proof}

\subsection{Proof of Lemma~\ref{lem:variance_reduction_large_error_regime_first_term}}\label{subsection:proof_variance_reduction_large_error_regime_first_term}
\begin{proof}
We can check that
\begin{align*}
    \norm{\text{(I)}_t}\leq&\norm{\sum_{k=1}^t\alpha\prn{\prod_{i=k+1}^t\prn{\gI-\alpha\bLamb_i}}\bLamb_k}\norm{\prn{\wtilde{\gT}-\gT}\bar{\bm{\eta}}}\\
    \leq&\norm{\sum_{k=1}^t\alpha\prn{\prod_{i=k+1}^t\prn{\bI-\alpha\bLamb_i}}\bLamb_k}\norm{\prn{\wtilde{\gT}-\gT}\bar{\bm{\eta}}}\\
    \leq& \norm{\prn{\wtilde{\gT}-\gT}\bar{\bm{\eta}}},
\end{align*}
where the last inequality holds due to \citep[Eqn.~(51b)][]{li2021sample}.
We only need to bound $\norm{\prn{\wtilde{\gT}-\gT}\bar{\bm{\eta}}}$,
\begin{align*}
    \prn{\wtilde{\gT}-\gT}\bar{\bm{\eta}}=&\prn{\sum_{i=0}^{N-1}\wtilde{\bLamb}_i}^{-1}\sum_{i=0}^{N-1}\wtilde{\bLamb}_i\prn{\wtilde\gT_i-\gT} \bar{\bm{\eta}}.
\end{align*}
Note that, for each $s\in\gS$, $\brk{\prn{\wtilde\gT_i-\gT} \bar{\bm{\eta}}}(s)$ is a mean-zero $\gM$-valued random element bounded by
\begin{align*}
    \norm{\brk{\prn{\wtilde\gT_i-\gT} \bar{\bm{\eta}}}(s)}\leq&\norm{\prn{\wtilde\gT_i-\gT} \bar{\bm{\eta}}}\\
    \leq&\norm{\prn{\wtilde\gT_i-\gT} \prn{\bar{\bm{\eta}}-\bm{\eta}}}+\norm{\prn{\wtilde\gT_i-\gT}{\bm{\eta}}}\\
    \leq& 2\norm{\bar{\bm{\eta}}-\bm{\eta}} +\frac{1}{\sqrt{1-\gamma}}.
\end{align*}
We can use the argument in \citep[Lemma~7][]{li2021sample} and replace the scalar version Hoeffding's inequality with the Hilbert space version to show that, with probability at least $1-\frac{\delta}{6}$
\begin{align*}
    \norm{\prn{\wtilde{\gT}-\gT}\bar{\bm{\eta}}}\leq\max_{s\in\gS}\sqrt{\frac{2\log\frac{12\abs{\gS}N}{\delta}}{N_s}}\prn{2\norm{\bar{\bm{\eta}}-\bm{\eta}} +\frac{1}{\sqrt{1-\gamma}}},
\end{align*}
where $N_s=\sum_{i=0}^{N-1}\ind\brc{\tilde s_i=s}$.
By a concentration inequality of empirical distributions of Markov chains \citep[Lemma~8][]{li2021sample}, when $N\geq \tframe$, we have with probability at least $1-\frac{\delta}{6}$, $N_s\geq \frac{1}{2}N\mu_{\min}$ for all $s\in\gS$.
Hence, with probability at least $1-\frac{\delta}{3}$
\begin{align*}
    \norm{\prn{\wtilde{\gT}-\gT}\bar{\bm{\eta}}}\leq2\sqrt{\frac{\log\frac{12\abs{\gS}N}{\delta}}{N\mu_{\min}}}\prn{2\norm{\bar{\bm{\eta}}-\bm{\eta}} +\frac{1}{\sqrt{1-\gamma}}}.
\end{align*}
\end{proof}

\subsection{Proof of Lemma~\ref{lem:variance_reduction_large_error_regime_second_term}}\label{subsection:proof_variance_reduction_large_error_regime_second_term}
\begin{proof}
For each $s\in\gS$, let $t_k(s)$ be the time when $s$ is visited for the $k$-th time, and $K_t(s):=\max\brc{k\colon t_k(s)\leq t}$ be the total number of times that $s$ is visited, then
\begin{equation*}
    \begin{aligned}
        &\brk{\sum_{k=1}^t\alpha\prn{\prod_{i=k+1}^t\prn{\gI-\alpha\bLamb_i}}\bLamb_k\prn{\gT_k-\gT}\prn{\bm{\eta}-\bar{\bm{\eta}}}}(s)\\
        &\qquad =\sum_{k=1}^{K_t(s)}\alpha\prn{1-\alpha}^{K_t(s)-k}\prn{\gT_{t_k(s)}(s)-\gT(s)}\prn{\bm{\eta}-\bar{\bm{\eta}}},
    \end{aligned}
\end{equation*}
in which 
\begin{equation*}
    \norm{\alpha\prn{1-\alpha}^{K_t(s)-k}\prn{\gT_{t_k(s)}(s)-\gT(s)}\prn{\bm{\eta}-\bar{\bm{\eta}}}}\leq 2\alpha\prn{1-\alpha}^{K_t(s)-k}\norm{\bm{\eta}-\bar{\bm{\eta}}}.
\end{equation*}
Now, we can use the argument in \citep[Lemma~1][]{li2021sample} and replace the scalar version Hoeffding's inequality with the Hilbert space version to show that, with probability at least $1-\frac{\delta}{3}$, for all $t\in[\tepoch]$
\begin{equation*}
    \begin{aligned}
        &\norm{\sum_{k=1}^t\alpha\prn{\prod_{i=k+1}^t\prn{\gI-\alpha\bLamb_i}}\bLamb_k\prn{\gT_k-\gT}\prn{\bm{\eta}-\bar{\bm{\eta}}}}\leq 3\sqrt{\alpha \log\frac{6\abs{\gS}\tepoch}{\delta}}\norm{\bm{\eta}-\bar{\bm{\eta}}}.
    \end{aligned}
\end{equation*}
\end{proof}

\subsection{Proof of Lemma~\ref{lem:variance_reduction_small_error_regime_eta_hat_term}}\label{subsection:proof_variance_reduction_small_error_regime_eta_hat_term}
\begin{proof}
Using the fixed point property, we have
\begin{equation*}
    \begin{aligned}
        \abs{\check{\bm{\eta}}-{\bm{\eta}}}=&\abs{\gT\check{\bm{\eta}}+\prn{\wtilde{\gT}-\gT}\bar{\bm{\eta}}-\gT{\bm{\eta}}}\\
        \leq&\abs{\gT\prn{\check{\bm{\eta}}-{\bm{\eta}}}}+\abs{\prn{\wtilde{\gT}-\gT}\bar{\bm{\eta}}}\\
        \leq& \sqrt{\gamma}\bP \abs{\check{\bm{\eta}}-{\bm{\eta}}}+\abs{\prn{\wtilde{\gT}-\gT}\bar{\bm{\eta}}}\\
        \leq& \sum_{k=0}^\infty \prn{\sqrt{\gamma}\bP}^k\abs{\prn{\wtilde{\gT}-\gT}\bar{\bm{\eta}}}\\
        =&\prn{\bI-\sqrt{\gamma}\bP}^{-1}\abs{\prn{\wtilde{\gT}-\gT}\bar{\bm{\eta}}}.
    \end{aligned}
\end{equation*}
Hence, we only need to deal with $\abs{\prn{\wtilde{\gT}-\gT}\bar{\bm{\eta}}}$.
Using the triangle inequality, 
\begin{equation*}
    \begin{aligned}
       \abs{\prn{\wtilde{\gT}-\gT}\bar{\bm{\eta}}}\leq\abs{\prn{\wtilde{\gT}-\gT}\prn{\bar{\bm{\eta}}-\bm{\eta}}}+\abs{\prn{\wtilde{\gT}-\gT}{\bm{\eta}}},
    \end{aligned}
\end{equation*}
we need bound the above two terms.
Following the proof of Lemma~\ref{lem:variance_reduction_large_error_regime_first_term} in Appendix~\ref{subsection:proof_variance_reduction_large_error_regime_first_term}, we have the following upper bound for the first term, with probability at least $1-\frac{\delta}{2}$
\begin{equation*}
    \begin{aligned}
       \norm{\prn{\wtilde{\gT}-\gT}\prn{\bar{\bm{\eta}}-\bm{\eta}}}\leq& 2\sqrt{\frac{\log\frac{4\abs{\gS}N}{\delta}}{N\mu_{\min}}}\norm{\bar{\bm{\eta}}-\bm{\eta}}\leq 2\sqrt{\frac{\log\frac{4\abs{\gS}N}{\delta}}{N\mu_{\min}}},
    \end{aligned}
\end{equation*}
where we used the assumption that $\norm{\bar{\bm{\eta}}-\bm{\eta}}\leq 1$ holds in the Phase 2.
As for the second term, one can use the argument in \citep[Eqn.~(94-96)][]{li2021sample} and replace the scalar version Bernstein’s
inequality with the Hilbert space version to show that,  with probability at least $1-\frac{\delta}{2}$
\begin{equation*}
    \begin{aligned}
       \abs{\prn{\wtilde{\gT}-\gT}{\bm{\eta}}}\leq 2\sqrt{\frac{2\log\frac{4\abs{\gS}N}{\delta}}{N\mu_{\min}}\bm{\sigma}\prn{\bm{\eta}}}+\frac{8\log\frac{4\abs{\gS}N}{\delta}}{3\sqrt{1-\gamma}N\mu_{\min}}.
    \end{aligned}
\end{equation*}
To summarize, we have
\begin{equation*}
    \begin{aligned}
        \abs{\check{\bm{\eta}}-{\bm{\eta}}}\leq&\prn{\bI-\sqrt{\gamma}\bP}^{-1}\abs{\prn{\wtilde{\gT}-\gT}\bar{\bm{\eta}}}\\
        \leq&\prn{\bI-\sqrt{\gamma}\bP}^{-1}\prn{\norm{\prn{\wtilde{\gT}-\gT}\prn{\bar{\bm{\eta}}-\bm{\eta}}}\bm{1}+\abs{\prn{\wtilde{\gT}-\gT}{\bm{\eta}}}}\\
        \leq&2\sqrt{\frac{\log\frac{4\abs{\gS}N}{\delta}}{(1-\sqrt{\gamma})^2N\mu_{\min}}}+\frac{8\log\frac{4\abs{\gS}N}{\delta}}{3(1-\sqrt{\gamma})\sqrt{1-\gamma}N\mu_{\min}}+2\prn{\bI-\sqrt{\gamma}\bP}^{-1}\sqrt{\frac{2\log\frac{4\abs{\gS}N}{\delta}}{N\mu_{\min}}\bm{\sigma}\prn{\bm{\eta}}}\\
        \leq&5\sqrt{\frac{\log\frac{4\abs{\gS}N}{\delta}}{(1-\sqrt{\gamma})^2N\mu_{\min}}}+2\sqrt{\frac{2\log\frac{4\abs{\gS}N}{\delta}}{(1-\sqrt{\gamma})N\mu_{\min}}\prn{\bI-\sqrt{\gamma}\bP}^{-1}\bm{\sigma}\prn{\bm{\eta}}}\\
        \leq& 11 \sqrt{\frac{\log\frac{4\abs{\gS}N}{\delta}}{(1-\sqrt\gamma)^2N\mu_{\min}}},
    \end{aligned}
    \end{equation*}
where in the fourth inequality, we used $N\geq \frac{C\log\frac{\abs{\gS}N}{\delta}}{(1-\sqrt\gamma)^2(1-\gamma)\mu_{\min}}$ and Jensen's inequality; and in the last inequality, we used Lemma~\ref{lem:sigma_fine_upper_bound}.
And the conclusion follows if we substitute $N$ into the upper bound.
\end{proof}

\subsection{Proof of Lemma~\ref{lem:variance_reduction_small_error_regime_check_delta_term}}\label{subsection:proof_variance_reduction_small_error_regime_check_delta_term}
\begin{proof}
For any $t\in[\tepoch]$,
\begin{align*}
    \check{\bm{\Delta}}_t&={\bm{\eta}}_t-\check{\bm{\eta}}\\
    &=\prn{\gI-\alpha\bLamb_t}{\check{\bm{\Delta}}}_{t-1}+\alpha\bLamb_t\prn{\gT_t{\bm{\eta}}_{t-1}-\gT_t\bar{\bm{\eta}}+\wtilde\gT\bar{\bm{\eta}} - \check{\bm{\eta} }}\\
        &=\prn{\gI-\alpha\bLamb_t}{\check{\bm{\Delta}}}_{t-1}+\alpha\bLamb_t\prn{\gT_t{\bm{\eta}}_{t-1}-\gT_t\bar{\bm{\eta}}+\wtilde\gT\bar{\bm{\eta}}-\gT\check{\bm{\eta}}-\wtilde{\gT}\bar{\bm{\eta}}+\gT\bar{\bm{\eta}}}\\
    &=\prn{\gI-\alpha\bLamb_t}{\check{\bm{\Delta}}}_{t-1}+\alpha\bLamb_t\prn{\gT_t\prn{{\bm{\eta}}_{t-1}-\bar{\bm{\eta}}}-\gT\prn{\check{\bm{\eta}}-\bar{\bm{\eta}}}}\\
    &=\prn{\gI-\alpha\bLamb_t}{\check{\bm{\Delta}}}_{t-1}+\alpha\bLamb_t\prn{\prn{\gT_t-\gT}\prn{\check{\bm{\eta}}-\bar{\bm{\eta}}}+\gT_t\check{\bm{\Delta}}_{t-1}}.
\end{align*}
Applying it recursively, we can further decompose the error into three terms:
\begin{align*}
    \check{\bm{\Delta}}_t=&\brc{\sum_{k=1}^t\alpha\prn{\prod_{i=k+1}^t\prn{\gI-\alpha\bLamb_i}}\bLamb_k\prn{\gT_k-\gT}\prn{\check{\bm{\eta}}-\bar{\bm{\eta}}}}\\
    &+\brc{\sum_{k=1}^t\alpha\prn{\prod_{i=k+1}^t\prn{\gI-\alpha\bLamb_i}}\bLamb_k\gT_k\check{\bm{\Delta}}_{k-1}}+\brc{\prod_{k=1}^t\prn{\gI-\alpha\bLamb_k}\check{\bm{\Delta}}_0}\\
    =&\text{(I)}_t+\text{(II)}_t+\text{(III)}_t.
\end{align*}
For Term (I), following the proof of Lemma~\ref{lem:variance_reduction_large_error_regime_second_term} in Appendix~\ref{subsection:proof_variance_reduction_large_error_regime_second_term}, we have with probability at least $1-\frac{\delta}{2}$
\begin{equation*}
    \begin{aligned}
            &\norm{\text{(I)}_t}\leq 3\sqrt{\alpha \log\frac{4\abs{\gS}\tepoch}{\delta}}\norm{\check{\bm{\eta}}-\bar{\bm{\eta}}}.
    \end{aligned}
\end{equation*}
For Term (II), we can check that
\begin{equation*}
    \begin{aligned}
            &\abs{\text{(II)}_t}\leq\sqrt{\gamma}\sum_{k=1}^t\norm{\check{\bm{\Delta}}_{k-1}}\alpha\prn{\prod_{i=k+1}^t\prn{\bI-\alpha\bLamb_i}}\bLamb_k\bm{1}.
    \end{aligned}
\end{equation*}
For Term (III), when $t<\tframe$, we use a trivial upper bound $\norm{\text{(III)}_t}\leq\norm{\check{\bm{\Delta}}_0}$.
And using a concentration inequality of empirical distributions of Markov chains \citep[Lemma~8][]{li2021sample}, one can show that, for any $\delta\in(0,1)$, we have with probability at least $1-\frac{\delta}{2}$, for all $\tframe\leq t\leq \tepoch$, $\norm{\text{(III)}_t}\leq\prn{1-\alpha}^{\frac{1}{2}t\mu_{\min}}\norm{\check{\bm{\Delta}}_0}$.

Following the proof in Phase 1 with $\xi$ replaced with $\frac{1}{8}\sqrt{1-\gamma}\varepsilon$, we have with probability at least $1-\delta$, for all $t\in[\tepoch]$
\begin{align*}
    \norm{\check{\bDelta}_t}\leq&(1-\beta)^{k}\frac{\norm{\check{\bDelta}_0}}{1-\sqrt \gamma}+\frac{3}{1-\sqrt{\gamma}}\sqrt{\alpha \log\frac{8\abs{\gS}\tepoch}{\delta}}\norm{\check{\bm{\eta}}-\bar{\bm{\eta}}}+\frac{1}{8}\sqrt{1-\gamma}\varepsilon,
\end{align*}
where $k=\prn{\lfloor \frac{t- t_{\operatorname{th},\xi}}{\tframe} \rfloor}_{+}$, and $t_{\operatorname{th},\xi}$ is defined as
\begin{equation*}
    t_{\operatorname{th},\xi}:=\max\brc{\frac{2\log\frac{4}{\xi(1-\sqrt{\gamma})\sqrt{1-\gamma}}}{\alpha\mu_{\min}},\tframe}.
\end{equation*}
If we take $\alpha=c\min\brc{\frac{(1-\sqrt\gamma)^2}{\log\frac{\abs{\gS}\tepoch}{\delta}},\frac{1}{\muframe}}$ for some sufficiently small universal constant $c>0$, we have
\begin{align*}
    \frac{3}{1-\sqrt{\gamma}}\sqrt{\alpha \log\frac{8\abs{\gS}\tepoch}{\delta}}\norm{\check{\bm{\eta}}-\bar{\bm{\eta}}}\leq\frac{1}{8}\norm{\check{\bm{\eta}}-\bar{\bm{\eta}}}.
\end{align*}
And if we take $\tepoch\geq \tframe+t_{\operatorname{th},\xi}+\frac{C^\prime\log\frac{1}{1-\sqrt\gamma}}{(1-\sqrt\gamma)\alpha \mu_{\min}}$ for some sufficiently large universal constant $C^\prime>0$, we have
\begin{align*}
    \prn{1-\beta}^k\leq\frac{1}{8}\prn{1-\sqrt \gamma}.
\end{align*}
Consequently,
\begin{align*}
    \norm{\check{\bDelta}_{\tepoch}}\leq&\frac{1}{8}\norm{\check{\bDelta}_0}+\frac{1}{8}\norm{\check{\bm{\eta}}-\bar{\bm{\eta}}}+\frac{1}{8}\sqrt{1-\gamma}\varepsilon\\
    \leq&\frac{1}{4}\norm{\check{\bDelta}_0}+\frac{1}{8}\sqrt{1-\gamma}\varepsilon,
\end{align*}
where we used $\check{\bDelta}_0=\bar{\bm{\eta}}-\check{\bm{\eta}}$.
\end{proof}

\section{Stochastic Distributional Bellman Equation and Operator}
In this section, we use the same notations as in Appendix~\ref{Appendix_proof} and only consider the {\NTD} setting.
Inspired by stochastic categorical CDF Bellman operator introduced in \citep{rowland2024near} used in the categorical setting, we introduce stochastic distributional Bellman operator $\sT\colon\Delta\prn{\sP^\gS}\to\Delta\prn{\sP^\gS}$ to derive an upper bound for $\norm{(\bI-\gamma \bP)^{-1}\bm{\sigma}({\bm{\eta}})}$ in the case of {\NTD}.
For any $\bm{\phi}\in\Delta\prn{\sP^\gS}$, we denote $\bm{\eta}_{\bm{\phi}}$ be the random element in $\sP^\gS$ with law $\bm{\phi}$.  
\begin{equation*}
    \sT\bm{\phi}:=\operatorname{Law}\prn{\what{\gT}\bm{\eta}_{\bm{\phi}}},
\end{equation*}
where $(\what{\gT}\bm{\eta}_{\bm{\phi}})(\omega):=(\what{\gT})(\omega)(\bm{\eta}_{\bm{\phi}})(\omega)\in\sP^\gS$ for any $\omega\in\Omega$, $\Omega$ is the corresponding probability space, and $\what{\gT}$ is independent of $\bm{\eta}_{\bm{\phi}}$.
In this part, $\what{\gT}$ does not consist of $\Pi_K$ since we only consider the {\NTD} setting.

We consider the $1$-Wasserstein metric $W_1$ on $\Delta\prn{\sP^\gS}$, the space of all probability measures on the space $\prn{\sP^\gS,\bar{\ell}_2}$.
Since $\prn{\sP^\gS,\bar{\ell}_2}$ is Polish (complete and separable), the space $\prn{\Delta\prn{\sP^\gS},W_1}$ is also Polish \citep[see Theorem 6.18 in][]{villani2009optimal}.

\begin{proposition}\label{prop:SDBO_contraction}
The stochastic distributional Bellman operator $\sT$ is a $\sqrt{\gamma}$-contraction on $\Delta\prn{\sP^\gS}$, \ie, for any $\bm{\phi},\bm{\phi}^\prime\in\Delta\prn{\sP^\gS}$, we have
\begin{equation*}
    W_1\prn{\sT\bm{\phi},\sT\bm{\phi}^\prime}\leq\sqrt{\gamma} W_1\prn{\bm{\phi},\bm{\phi}^\prime}.
\end{equation*}
\end{proposition}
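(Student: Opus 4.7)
The plan is to establish the contraction by a standard coupling argument that transports the contractivity of the random operator $\what{\gT}$ (acting on the linear space $\gM^\gS$) up to the level of laws on $\sP^\gS$. Concretely, I would first invoke the fact that $(\sP^\gS,\bar{\ell}_2)$ is a compact Polish space, whose diameter is bounded by $(1-\gamma)^{-1/2}$, so that by standard optimal transport theory (e.g.\ \citep{villani2009optimal}, Theorem~4.1) there exists an optimal coupling $\kappa\in\Gamma(\bm\phi,\bm\phi^\prime)$ attaining $W_1(\bm\phi,\bm\phi^\prime)=\EB_{(X,X^\prime)\sim\kappa}[\bar{\ell}_2(X,X^\prime)]$. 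I would realize a pair $(X,X^\prime)\sim\kappa$ jointly with a copy of $\what{\gT}$ that is independent of $(X,X^\prime)$, so that $(\what{\gT}X,\what{\gT}X^\prime)$ is a valid coupling of $\sT\bm\phi$ and $\sT\bm\phi^\prime$. By the dual/primal characterization of $W_1$ this yields
\[
W_1(\sT\bm\phi,\sT\bm\phi^\prime)\leq\EB\brk{\bar{\ell}_2(\what{\gT}X,\what{\gT}X^\prime)}.
\]

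The next step reduces the right-hand side to a norm on $\gM^\gS$. Since $X,X^\prime\in\sP^\gS$, their difference $X-X^\prime$ lies pointwise in $\gM$, so the push-forward through the affine maps $b_{\hat r(s),\gamma}$ extends linearly and we have $\what{\gT}X-\what{\gT}X^\prime=\what{\gT}(X-X^\prime)$, giving $\bar{\ell}_2(\what{\gT}X,\what{\gT}X^\prime)=\norm{\what{\gT}(X-X^\prime)}_{\bar{\ell}_2}$. The crucial ingredient is then the almost sure bound $\norm{\what{\gT}\xi}_{\bar{\ell}_2}\leq\sqrt{\gamma}\norm{\xi}_{\bar{\ell}_2}$ for every deterministic $\xi\in\gM^\gS$. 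This follows from a direct CDF calculation: for a fixed realization of the samples, $[\what{\gT}\xi](s)=(b_{\hat r(s),\gamma})_{\#}\xi(\hat s^\prime(s))$, hence $F_{[\what{\gT}\xi](s)}(x)=F_{\xi(\hat s^\prime(s))}((x-\hat r(s))/\gamma)$; changing variables, combined with the fact that $F_\nu$ vanishes outside $[0,(1-\gamma)^{-1}]$ for $\nu\in\gM$ (since $\nu$ has total mass $0$ and support in this interval), yields $\norm{[\what{\gT}\xi](s)}_{\ell_2}^2=\gamma\norm{\xi(\hat s^\prime(s))}_{\ell_2}^2\leq\gamma\norm{\xi}_{\bar{\ell}_2}^2$, and a maximum over $s$ gives the claim. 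This is nothing more than the pathwise analogue of Proposition~\ref{Proposition_extention_of_Bellman_operator}.

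Finally, because $\what{\gT}$ is independent of $(X,X^\prime)$, Fubini gives
\[
\EB\brk{\norm{\what{\gT}(X-X^\prime)}_{\bar{\ell}_2}}=\EB\brk{\EB\brk{\norm{\what{\gT}(X-X^\prime)}_{\bar{\ell}_2}\,\big|\,X,X^\prime}}\leq\sqrt{\gamma}\EB\brk{\norm{X-X^\prime}_{\bar{\ell}_2}}=\sqrt{\gamma}W_1(\bm\phi,\bm\phi^\prime),
\]
which closes the argument.

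The principal technical obstacle I anticipate is not the contraction computation itself (which is essentially a change of variables) but the bookkeeping required to guarantee that $\what{\gT}$ is a measurable random linear operator on $\gM^\gS$, so that $\omega\mapsto\what{\gT}(\omega)X(\omega)$ defines a bona fide $\sP^\gS$-valued random element and Fubini applies to $\norm{\what{\gT}(X-X^\prime)}_{\bar{\ell}_2}$. Separability of $(\gM^\gS,\bar{\ell}_2)$ and of $(\sP^\gS,\bar{\ell}_2)$ (noted in the remark after Proposition~\ref{Proposition_extention_of_Bellman_operator}) makes this routine but should be stated carefully. Everything else is a direct consequence of the coupling inequality plus the pathwise $\sqrt{\gamma}$-Lipschitz property.
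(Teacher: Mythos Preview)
Your proposal is correct and follows essentially the same approach as the paper: take an optimal coupling $\kappa^\star\in\Gamma(\bm\phi,\bm\phi^\prime)$, push both marginals forward by the same independent realization of $\what{\gT}$ to obtain a coupling of $\sT\bm\phi$ and $\sT\bm\phi^\prime$, and then use the pathwise $\sqrt{\gamma}$-contraction of $\what{\gT}$ in $\bar{\ell}_2$ to conclude. Your version is more explicit about the change-of-variables computation and the measurability bookkeeping, but the argument is the same.
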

\begin{proof}
    Let $\bm{\kappa}^\star\in\Gamma\prn{\bm{\phi},\bm{\phi}^\prime}$ be the optimal coupling between $\bm{\phi}$ and $\bm{\phi}^\prime$.
    The existence of $\bm{\kappa}^\star$ is guaranteed by \citep[Theorem 4.1][]{villani2009optimal}.
    And let the random element $\bm{\xi}=(\bm{\xi}_1,\bm{\xi}_2)$ in $\prn{\sP^\gS}^2$ has the law $\bm{\kappa}^\star$, where $\bm{\xi}_1$ and $\bm{\xi}_2$ are both random elements in $\sP^\gS$.
    We denote $\sT\bm{\kappa}^\star:=\operatorname{Law}\brk{\prn{\what{\gT}\bm{\xi}_1,\what{\gT}\bm{\xi}_2}}\in\Gamma\prn{\sT\bm{\phi},\sT\bm{\phi}^\prime}$.
    \begin{equation*}
        \begin{aligned}
            W_1\prn{\sT\bm{\phi},\sT\bm{\phi}^\prime}&=\inf_{\bm{\kappa}\in\Gamma\prn{\sT\bm{\phi},\sT\bm{\phi}^\prime}}\int_{\prn{\sP^\gS}^2}\bar{\ell}_2\prn{\xi,\xi^\prime}\bm{\kappa}\prn{d\xi,d\xi^\prime}    \\
            &\leq \int_{\prn{\sP^\gS}^2}\bar{\ell}_2\prn{\xi,\xi^\prime}\sT\bm{\kappa}^\star\prn{d\xi,d\xi^\prime}\\
            &= \EB\brk{ \bar{\ell}_2\prn{\what{\gT}\bm{\xi}_1 ,\what{\gT}\bm{\xi}_2  }  }\\
            &\leq\sqrt{\gamma}\EB\brk{ \bar{\ell}_2\prn{\bm{\xi}_1 ,\bm{\xi}_2  }  }\\
            &=\sqrt{\gamma}\int_{\prn{\sP^\gS}^2}\bar{\ell}_2\prn{\xi,\xi^\prime}\bm{\kappa}^\star\prn{d\xi,d\xi^\prime}\\
            &=\sqrt{\gamma}\inf_{\bm{\kappa}\in\Gamma\prn{\bm{\phi},\bm{\phi}^\prime}}\int_{\prn{\sP^\gS}^2}\bar{\ell}_2\prn{\xi,\xi^\prime}\bm{\kappa}\prn{d\xi,d\xi^\prime}\\
            &=\sqrt{\gamma}W_1\prn{\bm{\phi},\bm{\phi}^\prime}.
        \end{aligned}
    \end{equation*}
\end{proof}
By the proposition and contraction mapping theorem, there exists a unique fixed point of $\sT$, we denote $\bm{\psi}\in\Delta\prn{\sP^\gS}$ as the fixed point.
Hence, the stochastic distributional Bellman equation reads
\begin{equation*}
    \bm{\psi}=\sT\bm{\psi}.
\end{equation*}
We denote by $\bm{\eta}_{\bm{\psi}}$  the random element in $\sP$ with law $\bm{\psi}$, then $\what{\gT}\bm{\eta}_{\bm{\psi}}$ and $\bm{\eta}_{\bm{\psi}}$ have the same law.
As shown in the following proposition, $\bm{\eta}_{\bm{\psi}}$ can be regarded as a noisy version of $\bm{\eta}$.
\begin{proposition} It holds that
\[
\EB\brk{\bm{\eta}_{\bm{\psi}}}=\bm{\eta},
\]
where the expectation is regarded as the Bochner integral in the space of all finite measures on $\sP^\gS$, which is a normed linear space equipped with the Cram\'er metric as its norm.
\end{proposition}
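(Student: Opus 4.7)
The plan is to exploit the fixed-point identity $\bm{\psi} = \sT\bm{\psi}$, which in terms of the random element reads $\bm{\eta}_{\bm{\psi}} \stackrel{d}{=} \what{\gT}\bm{\eta}_{\bm{\psi}}$ with $\what{\gT}$ independent of $\bm{\eta}_{\bm{\psi}}$. Taking Bochner expectations of both sides in the Banach space of finite signed measures on $\sP^\gS$ (equipped with the Cram\'er norm), the LHS becomes $\EB[\bm{\eta}_{\bm{\psi}}]$, while the RHS can be handled via the tower property and independence:
\begin{equation*}
    \EB[\what{\gT}\bm{\eta}_{\bm{\psi}}] = \EB\brk{\EB[\what{\gT}\bm{\eta}_{\bm{\psi}} \mid \bm{\eta}_{\bm{\psi}}]} = \EB[\gT \bm{\eta}_{\bm{\psi}}] = \gT \EB[\bm{\eta}_{\bm{\psi}}].
\end{equation*}
The inner expectation collapses to $\gT \bm{\eta}_{\bm{\psi}}$ because, conditional on $\bm{\eta}_{\bm{\psi}}$, the operator $\what{\gT}$ is an unbiased estimator of $\gT$ applied to a fixed argument; the outer expectation then passes through $\gT$ because $\gT$ is a bounded linear operator on $\gM^\gS$ (Proposition~\ref{Proposition_extention_of_Bellman_operator}), and Bochner integration commutes with such operators.

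Once the identity $\EB[\bm{\eta}_{\bm{\psi}}] = \gT\, \EB[\bm{\eta}_{\bm{\psi}}]$ is in hand, I would observe that $\EB[\bm{\eta}_{\bm{\psi}}] \in \sP^\gS$: for each $s$, the coordinate $\EB[\eta_{\bm{\psi}}(s)]$ is the mean of random probability measures on $[0,(1-\gamma)^{-1}]$, hence itself a probability measure supported on the same interval. Since $\gT^\pi$ is a $\gamma$-contraction on $\sP^\gS$ with respect to $\bar{W}_1$ (Proposition~\ref{Proposition_contraction_Wp}), its fixed point in $\sP^\gS$ is unique and equals $\bm{\eta}^\pi = \bm{\eta}$. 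Therefore $\EB[\bm{\eta}_{\bm{\psi}}] = \bm{\eta}$.

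The main technical point to justify carefully is the interchange $\EB[\what{\gT}\bm{\eta}_{\bm{\psi}} \mid \bm{\eta}_{\bm{\psi}}] = \gT \bm{\eta}_{\bm{\psi}}$: conditional on $\bm{\eta}_{\bm{\psi}} = \bm{\xi}$, by independence the conditional law of $\what{\gT}\bm{\xi}$ matches that of $\what{\gT}\bm{\xi}$ with $\bm{\xi}$ deterministic, and a direct computation using the definition of $\gT^\pi_t$ in Eqn.~\eqref{eq:empirical_op} shows $\EB[\what{\gT}\bm{\xi}] = \gT\bm{\xi}$ for any deterministic $\bm{\xi} \in \sP^\gS$. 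Everything else (existence of the Bochner integrals, applicability of the tower property in this Banach-valued setting, and commutation of bounded linear operators with Bochner integration) follows from standard separability of $\gM^\gS$ established in the appendix and standard Banach-space probability; these will be invoked rather than reproved.
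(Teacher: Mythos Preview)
Your proposal is correct and follows essentially the same approach as the paper: use the fixed-point identity $\bm{\eta}_{\bm{\psi}}\stackrel{d}{=}\what{\gT}\bm{\eta}_{\bm{\psi}}$, apply the tower property with independence to obtain $\EB[\bm{\eta}_{\bm{\psi}}]=\gT\,\EB[\bm{\eta}_{\bm{\psi}}]$, and conclude by uniqueness of the fixed point of $\gT$. If anything, you are more explicit than the paper about why $\EB[\bm{\eta}_{\bm{\psi}}]\in\sP^\gS$ and about the justifications for commuting expectations with bounded linear operators.
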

\begin{proof}
    \begin{equation*}
        \begin{aligned}
\EB\brk{\bm{\eta}_{\bm{\psi}}}&=\EB\brk{\what\gT\bm{\eta}_{\bm{\psi}}}=\EB\brc{\EB\brk{\what\gT\bm{\eta}_{\bm{\psi}}\Big|\bm{\eta}_{\bm{\psi}}}}=\EB\brk{\gT\bm{\eta}_{\bm{\psi}}}=\gT\EB\brk{\bm{\eta}_{\bm{\psi}}},
        \end{aligned}
    \end{equation*}
    where we used $\what{\gT}$ is independent of $\bm{\eta}_{\bm{\psi}}$.
    Since $\EB\brk{\bm{\eta}_{\bm{\psi}}}$ is the fixed point of $\gT$, we have $\EB\brk{\bm{\eta}_{\bm{\psi}}}=\bm{\eta}$.
\end{proof}
Based on the concepts of $\sT$ and $\bm{\psi}$, we can obtain the following second order distributional Bellman equation, which is similar to the classic second-order Bellman equation \citep[see Lemma 7 in][]{gheshlaghi2013minimax}.

Recall the one-step Cram\'er variation $\bm{\sigma}({\bm{\eta}})=\prn{\EB\brk{\norm{\prn{\what{\gT}{\bm{\eta}}}(s)-\eta(s) }^2}}_{s\in\gS}\in\RB^\gS$ used in the {\NTD} setting.
We denote $\bm{\sigma}:=\bm{\sigma}({\bm{\eta}})$ for simplicity, and $\bm{\Sigma}:=\prn{\EB\brk{\norm{\bm{\eta}_{\bm{\psi}}(s)-\eta(s)}^2}}_{s\in\gS}\in\RB^\gS$.
\begin{proposition}[Second order distributional Bellman equation]\label{prop:second_order_dist_Bellman_eq} The following relationship is satisfied: 
\begin{equation*}
\bm{\Sigma}=\bm{\sigma}+\gamma \bP \bm{\Sigma}.
\end{equation*}
\end{proposition}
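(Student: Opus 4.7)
The plan is to exploit the fixed point identity $\bm{\eta}_{\bm{\psi}} \stackrel{d}{=} \what{\gT}\bm{\eta}_{\bm{\psi}}$ (with $\what{\gT}$ independent of $\bm{\eta}_{\bm{\psi}}$) and perform an orthogonal decomposition of the error $(\what{\gT}\bm{\eta}_{\bm{\psi}})(s) - \eta(s)$ inside the Hilbert space $\gM$. Specifically, for each $s \in \gS$, I would write
\begin{equation*}
(\what{\gT}\bm{\eta}_{\bm{\psi}})(s) - \eta(s) = \underbrace{(\what{\gT}\bm{\eta}_{\bm{\psi}})(s) - (\what{\gT}\bm{\eta})(s)}_{=:A_s} \;+\; \underbrace{(\what{\gT}\bm{\eta})(s) - \eta(s)}_{=:B_s},
\end{equation*}
so that $\bm{\Sigma}(s) = \EB\|A_s + B_s\|^2 = \EB\|A_s\|^2 + 2\EB\inner{A_s}{B_s}_{\ell_2} + \EB\|B_s\|^2$. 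The term $\EB\|B_s\|^2$ is $\bm{\sigma}(s)$ by definition, so the whole task is to identify the first term as $\gamma(\bP\bm{\Sigma})(s)$ and to show that the cross term vanishes.

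For the quadratic term, I would unfold the empirical Bellman operator: letting $(\hat{a}_s,\hat{r}_s,\hat{s}_s')$ denote the sampled tuple used to build $(\what{\gT}\cdot)(s)$, we have $A_s = (b_{\hat{r}_s,\gamma})_\#\bigl(\eta_{\bm{\psi}}(\hat{s}_s') - \eta(\hat{s}_s')\bigr)$. The change-of-variables identity $\|(b_{r,\gamma})_\#\nu\|_{\ell_2}^2 = \gamma\|\nu\|_{\ell_2}^2$ (which follows from $F_{(b_{r,\gamma})_\#\nu}(x) = F_\nu((x-r)/\gamma)$ and a substitution) then gives $\|A_s\|^2 = \gamma\|\eta_{\bm{\psi}}(\hat{s}_s') - \eta(\hat{s}_s')\|^2$. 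Taking expectations and using independence of $\bm{\eta}_{\bm{\psi}}$ from $(\hat{r}_s,\hat{s}_s')$ yields $\EB\|A_s\|^2 = \gamma\sum_{s'}P^\pi(s'|s)\bm{\Sigma}(s') = \gamma(\bP\bm{\Sigma})(s)$.

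The main obstacle will be rigorously dispatching the cross term $\EB\inner{A_s}{B_s}_{\ell_2}$. Here I would condition on the sigma-algebra generated by $\what{\gT}$ (equivalently, by $(\hat{r}_s,\hat{s}_s')$). Because $\bm{\eta}_{\bm{\psi}}$ is independent of $\what{\gT}$ and the inner product is continuous and bilinear in its first slot, one can interchange the Bochner integral with the inner product to obtain
\begin{equation*}
\EB\bigl[\inner{A_s}{B_s}_{\ell_2}\,\big|\,\what{\gT}\bigr] = \inner{(b_{\hat{r}_s,\gamma})_\#\bigl(\EB[\eta_{\bm{\psi}}(\hat{s}_s')] - \eta(\hat{s}_s')\bigr)}{B_s}_{\ell_2}.
\end{equation*}
The previous proposition (the $\EB[\bm{\eta}_{\bm{\psi}}] = \bm{\eta}$ identity, evaluated component-wise at $s' = \hat{s}_s'$) then forces the first argument to be the zero measure, killing the cross term. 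Combining the three computations gives $\bm{\Sigma}(s) = \bm{\sigma}(s) + \gamma(\bP\bm{\Sigma})(s)$ for every $s$, which is the claim.
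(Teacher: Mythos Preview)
Your proposal is correct and follows essentially the same route as the paper's proof: the same orthogonal decomposition $(\what{\gT}\bm{\eta}_{\bm{\psi}})(s)-\eta(s)=A_s+B_s$, the cross term killed by conditioning on $\what{\gT}$ and invoking $\EB[\bm{\eta}_{\bm{\psi}}]=\bm{\eta}$, and the identification $\EB\|A_s\|^2=\gamma(\bP\bm{\Sigma})(s)$ via the scaling identity $\|(b_{r,\gamma})_\#\nu\|_{\ell_2}^2=\gamma\|\nu\|_{\ell_2}^2$. The only cosmetic difference is that the paper writes out the $\gamma$-scaling step as an explicit CDF integral rather than citing the change-of-variables identity.
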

\begin{proof}
For any $s\in\gS$,
    \begin{equation}\label{eq:expansion_Sigma}
        \begin{aligned}
\Sigma(s)=&\EB\brk{\norm{\bm{\eta}_{\bm{\psi}}(s)-\eta(s)}^2}\\
=&\EB\brk{\norm{\prn{\what{\gT}\bm{\eta}_{\bm{\psi}}}(s)-\eta(s)}^2}\\
=&\EB\brk{\norm{\prn{\what{\gT}\bm{\eta}_{\bm{\psi}}}(s)-\prn{\what{\gT}{\bm{\eta}}}(s)+\prn{\what{\gT}{\bm{\eta}}}(s)-\eta(s)}^2}\\
=&\EB\brk{\norm{\prn{\what{\gT}{\bm{\eta}}}(s)  {-}  \eta(s)}^2}  {+}  \EB\brk{\norm{\prn{\what{\gT}\bm{\eta}_{\bm{\psi}}}(s)  {-}  \prn{\what{\gT}{\bm{\eta}}}(s)}^2},
        \end{aligned}
    \end{equation}
where the last equality holds since the cross term is zero as below
\begin{equation*}
    \begin{aligned}
        &\EB\brk{\inner{\prn{\what{\gT}{\bm{\eta}}}(s)-\eta(s)}{\prn{\what{\gT}\bm{\eta}_{\bm{\psi}}}(s)-\prn{\what{\gT}{\bm{\eta}}}(s)}}\\
        &\qquad=\EB\brc{\EB\brk{\inner{\prn{\what{\gT}{\bm{\eta}}}(s)-\eta(s)}{\prn{\what{\gT}\bm{\eta}_{\bm{\psi}}}(s)-\prn{\what{\gT}{\bm{\eta}}}(s)}\Big|\what{\gT}}}\\
        &\qquad=\EB\brc{\inner{\prn{\what{\gT}{\bm{\eta}}}(s)-\eta(s)}{\EB\brk{\prn{\what{\gT}\bm{\eta}_{\bm{\psi}}}(s)\Big|\what{\gT}}-\prn{\what{\gT}{\bm{\eta}}}(s)}}\\
        &\qquad=\EB\brk{\inner{\prn{\what{\gT}{\bm{\eta}}}(s)-\eta(s)}{\bm{0}}}\\
        &\qquad=0.
    \end{aligned}
\end{equation*}
The first term in the last line of \eqref{eq:expansion_Sigma} is $\bm{\sigma}(s)$, we need to deal with the second term.
\begin{small}
\begin{equation*}
    \begin{aligned}
        &\EB\brk{\norm{\prn{\what{\gT}\bm{\eta}_{\bm{\psi}}}(s)-\prn{\what{\gT}{\bm{\eta}}}(s)}^2}\\
        &\qquad=\EB\brc{\EB\brk{\norm{\prn{\what{\gT}\bm{\eta}_{\bm{\psi}}}(s)-\prn{\what{\gT}{\bm{\eta}}}(s)}^2\Big|\bm{\eta}_{\bm{\psi}}}}\\
        &\qquad=\EB\Bigg\{\EB_{a(s)\sim\pi(\cdot|s),s^\prime(s)\sim P(\cdot|s,a(s)),r(s)\sim\gP_R(\cdot|s,a(s))}\brk{\int_0^{\frac{1}{1 {-} \gamma}} \prn{  F_{\prn{\bm{\eta}_{\bm{\psi}}}(s^\prime(s))}\prn{\frac{x {-} r}{\gamma}} - F_{\eta(s^\prime(s))}\prn{\frac{x-r}{\gamma}} }^2  dx\Bigg|\bm{\eta}_{\bm{\psi}}}\Bigg\}\\
        &\qquad=\gamma\sum_{s^\prime\in\gS}\EB\brk{\norm{\bm{\eta}_{\bm{\psi}}(s^\prime)-\eta(s^\prime)}^2}\sum_{a\in\gA}\pi(a|s)P(s^\prime|s,a)\\
        &\qquad=\gamma \sum_{s^\prime\in\gS}P(s,s^\prime){\Sigma}(s^\prime).
    \end{aligned}
\end{equation*}
\end{small}
Put these together, and we can arrive at the conclusion.
\end{proof}

Now, we can derive a tighter upper bound for $\norm{(\bI-\gamma\bP)^{-1}\bm{\sigma}({\bm{\eta}})}$.
\begin{corollary}\label{corollary:tight_sigma_upper_bound} We have that
    \begin{equation*}
        \norm{(\bI-\gamma\bP)^{-1}\bm{\sigma}(\eta)}=\norm{\bm{\Sigma}}\leq\frac{1}{1-\gamma}.
    \end{equation*}
\end{corollary}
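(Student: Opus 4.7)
The plan is to combine the second order distributional Bellman equation from Proposition~\ref{prop:second_order_dist_Bellman_eq} with a pointwise bound on each coordinate of $\bm{\Sigma}$, the latter coming from the trivial bound on CDF differences of probability measures.

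First, I would establish the claimed equality $\norm{(\bI-\gamma\bP)^{-1}\bm{\sigma}(\bm{\eta})}=\norm{\bm{\Sigma}}$. From Proposition~\ref{prop:second_order_dist_Bellman_eq}, $\bm{\Sigma}=\bm{\sigma}+\gamma\bP\bm{\Sigma}$, i.e. $(\bI-\gamma\bP)\bm{\Sigma}=\bm{\sigma}$. Since the series $\sum_{k\geq 0}(\gamma \bP)^k$ converges in $\RB^{\gS\times\gS}$ (because $\bP$ is a row-stochastic matrix and $\gamma<1$), the operator $\bI-\gamma\bP$ is invertible and $\bm{\Sigma}=(\bI-\gamma\bP)^{-1}\bm{\sigma}$, yielding the equality.

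Next, I would bound $\Sigma(s)$ for each $s\in\gS$. By definition, $\Sigma(s)=\EB\bigl[\norm{\bm{\eta}_{\bm{\psi}}(s)-\eta(s)}^2\bigr]$ where the norm is the Cram\'er norm. Since both $\bm{\eta}_{\bm{\psi}}(s)$ and $\eta(s)$ are probability distributions supported on $[0,(1-\gamma)^{-1}]$, their CDFs take values in $[0,1]$, so $\bigl|F_{\bm{\eta}_{\bm{\psi}}(s)}(x)-F_{\eta(s)}(x)\bigr|\leq 1$ pointwise. Consequently,
\begin{equation*}
\norm{\bm{\eta}_{\bm{\psi}}(s)-\eta(s)}^2=\int_0^{1/(1-\gamma)}\bigl(F_{\bm{\eta}_{\bm{\psi}}(s)}(x)-F_{\eta(s)}(x)\bigr)^2\,dx\leq \frac{1}{1-\gamma}
\end{equation*}
almost surely, and taking expectations gives $\Sigma(s)\leq (1-\gamma)^{-1}$. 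Taking the maximum over $s\in\gS$ yields $\norm{\bm{\Sigma}}\leq (1-\gamma)^{-1}$.

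There is no real obstacle here; the corollary follows almost immediately once Proposition~\ref{prop:second_order_dist_Bellman_eq} is in hand. The only subtlety is recognizing that the Cram\'er-squared integral is trivially controlled by the length of the support whenever the integrand is a difference of two CDFs on that support, which gives the sharp $(1-\gamma)^{-1}$ factor without needing any finer analysis.
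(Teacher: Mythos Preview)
Your proposal is correct and takes essentially the same approach as the paper: the paper's proof simply notes $\Sigma(s)=\EB\bigl[\norm{\bm{\eta}_{\bm{\psi}}(s)-\eta(s)}^2\bigr]\leq\int_0^{1/(1-\gamma)}dx=(1-\gamma)^{-1}$, leaving the equality $(\bI-\gamma\bP)^{-1}\bm{\sigma}=\bm{\Sigma}$ implicit from Proposition~\ref{prop:second_order_dist_Bellman_eq}. Your write-up is actually a bit more complete in making that equality step explicit.
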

\begin{proof}
Note that ${\Sigma}(s)=\EB\brk{\norm{\bm{\eta}_{\bm{\psi}}(s)-\eta(s)}^2}\leq\int_0^{\frac{1}{1-\gamma}}dx=\frac{1}{1-\gamma}$ for any $s\in\gS$.
\end{proof}

\section{Other Technical Lemmas}
\begin{lemma}[Basic Inequalities for Metrics on the Space of Probability Measures]
\label{lem:prob_basic_inequalities}
For any $\nu_1, \nu_2\in\sP$, we have $\prn{\ell_2(\nu_1,\nu_2)}^2\leq W_1(\nu_1,\nu_2)\leq\frac{1}{\sqrt{1-\gamma}}\ell_2(\nu_1,\nu_2)$ and $W_p(\nu_1,\nu_2)\leq\frac{1}{(1-\gamma)^{1-\frac{1}{p}}}W_1^{\frac{1}{p}}(\nu_1,\nu_2)$. 
\end{lemma}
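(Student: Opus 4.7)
\noindent\textbf{Proof proposal for Lemma~\ref{lem:prob_basic_inequalities}.}

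The plan is to handle the three inequalities separately using the explicit representations of $\ell_2$, $W_1$, and $W_p$ in terms of CDFs and quantile functions, together with the fact that every $\nu\in\sP$ is supported on the interval $[0,(1-\gamma)^{-1}]$ of length $(1-\gamma)^{-1}$.

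For the chain $\ell_2^2 \leq W_1 \leq (1-\gamma)^{-1/2}\ell_2$, I would first write $\ell_2(\nu_1,\nu_2)^2 = \int_0^{(1-\gamma)^{-1}} |F_{\nu_1}(x)-F_{\nu_2}(x)|^2\,dx$ and $W_1(\nu_1,\nu_2) = \int_0^{(1-\gamma)^{-1}} |F_{\nu_1}(x)-F_{\nu_2}(x)|\,dx$. The first inequality then follows from the pointwise bound $|F_{\nu_1}(x)-F_{\nu_2}(x)|\leq 1$, which gives $|F_{\nu_1}(x)-F_{\nu_2}(x)|^2 \leq |F_{\nu_1}(x)-F_{\nu_2}(x)|$. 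The second inequality is a direct application of the Cauchy--Schwarz inequality on the interval $[0,(1-\gamma)^{-1}]$, yielding the factor $\sqrt{(1-\gamma)^{-1}} = (1-\gamma)^{-1/2}$.

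For $W_p(\nu_1,\nu_2) \leq (1-\gamma)^{-(1-1/p)} W_1(\nu_1,\nu_2)^{1/p}$, I would invoke the standard quantile representation on the real line,
\[
W_p(\nu_1,\nu_2)^p = \int_0^1 |F_{\nu_1}^{-1}(u) - F_{\nu_2}^{-1}(u)|^p\,du,
\]
valid for $p\in[1,\infty)$, and similarly for $W_1$. Since both quantile functions take values in $[0,(1-\gamma)^{-1}]$, the integrand satisfies $|F_{\nu_1}^{-1}(u)-F_{\nu_2}^{-1}(u)| \leq (1-\gamma)^{-1}$, so
\[
|F_{\nu_1}^{-1}(u)-F_{\nu_2}^{-1}(u)|^p \leq (1-\gamma)^{-(p-1)} |F_{\nu_1}^{-1}(u)-F_{\nu_2}^{-1}(u)|.
\]
Integrating over $u\in[0,1]$ gives $W_p^p \leq (1-\gamma)^{-(p-1)} W_1$, and taking the $p$-th root yields the claimed inequality.

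None of these steps involves real difficulty; the only subtlety is ensuring that the length $(1-\gamma)^{-1}$ of the common support is used correctly to produce the factor $(1-\gamma)^{-1/2}$ in the second inequality and $(1-\gamma)^{-(1-1/p)}$ in the third. Accordingly, I do not anticipate any essential obstacle.
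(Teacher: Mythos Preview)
Your proposal is correct and follows essentially the same approach as the paper: the first two inequalities are handled identically (pointwise bound $|F_{\nu_1}-F_{\nu_2}|\leq 1$ and Cauchy--Schwarz on $[0,(1-\gamma)^{-1}]$), and for the $W_p$ bound you use the quantile representation while the paper works directly with the coupling definition, but both arguments reduce to the same pointwise estimate $|x-y|^{p}\leq (1-\gamma)^{-(p-1)}|x-y|$ coming from the bounded support.
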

\begin{proof}
By Cauchy-Schwarz inequality,
\begin{equation*}
    \begin{aligned}
        W_1(\nu_1,\nu_2)=&\int_0^{\frac{1}{1-\gamma}} |F_{\nu_1}(x)-F_{\nu_2}(x)| dx\\
        \leq&\sqrt{\int_0^{\frac{1}{1-\gamma}} 1^2 dx}\sqrt{\int_0^{\frac{1}{1-\gamma}} |F_{\nu_1}(x)-F_{\nu_2}(x)|^2 dx}\\
        =&\frac{1}{\sqrt{1-\gamma}}\ell_2(\nu_1,\nu_2).
    \end{aligned}
\end{equation*}
And
\begin{equation*}
    \begin{aligned}
        \ell_2(\nu_1,\nu_2)=&\sqrt{\int_0^{\frac{1}{1-\gamma}} |F_{\nu_1}(x)-F_{\nu_2}(x)|^2 dx}\\
        \leq&\sqrt{\int_0^{\frac{1}{1-\gamma}} |F_{\nu_1}(x)-F_{\nu_2}(x)| dx}\\
        =&\sqrt{W_1(\nu_1,\nu_2)}.
    \end{aligned}
\end{equation*}
And
\begin{equation*}
    \begin{aligned}
        W_p(\nu_1,\nu_2)=&\left(\inf _{\kappa \in \Gamma(\nu_1, \nu_2)} \int_{\brk{0,\frac{1}{1-\gamma}}^2}\abs{x-y}^p \kappa(dx, dy)\right)^{1 / p}\\
        \leq& \frac{1}{(1-\gamma)^{1-\frac{1}{p}}}\left(\inf _{\kappa \in \Gamma(\nu_1, \nu_2)} \int_{\brk{0,\frac{1}{1-\gamma}}^2}\abs{x-y} \kappa(dx, dy)\right)^{1 / p}\\
        =&\frac{1}{(1-\gamma)^{1-\frac{1}{p}}}W_1^{\frac{1}{p}}(\nu_1,\nu_2).
    \end{aligned}
\end{equation*}
\end{proof}

\begin{lemma}\label{lem:separable}
$\prn{\gM, \norm{\cdot}_{\ell_2}}$ and $\prn{\gM, \norm{\cdot}_{W_1}}$ are separable.
\end{lemma}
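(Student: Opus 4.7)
The plan is to reduce separability of $\gM$ under both norms to the standard separability of the classical $L^p$ spaces on a compact interval. I would introduce the linear map $\Phi \colon \gM \to L^2[0,(1-\gamma)^{-1}]$ (and, viewed differently, into $L^1[0,(1-\gamma)^{-1}]$) defined by $\Phi(\nu) = F_\nu$. Directly from the definitions of the two norms, $\Phi$ is an isometry from $(\gM, \norm{\cdot}_{\ell_2})$ onto its image in $L^2$ and from $(\gM, \norm{\cdot}_{W_1})$ onto its image in $L^1$. Injectivity of $\Phi$ is immediate since a finite signed measure on a bounded interval is uniquely determined by its CDF.

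Since $L^p$ on a compact interval is separable for every $p \in [1, \infty)$, the subspace $\Phi(\gM)$ inherits separability (every subset of a separable metric space is itself separable). Pulling a countable dense set back through the isometry $\Phi^{-1}$ then produces a countable dense subset of $\gM$ in the corresponding norm, which establishes both conclusions of the lemma.

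For an explicit witness, I would alternatively verify directly that the countable set
$$
\gD := \left\{\sum_{i=1}^n q_i \delta_{x_i} : n \in \NB,\; q_i \in \QB,\; x_i \in \QB \cap [0, (1-\gamma)^{-1}],\; \textstyle\sum_i q_i = 0 \right\}
$$
is dense in $\gM$ under both norms. Given $\nu \in \gM$, I would partition $[0, (1-\gamma)^{-1}]$ into finitely many subintervals $I_1, \dots, I_n$ of small width with rational endpoints, replace $\nu$ on each $I_j$ by a Dirac mass at a rational interior point carrying the total mass $\nu(I_j)$, and then rationalize the coefficients. The main technical point to be careful about is preserving the zero-mass constraint: after rationalizing the first $n{-}1$ coefficients to $\tilde q_j$, I would simply set $\tilde q_n := -\sum_{j<n} \tilde q_j$, which remains close to $\nu(I_n) = -\sum_{j<n}\nu(I_j)$. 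Convergence in both norms then follows from pointwise a.e.\ convergence of the approximating CDFs (using that $F_\nu$ has at most countably many discontinuities) together with the uniform bound $|F_{\nu_n} - F_\nu| \leq 2|\nu|(\RB)$ on a bounded interval, via dominated convergence. The core is thus just the isometric embedding into $L^p$; the concrete construction is only needed if one wants to exhibit $\gD$ explicitly.
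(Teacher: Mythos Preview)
Your proof is correct and takes a genuinely different route from the paper. The paper argues by first invoking separability of $(\sP, W_1)$ from \citep{villani2009optimal}, deducing separability of $(\sP, \ell_2)$ via the comparison inequality $\ell_2 \leq \sqrt{W_1}$, and then, for a general $\mu \in \gM$, approximating through the Jordan decomposition: normalize $\tilde\mu := 2\mu/|\mu|(\RB)$, approximate $\tilde\mu_+$ and $\tilde\mu_-$ separately by elements of the countable dense set in $\sP$, and rationalize the scaling factor $\tfrac{1}{2}|\mu|(\RB)$.

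Your approach instead exploits that the norms on $\gM$ are \emph{by definition} the $L^2$ and $L^1$ norms of the CDF, so $\nu \mapsto F_\nu$ is an isometric embedding into a separable $L^p$ space, and separability is inherited by subspaces of metric spaces. This is shorter and more transparent, and it bypasses the need to cite Wasserstein-space separability or to handle the zero-mass constraint by hand. The paper's argument, on the other hand, stays entirely within the measure-theoretic setting and produces approximants that are manifestly signed measures (differences of probability measures), which is slightly more concrete if one cares about the structure of the dense set. Your optional explicit construction of $\gD$ is closer in spirit to the paper's argument, but your primary $L^p$-embedding route is both correct and cleaner.
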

\begin{proof}
Recall that $\prn{\sP, W_1}$ is separable \citep[Theorem~6.18][]{villani2009optimal}, and by Lemma~\ref{lem:prob_basic_inequalities}, the Cram\'er distance $\ell_2$ can be bounded by $1$-Wasserstein distance $W_1$, hence $\prn{\sP, \ell_2}$ is also separable.
Let $d$ be either $W_1$ or $\ell_2$, and $A$ be the countable dense subset of $\prn{\sP, d}$. 
For any $\epsilon>0,\mu\in\gM$, we denote $\tilde{\mu}:=\frac{2\mu}{\abs{\mu}(\RB)}$.
Then we can find a $q\in\QB, \bar{\mu}_+,\bar{\mu}_-\in A, \st \abs{q-\frac{1}{2}\abs{\mu}(\RB)}\leq\frac{\epsilon\abs{\mu}(\RB)}{6\norm{\mu}_d}$, $d(\bar{\mu}_+,\tilde{\mu}_+)\leq\frac{\epsilon}{3q},d(\bar{\mu}_-,\tilde{\mu}_-)\leq\frac{\epsilon}{3q}$.
Note that the set of all possible $\bar{\mu}=\bar{\mu}_+-\bar{\mu}_-$ is countable.
Let $\hat{\mu}:= q\bar{\mu}$, then we have
\begin{equation*}
    \begin{aligned}
        \norm{\mu-\hat{\mu}}_{d}\leq& \norm{\mu-q\tilde{\mu}}_{d}+\norm{q\tilde{\mu}-\hat{\mu}}_{d}\\
        =&\norm{(\frac{1}{2}\abs{\mu}(\RB)-q)\tilde{\mu}}_{d}+q\norm{(\tilde{\mu}_+-\bar{\mu}_+)-(\tilde{\mu}_--\bar{\mu}_-)}_{d}\\
        \leq& \abs{q-\frac{1}{2}\abs{\mu}(\RB)}\frac{2\norm{\mu}_{d}}{\abs{\mu}(\RB)}+q\norm{\tilde{\mu}_+-\bar{\mu}_+}_{d}+q\norm{\tilde{\mu}_--\bar{\mu}_-}_{d}\\
        \leq&\frac{\epsilon\abs{\mu}(\RB)}{6\norm{\mu}_{d}}\frac{2\norm{\mu}_{d}}{\abs{\mu}(\RB)}+q\frac{\epsilon}{3q}+q\frac{\epsilon}{3q}\\
        =&\epsilon.
    \end{aligned}
\end{equation*}
Hence we have found a countable dense subset for $\prn{\gM, \norm{\cdot}_d}$ for $d=W_1$ or $\ell_2$.
Therefore, $\prn{\gM, \norm{\cdot}_{\ell_2}}$ and $\prn{\gM, \norm{\cdot}_{W_1}}$ are separable.
\end{proof}

\begin{lemma}\label{lem:incomplete}
$\prn{\gM, \norm{\cdot}_{W_1}}$ is not complete.
\end{lemma}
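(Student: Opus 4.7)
The plan is to identify $(\gM, \|\cdot\|_{W_1})$ isometrically with a proper dense linear subspace of $L^1([0, L])$, where $L := (1-\gamma)^{-1}$, and then leverage the completeness of $L^1$ to exhibit a Cauchy sequence with no limit in $\gM$.

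First I would check that the map $\iota \colon (\gM, \|\cdot\|_{W_1}) \to L^1([0, L])$ defined by $\iota(\nu) = F_\nu$ is a linear isometry: linearity is clear from the linearity of $\nu \mapsto F_\nu$ noted in Section~\ref{subsection:signed_measure}, and the isometry property is immediate from $\|\nu\|_{W_1} = \int_0^L |F_\nu(x)|\,dx = \|F_\nu\|_{L^1}$ together with the fact that $F_\nu$ is supported in $[0, L]$. Next I would observe that $\iota(\gM)$ is contained in the set of functions on $[0, L]$ of bounded variation, since for $\nu \in \gM$ the total variation of $F_\nu$ is bounded by $|\nu|(\RB) < \infty$. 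At the same time, any left-continuous step function on $[0, L]$ that vanishes at $0$ and to the right of $L$ is the CDF $F_\nu$ of an explicit finitely supported zero-mass signed measure $\nu \in \gM$, and such step functions are dense in $L^1([0, L])$, so $\iota(\gM)$ is a dense linear subspace of $L^1([0, L])$.

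To produce the sought-after Cauchy sequence, I would pick any $f \in L^1([0, L])$ with no representative of bounded variation; a concrete choice is
\[
f(x) = \sin\!\left(\tfrac{1}{x}\right) \ind_{(0, L]}(x),
\]
which is bounded (hence in $L^1$) but has infinite essential variation in every neighborhood of $0$, so that no BV function agrees with $f$ almost everywhere. By density of $\iota(\gM)$ in $L^1([0, L])$, choose $(\nu_n) \subset \gM$ with $F_{\nu_n} \to f$ in $L^1$. The isometry gives $\|\nu_n - \nu_m\|_{W_1} = \|F_{\nu_n} - F_{\nu_m}\|_{L^1} \to 0$, so $(\nu_n)$ is Cauchy in $\gM$. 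If a limit $\nu \in \gM$ existed, then by continuity of $\iota$ we would have $F_\nu = f$ almost everywhere on $[0, L]$, forcing $f$ to admit a BV representative and contradicting the choice of $f$.

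The only mildly delicate point will be verifying that the approximating functions in the density step can be chosen to actually lie in $\iota(\gM)$ rather than in a larger BV class: this amounts to choosing the approximants to be left-continuous simple functions that vanish at $0$ and past $L$ (so that the associated signed measure has zero total mass and support in $[0, L]$), which is straightforward since such functions are still dense in $L^1([0, L])$. Everything else is a routine consequence of the isometric embedding into $L^1$ and its completeness.
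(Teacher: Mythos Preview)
Your proposal is correct, but it takes a genuinely different route from the paper's argument. The paper writes down an explicit Cauchy sequence of finitely supported zero-mass measures,
\[
\mu_n = \sum_{i=1}^n \bigl(\delta_{1/i + 1/2^i} - \delta_{1/i - 1/2^i}\bigr),
\]
checks that $\|\mu_n - \mu_{n+k}\|_{W_1} \leq 2^{-(n-1)}$, and then observes that the formal limit $\sum_{i=1}^\infty(\cdots)$ has infinite total variation and so cannot lie in $\gM$. Your approach instead identifies $(\gM, \|\cdot\|_{W_1})$ isometrically with a subspace of $L^1([0,L])$ sitting inside BV, shows this subspace is dense, and produces a Cauchy sequence converging in $L^1$ to a non-BV function such as $\sin(1/x)$.

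The two arguments rest on the same underlying fact---the CDF of the would-be limit has infinite variation---but package it differently. The paper's construction is more concrete and avoids any appeal to density in $L^1$, though as written it is slightly informal about why \emph{no} element of $\gM$ can be the limit (one still implicitly needs the $L^1$ embedding to rule out other candidates). Your argument makes that step explicit via uniqueness of $L^1$-limits and the essential-variation obstruction, and it also explains \emph{why} incompleteness is inevitable: $\iota(\gM)$ is a proper dense subspace of a Banach space. The price is verifying the density step carefully, which you correctly flag and handle by restricting to left-continuous simple functions vanishing at $0$ and past $L$.
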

\begin{proof}
Consider the Cauchy sequence in $\prn{\gM, \norm{\cdot}_{W_1}}$
$\mu_n= \sum_{i=1}^n\left(\delta_{\frac{1}{i}+\frac{1}{2^i}}- \delta_{\frac{1}{i}-\frac{1}{2^i}}\right)$, which satisfies $\norm{\mu_n-\mu_{n+k}}_{W_1}\leq\sum_{i=1}^k \frac{1}{2^{n+i-1}}\leq\frac{1}{2^{n-1}}\to 0$, but its limit $\sum_{i=1}^\infty \left(\delta_{\frac{1}{i}+\frac{1}{2^i}}- \delta_{\frac{1}{i}-\frac{1}{2^i}} \right)\notin \gM$ because its total variation is infinity.
Hence, this is a Cauchy sequence without a limit in $\prn{\gM, \norm{\cdot}_{W_1}}$, implying $\prn{\gM, \norm{\cdot}_{W_1}}$ is not complete.
\end{proof}

\bibliography{ref}
\bibliographystyle{abbrvnat}
\end{document}